\let\algorithm\@undefined
\let\listofalgorithms\@undefined
\DeclareMathOperator*{\argmin}{argmin}
\DeclareMathOperator{\Diam}{Diam}
\newcommand{\grad}{\nabla}
\DeclarePairedDelimiter{\ceil}{\lceil}{\rceil}
\DeclarePairedDelimiter{\floor}{\lfloor}{\rfloor}
\newcommand{\1}{\mathbbm{1}}
\newcommand{\R}{\mathbb{R}}
\newcommand{\RNN}{\R_{0+}}
\newcommand{\RPos}{\R_{+}}
\newcommand{\RExt}{\bar{\R}}
\newcommand{\N}{\mathbb{N}}
\newcommand{\Ball}{\mathcal{B}}
\DeclareMathOperator{\sgn}{sgn}
\DeclareMathOperator*{\Var}{\mathbb{V}}
\DeclareMathOperator*{\Expect}{\mathbb{E}}
\DeclareMathOperator*{\EExpect}{\hat{\Expect}}
\DeclareMathOperator*{\Prob}{\mathbb{P}}
\DeclareMathOperator{\Support}{Support}
\newcommand{\x}{\vec{x}}
\newcommand{\w}{\bm{w}} 
\newcommand{\lv}{\mathcal{S}}
\newcommand{\epsv}{\bm{\varepsilon}}
\newcommand{\wv}{\bm{w}}
\newcommand{\distributed}{\thicksim}
\newcommand{\X}{\mathcal{X}}
\newcommand{\Y}{\mathcal{Y}}
\newcommand{\HC}{\mathcal{H}}
\newcommand{\Risk}{\mathrm{R}}
\newcommand{\ERisk}{\hat{\Risk}}
\newcommand{\Utility}{\mathrm{U}}
\newcommand{\Population}{\Omega}
\newcommand{\PopItem}{\omega}
\newcommand{\Mean}{\mathrm{M}}
\newcommand{\MeanAs}{\Mean^{\textsc{as}}}
\newcommand{\Malfare}{\vphantom{\mathrm{W}}\raisebox{1.53ex}{\rotatebox{180}{\ensuremath{\mathrm{W}}}}}
\newcommand{\Welfare}{\mathrm{W}}
\DeclareMathOperator{\IE}{I}
\DeclareMathOperator{\ATK}{Atk}
\DeclareMathOperator{\GEI}{GEI}
\DeclareMathOperator{\Theil}{T}
\newcommand{\ProbDist}{\mathcal{D}}
\newcommand{\frange}{r}
\newcommand{\loss}{\ell}
\newcommand{\LossFunction}{\ell}
\DeclareFontShape{OMX}{cmex}{m}{b}{<-> cmexb10}{}
\providecommand{\LandauO}{\bm{\mathrm{O}}} 
\providecommand{\LandauTheta}{\bm{\Uptheta}}
\providecommand{\LandauOmega}{\bm{\Upomega}}
\providecommand{\LandauOTilde}{\smash{\bm{\widetilde{\mathrm{O}}}}\vphantom{\LandauO}}
\newcommand{\todo}[1]{\textcolor{red!50!black}{ToDo: #1}}
\newcommand{\draftnote}[1]{\textcolor{orange!50!black}{Note: #1}}
\newcommand{\cyrus}[1]{\textcolor{green!50!black}{Cyrus: #1}}
\theoremstyle{definition}
\newtheorem{definition}{Definition}[section]
\newtheorem{property}[definition]{Property}
\newtheorem{lemma}[definition]{Lemma}
\newtheorem{theorem}[definition]{Theorem}
\newtheorem{corollary}[definition]{Corollary}
\newtheorem{conjecture}[definition]{Conjecture}
\newtheorem{consequence}[definition]{Consequence}
\newtheorem{example}[definition]{Example}
\newtheorem{observation}[definition]{Observation}
\newcommand{\Input}{\textbf{Input}: }
\newcommand{\Output}{\textbf{Output}: }
\newcommand{\st}{s.t.}
\newcommand{\wrt}{w.r.t.}
\newcommand{\whp}{w.h.p.}
\DeclareMathOperator{\mix}{mix} 
\newcommand{\NGroups}{g}
\newcommand{\DSeq}{d}
\DeclareMathOperator{\Poly}{Poly}
\newcommand{\PACAlgo}{\mathcal{A}}
\newcommand{\UCAlgo}{\mathcal{A}_{\mathrm{UC}}}
\DeclareMathOperator{\PAC}{PAC}
\DeclareMathOperator{\FPAC}{FPAC}
\DeclareMathOperator{\UC}{UC}
\DeclareMathOperator{\Proj}{Proj}
\newcommand{\Realizable}{\mathrm{Rlz}}
\newcommand{\Agnostic}{\mathrm{Agn}}
\newcommand{\SampleComplexity}{\mathrm{m}}
\newcommand{\TimeComplexity}{\mathrm{t}}
\DeclareMathOperator{\VC}{VC}
\DeclareMathOperator{\PDim}{PD}
\DeclareMathOperator{\FSDim}{FS}
\DeclareMathOperator{\Covering}{\mathcal{N}}
\DeclareMathOperator{\Cover}{\mathcal{C}}
\DeclareMathOperator{\ECover}{\hat{\Cover}}
\newcommand{\cmark}{\ding{51}}
\newcommand{\xmark}{\ding{55}}
\newcommand{\Rade}{\textgoth{R}}
\newcommand{\ERade}{\hat{\Rade}}
\Crefname{figure}{Figure}{Figures}
\author{Cyrus Cousins}
\title{An Axiomatic Theory of Provably-Fair \\ Welfare-Centric Machine Learning} 
\newif\ifshowpac
\newif\ifthesis
\newif\ifshowtodos
\newif\iftitletoc
\renewcommand{\todo}[1]{}
\renewcommand{\draftnote}[1]{}
\renewcommand{\cyrus}[1]{}
\providecommand{\keywords}[1]{\bigskip\centering\textbf{Keywords:} {\small\it #1}\\}
\begin{document}


\maketitle

\begin{abstract}

\if 0
The rise of machine learning in industry and government has brought about a commensurate rise in algorithmic bias and fairness issues, with increasing recognition by academics that such issues must be explicitly considered in practical fairness-sensitive machine learning settings.  

That historic data reflects historic biases and injustices, and that, due to availability bias, ML models are frequently trained on data from prominent, privileged, or majority groups while minorities remain understudied and underdocumented, have well-studied effects on the efficacy  and (un)fairness of many contemporary systems.

[setup / context]

\fi

\if 0
Constraint-based notions of algorithmic fairness have risen to prominence in 
machine learning, 
with the potential to correct for some forms of 
 bias by ensuring \emph{demographic-parity} concepts, e.g., \emph{equality of opportunity}. 
While noble in intent and intuitive by design, 
this approach has several prominent flaws: most notably, 
many parity constraints are mutually unsatisfiable, and the constraint-based formulation inherently puts \emph{accuracy} and \emph{fairness} at odds. 
Perhaps in response to these issues, some recent work has trended toward welfare-based fairness-concepts, 
wherein both \emph{accuracy} and \emph{fairness} are encoded in a \emph{welfare function} defined on a \emph{group of subpopulations}. 
%
Unfortunately, welfare based metrics require a notion of (positive) per-group utility, and we argue that this is not natural to many machine learning tasks, 
which instead \emph{minimize} some \emph{negatively connoted} loss value. 
\fi

We address an inherent difficulty in welfare-theoretic fair machine learning by proposing an equivalently axiomatically-justified alternative and studying the resulting computational and statistical learning questions.
Welfare metrics quantify \emph{overall wellbeing} across a population 
of one or more groups, and welfare-based objectives and constraints have recently been proposed to incentivize \emph{fair machine learning methods} to produce satisfactory solutions that consider the diverse needs of multiple groups.
Unfortunately, many machine-learning problems are more naturally cast as \emph{loss minimization} tasks, rather than \emph{utility maximization}, which complicates direct application of welfare-centric methods to fair machine learning.
In this work, we define a complementary measure, 
termed \emph{malfare}, measuring overall societal harm (rather than wellbeing), with axiomatic justification via the standard axioms of cardinal welfare.

We then cast fair machine learning as \emph{malfare minimization} over the \emph{risk values} (expected losses) of each group.
Surprisingly, 
the axioms of cardinal welfare (malfare) dictate that this is not equivalent to simply defining utility as negative loss.
Building upon these concepts, we define 
\emph{fair-PAC learning}, where a fair-PAC learner is an algorithm that learns an $\varepsilon$-$\delta$ malfare-optimal model with bounded sample complexity, for \emph{any data distribution}, and 
for \emph{any} (axiomatically justified) malfare concept. 
Finally, we show broad conditions under which, with appropriate modifications, 
 standard PAC-learners 
may be converted to fair-PAC learners.
\if 0
and argue that fair-PAC-learners are intuitive and easy to use, as one must only select a 
malfare concept (encoding their desired fairness concept), hypothesis class, and confidence guarantees, 
and then one receives a provably 
 near-optimal model.
\fi
This places fair-PAC learning on firm theoretical ground, as it yields 
\emph{statistical} and \emph{computational} efficiency guarantees for many well-studied machine-learning models, and is also practically relevant, as it democratizes fair machine learning by providing concrete training algorithms and rigorous generalization guarantees for these models.



\if 0
I introduce novel concentration-of-measure bounds for the supremum deviation, various variance concepts, and a family of game-theoretic welfare functions. In particular, I introduce empirically centralized Rademacher averages to derive novel practical, probabilistic, sample-dependent bounds to the Supremum Deviation (SD) of empirical means of functions in a family from their expectations, bound has with optimal dependence on the maximum (i.e., wimpy) variance and the function ranges. Theoretically, the gaps between upper and lower bounds on the supremum deviation are much smaller with (empirical) centralization, with asymptotically improved dependence on various quantities of interest, and experimentally, I find that centralization improves various bounds to the SD.

I also given axiomatic justification of the power mean family of Welfare functions, and introduce a concept called Malfare, which measures group illbeing (instead of wellbeing), which shares the axiomatic justification of Welfare. Malfare is a more natural target in machine learning, where usually we minimized (negatively connoted) loss, as opposed to maximizing (positively connoted) utility. I then show statistical estimation guarantees for Welfare and Malfare, and develop a unified theory of fair classification, termed fair PAC learning, in which polynomial samples and computation time are sufficient to provably epsilon-delta learn fair classifiers. Finally, I cast a streaming media codec selection problem as a fairness-sensitive learning problem. I explore, both experimentally and theoretically, multivariate optimality concepts, such as welfare and Pareto optimality, and how the bias complexity tradeoff manifests in multivariate settings and with fairness issues.

Future work will investigate fair PAC learning of generalized linear models, with experimental comparison to alternative fair learning techniques.
Additionally I will apply my novel uniform convergence bounds to various sampling and data science problems, to show decreased sample complexity, thus more computationally efficient sampling algorithms.
\fi

\keywords{Fair Machine Learning $\diamondsuit$ Cardinal Welfare Theory $\clubsuit$ PAC-Learning \\ Uniform Convergence $\heartsuit$ Computational Learning Theory $\spadesuit$ Statistical Learning Theory}

\end{abstract}

\iftitletoc 
\vfill
\pagebreak[4]

\small
\tableofcontents


\vfill\pagebreak[4]
\else
\pagebreak[1]
\fi


\section{Introduction}

\todo{
LEAD WITH examples: sth like ``trained to optimize a single metric of performance. As a result, the decisions made by such algorithms can
have unintended adverse side effects: profit-maximizing loans can have detrimental effects on borrowers
(Skiba and Tobacman, 2009) and fake news can undermine democratic institutions (Persily, 2017).'' FACIAL RECOGNITION
then expand this:
}

It is now well-understood that contemporary machine learning systems for 
facial recognition~\citep{buolamwini2018gender,cook2019demographic,cavazos2020accuracy}, medical settings \citep{mac2002problem,ashraf2018learning}, and many others exhibit \emph{differential accuracy} across gender, race, and other protected-group membership.
This 
causes \emph{accessibility issues} to users of such systems, and can lead to \emph{direct discrimination}, e.g., facial recognition in policing yields disproportionate false-arrest rates, and machine learning in medical technology yields disproportionate 
 health outcomes, thus exacerbating existing structural and societal inequalities impacting many minority groups.\todo{citations for outcome claims}
In welfare-centric machine learning methods, both \emph{accuracy} and \emph{fairness} are encoded in a single \emph{welfare function} defined on a \emph{collection of subpopulations}.
Welfare is then directly optimized~\citep{rolf2020balancing}\todo{check this.} or constrained~\citep{speicher2018unified,heidari2018fairness} to promote \emph{fair learning} across \emph{all groups}.  
%
This addresses differential performance and bias issues across groups by ensuring that (1), each group is \emph{seen} and \emph{considered} during training, and (2), an outcome is incentivized that is desirable overall, ideally according to some mutually-agreed-upon welfare function.
Unfortunately, welfare based metrics require a notion of (positive) utility, and we argue that this is not natural to many machine learning tasks, 
where we instead \emph{minimize} some \emph{negatively connoted} risk value (expected loss). 
We thus define a complementary measure to \emph{welfare}, 
termed \emph{malfare}, measuring societal harm (rather than wellbeing).
In particular, malfare arises naturally 
when one applies the standard \emph{axioms of cardinal welfare} (with appropriate modifications) to \emph{risk}, rather than \emph{utility}.  
With this framework, we cast fair machine learning as a direct \emph{malfare minimization} problem over the \emph{risk values} of each \emph{group}. 

\draftnote{Comparisons:
Under our axiomatization, which we argue is quite natural, all welfare and malfare functions are \emph{power means} (see~\cref{sec:pop-mean:pow-mean}), which leads to convenient statistical and computational properties.
We contrast the power-mean with the family that arises under the \emph{additive separability} axiom, which is isomorphic under comparison, but is less interpretable, and due to statistical instability, is more difficult to estimate.
We also contrast with the Gini social welfare function (see~\cref{?}), which contains the utilitarian and egalitarian welfare 
functions, but the remaining cases are not equivalent.
}

Perhaps surprisingly, defining and minimizing a \emph{malfare function} is \emph{not equivalent} to defining and maximizing some \emph{welfare function} while taking utility to be negative loss (except in the trivial cases of egalitarian and utilitarian malfare).
This is essentially because nearly every function satisfying the standard axioms of cardinal welfare requires \emph{nonnegative} inputs, and it is not in general possible to contort a loss function into a utility function while satisfying this requirement.
For example, while minimizing the 0-1 loss, which simply counts the number of mistakes a classifier makes, is isomorphic to maximizing the 1-0 gain, which counts number of correct classifications, minimizing some \emph{malfare function} defined on 0-1 loss over groups \emph{is not} in general equivalent to maximizing any \emph{welfare function} defined on 1-0 gain.
More strikingly, for learning problems with unbounded loss functions (i.e., absolute or square error in regression problems, or cross entropy in logistic regression), it is in general not even possible to define a complementary nonnegative gain function without changing the optimal solution. 
\draftnote{Comparisons: Not true for Gini?}


\ifshowpac

Building upon these concepts, we develop a \emph{mathematically precise} concept of generic fair machine learning, termed 
\emph{fair probably-approximately-correct} (FPAC) learning, wherein a model class is FPAC-learnable if an $\varepsilon$-$\delta$ malfare-optimal model can be learned 
 with \emph{uniformly-bounded sample complexity}, \wrt\ any \emph{fair malfare concept} and per-group \emph{instance distributions}.
In other words, it must be possible to learn a model that, with probability at least $1 - \delta$, has $\varepsilon$-additively optimal malfare, from a \emph{finite sample} whose size depends only on $\varepsilon$, $\delta$, the \emph{group count}, 
and the \emph{model class}, \emph{but not} on the \emph{instance distributions}, nor on the \emph{malfare concept}.
This definition extends Valiant's~
(\citeyear{valiant1984theory}) 
 PAC-learning formalization of machine learning beyond a single group, and we show that, with appropriate modifications, many (standard) PAC-learners may be converted to FPAC learners.
We argue that FPAC-learners are intuitive and easy to use, as one must only select a 
malfare concept (encoding their desired fairness concept), model class, and error tolerance, 
and then one receives a provably $\varepsilon$-$\delta$ optimal model. 
Crucially, the class of ``fair malfare concepts'' considered in FPAC learning is not arbitrary, but rather arises from our natural axiomatization, and thus should contain every fair malfare objective that one would want to minimize.
\draftnote{Discuss: why additive, not multiplicative.}

The \emph{uniformly-bounded sample complexity} requirement of FPAC-learnability is substantially stronger than classical concepts of statistical estimability. 
In particular, although \emph{consistent} 
 estimators of (dis)utility values generally imply consistent estimators of welfare or malfare functions, we show that the \emph{rate} at which a consistent estimator converges, and thus \emph{sample complexity}, is strongly impacted by the choice of welfare or malfare function, as well as the instance distributions.
Consequently, a class may not be FPAC-learnable, even if there exist consistent estimators for per-group risk values for each model in the class.
This is essentially due to the order of existential quantifiers: \emph{uniform} sample complexity requires a convergence rate to hold \emph{uniformly} over a family of related estimation tasks.
Despite this difficulty, 
we show via a constructive polynomial reduction that 
\emph{realizable FPAC-learning} and \emph{realizable PAC-learning} are equivalent, 
and furthermore, we show, non-constructively, that for learning problems where PAC-learnability implies uniform convergence, it is equivalent to FPAC-learnability.
We also show that when training is possible via \emph{convex optimization}, or by efficient-enumeration of an \emph{approximate cover} of the space of models, then training $\varepsilon$-$\delta$ malfare-optimal models, like risk-optimal models, requires polynomial time.

We argue that our axiomatization of malfare is quite natural, and the resulting family of malfare functions 
admits uniform sample-complexity guarantees.
\Cref{sec:comparisons:as} explores the alternative \emph{additive separability} axiom, under which the resulting welfare and malfare families are isomorphic to ours under comparison, 
however \emph{uniform} sample complexity bounds are unsatisfying (and often impossible), 
essentially because additive-error guarantees 
are less meaningful, as the scale, and even 
the units, of additively separable malfare functions vary wildly across the 
 family. 
Our alternative axiomatization essentially nonlinearly normalizes this variation in scale, and also standardizes malfare units to match disutility units; under it, uniform sample complexity guarantees for malfare are possible and meaningful. 
It should be noted that uniform sample complexity bounds for \emph{welfare functions} are generally impossible, 
due to the statistical instability of estimating some welfare functions, such as the \emph{geometric mean} (or \emph{Nash social welfare}), thus we argue that, compared to welfare maximization, malfare minimization is not only often more natural, but also more statistically tractable.
\draftnote{More detail cut below:}
\if 0
, and the distinction is insignificant through the lens of \emph{statistical consistency}, however it is quite impactful when one considers pack learnability. We show that under an alternative axiomatic station that of additive supper ability uniform sample complexity bounds are impossible despite consistency and also that uniform sample complexity for welfare estimation is impossible although Maui Fair estimation works. We also show that this difference between welfare and Welfare vanishes if one instead adopts the genie social welfare function and an analogous Mall fair quantity those these Concepts lack the equivalent lack the axiomatic justification of our work

PAC-learnability differs from classical concepts of statistical estimability (e.g., \emph{consistency}) 
in two primary ways. First, learnability is generally determined with \emph{uniform sample complexity} over \emph{all distributions}, 
and FPAC-learnability considers learnability over all possible \emph{fair malfare functions}.
Second, we are also interested in showing that training is not just a statistically efficient (\cref{sec:ftfsl)}), but also computationally efficient (\cref{sec:ccl}).
Although consistent estimators of utility or risk values generally imply consistent estimators of welfare or malfare functions, we show that the \emph{rate} at which a consistent estimator converges, and thus \emph{sample complexity}, is impacted by the choice of welfare or malfare function, and furthermore, it is possible for a class to not be FPAC-learnable, even if one has \emph{consistent estimators} for all relevant quantities. 

We note that the specific choices we make and distinctions we draw are significant in this setting but not necessarily through a classical lens as asymptotic unbiasedness and 
 statistical estimation problems that sentiment values are generally preserved

Under our axiomatic station the only fair welfare functions are the P power means 4 PT greater than or equal to what.
The consequences of this choice on statistical estimation and machine learning are immense. We are in particular interested in finite sample guarantees where an additive epsilon-delta guarantee is X and the sample complexity is the number of observations required to produce an Epsilon Delta estimate note that this is clearly stronger then consistency which simply dictates an estimate converges in probability 2 the quantity at estimates asymptotically in the sample size. There are two crucial differences here The crucial difference here is that a for Alcorn an existential quantifier is flipped example complexity bound must hold uniformly over a collection of sampling albums whereas consistency need only hold for each individual problem. Composition with deterministic functions then preserves consistency but does not necessarily Preserve sample complexity I eat a consistent estimator for p of a coin does not imply a consistent estimator for one over
The key difference to whether or not

In machine learning
We consider also the effects of selecting an alternative axiomatic station using additive supper ability instead of multiplicative linearity results in unintuitive units 4 Mile Fair functions that can't be compared to risk values and precludes any attempts to uniformly bound sample complexity the consistent estimators are still possible. In either case optimizing welfare rather than mouth Fair similarly prevents uniform sample complexity bounds but consistency is Preserve. Finally we consider any quality based metrics of welfare we find that the genie social welfare function and an analogous Mile Fair quantity is easily estimated from a sample and either welfare or welfare may be estimated from a sample in fact they are isomorphic in fact they obey a simple negation relationship as unlike the previous welfare and Welfare Concepts negative values cause no problem.
\fi

\if 0
We show that in this framework, 
tools from statistical learning theory are sufficient to bound \emph{true malfare} (over a similarly distributed population) in terms of \emph{empirical malfare}, thus controlling for \emph{overfitting to fairness}.]
\fi

\fi


\subsection{Related Work}
\label{sec:related}

\draftnote{
TODO THESIS transition, where is this?
}
\draftnote{Contrast with fair allocation work.}

Constraint-based notions of algorithmic fairness \citep{dwork2012fairness} have risen to prominence in 
fair machine learning, 
with the potential to ensure demographic-parity (e.g., equality of opportunity, equality of outcome, or equalized odds), 
thus correcting for some forms of data or algorithmic bias. 
While noble in intent and intuitive by design, fairness by demographic-parity constraints has several prominent flaws: most notably, several popular parity constraints are mutually unsatisfiable~\citep{kleinberg2017inherent}, and their constraint-based formulation inherently puts \emph{accuracy} and \emph{fairness} at odds, where additional \emph{tolerance parameters} are required to strike a balance between the two.
Furthermore, recent works \citep{hu2020fair,kasy2021fairness} have shown that \emph{welfare} and even \emph{disadvantaged group utility} can decrease even as fairness constraints are tightened, calling into question whether demographic parity constraints are even beneficial to those they purport to aid.
\draftnote{Is this cut good:
Without deeper axiomatic underpinnings, it is unclear why 
one should choose one notion of fairness over another, and it is unsatisfying that we do not have a way of comparing and selecting between them without appealing to informal arguments.
}

Perhaps in response to these issues, some recent work has trended toward welfare-based fairness-concepts, 
wherein both \emph{accuracy} and \emph{fairness} are encoded in a \emph{welfare function} defined on a \emph{group of subpopulations}.
Welfare is then directly optimized~\citep{hu2020fair,rolf2020balancing,siddique2020learning}\todo{check this} or constrained~\citep{speicher2018unified,heidari2018fairness} to promote \emph{fair learning} across \emph{all groups}.  
%
Perhaps the most similar to our work is a method of \citet{hu2020fair}, wherein they \emph{directly maximize} empirical welfare over linear (halfspace) classifiers; however as with other previous works, an appropriate utility function must be selected. 
We argue that \emph{empirical welfare maximization} is an effective strategy when a 
measure of \emph{utility} is available, but in machine learning contexts, there is no ``correct'' or clearly neutral way to convert loss to utility.\todo{ad hoc methods}
Our strategy avoids this issue by working directly in terms of malfare and risk.\todo{again; on same axiomatization}

\todo{OLD:
The most poignant contrast to existing work we can make is to the \emph{Seldonian learner}~\citep{thomas2019preventing} framework, which can be thought of as extending PAC-learning to learning problems with both \emph{arbitrary constraints} and \emph{arbitrary nonlinear objectives}.
We argue that this generality is harmful to the utility of the concept as a mathematical or practical object, as nearly any machine learning problem can be posed as a constrained nonlinear optimization task.
The utility in FPAC learning is that it is sophisticated enough to handle fairness issues, with a particular axiomatically justified objective, but remains simple enough to study as a mathematical object; in particular reductions between various PAC and FPAC learnable classes are of great value in understanding FPAC learning, which would not be possible in a more general framework.
}

The above works, and even their criticisms, largely focus on fairness concepts in-and-of-themselves, and sparsely treat the issue of showing that a given fairness concept \emph{generalizes} from \emph{training} to \emph{underlying task}. 
The history of machine learning is fraught with 
the consequences of ignoring overfitting (as after all, it is human nature to perceive patterns, even where none exist), and we argue they are particularly dire in fairness sensitive settings.
We argue that 
\emph{overfitting to fairness} is manifest not only in \emph{generalization error}, but also as models \emph{appearing fair} in training, but failing to be so on the underlying task. 
This can mean fairness constraints are satisfied in the training set but violated on the underlying distribution, or that a model 
overfits to \emph{small} or \emph{poorly studied} groups (for which a dearth of data may be available).\draftnote{TODO example?}
More complicated issues may arise; with data-dependent constraints, the feasible model space is data-dependent, and thus learning may exhibit instability, sample complexity depends on these constraints in complicated ways, and in some cases it may not even be possible to satisfy all constraints. 

\Citet{rothblum2018probably} argue that the \emph{individual-level} metric-fair constraints of \citet{dwork2012fairness} can't be expected to generalize, so they introduce a relaxed notion for which they can show generalizability.\draftnote{TODO criticise individual level fairness}
\Citet{thomas2019preventing} make similar criticisms, and introduce the \emph{Seldonian learner} 
framework, which can be thought of as extending PAC-learning to learning problems with both \emph{arbitrary constraints} and \emph{arbitrary nonlinear objectives}.
While very useful from a practical perspective to codify the desiderata of fair learning algorithms, the authors investigate individual Seldonian learners of interest, rather than studying the class of Seldonian learners as a \emph{mathematical object}. 
Such study is difficult, due to the extreme generality of the class,\footnote{
The Seldonian learner concept generalizes earlier fair-learnability concepts, such as \emph{probably approximately correct and fair learning} for approximate metric-fairness \citep{rothblum2018probably}, as well as the standard PAC concept, and indeed, the FPAC concept presented here.
} and also due to the difficulty of bounding sample complexity for \emph{constrained objectives}.\footnote{
Note that the sample complexity of determining whether constraints are even feasible is, in general, unbounded.
}
\draftnote{Is this cut good:
Our 
FPAC-learning framework was formulated 
 with mathematical study of computational and sample complexity in mind, and this narrower scope is key to showing reductions between various PAC and FPAC learnable classes, which are of great value in understanding FPAC learning.
}

In contrast to the above methods, the FPAC-learning framework considers optimizing a single (unconstrained) cardinal malfare objective. 
No fairness tolerance parameters, demographic parity constraints, or explicit utility function definitions are required, and, although nonlinear, all fair malfare objectives, \emph{unlike some fair welfare objectives}, are Lipschitz continuous.
This simplicity also leads naturally to straightforward \emph{statistical analysis} and \emph{generalization guarantees} for malfare objectives, and such generalization guarantees are particularly significant, as with malfare, they control for overfitting of both accuracy and fairness.
Consequently, in many cases, the \emph{sample complexity} (statistical hardness), and often the \emph{computational complexity} (algorithmic hardness) of training malfare-optimal models is comparable to standard (fairness-agnostic) machine-learning methods.

\todo{Old text:
Say: Consequently, in many cases, the \emph{sample complexity} (statistical hardness), and often the \emph{computational complexity} (algorithmic hardness) of training malfare-optimal models is comparable to standard (fairness-agnostic) machine-learning methods.
This directly contrasts demographic-parity models, where guaranteeing $\varepsilon$-optimality \emph{while respecting constraints} often has unbounded sample complexity.
}

\subsection{Contributions}
\label{sec:intro:contrib}

\ifthesis{}%
TODO is this a good transition?
\else{}%
This manuscript is split into two main parts; we first define malfare and derive its properties in \cref{sec:pop-mean,sec:comparisons,sec:stat}, and subsequently we define and explore FPAC learning (and
learnability) in \cref{sec:pac,sec:ftfsl,sec:ccl}.
\fi{}%
We briefly summarize our contributions as follows.
\begin{enumerate}[wide, labelwidth=0pt, labelindent=0pt]\setlength{\itemsep}{3pt}\setlength{\parskip}{0pt}
\if 0
\item We introduce the \emph{unit scale} axiom $\Mean(\bm{1}; \wv) = 1$ and \emph{multiplicative linearity} axiom $\alpha\Mean(\lv; \wv) = \alpha\Mean(\lv; \wv)$, which together ensure that the \emph{units} and \emph{scale} of aggregator functions and individual values are directly comparable.
Under the conditions of the Debreu-Gorman theorem, these imply that all welfare (malfare) functions are \emph{power means}.
\fi
\item We derive in \cref{sec:pop-mean} the \emph{malfare concept}, extending welfare to measure \emph{negatively-connoted} sentiments, and show that \emph{malfare-minimization} naturally generalizes \emph{risk-minimization} to produce \emph{fairness-sensitive} machine-learning objectives that consider multiple protected groups.
%
\item We show in \cref{sec:stat} that in many cases, while empirical estimates of welfare and malfare are \emph{statistically biased}, they are \emph{consistent}, and malfare may be \emph{sharply estimated} using 
 finite-sample concentration-of-measure bounds.
\item In \cref{sec:comparisons}, we examine the decisions made in \cref{sec:pop-mean}, and explore what would change under alternative axioms and other counterfactuals. 
We also contrast malfare minimization with welfare maximization, and relate both to fairness constraints on \emph{inequality indices}.
This section contextualizes the work as a whole, but may be skipped without impeding 
understanding of the sequel. 
\ifshowpac
\item \Cref{sec:pac} extends PAC-learning to fair-PAC (FPAC) learning, where we consider minimization not only of \emph{risk} (expected loss) objectives, but also of \emph{malfare} objectives.
Both PAC and FPAC learning are parameterized by a \emph{learning task} (model space and loss function), and we explore the rich learnability-hierarchy under variations of these concepts.
In particular, we show that
\begin{enumerate}
\item for many loss functions, PAC and FPAC learning are \emph{statistically equivalent} (i.e., PAC-learnability implies FPAC-learnability) in \cref{sec:ftfsl}; and
\item standard convexity and coverability conditions sufficient for PAC-learnability are also sufficient for FPAC-learnability in \cref{sec:ccl}.
%
\end{enumerate}
While we explore the basic relationships between various learnability classes, many open questions remain, and we hope future work will further characterize these practically interesting and theoretically deep problems.
For brevity, 
longer, 
more technical proofs are presented in the appendix.
\fi
\end{enumerate}

\draftnote{Bibliography: JSTOR entries? Better source on ``Fairness, equality, and power in algorithmic decision making.'' Proper Russian / German text? Better theil citation? Caps in ``Learnability and the Vapnik-Chervonenkis dimension?''}

\todo{improve stat estimability; mcd? bias? sbf for Bennett? ebennett? rade?}

\todo{John Kleinberg; mechanism design; sendal talk?  Other Hoda paper; fix description Quote at 49:25 https://www.youtube.com/watch?v=rp965fnd3qE Predicting wrong thing at 1:02:00, cost vs health, arrest vs guilt}

\section{Aggregating Sentiment within Populations}
\label{sec:pop-mean}

\todo{Measure theoretic treatment: \url{https://guilhermejacob.github.io/context/3-8-generalized-entropy-and-decomposition-svygei-svygeidec.html}}

\todo{$\lv$ to $\bm{S}$ for \emph{sentiment}?}

\draftnote{par / def are redundant}

A generic \emph{aggregator function} function $\Mean(\lv; \wv)$ quantifies some \emph{sentiment value} $\lv$ \emph{in aggregate} across a population $\Population$ weighted by $\wv$.
In particular, $\lv: \Population \to \RNN$ describes the \emph{values} over which we aggregate, and $\wv$, a probability measure over $\Population$, describes their \emph{weights}.
We assume throughout the \emph{nondegeneracy condition} that $\Support(\wv) = \Population$; this ensures no part of the population is ignored, and simplifies the algebra and presentation.
We also often assume $\abs{\Population} > 1$, and usually $\Population$ is 
finite, in which case $\lv$ and $\wv$ may be represented as a \emph{sentiment vector} and \emph{probability vector}, respectively.

When $\lv$ measures a \emph{desirable quantity}, generally termed \emph{utility}, the aggregator function is a measure of \emph{cardinal welfare} \citep{moulin2004fair}, and thus quantifies overall \emph{wellbeing}.
We also consider the inverse-notion, that of overall \emph{illbeing}, termed \emph{malfare}, in terms of an \emph{undesirable} $\lv$, generally \emph{loss} or \emph{risk}, which naturally extends the concept.
We show an equivalent \emph{axiomatic justification} for malfare, and argue that its use is more natural in many situations, particularly when considering or optimizing \emph{loss functions} in machine learning.

\draftnote{More convenient to have two nondegeneracy conditions?:
For convenience, we assume two \emph{nondegeneracy conditions} throughout:
\begin{enumerate}
\item $\wv$ has full support; and 
\item $\Population$ has multiplicity; i.e., its cardinality obeys $\abs{\Population} > 1$.
\end{enumerate}
When the first is violated it is trivially fixed by defining a new population Population Prime equals support population and analyzing this subpopulation. The second is also not particularly limiting as Singleton populations are of very little interest and indeed we see the only axiomatically Justified aggregator function Shelby the identity function.
}

\draftnote{More convenient to have three nondegeneracy conditions?:
We assume for convenience throughout that three nondegeneracy conditions hold:
\begin{enumerate}
\item $\wv$ has full support; and 
\item $\lv$ is mutually distinct, i.e., $\not \exists \PopItem, \PopItem' \in \Population$ s.t.\ $\lv(\PopItem) = \lv(\PopItem')$;
\item $\Population$ has multiplicity; i.e., its cardinality obeys $\abs{\Population} > 1$.
\end{enumerate}
}


\begin{definition}[Aggregator Functions: Welfare and Malfare]
\label{def:pop-mean}
An \emph{aggregator function} function $\Mean(\lv; \wv)$ measures the \emph{overall sentiment} of population $\Population$, measured by \emph{sentiment function} $\lv: \Population \to \RNN$, weighted by \emph{probability measure} $\wv$ over $\Population$ (with full support).
If $\lv$ denotes a desirable quantity (e.g., utility), we call $\Mean(\lv; \wv)$ a \emph{welfare function}, written $\Welfare(\lv; \wv)$, and inversely, if it is undesirable (e.g., disutility, loss, or risk), we call $\Mean(\lv; \wv)$ a \emph{malfare function}, written $\Malfare(\lv; \wv)$.

\todo{factoring of some sort?  Must be $F(\Expect_{\PopItem \distributed \wv}(f(\lv(\PopItem)))$?}

\todo{unweighted vectors?}

\end{definition}

For now, think of the term \emph{aggregator function} as signifying that an entire population, with diverse and subjective desiderata, is considered and summarized, as opposed to an individual's objective viewpoint (sentiment value).
Note that we use the term \emph{sentiment} to refer to $\lv$ with neutral connotation, but when discussing welfare or malfare, we often refer to $\lv$ as \emph{utility} or \emph{risk}, respectively, as in these cases, $\lv$ describes a well-understood pre\"existing concept.
Coarsely speaking, the three notions are identical, all being functions of the form\footnote{Ideally, aggregator functions would have domain $\RNN = [0, \infty)$ (the nonnegative reals), rather than $\RExt = [-\infty, \infty]$ (the extended reals), to match that of the sentiment value function, but infinite and/or negative aggregates are sometimes required, particularly in the \emph{additively separable form} (see~\cref{sec:comparisons:as}).\draftnote{Is $+\infty$ ever possible?}} $(\Population \to \RNN) \times \textsc{Measure}(\Population, 1) \to \RExt$, however, we shall see that
 in order to promote fairness, the 
  axioms of malfare and welfare functions differ slightly.
The notation reflects this; $\Mean(\lv; \wv)$ is an $\Mean$ for \emph{mean}, whereas $\Welfare(\lv; \wv)$ is a $\Welfare$ for \emph{welfare}, and $\Malfare(\lv; \wv)$ is an $\Malfare$ (\emph{inverted} $\Welfare$), to emphasize its inverted nature.

Often we are interested in \emph{unweighted} aggregator functions of finite discrete populations, where the \emph{sentiment function} may be represented as a \emph{sentiment vector} $\lv \in \RNN^{\NGroups}$.
Unweighted aggregators may then be defined in terms of weighted aggregators as
\[
\Mean(\lv) \doteq \Mean\left( i \mapsto \lv_{i}; i \mapsto \smash{\mathsmaller{\frac{1}{\NGroups}}} \right) \enspace,
\]
abusing notation to concisely express the \emph{uniform measure}.
Indeed, it may seem antithetical to fairness to allow for weights in malfare and welfare definitions; consider however that weights can represent \emph{differential population sizes}, and thus ensure that the welfare or malfare of \emph{weight-preserving decompositions} of groups into subgroups with equal risk or utility remains constant.
\todo{weighing rights of humans, machines, animals, etc; dangerous but perhaps necessary?}
\todo{Limit form criticism of utilitarian / egal?}

\todo{talk egal somewhere?}

\begin{example}[Utilitarian Welfare]
\label{ex:util}
\if 0 
Suppose a 3-group population $\Population \doteq \{ \PopItem_{1}, \PopItem_{2}, \PopItem_{3} \}$ consisting of three groups, encompassing 
$30\%$, $30\%,$ and $40\%$ of the total population, respectively.
We now take \emph{weights measure} $\wv$ \st{} $\wv(\PopItem_{i}) = (0.3, 0.3, 0.4)_{i}$, representing the relative sizes of each group in the population.\todo{$\wv(\PopItem_{i})$ is an abuse.}
\fi
%
Suppose individuals reside in some space $\X$, where \emph{distributions} $\ProbDist_{1:\NGroups}$ over domain $\X$ describe the distribution over individuals in \emph{each group}.
Suppose also 
\emph{utility function} $\Utility(x): \X \to \RNN$, describing the \emph{level of satisfaction} of an individual, \wrt{}, e.g., some  
 \emph{situation}, \emph{allocation}, or \emph{classifier}.
We now take the \emph{sentiment function} to be the \emph{arithmetic mean utility} (per-group), i.e.,
\[
\lv(\PopItem_{i}) 
  \doteq \Expect_{x \distributed \ProbDist_{i}}[\Utility(x)]
  = \Expect_{\ProbDist_{i}}[\Utility]
  \enspace.
\]
%
Now, given a \emph{weights vector} $\wv$, describing the \emph{relative frequencies} of membership in each of the $\NGroups$ groups, we define the \emph{utilitarian welfare} as
\[
\Welfare_{1}(\lv; \wv) \doteq \smash{\sum_{i=1}^{\NGroups}} \wv(\PopItem_{i}) \lv(\PopItem_{i}) = \Expect_{\PopItem \distributed \wv}[\lv(\PopItem)] = \Expect_{\wv}[\lv] \enspace.
\]
\end{example}

Of course, in statistical, sampling, and machine learning contexts, $\ProbDist_{1:\NGroups}$ and $\wv$ may be unknown, so we now discuss an \emph{empirical analog} of utilitarian welfare.
\Cref{sec:stat} is then devoted to showing how and when empirical aggregator functions well-approximate their true counterparts.\todo{mention bias?}

\begin{example}[Empirical Utilitarian Welfare]
\label{ex:eutil}
Now suppose $\ProbDist_{1:\NGroups}$ are unknown, but instead, we are given a \emph{sample} $\bm{x}_{1:\NGroups,1:m} \in \X^{\NGroups \times m}$, where $\bm{x}_{i,1:m} \distributed \ProbDist_{i}$. 
We define the \emph{empirical analog} of the utilitarian welfare 
as
\[
\hat{\lv}(\PopItem_{i})
  \doteq \EExpect_{x \in \bm{x}_{i}}[\Utility(x)] \ \ \ \& \ \ \ \hat{\Welfare}_{1}(\hat{\lv}, \wv) \doteq \Expect_{\wv}[\hat{\lv}] \enspace.
\]
Similarly, 
if $\wv$ is unknown, but we may sample from some $\ProbDist$ over $\Population \times \X$, we can use \emph{empirical frequencies} $\hat{\wv}$ in place of \emph{true frequencies} $\wv$, and define $\hat{\lv}(\PopItem_{i})$ as \emph{conditional averages} over the subsample associated with group $i$.
\end{example}

\todo{measurability of $\lv$ required (why); generally do not worry about it; usually care about discrete.  Continuous useful for density estimation; priors; unknown, etc.  Other applications; allocate radio / cell towers uniformly over sphere, etc.}

\subsection{Axioms of Cardinal Welfare and Malfare}
\label{sec:pop-mean:axioms}

\if 0 
\begin{definition}[Axioms of Cardinal Welfare (unweighted)]
\label{def:cardinal-axioms}
We define the \emph{aggregator function axioms} for aggregator function $\Mean(\dots)$ as
\begin{enumerate}[wide, labelwidth=0pt, labelindent=0pt]\setlength{\itemsep}{3pt}\setlength{\parskip}{0pt}
\item \label{def:cardinal-axioms-uw:mono}\label{def:cardinal-axioms-uw:dgfirst} Monotonicity: $\forall \epsv \succeq 0: \Malfare(\lv) \leq \Malfare(\lv + \epsv)$ (resp.~\emph{strict monotonicity} with $<$, $\epsv \neq \bm{0}$)
\item \label{def:cardinal-axioms-uw:symm} Symmetry: $\forall$ permutations $\pi$, we have $\Malfare(\lv) = \Malfare(\pi(\lv))$

\todo{Can usually define weighted malfare in terms of unweighted, except here:}
Weighted case: $\Malfare(\lv; \wv) = \Malfare(\pi(\lv); \pi(\wv))$ 
\item \label{def:cardinal-axioms-uw:cont}

\emph{Continuity}: $\forall \lv: \{\lv' \mid \Malfare(\lv') \leq \Malfare(\lv)\}$ is a closed set.
\item \label{def:cardinal-axioms-uw:ioua} Independence of Unconcerned Agents (IOUA) $\Malfare(\lv | \lv_{i} = a) \leq \Malfare(\lv' | \lv'_{i} = a) \Leftrightarrow \Malfare(\lv | \lv_{i} = b) \leq \Malfare(\lv' | \lv'_{i} = b)$ 

\todo{write as: $\Malfare(\lv_{\lv_{i} = a}; \wv) \leq \Malfare(\lv'_{\lv'_{i} = a}; \wv) \Leftrightarrow \Malfare(\lv_{\lv_{i} = b}; \wv) \leq \Malfare(\lv'_{\lv'_{i} = b}; \wv)$}

\todo{or: $\Malfare(\lv, a) \leq \Malfare(\lv', a) \Leftrightarrow \Malfare(\lv, b) \leq \Malfare(\lv', b)$}

\item \label{def:cardinal-axioms-uw:iocs}\label{def:cardinal-axioms-uw:dglast} Independence of Common Scale: $\forall \alpha \in \R_{+}: \ \Malfare(\lv) \leq \Malfare(\lv') \implies \Malfare(\alpha\lv) \leq \Malfare(\alpha\lv')$;

\item \label{def:cardinal-axioms-uw:mult} Multiplicative Linearity: $\Malfare(\alpha \lv) = \alpha \Malfare(\lv)$;

\item \label{def:cardinal-axioms-uw:unit} Unit Scale: $\Malfare(\bm{1}) = \Malfare(1, \dots, 1) = 1$

\item \label{def:cardinal-axioms-uw:pd} Pigou-Dalton (equivalent): transfer principle, weak and strong variants.



\item \label{def:cardinal-axioms-uw:injustice} Injustice Principle: inverse of \cref{def:cardinal-axioms-uw:pd}.

\end{enumerate}

\end{definition}
\todo{mean, not malfare}
\fi

\draftnote{Measurability assumptions}

\draftnote{Do we need restriction on symmetry? Measure-preserving permutations?} 

In 
this section, we describe various desiderata for aggregator functions, and in particular for fair malfare and welfare functions.
We shall see that the utilitarian welfare is the only aggregator function that is both a fair malfare and welfare function (due to the opposite sense of utility and disutility, \emph{egalitarian} welfare and malfare are analogous, but do not share a functional form, being the \emph{minimum} or \emph{maximum} sentiment value, respectively).
In general, with our axioms, all aggregator functions belong to the single-parameter \emph{power-mean} family (\cref{sec:pop-mean:pow-mean}), but if an alternative, \emph{additive separability} axiom is instead taken, we get a similar family 
(\cref{sec:comparisons:as}). 
The axioms are generally referred to as the axioms of cardinal welfare, though nearly all work equally well as malfare axioms.
Typically, they are stated for \emph{positive}, \emph{unweighted}, and \emph{finite} populations, rather than non-negative, weighted, and measurable populations, but the technical impact of this distinction is quite minor.\footnote{
In particular, allowing $\lv$ to attain $0$ values (by including the appropriate limit sequences)  
 may violate \emph{strict monotonicity}, so the condition is relaxed around $0$. 
}

\todo{Axioms 1-4 are standard \citep{sen1977weights,roberts1980interpersonal}, see, e.g., \citep{moulin2004fair} for further reading. TODO they must be earlier; D-G thm is earlier.}

\todo{Normally stated for unweighted.}

\todo{Split into ``classic'' and ``novel''?}
\begin{definition}[Axioms of Cardinal Welfare and Malfare]
\label{def:cardinal-axioms}
We define the \emph{aggregator function axioms} for aggregator function $\Mean(\lv; \wv)$ below.
For each item, assume (if necessary) that the axiom applies $\forall \lv, \lv' \in \Population \to \RNN$, 
scalars $\alpha, \beta \in \RNN$, 
and probability measures $\wv$ over $\Population$. 
\todo{fix $\Population$ notation.}
\todo{Assumptions?}
\begin{enumerate}[wide, labelwidth=0pt, labelindent=0pt]\setlength{\itemsep}{3pt}\setlength{\parskip}{0pt}

\item \label{def:cardinal-axioms:mono}\label{def:cardinal-axioms:dgfirst}\label{def:cardinal-axioms:gmeanfirst} (Strict) Monotonicity: If $0 \not\in \lv(\Population)$, then $\forall \epsv: \Population \to \RNN$ s.t.\ $\int\limits_{\wv} \epsv(\PopItem) \, \mathrm{d}(\PopItem) > 0$: $\Mean(\lv; \wv) < \Mean(\lv + \epsv; \wv)$. 

\todo{Weaker, but still sufficient? notion of monotonicity: $\forall \epsv \text{ s.t. } \Prob(\wv \in \{ p \in \Population \text{ s.t. } \epsv(p) < 0 \}) = 0 \implies \Mean(\lv; \wv) \leq \Mean(\lv + \epsv; \wv)$}
\item \label{def:cardinal-axioms:symm} Symmetry: $\forall$
permutations $\pi$ over $\Population$: 
$\Mean(\lv; \wv) = \Mean(\pi(\lv); \pi(\wv))$.

\item \label{def:cardinal-axioms:cont}\label{def:cardinal-axioms:gmeanlast}
Continuity: 
$\{\lv' \mid \Mean(\lv'; \wv) \leq \Mean(\lv; \wv)\}$ and $\{\lv' \mid \Mean(\lv'; \wv) \geq \Mean(\lv; \wv)\}$ are closed sets.


\item \label{def:cardinal-axioms:ioua} Independence of Unconcerned Agents (IOUA): Suppose 
subpopulation $\Population' \subseteq \Population$.  Then

\ifthesis
{\scalebox{0.98}{{
$\displaystyle
\Mean\!\left(\begin{cases} \PopItem \in \Population': & \!\!\! \alpha \\ \PopItem \not \in \Population': & \!\!\! \lv(\PopItem) \\ \end{cases}\!; \wv\right) \leq \Mean\!\left(\begin{cases} \PopItem \in \Population': & \!\!\! \alpha \\ \PopItem \not \in \Population': & \!\!\! \lv'(\PopItem) \\ \end{cases}\!; \wv\right)
\implies
\Mean\!\left(\begin{cases} \PopItem \in \Population': & \!\!\! \beta \\ \PopItem \not \in \Population': & \!\!\! \lv(\PopItem) \\ \end{cases}\!; \wv \right) \leq \Mean\!\left(\begin{cases} \PopItem \in \Population': & \!\!\! \beta \\ \PopItem \not \in \Population': & \!\!\! \lv'(\PopItem) \\ \end{cases} \!; \wv \right) \enspace.
$
}}}
\else
{\scalebox{0.95}{{
$\displaystyle
\hspace{-0.45cm}\Mean\!\left(\!\begin{cases}\PopItem \in \Population': & \!\!\!\! \alpha \\ \PopItem \not \in \Population': & \!\!\!\! \lv(\PopItem) \\ \end{cases}\!\!; \wv\right) \leq \Mean\!\left(\!\begin{cases}\PopItem \in \Population': & \!\!\!\! \alpha \\ \PopItem \not \in \Population': & \!\!\!\! \lv'(\PopItem) \\ \end{cases}\!\!; \wv\right)
\! \implies \!\!
\Mean\!\left(\!\begin{cases}\PopItem \in \Population': & \!\!\!\! \beta \\ \PopItem \not \in \Population': & \!\!\!\! \lv(\PopItem) \\ \end{cases}\!\!; \wv \right) \leq \Mean\!\left(\!\begin{cases}\PopItem \in \Population': & \!\!\!\! \beta \\ \PopItem \not \in \Population': & \!\!\!\! \lv'(\PopItem) \\ \end{cases} \!\!; \wv \right) \enspace.
$
}}}
\fi

\item \label{def:cardinal-axioms:iocs}\label{def:cardinal-axioms:dglast} Independence of Common Scale (IOCS): 
$\Mean(\lv; \wv) \leq \Mean(\lv'; \wv) \implies \Mean(\alpha\lv; \wv) \leq \Mean(\alpha\lv'; \wv)$.

\todo{Was: Suppose $\alpha > 1$.  Then $\Mean(\alpha \lv; \wv) \leq \Mean(\lv; \wv)$.  This is wrong?}

\item \label{def:cardinal-axioms:mult} Multiplicative Linearity: 
$\Mean(\alpha \lv; \wv) = \alpha \Mean(\lv; \wv)$.

\item \label{def:cardinal-axioms:unit} {Unit Scale}: $\Mean(\bm{1}; \wv) = \Mean(\PopItem \mapsto 1; \wv) = 1$. 

\item \label{def:cardinal-axioms:pd} Pigou-Dalton Transfer Principle:
Suppose 
$\mu = \Expect_{\wv}[\lv] = \Expect_{\wv}[\lv']$, and $\forall \PopItem \in \Population: \abs{\mu - \lv'(\PopItem)} \leq \abs{\mu - \lv(\PopItem)}$.
Then $\Welfare(\lv'; \wv) \geq \Welfare(\lv; \wv)$.

\item \label{def:cardinal-axioms:apd} Anti Pigou-Dalton Transfer Principle: Suppose 
as in axiom \ref{def:cardinal-axioms:pd}, 
and conclude $\Malfare(\lv'; \wv) \leq \Malfare(\lv; \wv)$.

\todo{weak and strong variants.}

\todo{Stronger than needed; sets of measure 0 could violate condition without issue.}

\if 0
\todo{It's stronger than needed; measure theoretic formulation looks more like
$\forall p \subseteq \Population'$, (need to involve $\wv$:) $\lv(p) \leq \lv''(p) \leq \lv'(p)$ and $\lv(p) + \lv'(p) = 2\lv''(p)$.
Furthermore, suppose $\forall p \subsetq \Population \setminus \Population'$, \dots}
\fi




\end{enumerate}

\end{definition}

We take a moment to comment on each of these axioms, to preview their purpose and assure the reader of their necessity.
Axioms \ref{def:cardinal-axioms:dgfirst}-\ref{def:cardinal-axioms:dglast} are the standard \emph{axioms of cardinal welfarism} (\ref{def:cardinal-axioms:mono}-\ref{def:cardinal-axioms:ioua} are discussed by \citet{sen1977weights,roberts1980interpersonal}, and \ref{def:cardinal-axioms:dglast} by \citet{debreu1959topological,gorman1968structure}).
Together, they imply (via the Debreu-Gorman theorem) that any aggregator function can be decomposed as a \emph{monotonic function} of a \emph{sum} (over groups) of \emph{logarithm} or \emph{power} functions\todo{ref DG thm}.
Axiom \ref{def:cardinal-axioms:mult} is a natural and useful property, and ensures that \emph{dimensional analysis} on aggregator functions is possible; in particular, the \emph{units} of aggregator functions match those of sentiment values.
Note that axiom~\ref{def:cardinal-axioms:mult} implies axiom~\ref{def:cardinal-axioms:iocs}, and it is thus a simple strengthening of a traditional cardinal welfare axiom.
We will also see that it is essential to show convenient \emph{statistical} and \emph{learnability} properties.
Axiom~\ref{def:cardinal-axioms:unit} furthers this theme, as it ensures that not only do \emph{units} of aggregates match those of $\lv$, but \emph{scale} does as well (making comparisons like ``$\lv_{i}$ is \emph{above} the welfare (of the population)'' meaningful), and also enabling comparison \emph{across populations}, in the sense that comparing \emph{averages} is more meaningful than \emph{sums}.
Finally, axiom~\ref{def:cardinal-axioms:pd} (the \emph{Pigou-Dalton transfer principle}. 
see \citet{pigou1912wealth,dalton1920measurement})
is also standard in cardinal welfare theory as it ensures fairness, in the sense that welfare is higher when utility values are more uniform, i.e., incentivizing \emph{equitable redistribution of ``wealth''} in welfare.
Its antithesis, axiom~\ref{def:cardinal-axioms:apd}, encourages the opposite; in the context of welfare, this perversely incentivizes an expansion of inequality, but for malfare, which we generally wish to \emph{minimize}, the opposite occurs, thus this axiom characterizes \emph{fairness} 
for \emph{malfare}.\todo{Discuss novelty}

Axioms \ref{def:cardinal-axioms:mult} \& \ref{def:cardinal-axioms:unit} are novel to this work, and are key in strengthening the Debreu-Gorman theorem to ensure that all welfare and malfare functions are \emph{power means} in the sequel.
Axiom~\ref{def:cardinal-axioms:apd} is also novel, as it is necessary to flip the inequality of axiom \ref{def:cardinal-axioms:pd} when the sense of the aggregator function is inverted from welfare to malfare; in particular, the semantic meaning shifts from requiring that ``redistribution of utility is \emph{desirable}'' to ``redistribution of disutility is \emph{not undesirable}.''

%

\todo{fwd ref figure?}

\todo{Contrast with standard additivity properties from IOCS?}

\todo{GEI / Theil indices citations}

\draftnote{Look into ``A Moral Framework for Understanding of Fair ML
through Economic Models of Equality of Opportunity''}

\subsection{The Power Mean}
\label{sec:pop-mean:pow-mean}

\todo{See 
\citep{bullen2013handbook} pp 216-275 for proper measure-theoretic treatment.}

We now define the $p$-\emph{power mean}\footnote{The $p$-power-mean is referred to by some authors as the \emph{generalized mean} or H\"older mean, and is itself a generalization of the \emph{Pythagorean} (arithmetic, geometric, and harmonic) means.} $\Mean_{p}(\cdot)$, for any $p \in \RExt$, 
and the weighted $p$-power-mean $\Mean_{p}(\cdot; \cdot)$, which we shall use to quantify both malfare and welfare.
We shall see that power means exhibit many convenient properties (\cref{thm:pow-mean-prop}), and arise often (\cref{thm:pop-mean-prop}) when analyzing aggregator functions obeying the various axioms of \cref{def:cardinal-axioms}. 

\todo{Cut unweighted?}

\begin{definition}[Power-Mean Welfare and Malfare]
\label{def:pmean}
Suppose $p \in \RExt$. 
We first define the \emph{unweighted power-mean} of sentiment vector $\lv \in \RNN^{\NGroups}$ as
\[
\Mean_{p}(\lv) \doteq
  \begin{cases}
  p \in \R \setminus \{0\} & \displaystyle \sqrt[p]{\frac{1}{\NGroups}\sum_{i=1}^{\NGroups} \lv^{p}_{i}} \\
  p = -\infty & \displaystyle \min_{i \in 1, \dots, \NGroups} \lv_{i} \\
  p = 0 & \displaystyle \sqrt[\NGroups]{\prod_{i=1}^{\NGroups} \lv_{i}} = \exp\left( \frac{1}{\NGroups}\sum_{i=1}^{\NGroups} \ln(\lv_{i}) \right) \\
  p = \infty & \displaystyle \max_{i \in 1, \dots, \NGroups} \lv_{i} \enspace. \\
  \end{cases}
\]

We now define the \emph{weighted power-mean}, given \emph{sentiment value function} $\lv: \Population \to \RNN$ and \emph{probability measure} $\wv$ over $\Population$, as
\[
\Mean_{p}(\lv; \wv) \doteq
  \begin{cases}
  p \in \R \setminus \{0\} & \displaystyle \sqrt[p]{\vphantom{\frac{1}{2}}\smash{\int\limits_{\wv}} \lv^{p}(\PopItem) \, \mathrm{d}(\PopItem) } = \sqrt[p]{\Expect_{\PopItem \distributed \wv}[\lv^{p}(\PopItem)]} \\[0.25cm]
  p = -\infty & \displaystyle \inf_{\PopItem \in \Population} \lv(\PopItem) \\
  p = 0 & \displaystyle \exp\left( \int\limits_{\wv} \ln \lv(\PopItem) \, \mathrm{d}(\PopItem) \right) = \exp\left( \Expect_{\PopItem \distributed \wv}[\ln \lv(\PopItem)] \right) \\
  p = \infty & \displaystyle \sup_{\PopItem \in \Population} \lv(\PopItem) \enspace. \\
  \end{cases}
\]
In both the weighted and unweighted cases, $p \in \{-\infty, 0, \infty\}$ resolve as their (unique) limits, and for all $p \in \R$, 
 \emph{power means} are special cases of the (weighted) \emph{generalized $f$-mean} (a.k.a.\ the $f$-mean or Kolmogorov mean), defined for strictly monotonic $f$ as
\[
\Mean_{f}(\lv; \wv) \doteq f^{-1}\smash{\Bigl(\Expect_{\PopItem \distributed \wv}\bigl[ (f \circ \lv) (\PopItem) \bigr] \Bigr)} \enspace.\todo{citation; reorder? inverse symbol?}
\]
Note also that, as always, we assume nondegeneracy condition $\Support(\wv) = \Population$; otherwise the $p \in \pm \infty$ cases would need to restrict their attention to $\Support(\wv)$, rather than all of $\Population$.
Finally, note that, if care is not taken, $\Mean_{p}(\lv; \wv)$ is $p \leq 0$ is undefined when some $\lv(\PopItem) = 0$, as this creates $\log(0)$ or $\frac{1}{0}$ expressions.
We resolve this issue by taking the above definitions 
for positive-valued $\lv$, and extending to the general case by taking
\[
\lim_{\varepsilon \to 0^{+}} \Mean_{p}(\lv + \varepsilon; \wv) \enspace.
\]
\end{definition}


\begin{restatable}[Properties of the Power-Mean]{theorem}{thmpowmeanprop}
\label{thm:pow-mean-prop}
Suppose $\lv, \lv'$ are sentiment functions in $\Population \to \RNN$, and $\wv$ is a probability measure over 
 $\Population$. 
The following then hold.
\begin{enumerate}[wide, labelwidth=0pt, labelindent=0pt]\setlength{\itemsep}{3pt}\setlength{\parskip}{0pt}
\item \label{thm:pow-mean-prop:mono} Monotonicity: $\Mean_{p}(\lv; \wv)$ is weakly-monotonically-increasing in $p$, and strictly so if 
$\exists \, \PopItem, \PopItem' \in \Population$ s.t.\ $\lv(\PopItem) \neq \lv(\PopItem')$. 
\item \label{thm:pow-mean-prop:subadditivity} Subadditivity: $\forall p \geq 1: \, \Mean_{p}(\lv + \lv'; \wv) \leq \Mean_{p}(\lv; \wv) + \Mean_{p}(\lv'; \wv)$. 

\item \label{thm:pow-mean-prop:contraction} Contraction: $\forall p \geq 1: \, \abs{ \Mean_{p}(\lv; \wv) - \Mean_{p}(\lv'; \wv) } \leq \Mean_{p}(\abs{\lv - \lv'}; \wv) \leq \norm{\lv - \lv'}_{\infty}$.

\todo{show analogous for $p < 1$}\todo{Check abs.}
\item \label{thm:pow-mean-prop:curvature} Curvature: $\Mean_{p}(\lv; \wv)$ is concave in $\lv$ for $p \in [-\infty, 1]$ and convex for $p \in [1, \infty]$.

\draftnote{For next version:
\item \label{thm:pow-mean-prop:curvature-wv} Curvature: $\Mean_{p}(\lv; \wv)$ is convex in $\wv$ for $p \in [-\infty, 1]$ and concave for $p \in [1, \infty]$.
Proof of this opposite curvature in $\wv$: linear interpolation under $\sqrt[p]{\cdot}$ operator? handle inf cases, true even for these degenerates.
}

\end{enumerate}
\end{restatable}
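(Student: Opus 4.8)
The plan is to establish the four claims in order, since each rests on the one before it. For \textbf{monotonicity in $p$}, fix $\lv,\wv$ and treat $p\mapsto\Mean_p(\lv;\wv)$. For finite $0<p<q$, apply Jensen's inequality to the convex map $t\mapsto t^{q/p}$ with the random variable $\lv^{p}$ under the probability measure $\wv$, giving $(\Expect_{\wv}[\lv^{p}])^{q/p}\le\Expect_{\wv}[\lv^{q}]$, and take $q$-th roots to get $\Mean_p\le\Mean_q$; the case $p<q<0$ is the same trick with the exponent arranged to exceed one, and strictness follows from strict convexity of $t\mapsto t^{q/p}$ once $\lv^{p}$ is not $\wv$-a.e.\ constant — which is exactly the non-constancy hypothesis $\exists\,\PopItem,\PopItem'$ with $\lv(\PopItem)\neq\lv(\PopItem')$ (under $\Support(\wv)=\Population$ the disagreement carries positive mass). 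The endpoints $p\in\{-\infty,0,\infty\}$ resolve as the stated limits ($\inf\lv$, the geometric mean, $\sup\lv$), so monotonicity extends continuously to all of $\RExt$; and the degeneracies where some $\lv(\PopItem)=0$ come for free from the $\varepsilon\to0^{+}$ convention of the power-mean definition, since monotone limits preserve the inequality.

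\textbf{Subadditivity} is precisely Minkowski's inequality on the probability space $(\Population,\wv)$: for $p\in[1,\infty)$, expand $(\lv+\lv')^{p}=(\lv+\lv')(\lv+\lv')^{p-1}$ and apply H\"older with conjugate exponents $p$ and $p/(p-1)$; for $p=\infty$ it is the triangle inequality for the supremum. Nonnegativity of $\lv,\lv'$ is automatic from $\lv,\lv'\in\Population\to\RNN$. \textbf{Contraction} then follows in two moves. First, $\Mean_p(\cdot;\wv)$ is weakly increasing in its vector argument in every case ($p>0$: $t\mapsto t^{p}$ and $t\mapsto t^{1/p}$ increasing, expectation monotone; $p=0$: $\ln,\exp$ increasing; $p<0$: two order reversals compose; $p=\pm\infty$: $\inf,\sup$), so from the pointwise bound $\lv=\lv'+(\lv-\lv')\le\lv'+\abs{\lv-\lv'}$ (all terms in $\RNN$) and subadditivity, $\Mean_p(\lv;\wv)\le\Mean_p(\lv';\wv)+\Mean_p(\abs{\lv-\lv'};\wv)$, and symmetrically after swapping $\lv,\lv'$, which gives $\abs{\Mean_p(\lv;\wv)-\Mean_p(\lv';\wv)}\le\Mean_p(\abs{\lv-\lv'};\wv)$. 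Second, $\Mean_p(\abs{\lv-\lv'};\wv)\le\Mean_{\infty}(\abs{\lv-\lv'};\wv)=\sup\abs{\lv-\lv'}\le\norm{\lv-\lv'}_{\infty}$ by monotonicity in $p$ again.

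For \textbf{curvature}, note $\Mean_p$ is positively homogeneous of degree one (multiplicative linearity, or checked directly), and a positively homogeneous \emph{subadditive} function on the convex cone $\Population\to\RNN$ is convex, since $\Mean_p(\lambda\lv+(1-\lambda)\lv')\le\Mean_p(\lambda\lv)+\Mean_p((1-\lambda)\lv')=\lambda\Mean_p(\lv)+(1-\lambda)\Mean_p(\lv')$; combined with the subadditivity above this yields convexity for $p\in[1,\infty]$ (with $p=\infty$ also immediate, as $\sup$ is convex). For $p\in[-\infty,1]$ we run the identical argument with \emph{superadditivity} in place of subadditivity to obtain concavity; superadditivity is the reverse Minkowski inequality $\Mean_p(\lv+\lv';\wv)\ge\Mean_p(\lv;\wv)+\Mean_p(\lv';\wv)$, valid for $p\in(0,1)$ and for $p<0$ on nonnegative functions, while $p=\pm\infty$ are direct ($\inf$ concave) and $p=0$ follows by letting $p\to0^{+}$ in the concavity already shown for $p\in(0,1)$, concavity being preserved under pointwise limits. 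I expect the main obstacle to be the reverse-Minkowski step for negative $p$ together with the $\lv(\PopItem)=0$ degeneracy: reverse Minkowski is classical (Hardy--Littlewood--P\'olya; see also Bullen's handbook) but proceeds via the opposite application of H\"older, and when zeros are present $\Mean_p$ for $p\le0$ is defined only through the $\varepsilon\to0^{+}$ limit, so one must check that super/subadditivity, and hence curvature, survive that limit — they do, but it warrants an explicit remark.
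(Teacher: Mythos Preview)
Your proposal is correct. For items 1--3 you follow essentially the same route as the paper: monotonicity via Jensen (the paper simply defers to the power-mean inequality in Bullen's handbook), subadditivity via Minkowski, and contraction by combining monotonicity of $\Mean_p$ in its vector argument with subadditivity and symmetry, then bounding by $\norm{\cdot}_\infty$ via monotonicity in $p$.

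For item 4 (curvature), your route differs. The paper argues via Jensen's inequality applied to $\sqrt[p]{\cdot}$, but the displayed inequality there actually reads $\Mean_1(\lv;\wv)\le\Mean_p(\lv;\wv)$, which is monotonicity in $p$ rather than convexity in $\lv$; so the paper's curvature argument is at best a terse sketch. Your approach---convexity from positive homogeneity plus subadditivity, and concavity from homogeneity plus reverse-Minkowski superadditivity, with the limit in $p$ handling $p=0$ and the $\varepsilon\to0^{+}$ limit handling the zero-value degeneracies---is the standard complete argument and in fact cleaner: convexity follows in one line once item 2 is in hand, and the concave case $p\le 1$ only needs the classical reverse Minkowski inequality, which the paper does not invoke.
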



\subsection{Properties of Welfare and Malfare Functions}
\label{sec:pop-mean:prop}

\begin{figure}

\ifthesis
\def\xs{3.1}
\def\ys{2.2}
TODO YSCALE
\else
\def\xs{3.06}
\def\ys{2.25}
\fi

\if 0
\if 0
\let\xs{2.6}
\let\ys{2.0}
\else
\let\xs{2.52}
\let\ys{1.95}
\fi
\fi

\begin{centering}
\begin{tikzpicture}[
    xscale=\xs,
    yscale=\ys,
    axs/.style={rectangle,draw,thick,rounded corners=0.5ex,fill=blue!10!white},
    props/.style={rectangle,draw,thick,rounded corners=0.5ex,fill=red!10!white},
    arrs/.style={line width=0.35mm, -{Classical TikZ Rightarrow[length=1.7mm,width=2.65mm,line width=0.35mm]}},
    grouping/.style={rectangle,draw,opacity=0.8,thick,rounded corners=1.4ex,inner sep=0.8ex}, 
  ]

\node[axs] (a1) at (0, 0) {1: Monotonicity};
\node[axs] (a2) at (0.87, 0) {2: Symmetry};
\node[axs] (a3) at (1.67, 0) {3: Continuity};
\node[axs] (a4) at (2.36, 0) {4: IOUA}; 
\node[axs] (a5) at (2.9, 0) {5: IOCS};

\node[axs] (a6) at (2.9, -0.44) 
  {6: $\times$ Lin.}; 
\node[axs] (a7) at (4.16, 0) {7: Unit Scale};


\draw[arrs] (a6) -- (a5);

\node[props] (mgm) at (0, -1) {%
  \begin{tabular}{@{}c@{}}
  Mono.\ in $f$-Mean \\
  $\Mean(\lv; \wv) \! = \! (F \! \circ \! \Mean_{f})(\lv; \wv)$ \\
  \end{tabular}};
\node[props] (mpm) at (1.5, -1) {
  \begin{tabular}{@{}c@{}}
  Mono.\ in $p$-Mean \\
  $\Mean(\lv; \wv) \! = \! (F \! \circ \! \Mean_{p})(\lv; \wv)$ \\
  \end{tabular}};
\node[props] (lpm) at (2.9, -1) {
  \begin{tabular}{@{}c@{}}
  Scaled $p$-Mean \\
  $\Mean(\lv; \wv) \! = \! \alpha \Mean_{p}(\lv; \wv)$ \\
  \end{tabular}};
\node[props] (pm) at (4.16, -1) {
  \begin{tabular}{@{}c@{}}
  $p$-Mean \\
  $\Mean(\lv; \wv) \! = \! \Mean_{p}(\lv; \wv)$ \\
  \end{tabular}};

(0.87, 0) {2: Symmetry};
\node[axs] (a3) at (1.67, 0) {3: Continuity};
\node[axs] (a4) at (2.36, 0) {4: IOUA};

\draw[arrs] (a1) -- (mgm);
\iffalse
\draw[arrs] (a2) .. controls +(0, -0.25) and (0.1, -0.5) .. (mgm);
\draw[arrs] (a3) .. controls +(0, -0.25) and (0.2, -0.5) .. (mgm);
\draw[arrs] (a4) .. controls +(0, -0.25) and (0.3, -0.5) .. (mgm);
\else
\draw[arrs] (a2) .. controls +(0, -0.3) and (0, -0.3) .. (mgm);
\draw[arrs] (a3) .. controls +(0, -0.35) and (0, -0.23) .. (mgm);
\draw[arrs] (a4) .. controls +(0, -0.45) and (0, -0.22) .. (mgm);
\fi

\draw[arrs] (mgm) -- (mpm);
\draw[arrs] (a5.230) .. controls +(0, -0.25) and (1.5, -0.3) .. (mpm.90);

\draw[arrs] (mpm) -- (lpm);
\draw[arrs] (a6) -- (lpm);

\draw[arrs] (lpm) -- (pm);
\draw[arrs] (a7) -- (pm);

\iftrue 
\node[props] (id) at
    (3.66, -0.45) {
  \begin{tabular}{@{}c@{}}
  Identity \\
  \small $\Mean(\alpha\bm{1}) = \alpha$ \\
  \end{tabular}};

\draw[arrs] (a6.0) -- (id);

\draw[arrs] (a7.269) .. controls (4.16, -0.39) and (4.11, -0.44) .. (id);
\fi

\node[axs] (a8) at (0, -2.35) {8: Pigou-Dalton};
\node[axs] (a9) at (1.5, -2.35) {9: Anti Pigou-Dalton};

\node[props] (fwf) at (0, -1.75) {\begin{tabular}{@{}c@{}} $p \leq 1$ \\ Fair Welfare \\ \end{tabular}};
\node[props] (fmf) at (1.5, -1.75) {\begin{tabular}{@{}c@{}} $p \geq 1$ \\ Fair Malfare \\ \end{tabular}};


\draw[arrs] (mpm) -- (fwf);
\draw[arrs] (a8) -- (fwf);
\draw[arrs] (mpm) -- (fmf);
\draw[arrs] (a9) -- (fmf);




\begin{scope}[xshift=-0.025cm,yshift=0.065cm]

\node[axs] (asep) at (2.85, -2.35) {Additive Separability};
\node[axs] (asep-ids) at (4.16, -2.35) {0-ID and 1-ID};

\node[props] (pacas) at (2.85, -1.75) {\begin{tabular}{@{}c@{}} Positive Affine CAS \\ \small $\Mean(\lv; \wv) \! = \! \beta \! + \! \alpha \MeanAs_{p}(\lv; \wv)$ \\ \end{tabular}};

\node[props] (cas) at (4.16, -1.75) {\begin{tabular}{@{}c@{}} CAS \\ \small $\Mean(\lv; \wv) \! = \! \MeanAs_{p}(\lv; \wv)$ \\ \end{tabular}};

\begin{scope}[blend mode=lighten]
\node[grouping,fit={ (asep) (asep-ids) (pacas) (cas) },draw=brown!30!black,fill=brown,fill opacity=0.15] (asgroup) {}; 
\end{scope}

\end{scope}

\draw[arrs] (asep) -- (pacas);
\draw[arrs] (mpm) -- (pacas.170);

\draw[arrs] (pacas) -- (cas);
\draw[arrs] (asep-ids) -- (cas);

\if 0 

\node[axs] (a10) at (3, -2.5) {
    \begin{tabular}{@{}l@{}}
    10: Bounded $\lv$ \\
    $\lv \in \Population \to [0, \frange]$ \\
    \end{tabular}
  };
\node[axs] (a11) at (4, -2.5) {\begin{tabular}{@{}l@{}} 11: Uniform \\ Convergence \\ \end{tabular}};
\node[axs] (a12) at (5, -2.5) {\begin{tabular}{@{}l@{}} 12: Convex \\ Optimization \\ \end{tabular}};


\node[props] (fl) at (4, -1.75) {
  \begin{tabular}{@{}c@{}}
  Fair-PAC-Learnability \\
  $\exists$ poly.{} \emph{sufficient} sample size $m^{*}(\mathcal{\F}, \varepsilon, \delta, n)$, $\bot p$ \\
  \end{tabular}};

\draw[arrs] (a10) -- (fl);
\draw[arrs] (a11) -- (fl);
\draw[arrs] (a12) -- (fl);
\draw[arrs] (fmf) -- (fl);
\draw[arrs] (pm) -- (fl);
\fi

\end{tikzpicture} \\
\end{centering}

\todo{Less abbrev? thm numbers?  PD TP!}
\todo{Figure out utilitarian back-arrow. None, because PD / APD can still mono transform?}
\todo{Can some of these be bijections?}
\todo{Add interp coros to props? (units, comparisons).}
\todo{Add stat estimation to fig?}

\caption{Relationships between aggregator function axioms and properties.
Assumptions and axioms are shown in pastel blue, and properties shown in pastel red.
These results are stated as \cref{thm:pop-mean-prop}, except for the \emph{additive separability} results (brown box), which are derived in \cref{sec:comparisons:as}.
\todo{Identity double implication is confusing.  More curved arrows.}
}
\label{fig:ax-prop}
\end{figure}

\draftnote{TODO: EQUATION NUMBERING!}

We now show that the axioms of \cref{def:cardinal-axioms} are sufficient to characterize many properties of welfare and malfare functions.
\if 0 
\begin{theorem}[Aggregator Function Properties (Unweighted)]
\label{thm:pop-mean-prop}
Suppose aggregator function $\Mean(\dots)$. 
Then (subsets of) the aggregator function axioms (see \cref{def:cardinal-axioms}) imply
\begin{enumerate}[wide, labelwidth=0pt, labelindent=0pt]\setlength{\itemsep}{3pt}\setlength{\parskip}{0pt}
\item \label{thm:pop-mean-prop:id} \emph{Identity}: Axioms \ref{def:cardinal-axioms:mult} \& \ref{def:cardinal-axioms:unit} imply 
$\Mean(\alpha, \dots, \alpha) = \alpha$.


\todo{move me later?}

\item \label{thm:pop-mean-prop:lin-fact} Axioms~\ref{def:cardinal-axioms:dgfirst}-\ref{def:cardinal-axioms:dglast} imply $\exists p \in \R$, strictly-monotonically-increasing continuous $F: \R \to \RNN$ s.t.
\[
\Mean(\lv) = F\left( \frac{1}{\NGroups}\sum_{i=1}^{\NGroups} f_{p}(\lv_{i}) \right) \enspace,
\enspace
\begin{cases}
p > 0 & f_{p}(x) \doteq x^{p} \\ 
p = 0 & f_{0}(x) \doteq \ln(x) \\
p < 0 & f_{p}(x) \doteq -x^{p} \\
\end{cases}
\enspace.
\]
\todo{I think $F$ is strictly monotonic-increasing?}
\todo{Thus $\Mean(\lv) = F \circ \Mean^{p}_{p}(\lv)$, taking $\Mean^{0}_{0} = \frac{1}{\NGroups}\sum_{i=1}^{\NGroups} \w_{i} \ln(\lv_{i})$}

\item \label{thm:pop-mean-prop:pmean} Axioms 1-4+6 imply $\Mean(\lv) = c\Mean_{p}(\lv)$ for some $p \in \R$, $c \in \RNN$.
Axiom \ref{def:cardinal-axioms:unit} further implies $c = 1$.



\item \label{thm:pop-mean-prop:pmean-fair} Axioms 1-5 and \ref{def:cardinal-axioms:pd} (Pigou-Dalton) imply $p \in (-\infty, 1]$.

\item \label{thm:pop-mean-prop:pmean-unfair} Axioms~\ref{def:cardinal-axioms:dgfirst}-\ref{def:cardinal-axioms:dglast} and \ref{def:cardinal-axioms:apd} (Anti Pigou-Dalton) imply $p \in [1, \infty]$.

\todo{Concatenation axioms commented}

\if 0
\item \emph{Concatenation}: $\Mean(\lv \circ \lv') \in \operatorname{CH}\bigl(\Mean(\lv), \Mean(\lv')\bigr)$ \cyrus{No idea how to prove this.  Requires symmetry, some definition changes? IOUA?}

\item \emph{Concatenation II}: Axioms 1-4 imply $\Mean(\lv \circ \lv', \w \circ \w') = F\left( \frac{\norm{\w}_{1}}{\norm{\w + \w'}_{1}}F^{-1} \circ \Mean(\lv, \w') + \frac{\norm{\w'}_{1}}{\norm{\w + \w'}_{1}}F^{-1}\Mean(\lv', \w')\right)$;

Adding axiom \cref{def:cardinal-axioms:unit} implies \todo{handle $p=0$ case} $\Mean(\lv \circ \lv', \w \circ \w') = \sqrt[p]{\frac{\norm{\w}_{1}}{\norm{\w + \w'}_{1}}\Mean^{p}(\lv, \w')) + \frac{\norm{\w'}_{1}}{\norm{\w + \w'}_{1}}\Mean^{p}(\lv', \w')}$ \todo{use subscripts 1 \& 2 instead of primes?  Write this with new weight notation.}

\fi

\end{enumerate}
\end{theorem}
\fi
\begin{restatable}[Aggregator Function Properties]{theorem}{thmpopmeanprop}
\label{thm:pop-mean-prop}
Suppose aggregator function $\Mean(\lv; \wv)$. 
If $\Mean(\cdot; \cdot)$ satisfies (subsets of) the aggregator function axioms (see \cref{def:cardinal-axioms}), we have that $\Mean(\cdot; \cdot)$ exhibits the following properties.
For each, assume arbitrary sentiment-value function $\lv: \Population \to \RNN$ and weights measure $\wv$ over $\Population$.
The following then hold.
\begin{enumerate}[wide, labelwidth=0pt, labelindent=0pt]\setlength{\itemsep}{3pt}\setlength{\parskip}{0pt}
\item \label{thm:pop-mean-prop:id} \emph{Identity}: Axioms \ref{def:cardinal-axioms:mult}-\ref{def:cardinal-axioms:unit} imply 
$\Mean(\PopItem \mapsto \alpha; \wv) = \alpha$.
\todo{forall $\alpha$?}

\item \label{thm:pop-mean-prop:lin-fact-f} \emph{Linear Factorization}: Axioms~\ref{def:cardinal-axioms:gmeanfirst}-\ref{def:cardinal-axioms:gmeanlast} imply strictly-monotonically-increasing continuous $F,f: \R \to \R, $ s.t.
\[
\Mean(\lv; \wv) = F\left( \int\limits_{\wv} f(\lv(\PopItem) ) \, \mathrm{d}(\PopItem) \right) = F\left( \Expect_{\PopItem \distributed \wv} \bigl[ f(\lv(\PopItem)) \bigr] \right)
\enspace.
\]
\draftnote{Shouldn't it be $\RExt$?  Sometimes want $\RNN$ vs $\R$?}

\item \label{thm:pop-mean-prop:lin-fact} \emph{Debreu-Gorman}: Axioms~\ref{def:cardinal-axioms:dgfirst}-\ref{def:cardinal-axioms:dglast} imply that, for some $p \in \R$,
$f(x) = f_{p}(x) \doteq
\begin{cases}
\ p = 0 & \ln(x) \\[-0.08cm]
\ p \neq 0 & \sgn(p)x^{p} \\[-0.1cm]
\end{cases}
\enspace.
$
\if 0
, where
$
\begin{cases}
\ p = 0 & f_{0}(x) \doteq \ln(x) \\[-0.08cm]
\ p \neq 0 & f_{p}(x) \doteq \sgn(p)x^{p} \\[-0.1cm]
\end{cases}
\enspace.
$
\fi
\if 0 
$\exists p \in \R$, strictly-monotonically-increasing continuous $F: \R \to \RNN$ s.t.
\vspace{-0.1cm}
\[
\Mean(\lv; \wv) = F\left( \int\limits_{\wv} f_{p}(\lv(\PopItem) ) \, \mathrm{d}(\PopItem) \right) = F\left( \Expect_{\PopItem \distributed \wv} \bigl[ f_{p}(\lv(\PopItem)) \bigr] \right) \enspace,
\quad \text{with} \ 
\begin{cases}
\ p = 0 & f_{0}(x) \doteq \ln(x) \\[-0.08cm]
\ p \neq 0 & f_{p}(x) \doteq \sgn(p)x^{p} \\[-0.1cm]
\end{cases}
\enspace.
\]
\fi
\item \label{thm:pop-mean-prop:pmean} \emph{Power Mean}: Axioms~\ref{def:cardinal-axioms:mono}-\ref{def:cardinal-axioms:unit} imply $F(x) = f_{p}^{-1}(x)$, thus $\Mean(\lv; \wv) = \Mean_{p}(\lv; \wv)$. 
\todo{Restore old split}
\if 0
Axioms 1-4+6 imply $\Mean(\lv) = c\Mean_{p}(\lv)$ for some $p \in \R$, $c \in \RNN$.
Axiom \ref{def:cardinal-axioms:unit} further implies $c = 1$.
\fi



\item \label{thm:pop-mean-prop:pmean-fair} \emph{Fair Welfare}: Axioms~\ref{def:cardinal-axioms:dgfirst}-\ref{def:cardinal-axioms:dglast} and \ref{def:cardinal-axioms:pd} imply $p \in (-\infty, 1]$.
\todo{could combine PD and APD?}

\item \label{thm:pop-mean-prop:pmean-unfair} \emph{Fair Malfare}: Axioms~\ref{def:cardinal-axioms:dgfirst}-\ref{def:cardinal-axioms:dglast} and \ref{def:cardinal-axioms:apd} imply $p \in [1, \infty)$.

\draftnote{Concatenation axioms commented:}
\if 0
\item \emph{Concatenation}: $\Mean(\lv \circ \lv') \in \operatorname{CH}\bigl(\Mean(\lv), \Mean(\lv')\bigr)$ \cyrus{No idea how to prove this.  Requires symmetry, some definition changes? IOUA?}

\item \emph{Concatenation II}: Axioms 1-4 imply $\Mean(\lv \circ \lv', \w \circ \w') = F\left( \frac{\norm{\w}_{1}}{\norm{\w + \w'}_{1}}F^{-1} \circ \Mean(\lv, \w') + \frac{\norm{\w'}_{1}}{\norm{\w + \w'}_{1}}F^{-1}\Mean(\lv', \w')\right)$;

Adding axiom \cref{def:cardinal-axioms:unit} implies \todo{handle $p=0$ case} $\Mean(\lv \circ \lv', \w \circ \w') = \sqrt[p]{\frac{\norm{\w}_{1}}{\norm{\w + \w'}_{1}}\Mean^{p}(\lv, \w')) + \frac{\norm{\w'}_{1}}{\norm{\w + \w'}_{1}}\Mean^{p}(\lv', \w')}$ \todo{use subscripts 1 \& 2 instead of primes?  Write this with new weight notation.}

\fi

\end{enumerate}
\end{restatable}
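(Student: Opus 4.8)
The six assertions are proved in cascade: Identity first, then the representation results (\emph{Linear Factorization} and \emph{Debreu--Gorman}), then \emph{Power Mean} as a short corollary, and finally the two fairness claims. Identity is immediate: by Unit Scale $\Mean(\PopItem \mapsto 1; \wv) = 1$, and by Multiplicative Linearity with scalar $\alpha$, $\Mean(\PopItem \mapsto \alpha; \wv) = \alpha\,\Mean(\PopItem \mapsto 1; \wv) = \alpha$ for every $\alpha \in \RNN$ (the case $\alpha = 0$ by continuity, or directly). The substantive content lives entirely in the representation step; everything downstream is elementary.

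\textbf{Linear factorization and Debreu--Gorman.} Here I would import the classical apparatus. First reduce to finite, unweighted, strictly positive populations: rational weights are handled by replicating each $\PopItem$ in proportion to $\wv(\PopItem)$ and invoking Symmetry; irrational weights and infinite/measurable $\Population$ by Continuity and a limiting argument; and zero sentiment values by the $\lim_{\varepsilon \to 0^+}$ convention of \cref{def:pmean} together with Monotonicity. On such a population the binary relation $\lv \preceq \lv' \iff \Mean(\lv;\wv) \le \Mean(\lv';\wv)$ is, by the respective axioms, a complete preorder that is continuous, strictly increasing, anonymous, and separable across subpopulations (Monotonicity, Continuity, Symmetry, IOUA; cf.\ \cref{fig:ax-prop}). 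Debreu's additive-representation theorem then yields a representation $V(\lv) = \Expect_{\PopItem \distributed \wv}[\phi(\lv(\PopItem))]$ with $\phi$ continuous and strictly increasing --- anonymity forces the per-individual term to be a single univariate $\phi$ with the weight entering only as the integrating measure --- and since $\Mean$ is itself a continuous strictly increasing representation of the same order, $\Mean(\lv;\wv) = F(V(\lv))$ for a continuous strictly increasing $F$; this is Linear Factorization. Adjoining Independence of Common Scale makes the order invariant under $\lv \mapsto \alpha\lv$, and \citet{gorman1968structure}'s analysis of the resulting overlapping-separability structure (building on \citet{debreu1959topological}) forces $\phi$ to be a positive affine transform of $f_p$ for some $p \in \R$; absorbing the affine part into $F$, we may take $\phi = f_p$. (Each $f_p$ is strictly increasing: $\ln$ for $p=0$, $x^p$ for $p>0$, $-x^p$ for $p<0$, which is why $\sgn(p)$ appears.)

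\textbf{Power mean.} Given the Debreu--Gorman form $\Mean(\lv;\wv) = F(\Expect_\wv[f_p(\lv)])$, apply Identity: $\alpha = \Mean(\PopItem \mapsto \alpha;\wv) = F(\Expect_\wv[f_p(\alpha)]) = F(f_p(\alpha))$ for all $\alpha$, so $F = f_p^{-1}$ on $\mathrm{range}(f_p)$. Since $\Expect_\wv[f_p(\lv)]$ lies in the closure of the convex hull of $f_p(\lv(\Population))$, hence in (the closure of) $\mathrm{range}(f_p)$, we obtain $\Mean(\lv;\wv) = f_p^{-1}(\Expect_\wv[f_p(\lv)])$; substituting $f_p(x)=\sgn(p)x^p$ (resp.\ $\ln x$) the $\sgn(p)$ factors cancel and this is exactly $\Mean_{p}(\lv;\wv)$ of \cref{def:pmean}, with the $p \le 0$, zero-valued case matched by the same $\lim_{\varepsilon\to0^+}$ convention.

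\textbf{Fair welfare and malfare, and the main obstacle.} Using the representation, write $\Welfare(\lv;\wv) = F(\Expect_\wv[f_p(\lv)])$ with $F$ strictly increasing. Take the two-element population with uniform weights, $\lv = (a,b)$, $a,b>0$, and $\lv' = (\tfrac{a+b}{2},\tfrac{a+b}{2})$; both have mean $\mu = \tfrac{a+b}{2}$, and $\lv'$ sits pointwise at the mean, so the Pigou--Dalton hypothesis $|\mu-\lv'(\PopItem)| \le |\mu-\lv(\PopItem)|$ holds. Pigou--Dalton gives $\Welfare(\lv';\wv) \ge \Welfare(\lv;\wv)$, and peeling off the strictly increasing $F$ yields $f_p(\tfrac{a+b}{2}) \ge \tfrac12 f_p(a) + \tfrac12 f_p(b)$; since $a,b$ are arbitrary, $f_p$ is midpoint concave, hence concave on $\RPos$ by continuity, and a one-line second-derivative check of $\sgn(p)x^p$ (and of $\ln x$) shows concavity holds iff $p \le 1$, giving $p \in (-\infty,1]$. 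Symmetrically, Anti Pigou--Dalton forces $f_p$ midpoint convex, hence convex, hence $p \ge 1$; together with $p \in \R$ from Debreu--Gorman this gives $p \in [1,\infty)$. The hard part is the representation step: carefully transplanting the Debreu--Gorman results --- classically stated for finite, unweighted, strictly positive populations --- to the weighted, possibly infinite/measurable, zero-admitting setting, and in particular verifying that anonymity-with-weights genuinely collapses the additive representation to a single univariate $\phi$ integrated against $\wv$, with strictness of monotonicity relaxed exactly at the $\lv(\PopItem)=0$ boundary, as anticipated in the discussion surrounding \cref{def:cardinal-axioms}.
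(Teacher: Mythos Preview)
Your proposal is correct and follows essentially the same route as the paper: Identity from axioms~\ref{def:cardinal-axioms:mult}--\ref{def:cardinal-axioms:unit} directly; Linear Factorization and Debreu--Gorman by invoking the classical Debreu/Gorman theorems (the paper simply cites \citet{debreu1959topological,gorman1968structure} and waves at the weighted/measurable extension, whereas you sketch the replication-and-limit reduction more explicitly); Power Mean by plugging constant $\lv \equiv \alpha$ into the representation and using Identity to read off $F = f_p^{-1}$; and Fair Welfare/Malfare via concavity/convexity of $f_p$. On the last point the paper says only ``follow directly from \ref{thm:pop-mean-prop:lin-fact} \dots\ and Jensen's inequality,'' while you spell out the two-point mean-preserving transfer $\to$ midpoint concavity $\to$ concavity $\to$ second-derivative sign argument; this is the same idea made explicit, and your identification of the representation step as the only genuinely hard part matches the paper's emphasis.
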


Taken together, the items of \cref{thm:pop-mean-prop} tell us that the mild conditions of axioms~\ref{def:cardinal-axioms:dgfirst}-\ref{def:cardinal-axioms:dglast} (generally assumed for welfare), along with the \emph{multiplicative linearity} axiom (\ref{def:cardinal-axioms:mult}), 
imply that welfare and utility, or malfare and loss, are measured in the \emph{same units} (e.g., \emph{nats} or \emph{bits} for \emph{cross-entropy loss}, square-$\mathcal{Y}$-units for \emph{square error}, or \emph{dollars} for \emph{income utility}).
Furthermore, the entirely milquetoast \emph{unit scale} axiom (\ref{def:cardinal-axioms:unit}) implies that sentiment values and aggregator functions have the same \emph{scale}, imbuing meaning to comparisons like ``the risk of group $i$ is above (or below) the population malfare.''
Finally, as far as fairness goes, the Pigou-Dalton transfer principle (axiom~\ref{def:cardinal-axioms:pd}) leads to the conclusion that $p \in [-\infty, 1)$ incentivize redistribution of utility from better-off groups to worse-off groups, and similarly, the corresponding principle for malfare (axiom~\ref{def:cardinal-axioms:apd}) yields the conclusion that  $p \in (1, \infty]$ incentivize redistribution of harm\footnote{
Note that, mathematically speaking, it is entirely \emph{valid} to quantify welfare with $p>1$ or malfare with $p<1$, and indeed such characterizations may arise in the analysis of unfair systems; however we generally advocate against \emph{intentionally creating} such unfair systems.}
from worse-off groups to better-off groups.

We may also conclude that the power-mean is effectively the only reasonable family of welfare or malfare functions.
Even without axioms~\ref{def:cardinal-axioms:mult}-\ref{def:cardinal-axioms:unit}, axioms ~\ref{def:cardinal-axioms:dgfirst}-\ref{def:cardinal-axioms:dglast} imply (via the Debreu-Gorman theorem) that all aggregator functions are still \emph{monotonic transformations} of power-means.
These and other results relating various aggregator functions to the relevant axioms are summarized in \cref{fig:ax-prop}.

\section{Statistical Estimation of Welfare and Malfare Values}
\label{sec:stat}


\if 0 

We first illustrate the ease with which $p$-power-means can be estimated, in contrast to the standard additively separable welfare formulations.
Perhaps surprisingly, we find that the plug-in estimators (i.e., the 
 empirical welfare and malfare) are \emph{biased estimators}, yet they admit much sharper finite-sample tail bounds than the corresponding estimators for additively separable welfare formulations, which are unbiased.
 \fi

We now show that for countable populations, consistent estimators for sentiment values imply consistent estimators for aggregator functions (via the plugin estimator).
Despite this promising first step, in general, aggregator functions don't preserve \emph{unbiasedness} or even \emph{asymptotic unbiasedness} of sentiment value estimators, and furthermore, the \emph{rate of convergence} of consistent estimators to the true aggregator function depends intricately on the aggregator function in question.
The following lemma requires only the \emph{monotonicity axiom}, and allows us to bound \emph{aggregator functions} in terms of \emph{estimated sentiment values}.

\begin{lemma}[Statistical Estimation]
\label{lemma:stat-est}
Suppose probability distribution $\ProbDist$ over $\X$, sample $\bm{x} \distributed \ProbDist^{m}$, and some function $f: \X \times \Population \to \RNN$.
Let \emph{sentiment value} function $\lv(\PopItem) \doteq \Expect_{x \distributed \ProbDist}[f(x; \PopItem)]$, and \emph{empirical sentiment value estimate} $\hat{\lv}(\PopItem) \doteq \EExpect_{x \in \bm{x}}[f(x; \PopItem)]$.
\if 0
Let \emph{sentiment value} $\lv(\PopItem)$ and \emph{empirical sentiment value} $\hat{\lv}(\PopItem)$ functions be 
\[
\lv(\PopItem) \doteq \Expect_{x \distributed \ProbDist}[f(x; \PopItem)] \enspace, \quad \text{and} \ \hat{\lv}(\PopItem) \doteq \EExpect_{x \in \bm{x}}[f(x; \PopItem)] \enspace.
\]
\fi
If it holds for some $\epsv \succ \bm{0}$ that, with probability at least $1 - \delta$ over choice of $\bm{x}$,
$\forall \PopItem \in \Population: \, \hat{\lv}(\PopItem) - \epsv(\PopItem) \leq \lv(\PopItem) \leq \hat{\lv}(\PopItem) + \epsv(\PopItem)$, then with said probability,
for all aggregator functions $\Mean(\cdot; \cdot)$ obeying the \emph{monotonicity axiom} (\cref{def:cardinal-axioms}~\cref{def:cardinal-axioms:mono}) and {weights measures} $\wv$ over $\Population$, we have that
\[
\Mean_{p}(\bm{0} \vee (\hat{\lv} - \epsv); \wv)
  \leq \Mean_{p}(\lv; \wv)
  \leq \Mean_{p}(\hat{\lv} + \epsv; \wv)
  \enspace,
\]
where $\bm{a} \vee \bm{b}$ denotes the (elementwise) maximum.

\end{lemma}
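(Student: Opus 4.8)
The plan is to condition on the stated high-probability event and then simply push the pointwise sandwiching of $\lv$ through the aggregator using monotonicity; the only real work is bookkeeping around the value $0$.

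First I would fix the (data-dependent) event $E$ on which $\forall \PopItem \in \Population:\ \hat{\lv}(\PopItem) - \epsv(\PopItem) \leq \lv(\PopItem) \leq \hat{\lv}(\PopItem) + \epsv(\PopItem)$; by hypothesis $\Prob_{\bm{x} \distributed \ProbDist^{m}}(E) \geq 1 - \delta$, and everything below is deterministic once we condition on $E$. Since $f$ is $\RNN$-valued, both $\lv(\PopItem) = \Expect_{\ProbDist}[f(\cdot;\PopItem)] \geq 0$ and $\hat{\lv}(\PopItem) = \EExpect_{\bm{x}}[f(\cdot;\PopItem)] \geq 0$ for every $\PopItem$. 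Combining $\lv(\PopItem) \geq 0$ with the left-hand bound on $E$ gives $\lv(\PopItem) \geq \max\{0,\ \hat{\lv}(\PopItem) - \epsv(\PopItem)\} = \bigl(\bm{0} \vee (\hat{\lv} - \epsv)\bigr)(\PopItem)$, and the right-hand bound gives $\lv(\PopItem) \leq \hat{\lv}(\PopItem) + \epsv(\PopItem)$. Hence, as functions on $\Population$, $\bm{0} \vee (\hat{\lv} - \epsv) \leq \lv \leq \hat{\lv} + \epsv$ pointwise.

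It then remains to show that any aggregator obeying the monotonicity axiom (\cref{def:cardinal-axioms}~\cref{def:cardinal-axioms:mono}) is \emph{weakly} monotone: $\bm{a} \leq \bm{b}$ everywhere $\implies \Mean(\bm{a};\wv) \leq \Mean(\bm{b};\wv)$. For the power means $\Mean_{p}$ this is immediate and already implicit in \cref{thm:pow-mean-prop}\cref{thm:pow-mean-prop:mono}: for finite $p$, $\Mean_{p}(\cdot;\wv)$ is a strictly increasing function of $\Expect_{\wv}[(\cdot)^{p}]$, which is monotone in its argument, while for $p = \pm\infty$ the maps $\inf_{\PopItem}$ and $\sup_{\PopItem}$ are trivially monotone. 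For a general $\Mean$ satisfying only the monotonicity axiom, I would derive the weak form from the stated strict one: writing $\bm{b} = \bm{a} + \epsv'$ with $\epsv' \succeq \bm{0}$, the axiom applies directly when $0 \notin \bm{a}(\Population)$ and $\int_{\wv}\epsv'\,\mathrm{d}(\PopItem) > 0$; the remaining cases ($\bm{a}$ attaining $0$, or $\epsv' = \bm{0}$ $\wv$-almost everywhere) are handled by the limit-extension $\Mean(\lv;\wv) = \lim_{\eta \to 0^{+}}\Mean(\lv + \eta\bm{1};\wv)$ of \cref{def:pmean}, applying the axiom to the strictly positive perturbations $\bm{a} + \eta\bm{1} \leq \bm{b} + \eta\bm{1}$ and letting $\eta \to 0^{+}$.

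Applying weak monotonicity to the two pointwise inequalities from the first paragraph yields $\Mean_{p}(\bm{0} \vee (\hat{\lv} - \epsv);\wv) \leq \Mean_{p}(\lv;\wv) \leq \Mean_{p}(\hat{\lv} + \epsv;\wv)$ on $E$, hence with probability at least $1 - \delta$, which is the claim. The main obstacle is the last step of the third paragraph: promoting the strict, zero-avoiding monotonicity axiom to genuine weak monotonicity valid even when $\bm{0} \vee (\hat{\lv} - \epsv)$ has zero entries or the gap collapses on a $\wv$-null set; everything else is an essentially mechanical chaining of inequalities.
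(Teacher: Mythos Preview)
Your proposal is correct and follows essentially the same approach as the paper: condition on the high-probability event, use nonnegativity of $\lv$ to justify the $\bm{0}\vee$ clip, and push the pointwise sandwich through via monotonicity. The paper's proof is a two-sentence sketch that simply invokes the monotonicity axiom and notes that the $\bm{0}\vee$ is both valid (since $\lv\succeq\bm{0}$) and necessary (since $\Mean_{p}$ is undefined on negatives); your version is more careful in explicitly deriving weak monotonicity from the strict, zero-avoiding axiom via the $\eta\to 0^{+}$ limit extension, which the paper glosses over.
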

\begin{proof}
This result follows from the assumption, and the \emph{monotonicity} axiom (i.e., adding/subtracting $\epsv$ can not decrease/increase the aggregate, 
respectively).
The minimum with $0$ on the LHS is \emph{valid} simply because, by definition, sentiment values are nonnegative, and is \emph{necessary}, since $\Mean_{p}(\cdot; \cdot)$ is in general undefined on negative sentiment values.
\todo{2-sided variant?}
\end{proof}

\if 0 estimators do not imply unbiased estimators for aggregator functions and even a synthetically unbiased is not sufficient. Consider that we must estimate square of the mean rather than mean of the square.

For the additively separable form however an unbiased estimator of ex-2 the p is not difficult to obtain consider for instance that it is easy to estimate e x squared + v x and z equals x minus y. This does not have eaten by an unbiased estimator for the power mean family since the application of the P fruit this should be familiar to the reader at the same reasons the standard deviation but not the variance it is difficult to obtain an unbiased estimator for.
\fi

The principal question we are interested in however is not merely \emph{whether} an estimator is {consistent}, but rather \emph{how rapidly} it converges to the 
true aggregator function.
In particular, an $\varepsilon$-$\delta$ additive-error guarantee allows us to solve for the sample complexity of estimating a particular aggregator function to within $\varepsilon$-$\delta$ error.  
Furthermore, we are interested in \emph{uniform} sample complexity bounds, which need to hold \emph{uniformly} over a family of probability distribution and aggregator functions.
This is even trickier then showing simple single-function sample complexity bounds, because it can be the case that while any individual function in the family admits a sample complexity bound, the entire family has unbounded sample complexity.\footnote{This is essentially due to the order of existential quantifiers: each quantity in the family may admit bounded sample complexity, even while the entire family has unbounded sample complexity.\draftnote{Should we reference Bernoulli $p$ geometric welfare example here?}}%
The following result shows such a uniform guarantee for fair malfare functions, by 
applying the well-known \citet{hoeffding1963probability} 
and \citet{bennett1962probability} 
bounds to show concentration, and derive an explicit form for $\epsv$.

\begin{restatable}[Statistical Estimation with Hoeffding and Bennett Bounds]{corollary}{corostatest}
\label{coro:stat-est}
Suppose fair power-mean malfare $\Malfare(\cdot; \cdot)$ (i.e., $p \geq 1$), discrete \emph{weights measure} $\wv$ over $\NGroups$ groups, \emph{probability distributions} $\ProbDist_{1:\NGroups}$, \emph{samples} $\bm{x}_{i} \distributed \ProbDist_{i}^{m}$, and \emph{loss function} $\LossFunction: \X \to [0, \frange]$ 
s.t. $\lv_{i} = \Expect_{\ProbDist_{i}}[\LossFunction]$ and
$\hat{\lv}_{i} \doteq \EExpect_{\bm{x}_{i}}[\LossFunction]$.
Then, with probability at least $1 - \delta$ over choice of $\bm{x}$,
\[
\abs{ \Mean_{p}(\lv; \wv) - \Mean_{p}(\hat{\lv}; \wv) } \leq \frange\sqrt{\frac{\ln \frac{2\NGroups}{\delta}}{2m}} \enspace.
\]
Alternatively, again with probability at least $1 - \delta$ over choice of $\bm{x}$, we have
\[
\abs{ \Mean_{p}(\lv; \wv) - \Mean_{p}(\hat{\lv}; \wv) } \leq \frac{\frange\ln \frac{2\NGroups}{\delta}}{3m} + \max_{i \in 1, \dots, \NGroups} \sqrt{\frac{2\Var_{\ProbDist_{i}}[\LossFunction] \ln \frac{2\NGroups}{\delta}}{m}} \enspace.
\]
\end{restatable}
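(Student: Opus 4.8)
The plan is to reduce both displayed inequalities to a single per-group deviation bound on $\hat{\lv}$ together with the \emph{contraction} property of power means (\cref{thm:pow-mean-prop}, item~\ref{thm:pow-mean-prop:contraction}), which holds precisely because $p \geq 1$. Since $\LossFunction$ is nonnegative, both $\lv$ and $\hat{\lv}$ are bona fide (nonnegative) sentiment vectors, so for any $\wv$ and any $p \geq 1$ that property gives $\abs{\Mean_{p}(\lv; \wv) - \Mean_{p}(\hat{\lv}; \wv)} \leq \norm{\lv - \hat{\lv}}_{\infty} = \max_{i} \abs{\lv_{i} - \hat{\lv}_{i}}$. (Equivalently, one can route through \cref{lemma:stat-est} with a constant $\epsv$, then strip the $\pm\epsv$ shift using the identity property $\Mean_{p}(\alpha\bm{1}; \wv) = \alpha$ and subadditivity; the contraction property packages exactly this.) Hence it suffices, for each of the two bounds, to show that $\max_{i} \abs{\lv_{i} - \hat{\lv}_{i}}$ is at most the claimed quantity with probability at least $1 - \delta$.

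For the first (Hoeffding) bound, I would observe that each $\hat{\lv}_{i} = \EExpect_{\bm{x}_{i}}[\LossFunction]$ is an average of $m$ i.i.d.\ random variables valued in $[0, \frange]$, so \citet{hoeffding1963probability} gives $\Prob\bigl(\abs{\lv_{i} - \hat{\lv}_{i}} \geq t\bigr) \leq 2\exp(-2mt^{2}/\frange^{2})$. Setting the right-hand side equal to $\delta/\NGroups$ solves to $t = \frange\sqrt{\ln(2\NGroups/\delta)/(2m)}$, and a union bound over the $\NGroups$ groups yields $\max_{i}\abs{\lv_{i} - \hat{\lv}_{i}} \leq \frange\sqrt{\ln(2\NGroups/\delta)/(2m)}$ with probability at least $1-\delta$; combined with the reduction this is the first claim.

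For the second (Bennett) bound, I would apply \citet{bennett1962probability} to the same averages, now with per-group variance $\Var_{\ProbDist_{i}}[\LossFunction]$ and range $\frange$. The raw tail inequality involves the rate function $h(u) = (1+u)\ln(1+u) - u$; inverting it via the elementary estimate $h(u) \geq \tfrac{u^{2}/2}{1 + u/3}$ and solving the resulting quadratic produces the Bernstein-type two-sided tail bound $\abs{\lv_{i} - \hat{\lv}_{i}} \leq \sqrt{2\Var_{\ProbDist_{i}}[\LossFunction]\ln(2\NGroups/\delta)/m} + \frange\ln(2\NGroups/\delta)/(3m)$, each with probability at least $1 - \delta/\NGroups$. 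A union bound over groups, the reduction, and pulling the $i$-independent additive term $\frac{\frange\ln(2\NGroups/\delta)}{3m}$ out of the maximum then give the stated inequality.

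I expect the only real friction to be in the third step: deriving the explicit Bernstein-form consequence from Bennett's raw inequality, i.e.\ the routine-but-fiddly inversion of $h$ and the attendant bookkeeping of the failure probability $\delta/(2\NGroups)$ versus $\delta/\NGroups$ across the two sides. Everything else is immediate once \cref{thm:pow-mean-prop} is in hand; in particular, the role of fairness ($p \geq 1$) is exactly to make the power mean $1$-Lipschitz in $\norm{\cdot}_{\infty}$, and for $p < 1$ (e.g.\ the geometric mean / Nash welfare) no such uniform bound exists, which is why the corollary is stated for $\Malfare$ rather than $\Welfare$.
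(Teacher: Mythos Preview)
Your proposal is correct and follows essentially the same route as the paper: both arguments use per-group concentration (Hoeffding, then Bennett/Bernstein) with a union bound over the $\NGroups$ groups, and then pass from the $\ell_{\infty}$ bound on $\lv - \hat{\lv}$ to the malfare gap via the contraction property of $\Mean_{p}$ for $p \geq 1$ (\cref{thm:pow-mean-prop}~item~\ref{thm:pow-mean-prop:contraction}). The paper phrases the reduction through \cref{lemma:stat-est} and then invokes contraction and power-mean monotonicity, whereas you invoke contraction directly (and note the \cref{lemma:stat-est} route as equivalent), but this is a cosmetic difference only.
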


\Cref{coro:stat-est} follows directly from \cref{lemma:stat-est}, with Hoeffding and Bennett inequalities applied to derive $\epsv$ bounds, and similar results are immediately possible with arbitrary concentration inequalities.
In particular, 
similar \emph{data-dependent} bounds may be shown, e.g., with \emph{empirical Bennett bounds}\todo{cite}, removing dependence on \emph{a priori known variance}.
Furthermore, while such bounds may be used for {evaluating} the welfare or malfare of a \emph{particular} classifier or mechanism (through $\lv$ and $\hat{\lv}$), in machine-learning contexts, $\lv$ may be a function of some model, so we must consider the entire \emph{space of possible models}, represented by some \emph{hypothesis class} $\HC$.
Via the union bound, \cref{coro:stat-est} is sufficient for \emph{learning} over \emph{finite} $\HC$, as the exponential tail bounds allow $\HC$ 
to grow \emph{exponentially}, at \emph{linear} cost to sample complexity.
As in standard uniform convergence analysis (generally discussed in the context of \emph{empirical risk minimization}), we can easily handle infinite hypothesis classes, and obtain much sharper bounds by considering \emph{data-dependent uniform-convergence bounds} over the family, e.g., with Rademacher averages \citep{bartlett2002rademacher}, \emph{localized Rademacher averages} \citep{bartlett2005local}, or \emph{empirically-centralized Rademacher averages} \citep{cousins2020sharp}.

\if 0
\ifthesis
Furthermore, we may apply \cref{thm:firstbound} to bound $\epsv$ with an \emph{empirically centralized Rademacher average}, either elementwise (over groups) with a \emph{union bound}, or jointly, if the loss functions for each group may be combined into a single function family.
\todo{backref rade bounds}
\else
Furthermore, we may bound $\epsv$ 
 over the entire model space with \emph{data-dependent uniform convergence techniques}, such as \emph{Rademacher averages} \citep{bartlett2002rademacher}, \emph{localized Rademacher averages} \citep{bartlett2005local}, or \emph{centralized Rademacher averages} \citep{cousins2020sharp}.
\fi
\fi

\subsection{The Empirical Malfare Minimization Principle}
\label{sec:stat:emm}
In learning contexts, minimizing the \emph{malfare} among all groups generalizes minimizing \emph{risk} of a single group.
These statistical estimation bounds immediately imply that the \emph{empirical malfare-optimal} solution is a reasonable proxy for the true malfare-optimal solution, as we now formalize.\todo{Cite ERM, formalize and reference EMM.}
\Cref{fig:linear-emm} illustrates empirical malfare minimization in action with a \emph{linear classifier} 
on two groups.
\begin{definition}[The Empirical Malfare Minimization (EMM) Principle]
\label{def:emm}
Suppose hypothesis class $\HC \subseteq \X \to \Y$, training samples $\bm{z}_{1:\NGroups}$ drawn from distributions $\ProbDist_{1:\NGroups}$ over $\X \times \Y$, loss function $\LossFunction: \Y \times \Y \to \RNN$, malfare function $\Malfare$, and group weights $\wv$.
The \emph{empirical malfare minimizer} is then defined as
\[
\hat{h} \doteq 
\argmin_{h \in \HC} \Malfare\left(i \mapsto \ERisk(h; \LossFunction, \bm{z}_{i}); \wv \right) \enspace,
\]
and the EMM principle states that $\hat{h}$ is a reasonable proxy for the \emph{true malfare minimizer}
\[
h^{*} \doteq \argmin_{h \in \HC} \Malfare \left(i \mapsto \Risk(h; \LossFunction, \ProbDist_{i}); \wv \right) \enspace.
\]
\end{definition}

\begin{SCfigure}

\newcommand{\gone}{\textbf{\large\sun}}
\newcommand{\gtwo}{\textbf{\large\leftmoon\!}}

\tikzfading[name=fade out,
inner color=transparent!0,
outer color=transparent!100]

\tikzfading[name=fade out soft,
inner color=transparent!25,
outer color=transparent!100]

{
\hspace{-0.8cm}\begin{tikzpicture}[
    xscale=1.075,
    yscale=0.9,
    fadenode/.style={fill=white,circle,path fading=fade out soft,inner sep=2pt},
    groupplane/.style={draw=purple!50!black,thick,dashed,line cap=round},
    sharedplane/.style={draw=purple!50!black,thick,line cap=round},
    planearrow/.style={thick,style=-{Triangle[length=4.5pt,width=3pt]}}, 
  ]



\node[rectangle,draw=black] at (4.15, 2.2) {%
\!\!\!\!\begin{tabular}{cl}
\textcolor{red}{$\blacksquare$} & Class A\\ 
\textcolor{blue}{$\blacksquare$} & Class B \\ 
\gone & Group $1$ \\
\gtwo & Group $2$ \\
\textcolor{purple!50!black}{$\bm{\brokenvert}$} & ERM $h^{*}$ \\ 
\textcolor{purple!50!black}{$\bm{|}$} & EMM $h^{*}$ \\ 
\end{tabular}\!\!\!%
};



{
\begin{scope}[rotate=-25]
\draw[groupplane] (0, -1.5) -- (0, 3.7);
\node at (0, -1.7) {\color{purple!50!black}\bf\gone};
\node at (0, 3.9) {\color{purple!50!black}\bf\gone};
\end{scope}
}

{
\begin{scope}[rotate=47.5]
\draw[groupplane] (2.15, -2.7) -- (2.15, 2.7);
\node at (2.15, -2.9) {\color{purple!50!black}\bf\gtwo};
\node at (2.15, 2.9) {\color{purple!50!black}\bf\gtwo};
\end{scope}
}

\if 0
\tikzfading[name=fade right,left color=transparent!0,right color=transparent!100]
\tikzfading[name=fade left,left color=transparent!100,right color=transparent!0]
\fi

{
\begin{scope}[rotate=18]
\draw[sharedplane] (1.1, -2.1) -- (1.1, 3.5);
\end{scope}
}

{
\node[fadenode] at (0.333, 1.69) {\textcolor{red}{\gone}};
\node[fadenode] at (-0.1, 0.1) {\textcolor{red}{\gone}};
\node[fadenode] at (-1.01, 1) {\textcolor{red}{\gone}};
\node[fadenode] at (-2.2, 3) {\textcolor{red}{\gone}};
\node[fadenode] at (-1.875, -0.5) {\textcolor{red}{\gone}};

\node[fadenode] at (0, -0.456) {\textcolor{blue}{\gone}};
\node[fadenode] at (1, 1.1) {\textcolor{blue}{\gone}};
\node[fadenode] at (2.3, 2.1) {\textcolor{blue}{\gone}};
\node[fadenode] at (3.1, -0.5) {\textcolor{blue}{\gone}};
}

{
\node[fadenode] at (-0.69, 0.5) {\textcolor{red}{\gtwo}};
\node[fadenode] at (-2.08, 1.87) {\textcolor{red}{\gtwo}};
\node[fadenode] at (0.146, -1.24) {\textcolor{red}{\gtwo}};
\node[fadenode] at (1.824, 0.863) {\textcolor{red}{\gtwo}};

\node[fadenode] at (0.42, 2.81) {\textcolor{blue}{\gtwo}};
\node[fadenode] at (2.1, 3.08) {\textcolor{blue}{\gtwo}};
\node[fadenode] at (2.75, 0.6) {\textcolor{blue}{\gtwo}};
}

\begin{scope}[blend mode=screen]
{
\node[draw,circle,black,inner sep=0.25em] at (1.824, 0.863) {\textcolor{red}{\gtwo}}; 
\node[draw,circle,black,inner sep=0.25em] at (0, -0.456) {\textcolor{blue}{\gone}};
}
\end{scope}

\end{tikzpicture}
}
\caption{
Empirical malfare minimization on a \emph{linear classifier} family in $\R^{2}$ (with affine offset) over two groups.
Note that classification is realizable for both groups \emph{individually}, in the sense that both are linearly separable, thus there exists a 0-risk classifier for each, though \emph{jointly}, they are not realizable.
Risk-optimal classifiers are shown for both groups (dashed lines), as is a malfare-optimal classifier (solid line).
Note that exactly which classifier is optimal depends on the weighting and malfare metric, but the selected malfare-minimizer compromises fairly in the sense that each group suffers 
 one error (circled).
}

\label{fig:linear-emm}
\end{SCfigure}
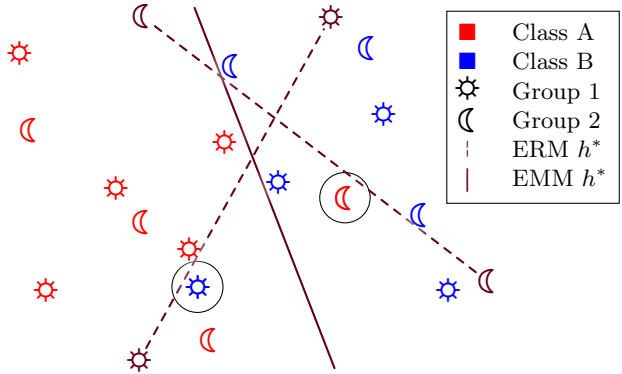

\draftnote{Contrast with ``General Fair Empirical Risk Minimization''}

\if 0

In all cases, these bounds provide further justification for using the \emph{power-mean} over the \emph{additively separable} form, as it is substantially harder to achieve comparable bounds on
$\Mean_{p}^{p}(\lv; \wv)$,
due to the increased difficulty of controlling the \emph{range} and \emph{Lipschitz constant} of these quantities.
Of course, as power-mean bounds imply bounds on the additively-separable form (and vice-versa), we recommend working with power means, and then converting back to the additively separable form (if so desired).


\todo{It's about balancing interpretability and estimability; I think we're better on both counts.  Changing $p$ doesn't change interpretation (as much). Plot of normalized vs denorm?}

\fi

\subsection{Experimental Validation of Empirical Malfare Minimization}
\label{sec:stat:exper}

\begin{SCfigure}[1.5]


\vspace{-0.65cm} \\

\mbox{\null\hspace{-0.25cm}\includegraphics[width=0.53\textwidth,trim={0.49cm 0.2cm 1.5cm 0.64cm},clip
]{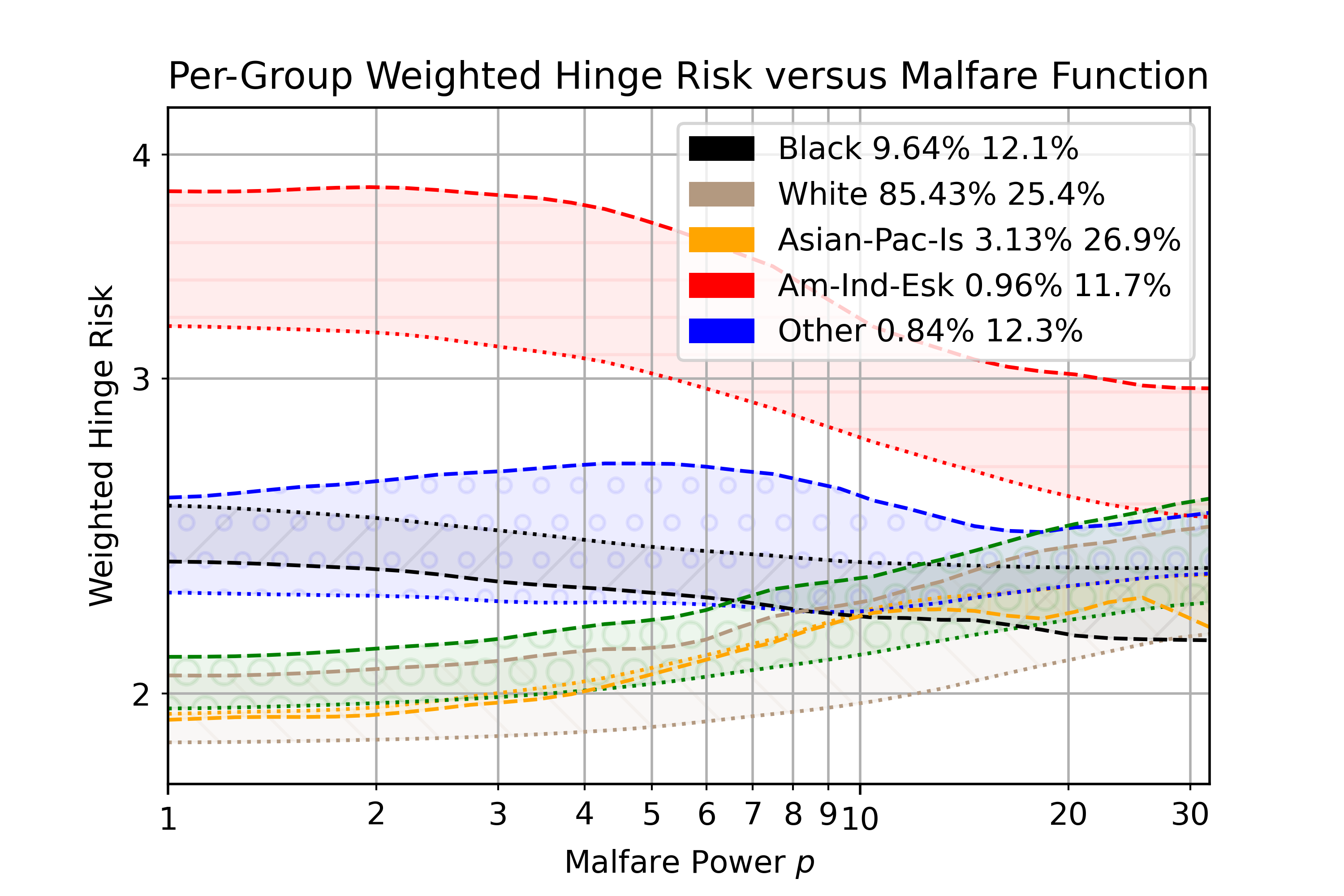}\hspace{-0.1cm}\null}

\vspace{-0.15cm}

\caption{
We minimize malfare on a \emph{weighted hinge-loss} SVM, with $\NGroups = 5$ 
racial groups, listed in the figure legend with \emph{group weight} $\wv_{i}$ (population frequency) and \emph{class bias} $\bm{b}_{i}$ (proportion with 
 income $\geq \$$50,000 \emph{per annum}).
Due to existing societal inequity, 
\emph{class imbalance} varies widely by group, so we weight 
all risk values as $\smash{\mathsmaller{\frac{1}{\bm{b}_{i}}}} \smash{\ERisk}(h; \LossFunction_{\mathrm{hinge}}, \bm{z}_{i})$.
We 
report 
 per-group training 
(dotted) 
 and test 
 (dashed) 
  hinge risk, 
  along with the $\Malfare_{p}(\cdot; \wv)$ value (green) of the EMM solution 
\\[0.2cm]
$\displaystyle
\null \ \ \ \ \ \hat{h} \doteq {\argmin_{h \in \HC}} \, \Malfare_{p} \! \left( i \mapsto \mathsmaller{\frac{1}{\bm{b}_{i}}} \ERisk(h; \LossFunction_{\mathrm{hinge}}, \bm{z}_{i}); \wv \right) \enspace,
$
\\[0.1cm]
  as a function of $p \in [1, 32]$.
The experimental setup is fully detailed in \cref{sec:appx:exper:setup}.
%
\if 0
\[
\Malfare_{p}\left( i \mapsto \frac{1}{\bm{b}_{i}} \Risk(h; \LossFunction_{\mathrm{hinge}}, \bm{z}_{i}); \wv \right)
\]
\fi
\if 0
with income over 50K it's buried the stew to class in balance the classification problem is much easier for the disadvantaged groups. To correct this we wait the hinge loss for each group by one over the proportion of the group but high-income
\fi
}
\label{fig:exp-adult}
\end{SCfigure}

\Cref{fig:exp-adult} presents a brief experiment on the lauded \texttt{adult} dataset, 
where the task is to predict whether income is above or below $\$50$k/year. 
We train $\Malfare_{p}(\cdot; \wv)$-minimizing SVM, and 
 find significant variation in model performance (as measured by risk)
  between groups;
 in general, we observe that the classifier is most accurate for the 
 \emph{white} and \emph{Asian-Pacific-Islander} %
groups, and 
generally less accurate for the \emph{native American} and \emph{other} groups.
%
The $p=1$ model is a standard weighted SVM, with poor performance for small and traditionally marginalized groups, as expected in an $85.43\%$ majority-white population. 
As $p$ increases (towards egalitarianism), we observe interesting fairness tradeoffs; training malfare 
increases monotonically, and in general (but not monotonically\footnote{Note that for continuous loss functions and $g=2$ groups, group training risks are monotonic in $p$, as seen in the supplementary \emph{gender-group} experiments.}), \emph{white} and \emph{Asian} training risks increase, as the remaining risks decrease, and greater equity is achieved.
At first, 
most improvement is in the relatively-large ($9.64\%$), high-risk Black group, 
but for larger $p$, the much smaller ($0.96\%$), but even higher-risk, native American group 
sharply improves.

Both training and test performance generally improve for high-risk groups, but significant overfitting occurs in small groups and malfare. 
This is unsurprising, as although SVM generalization error 
 is well-understoond \citep[see][Chapter~26]{shalev2014understanding}, 
 bounds are generally vaccuous for tiny subpopulations of $\approx 400$ individuals. 
In general, 
overfitting increases with $p$, due to higher relative importance of small high-risk groups on $\smash{\hat{h}}$. 
This experiment validates 
EMM as a fair-learning technique, with the capacity to specify tradeoffs between 
 majority and marginalized groups, 
while demonstrating \emph{overfitting to fairness}, which we formally treat in the sequel.
We observe similar fairness tradeoffs in our supplementary experiments (\cref{sec:appx:exper:exper}), 
on weighted and unweighted SVM 
and logistic regressors with race and gender groups. 



\draftnote{
TODO: Basic Rademacher bound?
TODO: transition.
TODO: LR experiments?
}

\section{Comparative Analysis of Welfare, Malfare, and Inequality Indices} 
\label{sec:comparisons}

This section serves as an interlude between the concept and axiomatic derivation malfare, and the statistical and machine learning applications of malfare minimization.
Here we examine some of our core decisions, and explore
the differences that arise under alternative axioms and other counterfactuals.

In particular, \cref{sec:comparisons:welfare} shows that malfare and welfare functions are not equivalent, and describes salient differences that arise when trying to estimate them from sampled (dis)utility values. \Cref{sec:comparisons:as} then shows that under an alternative axiomatization, i.e., that of additive separability, the concept of uniform sample complexity is generally ill-behaved.
Finally, \cref{sec:comparisons:inequality} explores the relationships between \emph{inequality indices} and welfare or malfare functions, deriving deep connections between the power mean and the Atkinson, Theil, and generalized entropy indices. 

\draftnote{TODO PREV SEC INTRO?  All secs?}


\subsection{The Non-Equivalence of Welfare and Malfare Functions}
\label{sec:comparisons:welfare}

\begin{figure}
\tikzset{
    declare function={
            mean3(\va,\vb,\vc) = ((\va + \vb + \vc) / 3);
            pm2(\va,\vb,\p) = ((\va ^ \p + \vb ^ \p) / 2) ^ (1 / \p);
            as3(\va,\vb,\vc,\pp) = ((\va ^ \pp + \vb ^ \pp + \vc ^ \pp) / 3);
            pm3(\va,\vb,\vc,\pp) = as3(\va,\vb,\vc,\pp) ^ (1 / \pp);
            asw3(\va,\vb,\vc,\wa,\wb,\wc,\pp) = (\wa * \va ^ \pp + \wb * \vb ^ \pp + \wc * \vc ^ \pp);
            pmw3(\va,\vb,\vc,\wa,\wb,\wc,\pp) = asw3(\va,\vb,\vc,\wa,\wb,\wc,\pp) ^ (1 / \pp);
            sgn(\x) = (and(\x<0, 1) * -1) + (and(\x>0, 1) * 1);
        },
}
\tikzset{
    mainline/.style={thick},
    boundline/.style={densely dashed,line cap=round,thin},
    popline/.style={thin,opacity=0.8},
    limitline/.style={dots,thick,opacity=0.9},
    linefill/.style={opacity=0.4,pattern=vertical lines},
    solidfill/.style={opacity=0.1},
    allfill/.style={solidfill,postaction={linefill}},
    discfill/.style={only marks,mark=*,mark options={scale=0.8,thin}},
    dischole/.style={discfill,fill=white},    
  }

\colorlet{darkgreen}{green!50!black}

\centering

\begin{subfigure}[t]{0.495\textwidth}

\hspace{-0.1cm}\begin{tikzpicture}[blend group=lighten]
\begin{axis}[
    width=1.14\textwidth,
    axis equal image,
    xmin=-3,xmax=5,
    ymin=0,ymax=2,
    domain=-3:5,
    samples=16,smooth,
    no markers,
    xlabel={$\smash{p}$},
    legend pos={south east},
    xtick distance=1,ytick distance=1,
    grid,
    grid style={draw=black,draw opacity=0.25},
    legend style={font=\small,draw=none,fill=none},
    every axis/.append style={font=\small},
  ]

\addplot[color=black,ultra thick,samples=2] { 1 };

\addplot[color=red,thick,domain=-3:0.0001,samples=2] { 0.003 }; 
 \addplot[color=red,thick,domain=0.0001:1] {
    (1-(1-x)^4) *
    pm3(0.00001, 1, 1.99999, x)
  };
\addplot[color=red,thick,domain=1:5] { pm3(0, 1, 2, x) };

\foreach \b[evaluate=\b as \bp using {\b^(1.6) / 5 } 
  ] in {1,2,...,25} {
  \pgfmathparse{100-\b*4}
  \edef\temp{\noexpand\addplot[color=red!\pgfmathresult!blue] { pm3(0 + \bp, 1 + \bp, 2 + \bp, x) - \bp };
  }
  \temp
  %
}

\end{axis}
\end{tikzpicture}

\subcaption{{
$\Mean_{p}(\lv + \beta; \wv) - \beta$ as a function of $p$. \\
\textcolor{red}{Red: $\beta = 0$}; \textcolor{blue}{Blue: increasing $\beta$}; \textcolor{black}{Black: $\lim_{\beta \to \infty}$.}
}}
\label{fig:aggregator-beta}

\end{subfigure}
\begin{subfigure}[t]{0.495\textwidth}


\hspace{-0.03cm}\begin{tikzpicture}[blend group=lighten]
\begin{axis}[
    width=1.14\textwidth,
    axis equal image,
    xmin=0,xmax=8,
    ymin=0,ymax=2,
    domain=0.0001:8,
    samples=16,smooth,
    no markers,
    xlabel={$\smash{\beta}$},
    legend pos={south east},
    xtick distance=1,ytick distance=1,
    grid,
    grid style={draw=black,draw opacity=0.25},
    legend style={font=\small,draw=none,fill=none},
    every axis/.append style={font=\small},
  ]


\addplot[color=red,very thick,samples=2] {1};

\foreach \p[evaluate=\p as \pp using {1+\p/2}] in {1,2,...,10} {
  \pgfmathparse{100-\p*10}
  \edef\temp{\noexpand\addplot[color=red!\pgfmathresult!blue] {
  pm3(0 + x, 1 + x, 2 + x, \pp) - x 
  };
  }
  \temp
}
\foreach \p[evaluate=\p as \pp using {1-\p/2}] in {1,2,...,10} {
  \pgfmathparse{100-\p*10}
  \edef\temp{
  \noexpand\addplot[color=red!\pgfmathresult!darkgreen,domain=0.0001:1,samples=32] { pm3(0 + x, 1 + x, 2 + x, \pp) - x };
  \noexpand\addplot[color=red!\pgfmathresult!darkgreen,domain=1:8] { pm3(0 + x, 1 + x, 2 + x, \pp) - x };
  }
  \temp
}

\end{axis}
\end{tikzpicture}

\subcaption{{
$\Mean_{p}(\lv + \beta; \wv) - \beta$ as a function of $\beta$. \\
\textcolor{red}{Red: $p = 1$}; \textcolor{blue}{Blue: increasing $p$}; \textcolor{darkgreen}{Green: decreasing $p$}.
}}
\label{fig:aggregator-beta}

\end{subfigure}


\caption{
Plots of the affine-transformed $\Mean_{p}(\lv + \beta; \wv) - \beta$ aggregator function, for various values of $\beta$ and $p$.
All plots use $\lv \doteq (0, 1, 2)$ and $\wv = (\frac{1}{3}, \frac{1}{3}, \frac{1}{3})$ (i.e., unweighted power-means).
}
\label{fig:aggregator-affine}

\end{figure}
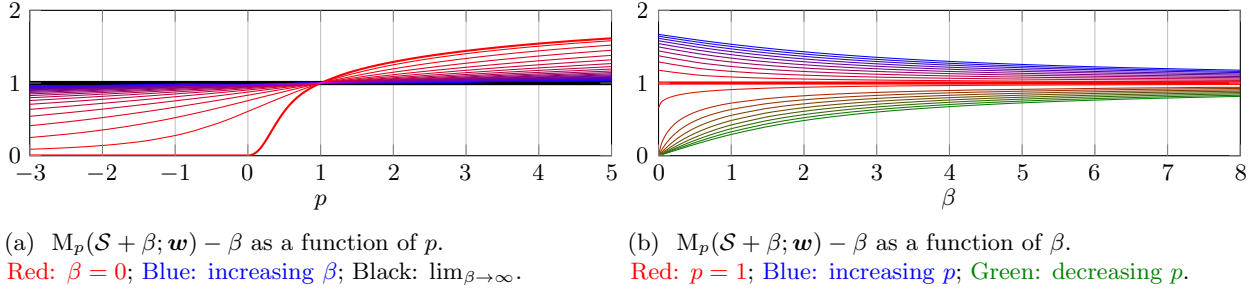

\draftnote{Thm should be: to get equivalence, require particular form of $f_{\lv}$.  Is non-affine, thus can't stat estimate easily?}

We now take a moment to comment on the surprising dissimilarity between welfare and malfare functions.
In particular, we show that intuition from univariate optimization, where maximization and minimization are symmetric, breaks down for welfare maximization and malfare minimization, 
and furthermore, 
from the perspective of estimation, 
except for the egalitarian and utilitarian cases, no fair welfare function is equivalent to any fair malfare function.

We would like to show that there \emph{does not exist} some mapping between utility and disutility values, such that under said mapping, welfare and malfare are equivalent.
Furthermore we adopt a weak notion of equivalence, requiring only that they induce the same partial ordering.
In other words, given fair welfare and malfare functions $\Welfare(\cdot; \cdot)$ and $\Malfare(\cdot; \cdot)$, we now study the existence of mappings $f_{\lv}(\cdot)$ and $F_{\Mean}(\cdot)$ such that
\[
F_{\Mean} \circ \Welfare(f_{\lv} \circ \lv; \wv) = \Malfare(\lv; \wv) \enspace, \quad 
  \left( \text{or equivalently}, \ \Welfare(\lv; \wv) = F_{\Mean} \circ \Malfare(f_{\lv} \circ \lv; \wv) \right)
  \enspace,
\]
where, as usual, $\lv$ must be positively connoted in $\Welfare(\cdot, \cdot)$ and negatively connoted in $\Malfare(\cdot, \cdot)$.

Of course, such a function pair exists in general; for $\Welfare_{p}(\cdot; \cdot)$ and $\Malfare_{p}(\cdot; \cdot)$, we may take $f_{\lv}(u) \doteq u^{\nicefrac{p}{q}}$ and $F_{\Mean}(u) \doteq f_{\lv}^{-1}(u) = u^{\nicefrac{q}{p}}$.
However, from the perspective of \emph{estimation} (and thus from the perspective of \emph{machine learning}), this relationship is unsatisfying, as we want to exploit a relationship between $\Expect_{\ProbDist}[\loss]$ and an empirical estimate $\smash{\EExpect_{\bm{x}}[\loss]}$, for some loss function $\loss$, where $\bm{x} \distributed \ProbDist^{m}$.
In general, $\smash{\EExpect_{\bm{x}}[\loss^{\nicefrac{p}{q}}]}$ is a biased estimator of $\smash{\Expect_{\ProbDist}^{\nicefrac{p}{q}}[\loss]}$, as 
is any nonlinear function; 
we thus restrict our attention to \emph{affine functions}, i.e., we require $f_{\lv}(u) \doteq \beta + \alpha u$.
\draftnote{TODO explain linearity assumption; cxn to reweighting.}

At first glance, this seems promising, as in univariate optimization, we have
\[
\max_{x \in \X} f(x) = -\min_{x \in \X} -f(x) \enspace,
\]
which would seem to suggest we take $F_{\Mean}(u) = -u$ and $f_{\lv}(u) = -u$, yielding
\[
\Welfare(\lv; \wv) = 
F_{\Mean} \circ \Malfare(f_{\lv} \circ \lv; \wv) -\Malfare(-\lv; \wv) \enspace?
\]
Unfortunately, 
except in the \emph{egalitarian} ($p = -\infty$ welfare, $p = \infty$ malfare) and \emph{utilitarian} ($p = 1$) cases, or when $\abs{\Population} = 1$, we have the necessary requirements that sentiment values be nonnegative, as otherwise key properties (various cardinal welfare axioms) of the power-mean break down.
Thus the attempt to pattern-match the univariate case has failed; a more sophisticated strategy is required.

\if 0
TODO OLD:

We now close at

Despite their similar similarity and largely symmetric axiomatization, there is in general no equivalence between fair welfare and malfare functions.
In particular, given a risk function we can define a mouth or function and if we wish to convert it to a welfare function we must Define a utility function as risk must be negatively conducted and utility must be positively connoted the obvious thing to do is to take utility equals negative risk however except in the egalitarian and utilitarian cases we have the necessary requirements that both utility and risk be non- non- as otherwise we would require taking fractional powers and roots of negative numbers.
\fi

The sophomoric approach is then to preserve nonnegativity, by taking $f_{\lv}(u) \doteq \beta - u$, and $F_{\Mean}(u) \doteq f_{\lv}^{-1}(u) =  \beta - u$, where we must choose $\beta$ to exceed the maximum utility value.
Of course, the choice of $\beta$ is rather arbitrary, 
and this strategy is fruitless with \emph{unbounded sentiment values} (e.g., the cross entropy loss or square loss).  
Furthermore this strategy fails to ensure fairness, in the sense that the original fairness concept is not preserved, and in particular, the status-quo (utilitarianism) is preserved as $\beta$ is taken to infinity, i.e.,
\[
\forall p \in \R: \lim_{\beta \to \infty} \Mean_{p}(\beta \pm \lv, \wv) - \beta = \pm \Mean_{1}(\lv; \wv) \enspace.
\]
We thus conclude that, in general, there is no way to contort a \emph{loss function} into a \emph{utility function} such that any welfare function of \emph{expected utility} preserves the fairness trade-offs made by some malfare function on the \emph{expected loss} (nor vice versa).


\paragraph{The Statistical Inestimability of Welfare Functions}

In this work, we focus primarily on fair learning and statistical estimation with malfare functions.
Much of what we accomplish is not possible for fair welfare functions, primarily because $\Welfare_{p}(\cdot; \cdot)$ for $p \in [0, 1)$ are not 
Lipschitz continuous. 
\if 0
for example, the \emph{Nash social welfare} (a.k.a.\ \emph{unweighted geometric welfare}) $\Welfare_{0}(\lv; \omega \mapsto \mathsmaller{\frac{1}{\NGroups}}) = \smash{\sqrt[\NGroups]{\vphantom{\prod}\smash{\prod_{i=1}^{\NGroups}} \lv_{i}}}$ is unstable to perturbations of each $\lv_{i}$ around $0$.
\fi
Leveraging this idea, we now construct welfare estimation tasks 
for which \emph{sample complexity} is \emph{significantly larger} than mean estimation, and may even be \emph{unbounded}. 

We first show that even the \emph{unweighted Nash social welfare} of two groups is surprisingly difficult to estimate.
This result is best appreciated in light of the fact that the sample complexity of estimating the bias $p$ of a Bernoulli coin is $\LandauOmega(\smash{\frac{\ln \mathsmaller{\frac{1}{\delta}}}{\varepsilon}})$, which is sharp as $p \to 0$, $\varepsilon \to 0$, $\delta \to 0$, yet we find that the sample-complexity of welfare estimation is substantially larger. 
Note also that the construction is quite natural, utilizing only two (unweighted) groups, with utility samples of \emph{bounded range}.
\draftnote{Check these carefully.}

\iftrue 
\begin{example}[Estimating Nash Social Welfare]
\label{ex:nsw-estimation}
\if 0
Suppose utility samples for groups $1$ are $\textsc{Bernoulli}(1)$ distributed (i.e., constant).
Now, we construct two scenarios under which the Nash social welfare differs by more than $2\varepsilon$, but the
\fi
Suppose utility samples for groups $1$ and $2$ are $\textsc{Bernoulli}(1)$ and $\textsc{Bernoulli}(p)$ distributed, respectively, for some $p \in [0, 1]$.
Clearly $\Welfare_{0}(\lv; \wv) = \sqrt{p}$, and given a size $m$ sample for group $2$, the probability of observing all $0$ values is $(1-p)^{m}$.
As this always occurs for $p = 0$, in this case, we must predict $\Welfare_{0}(\lv; \wv) \leq \varepsilon$.
However, if $\sqrt{p} > 2\varepsilon \Leftrightarrow p > 4\varepsilon^{2}$, we must predict $\Welfare_{0}(\lv; \wv) > \varepsilon$, which is mutually exclusive with the above.
Now let $\delta$ denote the probability of this event, and note that no mean-estimator can disambiguate the above cases, and thus $\delta$ lower-bounds the failure rate of any mean estimator (or welfare estimator).
We 
now conclude that for any $p > 0$, any ($\varepsilon < \frac{\sqrt{p}}{2}$, $\delta \leq (1-p)^{m}$) approximation of $\Welfare_{0}(\lv; \wv)$ requires a necessary sample of size 
\[
m \geq \frac{\ln(\delta)}{\ln(1-p)} > \frac{\ln(\delta)}{\ln(1-4\varepsilon^{2})} \geq \frac{\ln \frac{1}{\delta}}{4\varepsilon^{2}} 
\enspace.
\]
\end{example}
\fi

We now find that the situation is infinitely worse when we are allowed to weight the welfare function.
The next example shows that the sample complexity of welfare estimation then becomes unbounded.
\begin{example}[Estimating Weighted Nash Social Welfare]
\label{ex:wnsw-estimation}
Suppose as in \cref{ex:nsw-estimation}.
We now consider the \emph{weighted Nash social welfare}, letting $\wv \doteq (1 - w, w)$, for $w \in (0, 1)$.
We then have
\[
\Welfare_{0}(\lv; \wv) = \exp \left( (1 - w) \ln(1) + w \ln(p) \right) = p^w \enspace.
\]

Again, when we observe all $0$ values, we must predict $\Welfare_{0}(\lv; \wv) \leq \varepsilon$, but now if $p^{w} > 2\varepsilon$, we must predict $\Welfare_{0}(\lv; \wv) > \varepsilon$, which are again mutually exclusive predictions.
Now, for any $\varepsilon < \frac{1}{2}$, we may take $p^{w} > 2\varepsilon$, which implies $w > \frac{\ln(2\varepsilon)}{\ln(p)}$, and $p > (2\varepsilon)^{\frac{1}{w}}$.
Thus for any $p > 0$, we require
\[
m \geq \frac{\ln(\delta)}{\ln(1-p)} > \frac{\ln(\delta)}{\ln(1-(2\varepsilon)^{\frac{1}{w}})} \geq \frac{\ln \frac{1}{\delta}}{(2\varepsilon)^{\frac{1}{w}}} \enspace. 
\]

As $w$ was a free variable (for any $w \in (0, 1)$, the constraint $w > \frac{\ln(2\varepsilon)}{\ln(p)}$ is satisfied for sufficiently small $p$), we may thus conclude that for fixed $\varepsilon$, $\delta$, there exist problem instances (parameterized by $w, p$) in this class for which the sample complexity of welfare estimation is arbitrarily large.
\draftnote{What if g1 has constant $c$ utility?}
\end{example}

These results should be contrasted with \cref{lemma:stat-est,coro:stat-est}, where we show that estimation of \emph{any fair malfare function} is essentially no harder than estimation of risk values.
Thus despite their apparent similarity, we conclude that welfare and malfare functions are not isomorphic, and furthermore they have substantially different properties, where malfare is generally more amenable to statistical estimation.

\if 0

It thus appears that the obvious strategies shall fail, so it is then reasonable to ask the question of whether any such strategy can succeed.
The following theorem answers in the negative, by essentially showing that utility functions can not be contorted into disutility functions, nor vice versa, except in the egalitarian or utilitarian cases.
We thus conclude that welfare functions and malfare functions are fundamentally different (non-isomorphic) objects, and in the sequel we observe that the different properties that arise for welfare and malfare functions lead to different learnability guarantees.

\begin{theorem}
For all $\Population$ such that $\abs{\Population} > 1$, $p \neq q \in \R \setminus \{1\}$, weights measures $\wv$ with full support over $\Population$, there \emph{does not exist} a pair of functions ($F_{\Mean}: \R \mapsto \RNN$, $f_{\lv}: \RNN \mapsto \R$) such that for all $\lv: \Population \to \RNN$, 
\[
F_{\Mean} \circ \Welfare(f_{\lv} \circ \lv; \wv) = \Malfare(\lv; \wv) \enspace.
\]
TODO: the fix: assume $\Expect[f_{\lv} \circ \loss] = f_{\lv}(\Expect[\loss])$.
\end{theorem}
\begin{proof}
Suppose BWOC there exist such a pair of $f_{\lv}$ and $F_{\Mean}$ as described above.

We consider WLOG the case of $\abs{\Population} = 2$, with $\wv = (\frac{1}{2}, \frac{1}{2})$, and $p \neq 0 \neq q$; the remaining cases follow similarly (mutatis mutandis).
In other words, we have TODO why
\[
\sqrt[p]{\frac{1}{2} a^{p} + \frac{1}{2} b^{p}} = F_{\Mean} \left( \sqrt[q]{\frac{1}{2} f_{\lv}^{q}(a) + \frac{1}{2} f_{\lv}^{q}(b)} \right) \enspace,
\]
i.e., we need only consider \emph{power means} over the variables $a,b,c,d \in \RNN$.
Equivalently, we have
\[
\sqrt[p]{\frac{1}{2} a^{p} + \frac{1}{2} b^{p}} = \sqrt[p]{\frac{1}{2} c^{p} + \frac{1}{2} d^{p}} \Leftrightarrow \sqrt[q]{\frac{1}{2} f_{\lv}^{q}(a) + \frac{1}{2} f_{\lv}^{q}(b)} = \sqrt[q]{\frac{1}{2} f_{\lv}^{q}(c) + \frac{1}{2} f_{\lv}^{q}(d)} \enspace,
\]
which by bijectivity of $(\cdot)^{p}$ and $(\cdot)^{q}$, simplifies to
\[
a^{p} + b^{p} = c^{p} + d^{p} \Leftrightarrow f_{\lv}^{q}(a) +f_{\lv}^{q}(b) = f_{\lv}^{q}(c) +f_{\lv}^{q}(d) \enspace.
\]
We now seek to show that this implication is not possible for all values of $a,b,c,d \in \RNN$, which would yield a contradiction, and thus complete the proof.

\draftnote{TODO: TOTALLY WRONG: should get $f_{\lv}(a) + f_{\lv}(b) = f_{\lv}(c) + f_{\lv}(d) \implies f_{\lv}(u) = \beta + \alpha u^{p/q} \enspace?$
Or is it $f^{q}_{\lv}(a) + f^{q}_{\lv}(b) = f^{q}_{\lv}(c) + f^{q}_{\lv}(d) \implies \sqrt[q]{f_{\lv}}(u) = \beta + \alpha u \enspace,$
}
We first characterize the function $f_{\lv}(\cdot)$, noting that 
\[
f_{\lv}(a) + f_{\lv}(b) = f_{\lv}(c) + f_{\lv}(d) \implies f_{\lv}(u) = \beta + \alpha u \enspace,
\]
i.e., $f_{\lv}$ is an \emph{additive functional}, and thus admits an \emph{affine representation} in terms of some $\alpha, \beta \in \R$. TODO additive $\Leftrightarrow$ affine?
TODO: $f_{\lv}: \RNN \to \R$? Boundedness problems! TODO: nonnegativity required!

Now, suppose again
\[
a^{p} + b^{p} = c^{p} + d^{p}
\]
It must then hold (by assumption, as derived above) that
\[
f_{\lv}^{q}(a) + f_{\lv}^{q}(b) = f_{\lv}^{q}(c) + f_{\lv}^{q}(d) \implies (\beta + \alpha a)^{q} + (\beta + \alpha b)^{q} = (\beta + \alpha c)^{q} + (\beta + \alpha d)^{q}
\]

Now, choose $a = b = \frac{\sqrt[q]{\frac{\gamma}{2}}-\beta}{\alpha}$ 
for some sufficiently large $\gamma$, and take $\Gamma \doteq \Bigl(\frac{\sqrt[q]{\nicefrac{\gamma}{2}}-\beta}{\alpha}\Bigr)^{p}$.
We then have
\[
\gamma = (\beta + \alpha c)^{q} + (\beta + \alpha d)^{q} \enspace, \quad \text{and} \ \Gamma = c^{p} + d^{p} \enspace.
\]
For $p \neq q$, for any \emph{fixed} ($\alpha, \beta$) pair, and any choice of $\gamma$ (i.e., choice of $a, b$), this system of equations can not hold for all valid ($c, d$) pairs, which leads to a contradiction.
Concretely, observe that together they imply
\[
\gamma = (\beta + \alpha \sqrt[p]{\Gamma - d^{p}})^{q} + (\beta + \alpha d)^{q}
  = \left(\beta + \sqrt[p]{(\sqrt[q]{\mathsmaller{\nicefrac{\gamma}{2}}}-\beta)^{p} - (d\alpha)^{p}}\right)^{q} + (\beta + \alpha d)^{q}
  \enspace,
\]
which yields a finite number of unique solutions for $d$, each with a unique solution for $c$.
Consequently, the expression
\[
\sqrt[p]{\frac{1}{2} a^{p} + \frac{1}{2} b^{p}} = \sqrt[p]{\frac{1}{2} c^{p} + \frac{1}{2} d^{p}} \Leftrightarrow \sqrt[q]{\frac{1}{2} f_{\lv}^{q}(a) + \frac{1}{2} f_{\lv}^{q}(b)} = \sqrt[q]{\frac{1}{2} f_{\lv}^{q}(c) + \frac{1}{2} f_{\lv}^{q}(d)} \enspace,
\]
can not hold for all $a, b, c, d \in \RNN$, thus we have a contradiction, and we may conclude that no such $f_{\lv}$ and $F_{\Mean}$ as assumed can, in general, exist.
\if 0
OLD:

distinct abcd:
\[
\frac{1}{\sqrt[p]{2}}\sqrt[p]{a^{p} + b^{p}} = \frac{1}{\sqrt[p]{2}}\sqrt[p]{c^{p} + d^{p}}
\]

TODO: $F_{\Mean}$ is strict mono.

Thus we need
\[
f_{\lv}(a) + f_{\lv}(b) = f_{\lv}(c) + f_{\lv}(d) \implies f_{\lv}(u) = \beta + \alpha u \enspace.
\]
TODO why? characterization of additive function?

Now, suppose
\[
a^{p} + b^{p} = c^{p} + d^{p}
\]
It must hold that (why?)
\[
f_{\lv}^{q}(a) + f_{\lv}^{q}(b) = f_{\lv}^{q}(c) + f_{\lv}^{q}(d) \implies (\beta + \alpha a)^{q} + (\beta + \alpha b)^{q} = (\beta + \alpha c)^{q} + (\beta + \alpha d)^{q}
\]

Now, choose $a = b = \sqrt[q]{\frac{\gamma - \frac{\beta}{\alpha}}{2}}$ for some sufficiently large $\gamma$, and take $\Gamma = ???$.
We then have
\[
\gamma = (\beta + \alpha c)^{q} + (\beta + \alpha d)^{q} \enspace, \quad \text{and} \ \Gamma = c^{p} + d^{p} \enspace.
\]
For $p \neq q$, this can only hold for a unique ($c, d$) pair, thus in general we have a contradiction.

OLD:

Furthermore, it holds that $\Gamma = c^{p} + d^{p}$, thus $c = \sqrt[p]{\Gamma + d^{p}}$

Now, let $a = c = f_{\lv}^{-1}(1)$, $b \neq f_{\lv}^{-1}(1) \neq d$.

This implies $f_{\lv}^{q}(a) + f_{\lv}^q (b) = f_{\lv}^{q}(c) + f_{\lv}^q (d) \implies (\beta + \alpha c)^{q} = (\beta + \alpha d)^{q} \implies c = d$ CONTRADICTION!

Similar logic for $p=0$.
\fi
TODO: IS IT WRONG? WHY NOT TAKE $f_{\lv}(u) \doteq u^{p/q}$?
\end{proof}
\fi

\subsection{A Comparison with the Additively Separable Form}
\label{sec:comparisons:as}

\todo{back ref earlier discussion?  Or mix with new pars?}

\begin{figure}

\tikzset{
    declare function={
            mean3(\va,\vb,\vc) = ((\va + \vb + \vc) / 3);
            pm2(\va,\vb,\p) = ((\va ^ \p + \vb ^ \p) / 2) ^ (1 / \p);
            as3(\va,\vb,\vc,\pp) = ((\va ^ \pp + \vb ^ \pp + \vc ^ \pp) / 3);
            pm3(\va,\vb,\vc,\pp) = as3(\va,\vb,\vc,\pp) ^ (1 / \pp);
            asw3(\va,\vb,\vc,\wa,\wb,\wc,\pp) = (\wa * \va ^ \pp + \wb * \vb ^ \pp + \wc * \vc ^ \pp);
            pmw3(\va,\vb,\vc,\wa,\wb,\wc,\pp) = asw3(\va,\vb,\vc,\wa,\wb,\wc,\pp) ^ (1 / \pp);
            sgn(\x) = (and(\x<0, 1) * -1) + (and(\x>0, 1) * 1);
        },
}
\if 0
\tikzset{
        hatch distance/.store in=\hatchdistance,
        hatch distance=10pt,
        hatch thickness/.store in=\hatchthickness,
        hatch thickness=2pt
    }
\fi

\tikzset{
    dot diameter/.store in=\dot@diameter,
    dot diameter=3pt,
    dot spacing/.store in=\dot@spacing,
    dot spacing=4pt,
    dots/.style={
        line width=\dot@diameter,
        line cap=round,
        dash pattern=on 0pt off \dot@spacing
    }
}
\tikzfading[name=fade out,
inner color=transparent!0,
outer color=transparent!100]

\tikzset{
    mainline/.style={thick},
    boundline/.style={densely dashed,line cap=round,thin},
    popline/.style={thin,opacity=0.8},
    limitline/.style={dots,thick,opacity=0.9},
    linefill/.style={opacity=0.4,pattern=vertical lines},
    solidfill/.style={opacity=0.1},
    allfill/.style={solidfill,postaction={linefill}},
    discfill/.style={only marks,mark=*,mark options={scale=0.8,thin}},
    dischole/.style={discfill,fill=white},    
  }

\newcommand{\pinsize}{\scriptsize}

\centering

\begin{subfigure}[t]{0.49\textwidth}
\hspace{-0.1cm}\begin{tikzpicture}[blend group=multiply]
\begin{axis}[
    width=1.12\textwidth,
    axis equal image,
    xmin=-5,xmax=5,
    ymin=-2,ymax=4,
    domain=-8:8,
    samples=30,smooth,thick,no markers,
    xlabel={$p$},
    legend pos={south east},
    xtick distance=1,ytick distance=1,
    grid,
    legend style={font=\small,draw=none,fill=none},
    every axis/.append style={font=\small},
  ]


\newcommand{\ev}{0.25}

\if 0
\foreach \xi in {1, 2, 3} {
\addplot[samples=2,color={red!50!black},popline,forget plot=true] { \xi };
\addplot[samples=2,color={red!50!black},popline,forget plot=true,name path={ub\xi}] { \xi + \ev};
\addplot[samples=2,color={red!50!black},popline,forget plot=true,name path={lb\xi}] { \xi - \ev};
\addplot[color={red!50!black},pattern color={red!50!black},allfill,forget plot=true] fill between[of={lb\xi} and {ub\xi}];
}
\fi

\addplot[color=red,mainline] { pm3(1, 2, 3, x) };

\addplot[color=red,boundline,name path=lb,forget plot=true] { pm3(1-\ev, 2-\ev, 3-\ev, x) };
\addplot[color=red,boundline,name path=ub,forget plot=true] { pm3(1+\ev, 2+\ev, 3+\ev, x) };

\addplot[color=red,pattern color=red,allfill,forget plot=true] fill between[of=lb and ub];


\addlegendentry{$\Mean_{p}(\lv \pm \epsv; \wv)$};

\addplot[domain=-8:-0.00001,color=blue,mainline] { -as3(1, 2, 3, x) };

\begin{scope}[blend mode=normal]

\draw[color=blue!10!white,,line width=3pt] (axis cs: +0,{(ln((1 - \ev) * (2 - \ev) * (3 - \ev))/3}) -- (axis cs: +0,{(ln((1 + \ev) * (2 + \ev) * (3 + \ev))/3});
\draw[color=blue,thin] (axis cs: +0,{(ln((1 - \ev) * (2 - \ev) * (3 - \ev))/3}) -- (axis cs: +0,{(ln((1 + \ev) * (2 + \ev) * (3 + \ev))/3});

\addplot[color=blue,dischole,thin,forget plot=true] coordinates {(0,{(ln((1 - \ev) * (2 - \ev) * (3 - \ev))/3})};
\addplot[color=blue,dischole,thin,forget plot=true] coordinates {(0,{(ln((1 + \ev) * (2 + \ev) * (3 + \ev))/3})};

\addplot[color=blue,discfill,opacity=0.25,forget plot=true] coordinates {(0,{(ln((1 - \ev) * (2 - \ev) * (3 - \ev))/3})};
\addplot[color=blue,discfill,opacity=0.25,forget plot=true] coordinates {(0,{(ln((1 + \ev) * (2 + \ev) * (3 + \ev))/3})};

\addplot[color=blue,discfill,forget plot=true] coordinates {(0,{ln(6)/3})} node[pin={[pin distance=.25cm]0:{\pinsize $\ln \! \sqrt[3]{1 \cdot 2 \cdot 3} = \frac{1}{3} \ln 6$}}]{};

\addplot[color=blue,dischole,forget plot=true] coordinates {(0,-1)} node[pin={[pin distance=.25cm]225:{\pinsize -$1$}}]{};
\addplot[color=blue,dischole,forget plot=true] coordinates {(0,1)} node[pin={[pin distance=.25cm]225:{\pinsize $1$}}]{};

\end{scope}

\addplot[domain=-0.00001:5,color=blue,mainline,forget plot=true] { as3(1, 2, 3, x) };
\addlegendentry{$\MeanAs_{p}(\lv \pm \epsv; \wv)$};

\if 0
\addplot[color=blue,discfill,opacity=0.25,forget plot=true] coordinates {(0,{(ln((1 - \ev) * (2 - \ev) * (3 - \ev))/3})};
\addplot[color=blue,discfill,opacity=0.25,forget plot=true] coordinates {(0,{(ln((1 + \ev) * (2 + \ev) * (3 + \ev))/3})};
\fi

\addplot[domain=-8:-0.00001,color=blue,boundline,name path=lbas0,forget plot=true] { -as3(1-\ev, 2-\ev, 3-\ev, x) };
\addplot[domain=-0.00001:5,color=blue,boundline,name path=lbas1,forget plot=true] { as3(1-\ev, 2-\ev, 3-\ev, x) };

\addplot[domain=-8:-0.00001,color=blue,boundline,name path=ubas0,forget plot=true] { -as3(1+\ev, 2+\ev, 3+\ev, x) };
\addplot[domain=-0.00001:5,color=blue,boundline,name path=ubas1,forget plot=true] { as3(1+\ev, 2+\ev, 3+\ev, x) };

\addplot[domain=-8:-0.00001,color=blue,pattern color=blue,allfill,forget plot=true] fill between[of=lbas0 and ubas0];

\addplot[domain=-0.00001:5,color=blue,pattern color=blue,allfill,forget plot=true] fill between[of=lbas1 and ubas1];

\begin{scope}[blend mode=screen]
\addplot[color=black,thick,forget plot=true] { 0 };
\draw[color=black,thick] (axis cs: +0,-2) -- (axis cs: +0,+4);
\end{scope}

\end{axis}
\end{tikzpicture}
\subcaption{
$\lv = (1, 2, 3)$ and $\wv = (\frac{1}{3}, \frac{1}{3}, \frac{1}{3})$, and $\epsv = \frac{1}{4}\bm{1}$.
}
\label{fig:pmeans:123}
\end{subfigure}
\begin{subfigure}[t]{0.49\textwidth}
\hspace{-0.1cm}\begin{tikzpicture}[blend group=multiply]
\begin{axis}[
    width=1.12\textwidth,
    axis equal image,
    xmin=-4,xmax=6,
    ymin=-3,ymax=3,
    domain=-4:6,
    samples=30,smooth,thick,no markers,
    xlabel={$p$},
    legend pos={south east},
    xtick distance=1,ytick distance=1,
    grid,
    legend style={font=\small,draw=none,fill=none},
    every axis/.append style={font=\small},
  ]

\newcommand{\ev}{0.2}
\addplot[color=red,mainline] { pmw3(0.25, 1, 1.75, .25, .5, .25, x) };

\addplot[color=red,boundline,name path=lb,forget plot=true] { pmw3(0.25-\ev,1-\ev,1.75-\ev, .25,.5,.25, x) };
\addplot[color=red,boundline,name path=ub,forget plot=true] { pmw3(0.25+\ev,1+\ev,1.75+\ev, .25,.5,.25, x) };

\addplot[color=red,pattern color=red,allfill,forget plot=true] fill between[of=lb and ub];


\addlegendentry{$\Mean_{p}(\lv \pm \epsv; \wv)$};

\addplot[domain=-2:-0.00001,color=blue,mainline] { -asw3(0.25, 1, 1.75, .25, .5, .25, x) };
\addplot[domain=-0.00001:5,color=blue,mainline,forget plot=true] { asw3(0.25, 1, 1.75, .25, .5, .25, x) };

\begin{scope}[blend mode=normal]

\draw[color=blue!10!white,line width=3pt] (axis cs: +0,{ln(.25 - \ev) * .25 + ln(1 - \ev) * .5 + ln(1.75 - \ev) * .25}) -- (axis cs: +0,{ln(.25 + \ev) * .25 + ln(1 + \ev) * .5 + ln(1.75 + \ev) * .25});
\draw[color=blue,thin] (axis cs: +0,{ln(.25 - \ev) * .25 + ln(1 - \ev) * .5 + ln(1.75 - \ev) * .25}) -- (axis cs: +0,{ln(.25 + \ev) * .25 + ln(1 + \ev) * .5 + ln(1.75 + \ev) * .25});

\addplot[color=blue,dischole,thin,forget plot=true] coordinates {(0,{ln(.25 - \ev) * .25 + ln(1 - \ev) * .5 + ln(1.75 - \ev) * .25})};
\addplot[color=blue,dischole,thin,forget plot=true] coordinates {(0,{ln(.25 + \ev) * .25 + ln(1 + \ev) * .5 + ln(1.75 + \ev) * .25})};

\addplot[color=blue,discfill,opacity=0.25,forget plot=true] coordinates {(0,{ln(.25 - \ev) * .25 + ln(1 - \ev) * .5 + ln(1.75 - \ev) * .25})};
\addplot[color=blue,discfill,opacity=0.25,forget plot=true] coordinates {(0,{ln(.25 + \ev) * .25 + ln(1 + \ev) * .5 + ln(1.75 + \ev) * .25})};

\addplot[color=blue,discfill,forget plot=true] coordinates {(0,{ln(.25) * .25 + ln(1) * .5 + ln(1.75) * .25})} node[pin={[pin distance=.25cm]0:{\pinsize $\frac{1}{4} \ln \! \frac{1}{4} + \frac{1}{2} \ln \! \frac{1}{2} + \frac{1}{4} \ln \! \frac{7}{4}$}}]{};

\addplot[color=blue,dischole,forget plot=true] coordinates {(0,-1)} node[pin={[pin distance=.25cm]315:{\pinsize -$1$}}]{};
\addplot[color=blue,dischole,forget plot=true] coordinates {(0,1)} node[pin={[pin distance=.25cm]135:{\pinsize $1$}}]{};

\end{scope}

\addlegendentry{$\MeanAs_{p}(\lv \pm \epsv; \wv)$};

\if 0
\addplot[color=blue,discfill,opacity=0.25,forget plot=true] coordinates {(0,{ln(.25 - \ev) * .25 + ln(1 - \ev) * .5 + ln(1.75 - \ev) * .25})};
\addplot[color=blue,discfill,opacity=0.25,forget plot=true] coordinates {(0,{ln(.25 + \ev) * .25 + ln(1 + \ev) * .5 + ln(1.75 + \ev) * .25})};
\fi

\addplot[domain=-2:-0.00001,color=blue,boundline,name path=lbas0,forget plot=true] { -asw3(0.25-\ev,1-\ev,1.75-\ev, .25,.5,.25, x) };
\addplot[domain=-0.00001:7,color=blue,boundline,name path=lbas1,forget plot=true] { asw3(0.25-\ev,1-\ev,1.75-\ev, .25,.5,.25, x) };

\addplot[domain=-3:-0.00001,color=blue,boundline,name path=ubas0,forget plot=true] { -asw3(0.25+\ev,1+\ev,1.75+\ev, .25,.5,.25, x) };
\addplot[domain=-0.00001:7,color=blue,boundline,name path=ubas1,forget plot=true] { asw3(0.25+\ev,1+\ev,1.75+\ev, .25,.5,.25, x) };

\iftrue

\addplot[domain=-8:-0.00001,color=blue,pattern color=blue,allfill,forget plot=true] fill between[of=lbas0 and ubas0];

\addplot[domain=-0.00001:5,color=blue,pattern color=blue,allfill,forget plot=true] fill between[of=lbas1 and ubas1];
\fi

\begin{scope}[blend mode=screen]
\addplot[color=black,thick,forget plot=true] { 0 };
\draw[color=black,thick] (axis cs: +0,-3) -- (axis cs: +0,+3);
\end{scope}

\end{axis}
\end{tikzpicture}
\subcaption{
$\lv = (\frac{1}{4}, 1, 1 + \frac{3}{4})$, $\wv = (\frac{1}{4}, \frac{1}{2}, \frac{1}{4})$, and $\epsv = \frac{1}{5}\bm{1}$.
}
\label{fig:pmeans:wide}
\end{subfigure}

\if 0
\begin{tikzpicture}

\draw[draw=red,fill=red,fill opacity=0.5,ultra thick] (0, 0) circle (1cm);
\draw[draw=blue,fill=blue,fill opacity=0.5,ultra thick] (0, 1) circle (1cm);

\begin{scope}[xshift=2cm]

\draw[draw=blue,fill=blue,fill opacity=0.5,ultra thick] (0, 1) circle (1cm);
\draw[draw=red,fill=red,fill opacity=0.5,ultra thick] (0, 0) circle (1cm);

\end{scope}

\begin{scope}[blend mode=multiply,xshift=4cm]

\draw[draw=red,fill=red,fill opacity=0.5,ultra thick] (0, 0) circle (1cm);
\draw[draw=blue,fill=blue,fill opacity=0.5,ultra thick] (0, 1) circle (1cm);

\end{scope}

\begin{scope}[blend mode=multiply,xshift=6cm]

\draw[draw=blue,fill=blue,fill opacity=0.5,ultra thick] (0, 1) circle (1cm);
\draw[draw=red,fill=red,fill opacity=0.5,ultra thick] (0, 0) circle (1cm);

\end{scope}

\begin{scope}[blend mode=screen,xshift=8cm]

\draw[draw=red,fill=red,fill opacity=0.5,ultra thick] (0, 0) circle (1cm);
\draw[draw=blue,fill=blue,fill opacity=0.5,ultra thick] (0, 1) circle (1cm);

\end{scope}

\begin{scope}[blend mode=screen,xshift=10cm]

\draw[draw=blue,fill=blue,fill opacity=0.5,ultra thick] (0, 1) circle (1cm);
\draw[draw=red,fill=red,fill opacity=0.5,ultra thick] (0, 0) circle (1cm);

\end{scope}
\end{tikzpicture}
\fi

\caption{
A comparison of the $p$-power-mean and $p$-CAS aggregator function families, both as a function of $p$, for an unweighted (\ref{fig:pmeans:123}) and weighted (\ref{fig:pmeans:wide}) three-member population.
We plot the aggregates themselves, as well as \emph{upper and lower bounds} on means based on an uncertainty interval $\Mean(\lv \pm \varepsilon; \wv)$, shown as \emph{shaded regions}.
The power mean is continuous in $p$, but the $p$-CAS is discontinuous at $p=0$, where the value is distinct from both the left and right limits. 
These discontinuities are plotted in the usual manner, with upper and lower bounds at $p=0$ shaded. 
\todo{pin equality at 1?}
\draftnote{Could use $\varepsilon = \frac{1}{4}$?}
}
\label{fig:pmean-vs-addsep}
\end{figure}
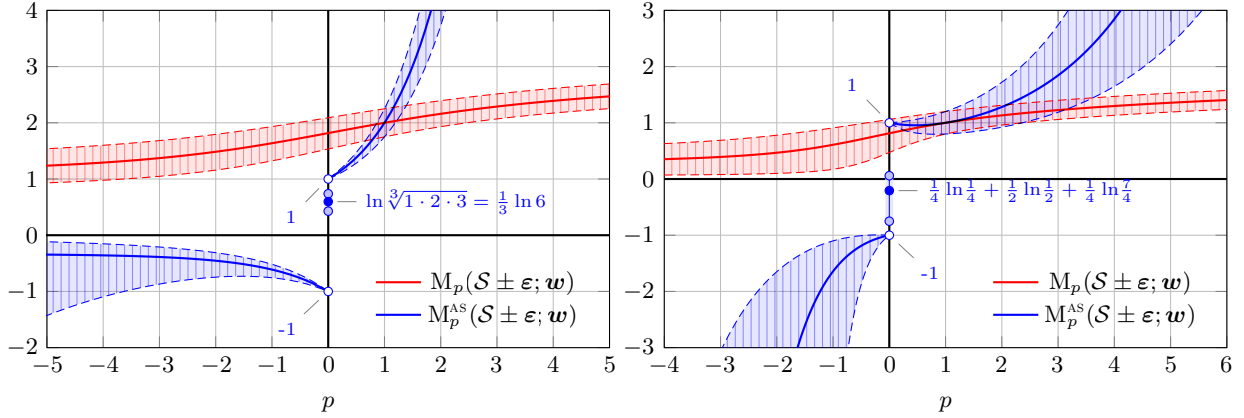

\if 0 
For context, we present an additional axiom; that of \emph{additive separability}.
For simplicity, we present it only in the unweighted discrete case, as there is some subtlety to an equivalent measure-theoretic formulation, and we derive no benefit from assuming this axiom, as it is largely incompatible with our assumptions, and presented only for comparison purposes. 
TODO
\todo{continuity}
\todo{Measure theory: \url{https://canvas.harvard.edu/courses/21356/files/3575061/download?verifier=UmCfZb9gSR0o9EPkJND4WQY8ljNfTjX2ss3fZiWi&wrap=1}; cite sth}
\begin{definition}[Additive Separability]
\label{def:axiom:as}
An aggregator function $\Mean(\lv_{1:\NGroups})$ is \emph{additively separable} if there exist functions $f_{1:\NGroups}$ \st{} each $f_{i} \in \RNN \to \RNN$, and $\Mean(\lv_{1:\NGroups})$ may be decomposed as
TODO $\to \R$?
\[
\Mean(\lv_{1}, \lv_{2}, \dots, \lv_{\NGroups}) = \smash{\sum_{i=1}^{\NGroups}} f_{i}(\lv_{i}) \enspace.
\]
\draftnote{
weighted: exists $f: \R^{0+} \times \Population \mapsto \R$ s.t.
\[
\Mean(\lv; \wv) = \int\limits_{\wv} f( \lv(\PopItem); \PopItem ) \, \mathrm{d}(\PopItem) = \Expect_{\PopItem \distributed \wv} \bigl[ f( \lv(\PopItem); \PopItem ) \bigr] \enspace.
\]
TODO: harmonic problem? Output $\R$ not $\R^{0+}$ BC $p < 0$. 
TODO could assume all f are same?
}
\end{definition}
\fi

For context, we present an additional axiom; that of \emph{additive separability}.
We \emph{do not} assume this axiom henceforth; rather we present it for comparison purposes, as it is commonly assumed in welfare 
 economics.
\draftnote{TODO
We find that the notion of uniform sample complexity is both futile and foolish for the additively-separable power-mean family, as it is far easier to estimate some additively separable means than others the consistency is the same as the power mean, due to their simple deterministic relationship.
}


\todo{continuity}
\todo{Measure theory: \url{https://canvas.harvard.edu/courses/21356/files/3575061/download?verifier=UmCfZb9gSR0o9EPkJND4WQY8ljNfTjX2ss3fZiWi&wrap=1}; cite sth}
\begin{definition}[Additive Separability]
\label{def:axiom:as}
An aggregator function 
$\Mean(\cdot; \cdot)$ is \emph{additively separable}
if there exists a function $f: \RNN \times \Population \to \RExt$ such that for any $\lv: \Population \to \RNN$ and weights measure $\wv$ over $\Population$, $\Mean(\lv, \wv)$ may be decomposed as
\[
\Mean(\lv; \wv) = \int\limits_{\wv} f( \lv(\PopItem); \PopItem ) \, \mathrm{d}(\PopItem) = \Expect_{\PopItem \distributed \wv} \bigl[ f( \lv(\PopItem); \PopItem ) \bigr] \enspace.
\]
\draftnote{Could assume f is the same across $\Population$?}
\end{definition}

\begin{definition}
Suppose $\lv: \Population \to \RNN$ and weights measure $\wv$.
For any $p \in \R$, we define the $p$-\emph{canonical-additively-separable} ($p$-CAS) aggregator function as
\begin{equation}
\label{eq:paspm}
\MeanAs_{p}(\lv; \wv) \doteq \!\! \lim_{\varepsilon \to 0^{+}} \! \int\limits_{\wv} f_{p}(\lv(\PopItem) + \varepsilon ) \, \mathrm{d}(\PopItem) = \!\!\lim_{\varepsilon \to 0^{+}} \Expect_{\PopItem \distributed \wv}\bigl[f_{p} ( \lv (\PopItem) + \varepsilon ) \bigr]
\enspace,
\ \text{with} \ 
\begin{cases}
\ p = 0 & f_{0}(x) \doteq \ln(x) \\[-0.08cm]
\ p \neq 0 & f_{p}(x) \doteq \sgn(p)x^{p} \\[-0.1cm]
\end{cases}
\enspace.
\end{equation}
\end{definition}
Here $f_{p}$ is defined as in \cref{thm:pop-mean-prop}~\cref{thm:pop-mean-prop:lin-fact}, and again $\lv: \Population \to \R_{+}$ is extended to $\Population \to \RNN$ via the right limit. 
These limits are simpler than in the power-mean, and we could equivalently take the limits in $f_{p}$, which 
results in $\ln(0^{+}) = -\frac{1}{0^{+}} = -\infty$.
Note that the Debreu-Gorman theorem is often stated in this form; i.e., 
it is \cref{thm:pop-mean-prop}~\cref{thm:pop-mean-prop:lin-fact}, taking $F(x) = x$.\draftnote{$F(u)$ or $F(v)$?}
It is thus closely related to the power-mean, as 
\begin{equation}
\MeanAs_{p}(\lv; \wv) \mathop{=}\limits_{p \in \R} f_{p}^{-1} \bigl(\sgn(p)\Mean_{p}(\lv; \wv) \bigr) \mathop{=}\limits_{p \in \R \setminus \{0\}} \Mean_{p}^{p}(\lv; \wv) \enspace.
\end{equation}

If we assume the additive separability axiom, as well as axioms~\ref{def:cardinal-axioms:dgfirst}-\ref{def:cardinal-axioms:dglast}, it then holds that any aggregator function $\Mean(\cdot; \cdot)$ can be expressed as
\[
\Mean(\lv; \wv) = \beta + \alpha \MeanAs_{p}(\lv; \wv) \enspace,
\]
for some $p \in \R$, $\alpha \in \RPos$, $\beta \in \R$, i.e., $\Mean(\cdot; \cdot)$ is a positive affine transform of the CAS family (this identity follows essentially from \cref{thm:pop-mean-prop}~\cref{thm:pop-mean-prop:lin-fact}).
It is a rather subtle matter to axiomatically restrict this family to the CAS family, but the following pair of axioms suffice.
\if 0
but it is quite important to do so, as the arbitrary multiplicative scalar $\alpha$ precludes any \emph{uniform sample complexity} for additively estimating this family.
TODO
since for any Epsilon there exists an alpha for which it is difficult to Alpha estimate.
\fi
\if 0 
The unit-scale axiom is helpful for $p > 0$, but not sufficient, as the additive $\beta$ term gives an extra degree of freedom so we must additionally assume a 0-identity axiom of $\Mean(\PopItem \mapsto 0; \wv) = 0$ (note that for the power-mean, this extra degree of freedom is handled by the multiplicative linearity axiom).
Unfortunately, these axioms only suffice for the $p > 0$ case, and they must be adapted for $p < 0$, as here $\MeanAs_{p}(\PopItem \mapsto 1; \wv) = -1$ and $\MeanAs_{p}(\PopItem \mapsto 0; \wv) = -\infty$.
The contortions continue as we consider the $p=0$ case, where now $\MeanAs_{p}(\PopItem \mapsto 0; \wv) = -\infty$ and $\MeanAs_{p}(\PopItem \mapsto 1; \wv) = 0$.
Indeed, to restrict the additively separable form to its canonical representation, a case analysis indicates the following two axioms are sufficient, however unlike the entirely natural \emph{unit scale} and \emph{multiplicative linearity} axioms, \emph{0-identity} and \emph{1-identity} read as quite arbitrary, and only through a case analysis can it be seen that they just-so-happen to restrict the family appropriately.
\fi
\begin{definition}[Canonical Additive Separability Restriction Identity Axioms]
\label{def:axioms:as-cas}
Suppose aggregator function $\Mean(\cdot; \cdot)$ and probability measure $\wv$.
We define the following axioms.

\begin{multicols}{2}
\begin{enumerate}[wide, labelwidth=0pt, labelindent=0pt]\setlength{\itemsep}{3pt}\setlength{\parskip}{0pt}
\item $0$-Identity: $\Mean(\PopItem \mapsto 0; \wv) \in \{ -\infty, 0 \}$.
\item $1$-Identity: $\Mean(\PopItem \mapsto 1; \wv) \in \{ -1, 0, 1 \}$.
\end{enumerate}
\end{multicols}
\end{definition}
Unlike the entirely natural \emph{unit scale} and \emph{multiplicative linearity} axioms (for the power-mean), the \emph{0-identity} and \emph{1-identity} axioms read as quite arbitrary, and only through a detailed six-way case analysis can it be seen that they just-so-happen to restrict the family appropriately.
Furthermore, 
both are incompatible with the \emph{unit scale} axiom (consider $p \leq 0$) and the \emph{identity property} (\cref{thm:pop-mean-prop}~\cref{thm:pop-mean-prop:id}).
Indeed, even the additive-separability axiom itself seems rather heavy-handed, assuming something very specific that is supposedly convenient for the economist, with little justification as to why and how it serves as a fundamental property of \emph{cardinal welfare} itself.

We note that from a classical perspective, there is very little difference between the $p$-power-mean and $p$-CAS families.
They are isomorphic under the comparison operator, thus they defined the same ordering over preferences, and we shall see that it is easy to construct consistent estimators from either from consistent estimators for sentiment values.
However, the same cannot be said for \emph{uniform sample complexity},
therefore our results for FPAC-learnability would be quite different under 
this alternative axiomatization.
\Cref{coro:stat-est} describes such a bound for the $p \geq 1$ power-means, and \cref{fig:pmean-vs-addsep} directly contrasts these families, wherein it is clear that the difficulty of estimating the $p$-CAS family varies wildly as a function of $p$.

In closing, we remark that in many ways, the power-mean is more intuitive as a generalization of the mean-concept, and its convenient dimensional-analysis properties, and the potential for direct comparisons between aggregator functions and sentiment values, do not extend to the $p$-CAS family.

\if 0

\todo{why; propagate name; properties.
why do not like it (max / min / mean / triv comp).
we're mean-centric biased.
alternative concatenation / separability properties.
}

\draftnote{natural sufficient statistics footnote?}
\if 0
While this property is seemingly quite important to early welfare theorists (e.g., the Debreu-Gorman theorem is generally presented in additively-separable form\todo{thm unknown}), and it gives rise to some convenient interpretability and computational properties, we argue that these are far-outstripped by those stemming from the \emph{multiplicative linearity} and \emph{unit scale} axioms, with no real difference in generality.%
\footnote{In this sense, the \emph{additive welfare functionals} are somewhat like the \emph{natural sufficient statistics}\todo{check term.} of the exponential family, in that, while additivity is sometimes convenient, it comes only with the sacrifice of other desiderata.}
Furthermore, unlike these and other standard cardinal welfare axioms, additive separability seems rather heavy-handed, assuming something very specific that is supposedly convenient for the economist, with little justification as to why and how it serves as a fundamental property of \emph{cardinal welfare} itself.
These properties are discussed in \cref{sec:pop-mean:prop}, and directly compared with the additively-separable form in \cref{sec:pop-mean:as}.
\fi


{
\color{purple}
TODO move:
Furthermore, for many applications the differences between the $p$-power-mean, the $p$-CAS, and TODO XXX are irrelevant; they are isomorphic under the comparison operator (thus induce the same preference POSET over sentiment functions), and a consistent estimator for $\lv$ implies a consistent estimator for $\Mean(\lv; \wv)$.
In this work we study not just consistency, but \emph{sample complexity}, and in particular, \emph{uniform sample complexity} TODO over X, wherein  the differences between these forms become significant.
\Cref{fig:pmean-vs-addsep} shows that small errors in estimating LV can have a large impact on additively separable aggregator functions (see~\cref{lemma:stat-est})), whereas the $p$-power-mean for $p \geq 1$ is much easier to estimate (see~\cref{coro:stat-est}).
}

\Cref{thm:pop-mean-prop} shows that if we assume
axioms~\ref{def:cardinal-axioms:dgfirst}-\ref{def:cardinal-axioms:dglast}, all aggregator functions are \emph{monotonic transformations} of the power mean, thus whether we assume \emph{additive separability}, and get the additively-separable form
\[
\Mean(\lv) = c\sum_{i=1}^{\NGroups} f_{p}(\lv_{i}) = c\NGroups\Mean_{p}^{p}(\lv) \enspace,
\]
for $p \in \R \setminus \{0\}$ (where usually $c=1$ is taken as the canonical form),
or we assume axioms \ref{def:cardinal-axioms:mult}-\ref{def:cardinal-axioms:unit}, and get the power-mean, there is no real loss of descriptiveness, as all such aggregator functions remain \emph{isomorphic} under the binary comparison operator ($\leq$).\todo{mention add sep needs addtl axiom; equivalent to unit scale, to remove $c$.}
With additive separability, it is straightforward to compute the welfare of a population from the welfares of \emph{subpopulations}, but\todo{prove it} 
 it is still computationally trivial to do this with power-means.
Furthermore, the limiting cases of $p \in \pm \infty$ become undefined in the additively separable form, and we lose monotonicity in $p$ (see \cref{thm:pow-mean-prop}), both of which are remedied with power-means.\todo{egal}

With additive separability, welfare summarizes population sentiment by intuitively generalizing the idea of \emph{summation}.
In contrast, in our setting, we prefer to think of welfare as a generalized \emph{average}. 
This yields desirable statistical estimation and learnability properties (shown in \cref{sec:stat}), 
but is also useful in and of itself, as it allows us to, for instance, compare individual sentiment values to welfare, as both the \emph{units} and \emph{scale} match.
In contrast, difficulty arises under additive separability, as taking $p \to \infty$ 
 causes large sentiment-values to explode while small sentiment-values contract to zero.
This complicates comparison across different values of $p$, and is a large part of why statistical estimation is difficult.

\draftnote{
TODO
For similar dimensional-analysis reasons, it is generally nonsensical to compare (additively-separable) malfare values across different values of $p$, as taking $p \to \infty$ 
causes large sentiment-values to explode while small sentiment-values contract to zero.
Difficulty also arises in \emph{statistical estimation}, where $\lv$ is defined as an \emph{expectation} and estimated from a \emph{sample}; the convenient additive guarantees of \cref{sec:stat} would not in general be possible with additive separability.
}


Another reason to prefer the power-mean over the additively separable form is the potential for direct comparisons between \emph{group sentiments}, aggregator functions of \emph{subgroups}, and \emph{overall aggregator functions}.
In particular, the \emph{dimensional analysis} properties of the \emph{power mean} are convenient, as these comparisons agree in \emph{units} (due to axiom~\ref{def:cardinal-axioms:mult}) and \emph{scale} (due to axiom~\ref{def:cardinal-axioms:unit}).
No such dimensional analysis is possible with the additively separable form, as, e.g., if $\lv$ is measured in dollars, then $\Welfare_{2}^{2}(\lv; \wv)$ is measured in \emph{square dollars} (a rather unintuitive unit).

\draftnote{
Note that it is instead standard in cardinal welfare theory to assume axioms 1-5, and take $n\Mean_{p}^{p}(\lv; \wv)$ as the prototypical welfare function.
This is convenient, as welfare becomes \emph{additive} when groups are concatenated,\footnote{In this sense, the \emph{additive welfare functionals} are somewhat like the \emph{natural sufficient statistics}\todo{check term.} of the exponential family, in that, while additivity is sometimes convenient, it comes only with the sacrifice of other desiderata.} \emph{however}, we lose many of the desirably properties of the power mean described above, and as we shall soon see, statistical estimation and additive estimation guarantees are far more difficult to achieve.
Of course, as bounds in one formulation imply bounds on the other, we recommend working with power means, and then converting back to the additively separable form (if so desired).
}
\fi


\subsection{Relating Power Means and Inequality Indices}
\label{sec:comparisons:inequality}

\todo{Does Atkinson use his index in this context already?  Predates Sen!  TODO: Dalton 1920?}

We now discuss and define 
\emph{relative inequality indices} $\IE(\lv; \wv)$, which have been employed in the literature \citep{sen1997economic} to construct \emph{welfare functions} of the form
\[
\Welfare(\lv; \wv) = \Welfare_{1}(\lv; \wv) \bigl(1 - \IE(\lv; \wv)\bigr) \enspace.
\]
This characterization intuitively starts with the \emph{utilitarian welfare}, which measures \emph{overall satisfaction} and then \emph{downweights} based on how unfairly distributed utility is amongst the population.
The ``relative'' in relative inequality indices connotes the fact that they are restricted to domain $[0, 1]$, thus the welfare metric matches the utilitarian under perfect equality, and is $0$ under maximal inequality. 
\todo{Check cite of Sen 1973. \citep{sen1997economic}, not \citep{sen1977weights}?}

We show that a large class of such functions are actually power means, which both gives them axiomatic justification, and shows prior support in the literature for the power mean.
In particular, we first consider the \emph{Atkinson index} (\citeyear{atkinson1970measurement}) relative inequality measure family. 

\begin{definition}[Atkinson Index]
\label{def:atk}
For all $\varepsilon \in \R$, we define the Atkinson index as
\[
\ATK_{\varepsilon}(\lv; \wv) \doteq 1 - \frac{\Mean_{1-\varepsilon}(\lv; \wv)}{\Mean_{1}(\lv; \wv)}
  \enspace.
\]
\end{definition}

Note that often the Atkinson index is restricted to $\varepsilon \in [0, 1]$; outside this range, it may exceed $1$\cyrus{What?}.
Furthermore, the Atkinson index is generally stated without weights, and in a mathematically equivalent form, in which the resemblance to the power mean is less obvious, but for our purposes the above form is clearer.
From it, we immediately have the following lemma.

\begin{lemma}[Relating Atkinson Indices and Power Means]
\label{lemma:atk}
Suppose some $\varepsilon \in \R$, and take $p = 1 - \varepsilon$.
It then holds that
\[
\Mean_{p}(\lv; \wv) = \Mean_{1}(\lv; \wv)\bigl(1-\ATK_{\varepsilon}(\lv; \wv)\bigr) \enspace.
\]
\end{lemma}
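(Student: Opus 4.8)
The plan is essentially to unfold \cref{def:atk} and rearrange; the statement is an algebraic identity rather than a theorem requiring the axioms. First I would substitute $p = 1-\varepsilon$ into the definition of the Atkinson index, obtaining
\[
\ATK_{\varepsilon}(\lv; \wv) = 1 - \frac{\Mean_{1-\varepsilon}(\lv; \wv)}{\Mean_{1}(\lv; \wv)} = 1 - \frac{\Mean_{p}(\lv; \wv)}{\Mean_{1}(\lv; \wv)} \enspace,
\]
so that $1 - \ATK_{\varepsilon}(\lv; \wv) = \Mean_{p}(\lv;\wv)/\Mean_{1}(\lv;\wv)$. Multiplying both sides by $\Mean_{1}(\lv;\wv)$ then yields $\Mean_{1}(\lv; \wv)\bigl(1-\ATK_{\varepsilon}(\lv; \wv)\bigr) = \Mean_{p}(\lv; \wv)$, which is exactly the claim.

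The one point that needs care — and the closest thing to an ``obstacle'' — is the division by $\Mean_{1}(\lv; \wv) = \Expect_{\PopItem \distributed \wv}[\lv(\PopItem)]$. This is legitimate whenever $\Mean_{1}(\lv;\wv) > 0$, which, since $\wv$ has full support and $\lv$ is nonnegative, fails only in the degenerate case $\lv \equiv 0$. I would handle that case via the limiting convention of \cref{def:pmean}: applying the identity to $\lv + c\bm{1}$ for $c > 0$, the identity property (\cref{thm:pop-mean-prop}, item \ref{thm:pop-mean-prop:id}) gives $\Mean_{p}(c\bm{1};\wv) = \Mean_{1}(c\bm{1};\wv) = c$, hence $\ATK_{\varepsilon}(c\bm{1};\wv) = 0$ and the identity reads $c = c$; taking $c \to 0^{+}$ recovers the statement at $\lv \equiv 0$ as $0 = 0$. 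Thus the lemma holds in all cases.

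Optionally I would close with a one-line remark that, by monotonicity of $\Mean_{p}$ in $p$ (\cref{thm:pow-mean-prop}, item \ref{thm:pow-mean-prop:mono}), when $\varepsilon \in [0,1]$ (i.e.\ $p \in [0,1]$) we have $\Mean_{0}(\lv;\wv) \le \Mean_{p}(\lv;\wv) \le \Mean_{1}(\lv;\wv)$, so $1 - \ATK_{\varepsilon}(\lv;\wv) \in [0,1]$, consistent with $\ATK_{\varepsilon}$ being a ``relative'' inequality index on that parameter range; this also explains the earlier comment that values outside $[0,1]$ can occur when $\varepsilon \notin [0,1]$. No part of the argument requires anything beyond the definitions and these already-established properties, so the proof is short.
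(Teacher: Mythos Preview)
Your proposal is correct and takes essentially the same approach as the paper, which simply states that the lemma is a direct consequence of \cref{def:atk} together with the substitution $p = 1 - \varepsilon$. Your treatment is in fact more thorough than the paper's one-line proof, since you explicitly handle the degenerate case $\lv \equiv 0$ and add the remark about the range of $\ATK_{\varepsilon}$; neither of these appears in the original, but both are correct and appropriate additions.
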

\begin{proof}
This is a direct consequence of \cref{def:atk}, noting $p = 1 - \varepsilon \Leftrightarrow \varepsilon = 1 - p$.
\end{proof}
This result is not particularly surprising in light of the welfare-centric derivation of \citet{atkinson1970measurement}, but nonetheless it yields a valuable alternative way to think about power means and inequality-weighted welfare functions.
In particular, it gives a direct \emph{axiomatic justification} of the welfare function $\Welfare(\lv; \wv) = \Welfare_{1}(\lv; \wv) (1 - \ATK_{\varepsilon}(\lv; \wv))$ (see \cref{thm:pop-mean-prop}), and also gives an alternative intuitive interpretation of power-mean welfare (as inequality-weighted utilitarian welfare).

Furthermore, \cref{lemma:atk} 
casts light on the relationship between \emph{inequality-index constrained} ($\leq c$) fair learning methods and power-mean welfare (or malfare) optimization.
In particular, if $\lv$ is a function of some parameter $\theta \in \Theta$, assuming \emph{strong duality} holds,\todo{Does it always?} the \emph{Lagrangian dual} yields 
\[
\sup_{\theta \in \Theta: \ATK_{\varepsilon}(\lv(\theta); \wv) \geq c}  \Welfare_{1}(\lv(\theta); \wv) = \inf_{\lambda \geq 0} \sup_{\theta \in \Theta} \Welfare_{1}(\lv(\theta); \wv) - \lambda \bigl( \ATK_{\varepsilon}(\lv(\theta); \wv) - c \bigr) \enspace.
\]
Now, consider that for the power-mean, by \cref{lemma:atk}, we have
\[
\sup_{\theta \in \Theta} \Welfare_{p}(\lv(\theta); \wv) = \sup_{\theta \in \Theta} \Welfare_{1}(\lv(\theta); \wv) - \Welfare_{1}(\lv(\theta); \wv) \ATK_{\varepsilon}(\lv(\theta); \wv) \enspace.
\]
The similarity between these forms is immediately clear,
though they may 
 make 
 different trade-offs between equality (
Atkinson index) and total utility (utilitarian welfare). However, note that given a sufficiently rich parameter space $\Theta$, 
there always exists some $c$ such that the infimum and supremum of the Lagrangian dual are realized by some ($\lambda$, $\theta$), 
such that $\theta$ also realizes the supremum of the power-mean.
In this sense, we may think of maximizing the power mean welfare as maximizing inequality-constrained welfare, while automatically selecting an appropriate value of the constraint $c$ to balance 
 utility and equality.
\draftnote{Is the Lagrangian dual content clear (and correct)?}

Another advantage of direct welfare or malfare optimization over inequality-constrained optimization is that it can be quite difficult to accurately estimate inequality indices from a finite sample, e.g., \citet{rongve1997estimation} show asymptotic normality and variance analysis, but not finite-sample guarantees.
However, it's worth noting that even \emph{uniform sample complexity bounds} on both the objective and the constraints do not imply uniform sample complexity for \emph{constrained maximization}.
This is in general a very difficult problem, and makes the analysis of \emph{Seldonian learners} \citep{thomas2019preventing} quite challenging, as the sample complexity of constrained optimization may be uniformly bounded only for particular choices of constraint and objective.
The following example sharply portrays the issue.

\begin{example}[Unbounded Sample Complexity of Constrained Optimization]
\label{example:co}
Suppose we wish to select between three classifiers with utility vectors ($\lv_{1}$, $\lv_{2}$, $\lv_{3}$), utilitarian welfares ($1$, $2$, $3$), and inequality indices ($0$, $c - \gamma$, $c + \gamma$).
In particular, we wish to select the $\lv_{i}$ to maximize utilitarian welfare under the constraint that some inequality index does not exceed $c$ \emph{over the data distribution}.
Clearly $\lv_{1}$ satisfies the $c$-inequality constraint, but we require a $\gamma$-estimate of the inequality indices to determine whether $\lv_{2}$ and $\lv_{3}$ satisfy the inequality constraints.
Thus for any $\varepsilon < \frac{1}{2}$, the sample complexity of this welfare 
maximization problem is actually independent of the additive error $\varepsilon$, but depends on $\gamma$, which may be taken arbitrarily close to $0$, yielding unbounded sample complexity.
\end{example}


Note that similar relationships and impossibility results 
 may be shown for 
 isomorphic inequality measures, including the \emph{Theil indices} (\citeyear{theil1967economics}) 
 and \emph{generalized entropy indices} \citep{shorrocks1980class}, although in this context their forms are generally less pleasing.
On the other hand, there exist inequality indices with no relation to the power mean.
For example, many such inequalities measures based on the Lorenz curve, such as the generalized Gini index, can't be expressed as a function of power mean and utilitarian welfare.
Such indices are instead naturally related to other welfare functions, e.g., the generalized Gini social welfare function \citep{weymark1981generalized}.
We don't directly consider such welfare functions (as they necessarily violate one or more axioms of \cref{def:cardinal-axioms}), but many of our results and constructions can be adapted to them with little difficulty.
\todo{GEI: Hidden for now; do it later.  Could add defs and just state similar for GEI / theil?}

\if 0

\todo{Usually defined w/out weights.  Does that cause any issues?}
We first define the \emph{generalized entropy index}, and note that it is isomorphic to both the Theil and Atkinson inequality indices.
\begin{definition}[Generalized Entropy Index and Related Notions]
\todo{p=0,1 defs}
Suppose $p \in (0, 1) \cup (1, \infty)$.  The weighted GEI of $\lv$, weighted by $\wv$, is
\[
\GEI_{p}(\lv; \wv) \doteq
  \begin{cases}
  p \not\in \{0, 1\} & \frac{1}{p(p-1)} \left(\frac{\Mean_{p}^{p}(\lv; \wv)}{\Mean_{1}^{p}(\lv; \wv)} - 1\right) \\
  p = 1 & (x \mapsto x \ln(x))(\Mean_{x \mapsto x\ln(x)}(\lv / \Mean_{1}(\lv; \wv); \wv)) \\ 
  p = 0 & \ln\left(\frac{\Mean_{0}(\lv; \wv)}{\Mean_{1}(\lv; \wv)}\right) \\
  \end{cases}
  \enspace.
\]
\todo{Does $p=1$ handle $\wv$ correctly?}
\todo{Note: without weights, $p=1$ equals $\ln(n) - \Entropy(\lv)$.}
\todo{$\alpha$-class entropy}
The cases of $p \neq 1$ are rather straightforward, but to resolve $\frac{0}{0}$, $p=1$ needs to invoke a \emph{generalized mean} with $f(x) = x\ln(x)$ over $\R_{0+}$.
\todo{Check derivation \url{https://en.wikipedia.org/wiki/Generalized_entropy_index}}

Theil's $T$ and $L$ indices can be defined as $\Theil_{T}(\lv, \wv) = \GEI_{1}(\lv, \wv)$, and $\Theil_{L}(\lv, \wv) = \GEI_{0}(\lv, \wv)$.

We may also define the $\varepsilon$-\emph{Atkinson index} \citep{atkinson1970measurement}, for $\varepsilon \in [0, 1]$, with $\varepsilon = 1 - p \implies p = 1 - \varepsilon$, as
\[
\ATK_{\varepsilon}(\lv; \wv) \doteq
  \begin{cases}
  \varepsilon \neq 1 & \sqrt[1-\varepsilon]{\varepsilon(\varepsilon - 1)\GEI_{1-\varepsilon}(\lv; \wv)} \\
  \varepsilon = 1 & 1 - \exp\bigl(-\GEI_{1-\varepsilon}(\lv; \wv)\bigr) \\
  \end{cases}
  \enspace.
\]
\end{definition}

\begin{theorem}[Power Means and GEI]
Suppose $p \not \in \{0, 1\}$.
Then
\[
\Mean_{p}(\lv; \wv) = \Mean_{1}(\lv; \wv)\sqrt[p]{1 - p(1-p)\GEI_{p}(\lv; \wv)} = \Mean_{1}(\lv; \wv)\ATK_{p}(\lv; \wv) \enspace.
\]
\end{theorem}
\begin{proof}
Upon inspection of definitions, the claim clearly follows.
The case of $p \not \in \{0, 1\}$ is straightforward:
\begin{align*}
\Mean_{p}(\lv; \wv) &= \Mean_{1}(\lv; \wv)\sqrt[p]{p(p-1)\GEI_{p}(\lv; \wv) + 1} & \textsc{Inversion of ?} \\
 &= \Mean_{1}(\lv; \wv)\sqrt[p]{1 - p(1-p)\GEI_{p}(\lv; \wv)} & \textsc{Algebra} \\
 &= \Mean_{1}(\lv; \wv)\sqrt[1-\varepsilon]{1 - \varepsilon(1-\varepsilon)\GEI_{1-\varepsilon}(\lv; \wv)} & \textsc{Substitution} p = 1 - \varepsilon; p(1-p) = \varepsilon(1-\varepsilon) \\
 &= \Mean_{1}(\lv; \wv)\sqrt[1-\varepsilon]{1 - \ATK_{\varepsilon}^{1-\varepsilon}(\lv; \wv)} & \textsc{Substitution} p(1-p) = \varepsilon(1-\varepsilon) \\
\end{align*}

...

TODO: $p = 0$  $p=1?$

ATK: $p = 1 - \varepsilon$, $\varepsilon = 1 - p$.  Thus $p(1-p) = (1 - \varepsilon)(1 - (1 - \varepsilon)) = \varepsilon(1 - \varepsilon)$. $p(p - 1)$ = $-p(1 - p) = -\varepsilon(1 - \varepsilon)$...

its slightly off.

\[
\GEI_{p}(\lv; \wv) =
  \begin{cases}
  p \neq 0 & \frac{-\ATK_{1-p}^{p}(\lv; \wv)}{p(1-p)} \\
  p = 0 & \ln\left(\frac{1}{1-\ATK_{1-p}(\lv; \wv)}\right) \\
  \end{cases}
  \enspace.
\]

\end{proof}

\begin{lemma}
Take $\varepsilon = 1 - p$ (thus $p = 1 -\varepsilon$).
For all $\varepsilon \in [0, 1]$, we define the Atkinson index as
\[
\ATK_{\varepsilon}(\lv; \wv) \doteq 1 - \frac{\Mean_{p}(\lv; \wv)}{\Mean_{1}(\lv; \wv)}
  \if 0
  \begin{cases}
    \varepsilon \in [0, 1) & 1 - \frac{\Mean_{p}(\lv; \wv)}{\Mean_{1}(\lv; \wv)} \\
    \varepsilon = 1 & 1 - \frac{\Mean_{p}(\lv; \wv)}{\Mean_{1}(\lv; \wv)} \\
  \end{cases}
  \fi
  \enspace.
\]

\todo{``This is not particularly surprising in light of the welfare-centric derivation of \citep{atkinson1970measurement}, but nonetheless it yields a valuable alternative way to think about power means and inequality-weighted welfare functions.}

TODO: REI val in $[0, 1]$?

\[
\ATK_{\varepsilon}(\lv; \wv) \doteq 1 - \frac{\Mean_{p}(\lv; \wv)}{\Mean_{1}(\lv; \wv)}
\]

It thus holds that
\[
\ATK_{\varepsilon}(\lv; \wv) = 1 - \sqrt[p]{1 - \GEI_{p}(\lv; \wv)p(1-p)} \enspace,
\]
and furthermore
\[
\Mean_{p}(\lv; \wv) = \Mean_{1}(\lv; \wv)(1-\ATK_{\varepsilon}(\lv; \wv)) \enspace.
\]
\end{lemma}
\begin{proof}
Alt def:
\[
\ATK_{\varepsilon}(\lv; \wv) \doteq 1 - \frac{\Mean_{p}(\lv; \wv)}{\Mean_{1}(\lv; \wv)}
\]

Thus

Easy; why hard before?

From GEI def; sub in:

\[
\GEI_{p}(\lv; \wv)p(p-1) = (1-\ATK_{\varepsilon}(\lv; \wv))^p - 1
\]

\[
\sqrt[p]{\GEI_{p}(\lv; \wv)p(p-1) + 1} = 1 - \ATK_{\varepsilon}(\lv; \wv)
\]

\[
\ATK_{\varepsilon}(\lv; \wv) = 
\]

\end{proof}
\begin{lemma}

\end{lemma}

Similar relationships hold for Atkinson and Theil indices, for various values of $p$.

\fi

\todo{Highly-relevant read \url{https://www.wgtn.ac.nz/cpf/publications/pdfs/2015/WP11_2014-Interpreting-Inequality-Measures-and-Changes-in-Inequality.pdf}.}
\draftnote{Comparison: Gini section!}

\ifthesis

\cyrus{TODO: CLEAN THIS UP!}

\if 0

\section{Welfare and Fairness in Machine Learning}
\label{sec:welfare-ml}

[todo: related work]

\todo{compare to fairness lit?  nonsatis?}
\if 0
Regarding the related works, I'm more knowledgeable about the literature concerning the standard approaches to 
fairness without considering the welfare perspective. Classic works are those of Hartz, Barocas, Davos-Corbett, Chato Castillo.
Honestly, I do not know if there is the need of explaining what they do, you might just want to cite them when you talk about constraints (Hoda + critiques) 
for the sake of completeness.
\fi

Most prior-work in fair machine learning arguably falls into a single category: adding \emph{fairness constraints} to existing \emph{risk minimization} objectives. 
Constraints like \emph{equalized odds}, \emph{equality of opportunity}, \emph{equality of outcome}, and many others, are imposed on various measures of \emph{utility} $\Utility(\cdot)$ between fairness-sensitive groups. 
\todo{add citations}
In 
particular, given \emph{labeled samples} $\bm{z}_{1:\NGroups}$ for groups $1,\dots,\NGroups$ we take \emph{empirical risk}
\[
\ERisk(h; \bm{z}, \LossFunction)
  \doteq \EExpect_{(x, y) \in \bm{z}_{1} \cup \dots \cup \bm{z}_{\NGroups}}\left[ \LossFunction(y, h(x)) \right]
  \enspace,
\]\todo{check loss arg order}
where sample $\bm{z} = (\bm{x}, \bm{x}) \in (\X \times \Y)^{m}$.\todo{notation}
Now assume some \emph{utility function}, usually posed as $\Utility(\hat{y}, y)$, representing the \emph{value} an individual places on receiving predicted label $\hat{y}$, given true label $y$, and impose \emph{constraints} of the form
\[
C\left( \EExpect_{(x, y) \in \bm{z}_{1}}[\Utility(h(x), y)], \dots, \EExpect_{(x, y) \in \bm{z}_{\NGroups}}[\Utility(h(x), y))] \right) \leq \tau \enspace,
\]
i.e., some function of per-group empirical utilities is constrained.

We argue that under this setup, choice of $\Utility(\cdot)$ is often rather ad-hoc, and, interpreted literally, leads to absurdity; for instance, if a classifier is to decide whether $x$ committed a crime, and thus is to be jailed, does the fact of whether $x$ is guilty have any impact on the utility they derive from punishment?
Does the answer change if we cast the potentially rehabilitating effects of imprisonment inside, and ask if instead $x$ is to be executed?
The question is then naturally, whose utility do we seek to optimize?
\Citep{kasy2021fairness} raise objections to this method of constrained optimization, asking whether it is the mechanism designer, rather than the groups under consideration, who benefit from such approaches.
\todo{I argue welfare need consider harm to others of releasing}

\todo{rework this; too harsh; recidivism stuff.  Treatment vs outcome addresses this?  Our soln: extra welfare term representing harm of release / etc?  As opposed to ctrt on resource alloc?  Also different problem.  algo bias issues like jobs / school; arg that bad hires / accepts wont succeed (not excuse to ignore), but want to be accurate and have biased training labels? }

\todo{chat bots / speech recognition}

Furthermore, except in trivial circumstances, many such constraints are \emph{mutually unsatisfiable}~\citep{kleinberg2017inherent}.
\ifthesis
\Citet{hu2020fair}, as well as \citet{kasy2021fairness}
\else
\Citep{hu2020fair,kasy2021fairness}
\fi
 also criticize the demographic-parity approach, showing that in certain cases, increasing the value of a constraint can actually lead to \emph{worse outcomes} for disadvantaged groups and welfare.
One approach, taken by \citep{zafar2017parity}, is to \emph{relax} such constraints, to instead require only that each group is \emph{at least as well off} as if such constraints had been imposed. This notion of envy-free Pareto-like dominance seemingly lies between constraint-based formulations and direct welfare-maximization.\todo{cite first welfare thm as to why?}

\Citep{rolf2020balancing} explicitly considers only \emph{utilitarian welfare} among individuals, but they also consider Pareto-optimality between a \emph{profit objective} of (presumably) the party learning the model, and the \emph{welfare objective} of (utilitarian) welfare arising from the decisions made by said model.
Thus though they do not adopt our methodology of directly optimizing welfare, their methods are closely related, in that multiple objectives are considered simultaneously. \todo{Explain better.}

\todo{Cite Brown eco papers}

Previous authors have also considered welfare \emph{as a constraint} in fair machine learning methods, although the 
mechanism by which welfare enters the equation differs substantially from our approach.
In particular, without a notion of \emph{malfare}, there is no clear way to convert \emph{loss}, as measured by various machine learning methods, to \emph{utility}, as required by \emph{welfare concepts}.

\if 0
In the most common setup for \emph{welfare-centric fair machine learning}, we take \emph{empirical risk}
\[
\ERisk(h; \bm{z}, \LossFunction)
  \doteq \EExpect_{(x, y) \in \bm{z}}\left[ \LossFunction(y, h(x)) \right]
  \enspace,
\]\todo{check loss arg order}
where sample $\bm{z} = (\bm{x}, \bm{x}) \in (\X \times \Y)^{m}$.\todo{notation}
\fi
Now, assuming utility $\Utility(\dots)$ and welfare function $\Welfare(\cdot; \wv)$, take \emph{empirical welfare}
\[
\hat{\Welfare}(h; \bm{z}_{1:\NGroups}, \Utility) \doteq \Welfare\left(i \mapsto \EExpect_{(x, y) \in \bm{z}_{i}}[\Utility(y, h(x))]; \wv \right) \enspace.
\]\todo{$\wv$ arg.}
We may then use $\hat{\Welfare}(h; \bm{z}_{1:\NGroups}, \Utility) \leq \tau$ as a \emph{constraint} in the standard constraint-based formulation.
\todo{check \citep{speicher2018unified}; what do they actually do?} 

This approach is taken, e.g., by \citep{speicher2018unified,heidari2018fairness} propose a notion, wherein we tradeoff between \emph{classifier accuracy} and \emph{welfare} (fairness) via a \emph{constrained optimization problem},
where \emph{empirical risk} (across all groups) is minimized, \emph{subject to a constraint} on \emph{empirical welfare}; i.e., $\hat{\Welfare}(h; \bm{z}_{1:\NGroups}, \Utility) \geq \tau$.
\if 0
of the form
\[
\argmin_{h \in \HC \mid \dots} \Risk(\dots) \enspace.
\]
\todo{what is value of cfier acc in and of itself?}
\todo{It's actually individual level!}
\fi
They assume axioms 1-4 and \ref{def:cardinal-axioms:pd}, and consider the \emph{additively separable} family of welfare concepts $\{ \Welfare_{p}^{p}(\lv) \mid p \in [0, 1] \}$.
\todo{check groups / individuals}
However, despite this strong theoretical grounding in welfare-economics, and their consideration of efficient \emph{computational routines} for this constrained minimization, they do not consider the \emph{statistical aspects} of this problem, leaving open the door to \emph{overfitting}.
This is particularly dangerous in this context, as not only may we overfit to \emph{loss} (i.e., true risk is higher than empirical), but also to \emph{fairness} (i.e., true welfare is smaller than empirical), giving a false sense of security as to the fairness of an algorithm or model.

\draftnote{discuss DISCRIMINATION IN THE AGE OF ALGORITHMS}

We propose instead to consider the \emph{empirical risk} of a group to be their \emph{sentiment value} (i.e., $\lv(i) \mapsto \ERisk(h; \bm{z}, \LossFunction)$), and then \emph{directly} minimize some (empirical) malfare concept $\Malfare(\dots)$.
We term this objective \emph{empirical malfare minimization} (EMM), in line with the \emph{empirical risk minimization} (ERM) standard in statistical learning theory\todo{cite it}.
There is no tradeoff between the inequality concept $\Welfare$ and the inequality constraint $\tau$, as we have reduced to a single interpretable parameter $p$.

In generic classification tasks, where a group's satisfaction is a function of the performance of the classifier, this is a very natural way of measuring their satisfaction.
For instance, image recognition systems may perform poorly on certain groups, which is reflected in higher risk values on said groups, and EMM seeks to learn a model that performs well across all groups.
This is a serious issue with many commercial facial recognition tools \citep{cook2019demographic} [TODO: ALREADY DISCUSSED], and furthermore, as image recognition sees increasingly use in law-enforcement, if accuracy of image recognition systems varies across groups, then false arrests can become more likely in particular groups, which again is captured by high risk for these groups \citep{berk2018fairness}.\todo{by design? More citations.}
Note that the malfare-minimization setup is sufficiently general that one may define arbitrary loss functions, i.e., applying reweighting, confusion matrices, or other arbitrary measures of displeasure, to capture aspects of a scenario that do not depend simply on \emph{model accuracy}; thus despite our criticism of ad-hoc utility function construction, our setup retains the \emph{generality} to perform equivalent \emph{ad-hoc} loss function construction.

Perhaps the most similar to our work is a method of \citet{hu2020fair}, wherein they \emph{directly maximize} empirical utility over linear (halfspace) classifiers; however again an appropriate utility function must be selected.
We argue that \emph{empirical welfare maximization} is an effective strategy when an appropriate and natural measure of \emph{utility} is available, but in machine learning contexts like this, there is no ``correct'' or clearly neutral way to convert loss to utility.\todo{ad hoc methods}
Our strategy avoids this issue by working directly in terms of malfare and loss.\todo{again; on same axiomatization}

\todo{Advocate EWM too; contrast the three?}

\todo{cite codec paper}

\if 0

\if 0 
\Citet{heidari2018fairness} propose a notion of \emph{welfare-centric fair machine learning}, wherein the authors advocate for \emph{directly maximizing} a notion of \emph{cardinal welfare} over groups or individuals.
\todo{check groups / individuals}
However, despite this strong theoretical grounding in welfare-economics, \todo{
; no statistical aspect / power mean / malfare though
}
\fi

Outside of the machine learning sphere, \dots

Cxn to mechanism design (cite yourself!) Bayesian approaches.  Criticize nonlearnability.

The state-of-the art in \emph{welfare-centric fair machine learning} is described in \citep{heidari2018fairness}, wherein the authors advocate for \emph{directly maximizing} a notion of \emph{cardinal welfare} over groups or individuals.

\todo{Discuss standard setting; discuss Hoda, Lily papers?}

\citep{berk2018fairness} \emph{Fairness in criminal justice risk assessments: The state of the art}

\citep{hu2020fair} invert the problem; with a much narrower view of welfare (utilitarian welfare, or as we write it, $\Welfare_{1}(\dots)$, and study a sort of inverse problem, determining the $\wv$ such that $\lv = \argmin_{\lv \in \mathcal{L}} \Welfare_{1}(\lv; \wv)$.

They show that $\varepsilon$-perturbations to fairness constraints can ...

\todo{Use in ML:}

\todo{Cite examples; why it matters; facial recognition (consumer and police).  Speech rec?  Accessibility.}

\fi

\todo{Cite / contrast Seldonian}

\fi 

\fi 

\todo{\section{Statistical Estimation and Learning Theory}
\label{sec:est-learn}}

\todo{Add next section}
\if 0
\section{Connections between Welfare and Malfare}

\[
\forall p: \lim_{\beta \to \infty} \Mean(\lv + \beta, \wv) - \beta = \Mean_{1}(\lv)
\]
\todo{appears in ML when all losses similar?}

\todo{surprising antiproofs (copy slides)}

\todo{affine offset axiom violation? Relate to Atk log argument?}

\begin{property}
Suppose $\lv$ is non-constant over $\Support(\wv)$, and take $\alpha \in \R \setminus \{0\}$.
Then
\[
\Welfare(\lv; \wv) = \beta - \Malfare(\beta - \lv; \wv) \Leftrightarrow p \in \{ -\infty, 1, \infty \}
\]
\todo{Note canonical forms are also not linear?}

\end{property}

LIMIT FORM:

For fixed $p \in \R$ (not $\pm \infty$):
\[
\lim_{c \to \infty} \Welfare_{p}(c - \lv; \wv) - c = -\Welfare_{1}(\lv; \wv) 
\]

Intuitively, this occurs because 
\[
\Welfare_{p}(c - \lv; \wv) = c\Welfare_{p}(1 - \mathsmaller{\frac{1}{c}}\lv; \wv) \enspace,
\]
and $\grad \Welfare_{p}(\bm{1}; \wv) = \wv$?


Intuition: derivatives.

\[
\Welfare_{p}(c - \lv; \wv) = c\Welfare_{p}(1 - \mathsmaller{\frac{1}{c}}\lv; \wv) - c = c\left(\Welfare_{p}(1 - \mathsmaller{\frac{1}{c}}\lv; \wv) - 1\right)
\]

\[
\sqrt[p]{}
\]

\fi

\if 0

\todo{ANTI LIPSCHITZ \citep{beliakov2009some}.  Example 1 makes no sense!}

\section{LIPSCHITZ}

\[
\sup_{\dots} \Mean_{p}(\lv; \wv) \leq \sqrt[p]{...}
\]

Geo: lip with boundedness?

SUM
     
\url{https://www.wolframalpha.com/input/?i=d%2Fdx+exp%28ln%28x%29+%2B+ln%28y%29%29}
\[
\frac{\partial}{\partial x} \exp(\ln(\sqrt{x}) + \ln(\sqrt{y})) = \frac{\partial}{\partial x} \exp(\ln(x) + \ln(y)) = \Mean_{0}((x, y), (1, 1)) = y
\]

MEAN

\url{https://www.wolframalpha.com/input/?i=d%2Fdx+exp%28ln%28x%29%2F2+%2B+ln%28y%29%2F2%29}

\[
\frac{\partial}{\partial x} \Mean_{0}(x, y) = \sqrt{\frac{y}{4x}}
\]

$p$-power:

SUM

\url{https://www.wolframalpha.com/input/?i=d%2Fdx+%28x%5Ep+%2B+y%5Ep%29%5E%281%2Fp%29}
\[
\frac{\partial}{\partial x} \sqrt[p]{x^p + y^{p}} = x^{p-1} \Mean^{\nicefrac{p}{p-1}}_{p}((x, y), (1, 1)) WRONG POWER
\]

MEAN

(SLIGHTLY WORSE:) \url{https://www.wolframalpha.com/input/?i=d%2Fdx+%28x%5Ep%2F2+%2B+y%5Ep%2F2%29%5E%281%2Fp%29}

\[
\frac{\partial}{\partial x} \sqrt[p]{x^p/2 + y^{p}/2} = 1/2 x^{p-1} \Mean^{1-p}_{p}((x, y), (1/2, 1/2)) = \frac{1}{2} x^{p-1} \sqrt[\nicefrac{p}{1-p}]{1/2 x^p + 1/2 y^p}
\]
\[
= \frac{1}{2} x^{p-1} \left(1/2 x^p + 1/2 y^p\right)^{\nicefrac{1-p}{p}}
  = \frac{1}{2} \left(x^{\frac{p(p-1)}{1-p}}(1/2 x^p + 1/2 y^p)\right)^{\nicefrac{1-p}{p}}
  = \frac{1}{2} \left(\frac{1}{x^p}(1/2 x^p + 1/2 y^p)\right)^{\nicefrac{1-p}{p}}
  = \frac{1}{2} \left((1/2 + 1/2 (\frac{y}{x})^p)\right)^{\nicefrac{1-p}{p}}
\]

GOOD: right term geq left term for  $p\geq 1$.

For $p \leq -1$? pmean $?$

REDUX: SEE \url{https://www.wolframalpha.com/input/?i=d%2Fdx+%28w+*+x%5Ep+%2B+c%29%5E%281%2Fp%29}

\begin{align*}
\grad_{i} \Mean_{p}(\lv; \wv) &= \wv_{i} \lv_{i}^{p-1} \Mean_{p}^{1-p}(\lv; \wv) & \\
  &= \wv_{i} \lv_{i}^{-(1-p)} \left( \sum_{j=1}^{\NGroups} \wv_{j}\lv_{j}^{p} \right)^{\nicefrac{1-p}{p}} & \\
  &= \wv_{i} \left( \lv_{i}^{-p} \sum_{j=1}^{\NGroups} \wv_{j}\lv_{j}^{p} \right)^{\nicefrac{1-p}{p}} & \\
  &= \wv_{i} \left( \wv_{i} + \sum_{j=1,j\neq i}^{\NGroups} \wv_{j}(\frac{\lv_{j}}{\lv_{i}})^{p} \right)^{\nicefrac{1-p}{p}} & \\
  &= \wv_{i} \Mean_{p}^{1-p}(\frac{\lv}{\lv_{i}}, \wv) & ? \\
\end{align*}

Clearly? ok for $p \geq 1$?

Now for $p \leq 1$: take $q=-p$.

\begin{align*}
  &= \wv_{i} \left( \wv_{i} + \sum_{j=1,j\neq i}^{\NGroups} \wv_{j}(\frac{\lv_{i}}{\lv_{j}})^{q} \right)^{\nicefrac{-(1+q)}{q}} & \\
\end{align*}

hmm; nope, take 3:

\begin{align*}
\grad_{i} \Mean_{p}(\lv; \wv) &= \wv_{i} \lv_{i}^{p-1} \Mean_{p}^{1-p}(\lv; \wv) & \\
  &= \frac{\wv_{i}}{\lv_{i}^{1-p} \left( \sum_{j=1}^{\NGroups} \wv_{j}\lv_{j}^{p} \right)^{\nicefrac{p-1}{p}}} & \\
\end{align*}

bad; 4:

\begin{align*}
\grad_{i} \Mean_{p}(\lv; \wv) &= \wv_{i} \lv_{i}^{p-1} \Mean_{p}^{1-p}(\lv; \wv) & \\
  &= \wv_{i} \frac{1}{\lv_{i}^{1-p}} \left( \sum_{j=1}^{\NGroups} \wv_{j}\lv_{j}^{p} \right)^{\nicefrac{1-p}{p}} & \\
  &= \wv_{i} \left( \frac{1}{\lv_{i}^{p}} \sum_{j=1}^{\NGroups} \wv_{j}\lv_{j}^{p} \right)^{\nicefrac{1-p}{p}} & \\
  &= \wv_{i} \left( \wv_{i} + \sum_{j=1,j\neq i}^{\NGroups} \wv_{j}(\frac{\lv_{j}}{\lv_{i}})^{p} \right)^{\nicefrac{1-p}{p}} & \\
  &= \wv_{i} \Mean_{p}^{1-p}(\frac{\lv}{\lv_{i}}, \wv) & ? \\
\end{align*}

is first step enough?

$p \geq 1$: 

\[
\grad_{i} \Mean_{p}(\lv; \wv) &= \wv_{i} \left(\frac{\lv_{i}}{\Mean_{p}(\lv; \wv)}\right)^{p-1} \leq 1?
\]

$p \leq -1$:

\[
\grad_{i} \Mean_{p}(\lv; \wv) &= \wv_{i} \left(\frac{\Mean_{p}(\lv; \wv)}{\lv_{i}}\right)^{-p+1}
\]

Note for $p=-1$, we have $\left(\frac{\Mean_{p}(\lv; \wv)}{\lv_{i}}\right)^{-p+1} = 1$

Furthermore, by J
Note that
\[
\sum_{i=1}^{\NGroups} \wv_{i} \dots \leq 1
\]

works for ????

wait; why this way?

Just use

\[
\grad_{i} \Mean_{p}(\lv; \wv) &= \wv_{i} \left(\frac{\lv_{i}}{\Mean_{p}(\lv; \wv)}\right)^{p-1} \leq 1?
\]

I guess the question is, how small can $\lv_{i}$ be compared to $\Mean_{p}(\lv; \wv)$?  min obvious.

Start with UB on $\Mean$? for $p \leq 0$, we have (by negative monotonicity of $\sqrt[p]{\cdot}$ for $p \leq 1$):

\[
\Mean_{p}(\lv; \wv) \leq \sqrt[p]{\wv_{i} \lv_{i}^{p}} = \sqrt[p]{\wv_{i}} \lv_{i}?
\]

Thus we have

\[
\wv_{i} \left(\frac{\Mean_{p}(\lv; \wv)}{\lv_{i}}\right)^{1-p} \leq \wv_{i} \left(\frac{ \sqrt[p]{\wv_{i}} \lv_{i}}{\lv_{i}}\right)^{1-p} = \wv_{i} \cdot \wv_{i}^{\nicefrac{1-p}{p}} = \wv_{i} \cdot \wv_{i}^{\nicefrac{1}{p}-1} = \sqrt[p]{\wv_{i}}
\]

loose? Trouble with $\wv_{i}$ near 0?  Then again, these should produce small change to $\Mean_{p}$?

As wrong as it looks; it's actually right? at least for $p=-1$? \url{https://www.desmos.com/calculator/oqbvaxamjp}

so unweighted is $\sqrt[p]{1/n} = \sqrt[-p]{\NGroups}$.

\fi

\todo{Malfare / welfare connection}


\todo{Restore this:}
\if 0
\section{Malfare in ML}

\todo{completely out of place!}

\todo{Open question (or thm): does FPAC imply egal malfare?  I say yes?  Based on gap between p and $\infty$ malfare?}

\todo{Check cvx concave proofs}
\fi

\section{Statistical and Computational Learning-Efficiency Guarantees} 
\label{sec:pac}

\if 0
The remainder of this document builds upon and refines the ideas of \cref{sec:est-learn:pac} \cyrus{this was in my proposal, now removed}.
Some definitions are modified.
Whereas \cref{sec:est-learn:pac} was primarily concerned with constructing FPAC learners, the idea here is primarily to relate PAC and FPAC learning, with the understanding that this allows the vast breadth of research of PAC-learning algorithms and necessary / sufficient conditions to be applied to FPAC learning.
In particular, we show a hierarchy of fair-learnability via generic statistical and computational learning theoretic bounds and reductions.
\fi

In this section, we define a formal notion of fair-learnability, termed \emph{fair-PAC (FPAC) learning}, where a loss function and hypothesis class are FPAC-learnable essentially if any distribution can be 
learned to \emph{approximate malfare-optimality} from a \emph{finite sample} (\whp). 
We then construct various FPAC learners, 
and relate the concept to standard PAC learning~\citep{valiant1984theory}, with the understanding that this allows the vast breadth of research of PAC-learning algorithms, and quite saliently, necessary and sufficient conditions, to be applied to FPAC learning.
In particular, we show a hierarchy of fair-learnability via generic statistical and computational learning theoretic bounds and reductions.

\ifthesis


\paragraph{A Quick Recap}
We now recap a few details from the setting of \cref{part:malfare:malfare} that are particularly necessary in this chapter.
We define the \emph{risk} of hypothesis $h: \X \to \Y$ \wrt\ loss $\LossFunction: \Y \times \Y \to \R_{+}$ on distribution $\ProbDist$ over $(\X \times \Y)$ as
\[
\Risk(h; \LossFunction, \ProbDist) \doteq \Expect_{(x, y) \distributed \ProbDist} \bigl[ \loss(y, h(x)) \bigr] \enspace,
\]
and the empirical risk on sample $\bm{z} \in (\X \times \Y)^{m}$ as
\[
\ERisk(h; \LossFunction, \bm{z}) \doteq \EExpect_{(x, y) \in \bm{z}} \bigl[ \LossFunction(y, h(x)) \bigr] \enspace.
\]
\todo{Resolve $\LossFunction \loss$ redundancy.}
\todo{Ground truth / prediction spaces may differ.}

We reiterate that the \emph{$p$-power mean malfare} on sentiment (loss) vector $\lv$ with weighting $\wv$ is
\[
\Malfare_{p}(\lv; \wv) \doteq
  \begin{cases}
  p \in \R \setminus \{0\} & \displaystyle \sqrt[p]{\int_{\wv} \lv^{p}(\omega) \, \mathrm{d}(\omega) } = \sqrt[p]{\Expect_{\omega \distributed \wv}[\lv^{p}(\omega)]} \\
  p = -\infty & \displaystyle \min_{\omega \in \Support(\wv)} \lv(\omega) \\
  p = 0 & \displaystyle \exp\left( \int_{\wv} \ln \lv(\omega) \, \mathrm{d}(\omega) \right) = \exp\left( \Expect_{\omega \distributed \wv}[\ln \lv(\omega)] \right) \\
  p = \infty & \displaystyle \max_{\omega \in \Support(\wv)} \lv(\omega) \\
  \end{cases}
  \enspace,
\]
where here we are concerned only with $p \geq 1$,
Finally the \emph{empirical malfare minimization} (EMM) principle (\cref{def:emm}) generalizes ERM to compute
\[
\argmin_{h \in \HC} \hat{\Malfare}(h; \bm{z}_{1:n}, \Utility) = \argmin_{h \in \HC} \Malfare\left(i \mapsto \ERisk(h; \LossFunction, \bm{z}_{i}); \wv \right) \enspace.
\]

\fi

\paragraph{Hypothesis Classes and Sequences}


We now define \emph{hypothesis class sequences}, 
which allow us to distinguish statistically-easy problems, 
like learning hyperplanes in finite-dimensional $\R^{\DSeq}$, from statistically-challenging problems, like learning hyperplanes in $\R^{\infty}$. 
It is also used to analyze the \emph{computational complexity} of learning algorithms as $\DSeq$ increases.
This definition is adapted from definiton~8.1 of \citet{shalev2014understanding}, which treats only \emph{binary classification}.

\begin{definition}[Hypothesis Class Sequence]
A \emph{hypothesis class} is a family of functions mapping domain $\X$ to codomain $\Y$, and a \emph{hypothesis class sequence} 
$\HC = \HC_{1}, \HC_{2}, \dots$ is a \emph{concentric} (nondecreasing) sequence of \emph{hypothesis classes}, each mapping $\X \to \Y$.
In other words, $\HC_{1} \subseteq \HC_{2} \subseteq \dots$.
\draftnote{Should we have domains $\X_{1}, \X_{2}, \dots$?}
\end{definition}

Usually, 
each $\HC_{\DSeq}$ is easily derived from $\HC_{\DSeq-1}$.\todo{Make this part of def}
For instance, \emph{linear classifiers} naturally form a sequence of families using their \emph{dimension}:
\[
\HC_{\DSeq} \doteq \left\{ \vec{x} \mapsto \sgn\bigl(\vec{x} \cdot (\vec{w} \circ \vec{0}) \bigr) \, \middle | \, \vec{w} \in \R^{\DSeq} \right\} \enspace.
\]
Here each $\HC_{\DSeq}$ is defined over domain $\X = \R^{\infty}$, but it is often more natural to discuss each $\HC_{\DSeq}$ as a family over $\X_{\DSeq} = \R^{\DSeq}$.
In such cases, $\X = \lim_{n \to \infty} \X_{\DSeq}$, where the set-theoretic limit always exists (this essentially follows from nondecreasing monotonicity of the sequence $\HC$).
Similarly, unit-scale \emph{univariate polynomial regression} naturally decomposes as
\[
\HC_{\DSeq} \doteq \left\{ x \mapsto (x, x^{2}, \dots, x^{\DSeq}) \cdot \vec{w} \, \middle | \, \vec{w} \in [-1, 1]^{\DSeq} \right\} \enspace.
\]
\if 0 
\draftnote{cut this tangent?}There is no special reason for $\HC$ to be a \emph{sequence}; \emph{fractional} $\DSeq$ are quite natural when $\HC_{\DSeq}$ is defined by a real-valued \emph{regularity constraint}, and more generally, a POSET of classes (comparisons defined by containment) arises naturally given \emph{multiple complexity parameters}, e.g., \emph{depth $d$ feedforward neural networks} with \emph{layer widths} vector $\vec{w}$, or \emph{multivariate polynomial regression}.
\fi
\if 0
The \emph{hypothesis class sequence} concept is necessary, as it allows us to distinguish statistically easy problems, 
like learning hyperplanes in finite-dimensional space, from the statistically challenging problem of learning hyperplanes in $\R^{\infty}$. 
It is also key to studying computational-hardness of 
statistically-easy learning problems, as this essentially boils down to selecting a hypothesis class sequence such that learning \emph{time complexity} grows \emph{superpolynomially} 
(in $\DSeq$). 
\fi

\if 0 
\begin{observation}

\todo{We make no mention of \emph{boundedness} of $\HC$ ...}

\todo{Boundedness comment, loss function and/or domain control it.}

\todo{Problem: need to parameterize $\X$ too (projections maybe), otherwise data size can be infinite and need to worry about representation too much?}
\end{observation}
\fi

For context, we first present a generalized notion of PAC-learnability, which we then generalize to FPAC-learnability.
Standard presentations consider only classification under 0-1 loss, but following the generalized learning setting of \citet{vapnik2013nature}, some authors consider generalized notions for other learning problems \citep[see, e.g.,][definition~3.4]{shalev2014understanding} 

\todo{Check against 3.4 of UML! Classification is \citet[Definition~3.1]{shalev2014understanding}.  Note: they don't assume polynomial!  Is it the same?}
\begin{definition}[PAC-Learnability]
\label{def:pac}
Suppose \emph{hypothesis class sequence} $\HC_{1} \subseteq \HC_{2} \subseteq \dots$, 
all over $\X \to \Y$, and \emph{loss function} $\ell: \Y \times \Y \to \R_{0+}$.\todo{Often target space and codomain don't match: interval estimators, recommender systems, and density estimators.}
We say $\HC$ is \emph{PAC-learnable} \wrt\ $\ell$ if 
there exists a (randomized) algorithm $\PACAlgo$, such that 
for all\todo{explain sampling oracle?}\draftnote{plural forall, here and in FPAC?}
\begin{enumerate}[wide, labelwidth=0pt, labelindent=0pt]\setlength{\itemsep}{3pt}\setlength{\parskip}{0pt}
\item sequence indices $\DSeq$;
\item instance distributions $\ProbDist$ over $\X \times \Y$;
\item additive approximation errors $\varepsilon > 0$; and
\item failure probabilities $\delta \in (0, 1)$;
\end{enumerate}
it holds that $\PACAlgo$ can identify a hypothesis $\hat{h} \in \HC$, i.e., $\hat{h} \gets \PACAlgo(\ProbDist, \varepsilon, \delta, \DSeq)$, such that\todo{Call it $\PACAlgo_{\HC}$?}
\begin{enumerate}[wide, labelwidth=0pt, labelindent=0pt]\setlength{\itemsep}{3pt}\setlength{\parskip}{0pt}
\item there exists some 
\emph{sample complexity} function $\SampleComplexity(\varepsilon, \delta, \DSeq): \bigl( \R_{+} \times (0, 1) \times \N \bigr) \to \N$ s.t.\ $\PACAlgo(\ProbDist, \varepsilon, \delta, \DSeq)$ consumes no more than $\SampleComplexity(\varepsilon, \delta, \DSeq)$ samples from $\ProbDist$ (i.e., has finite sample complexity); and
\item with probability at least $1 - \delta$ (over randomness of $\PACAlgo$), $\hat{h}$ obeys
\[
\Risk(\smash{\hat{h}}; \LossFunction, \ProbDist) \leq \inf_{h^{*} \in \HC} \Risk(h^{*}; \LossFunction, \ProbDist) + \varepsilon \enspace.
\]
\todo{$\hat{h}$ is \emph{measurable}?}\todo{Check remark 3.1 of UML}
\end{enumerate}
The class of such learning problems is denoted $\PAC$, thus we write $(\HC, \LossFunction) \in \PAC$ to denote PAC-learnability.

\todo{time complexity commented}
\if 0
\medskip

Additionally, $\HC$ is \emph{polynomial-time-PAC-learnable} if 
\begin{enumerate}[wide, labelwidth=0pt, labelindent=0pt]\setlength{\itemsep}{3pt}\setlength{\parskip}{0pt}
\setcounter{enumi}{3}
\item $\PACAlgo$ runs in $\Poly(\frac{1}{\varepsilon}, \frac{1}{\delta}, \DSeq)$ expected time complexity;

NOTE: thus SC is also poly.

\item the output of $\PACAlgo$ is sufficient to evaluate $\hat{h}(\cdot)$ in $\Poly(\frac{1}{\varepsilon}, \frac{1}{\delta}, \DSeq)$ expected time complexity.
\end{enumerate}
The subclass of polynomial-time learning problems is denoted $\PAC_{\Poly}$, thus we write $(\HC, \LossFunction) \in \PAC_{\Poly}$ to denote poly-time PAC-learnability.

\medskip
\fi

Furthermore, if for all $d$, the space of $\ProbDist$ is restricted such that
\[
\exists \, h \in \HC_{\DSeq} \text{\ \ s.t.\ } \Risk(h; \LossFunction, \ProbDist) = 0 \enspace,
\]
then $(\HC, \LossFunction)$ is \emph{realizable-PAC-learnable}, written $(\HC, \LossFunction) \in \PAC^{\Realizable}$. 
\end{definition}

\todo{Add this observation in better form:}
\if 0
\begin{observation}
It is clear from the definition that

\[
\PAC(\LossFunction) \subseteq \PAC^{\Realizable}(\LossFunction)
\]

\[
\PAC_{\Poly}(\LossFunction) \subseteq \PAC_{\Poly}^{\Realizable}(\LossFunction)
\]

\[
\PAC_{\Poly}(\LossFunction) \subseteq \PAC(\LossFunction)
\]

\[
\PAC_{\Poly}^{\Realizable}(\LossFunction) \subseteq \PAC^{\Realizable}(\LossFunction)
\]
\end{observation}
\fi

\todo{
TODO: NOTE ON WHY SEQS; GENERALIZE TO DAG; subjectivity of seq.}

\todo{
note taking all $\HC_{i} = \dots $, or even $\lim_{i \to \infty} \HC_{i} \setminus \HC_{i-1} = \emptyset$, then ignore sequence.
}

\begin{observation}[On Realizable Learning]
Our definition of realizability appears to differ from the standard form, in which $\ProbDist$ is a distribution over only $\X$, and $y$ is simply computed as $h^{*}(x)$, for some $h^{*} \in \HC$.
We instead constrain $\ProbDist$ such that there exists a 0-risk $h^{*} \in \HC$, which is equivalent for any loss function $\LossFunction$ such that $\LossFunction(y, \hat{y}) = 0 \Leftrightarrow y = \hat{y}$, e.g., the 0-1 classification loss, or the absolute or square error regression losses.
With our definition, it is much clearer that realizable learning is a special case of agnostic learning,
and furthermore, we handle a much broader class of problems,
for which there may be some amount of \emph{noise}, or wherein a ground truth may not even exist.

For example, in a \emph{recommender system}, $y$ may represent the \emph{set of items} that $x$ will like, and $h$ may predict a singleton set, and thus we take $\LossFunction(y, \{\hat{y}\}) = \1_{y}(\hat{y})$.
There is no ground-truth here, but rather we seek a \emph{compatible solution} that recommends appropriate items to everyone.
Similarly, in 
\emph{multiclass classification}, 
often the classifier output $\hat{y}$ is a \emph{ranked list} of predictions, and the \emph{top-$k$ loss} is taken to be $\LossFunction(y, \hat{y}) = \1_{\hat{y}_{1:k}}(y)$.
We don't necessarily have $h^{*} \in \HC$, but 0-risk learning is still possible if there is not ``too much'' ambiguity (e.g., foxes and dogs can be confused, as long as they are 
ranked above horses and zebras).
%
%
Finally, the task of an \emph{interval estimator} is to predict an \emph{interval} $\hat{y}$ for every $x$ in which $y$ must lie, thus again $\LossFunction(y, \hat{y}) = \1_{\hat{y}}(y)$.
Under bounded noise conditions, the \emph{interval estimation} problem can easily be realizable, even if it is impossible to exactly recover the \emph{ground truth} from noisy labels.
\end{observation}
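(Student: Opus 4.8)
The plan is to verify the equivalence claimed in the observation between the two formulations of realizability: the \emph{standard} one, where $\ProbDist$ is a distribution over $\X$ alone and the label is deterministically $y = h^{*}(x)$ for some fixed $h^{*} \in \HC$, and the one adopted here, where $\ProbDist$ ranges over distributions on $\X \times \Y$ subject to the existence of some $h^{*} \in \HC$ with $\Risk(h^{*}; \LossFunction, \ProbDist) = 0$. The only properties needed are that $\LossFunction$ is nonnegative (already built into the setting, $\LossFunction: \Y \times \Y \to \RNN$) and that it separates points in the sense $\LossFunction(y, \hat{y}) = 0 \Leftrightarrow y = \hat{y}$; the classical $0$-$1$ loss and the absolute/square regression losses all qualify.

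First I would dispatch the easy direction. Suppose the standard realizability condition holds with witness $h^{*} \in \HC_{\DSeq}$, so the joint law induced by $\ProbDist$ on $\X \times \Y$ is concentrated on the graph $\{(x, h^{*}(x))\}$. Then for $\ProbDist$-almost every $(x,y)$ we have $y = h^{*}(x)$, hence $\LossFunction(y, h^{*}(x)) = \LossFunction(h^{*}(x), h^{*}(x)) = 0$ by the ``$\Leftarrow$'' half of the biconditional, and integrating yields $\Risk(h^{*}; \LossFunction, \ProbDist) = \Expect_{(x,y) \distributed \ProbDist}[\LossFunction(y, h^{*}(x))] = 0$, which is exactly the condition used here. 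For the converse, suppose $\Risk(h^{*}; \LossFunction, \ProbDist) = 0$ for some $h^{*} \in \HC_{\DSeq}$. Since $\LossFunction \geq 0$, a nonnegative random variable with zero expectation vanishes almost surely, so $\LossFunction(y, h^{*}(x)) = 0$ for $\ProbDist$-almost every $(x,y)$; by the ``$\Rightarrow$'' half of the biconditional, $y = h^{*}(x)$ for $\ProbDist$-almost every $(x,y)$. Thus the conditional law of $y$ given $x$ is, up to a $\ProbDist$-null set, the point mass at $h^{*}(x)$ --- precisely the standard realizability condition.

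The one point I would be careful about --- and the only real ``obstacle,'' such as it is --- is the measure-theoretic bookkeeping: the two formulations agree only \emph{almost surely}, not pointwise, so the statement must be read as an equivalence of the two \emph{families} of realizable learning problems (equivalently, of the induced equivalence classes of distributions modulo null sets), not of individual distributions. This causes no harm for PAC purposes, since $\Risk$, and hence the entire learnability criterion of \cref{def:pac}, depends on $\ProbDist$ only through its null-set equivalence class; replacing $\ProbDist$ by a version supported exactly on the graph of $h^{*}$ is invisible to any learner. I would state this explicitly as a remark.

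Finally, I would add (without a proof, as it is purely an observation about the definitions) the contrast with the illustrative examples: for the recommender-system loss $\LossFunction(y, \{\hat{y}\}) = \1_{y}(\hat{y})$, the top-$k$ loss $\LossFunction(y, \hat{y}) = \1_{\hat{y}_{1:k}}(y)$, and the interval-estimator loss $\LossFunction(y, \hat{y}) = \1_{\hat{y}}(y)$, the separation property $\LossFunction(y, \hat{y}) = 0 \Leftrightarrow y = \hat{y}$ \emph{fails} --- $\LossFunction$ can vanish with $y \neq \hat{y}$ --- so the two notions of realizability genuinely diverge, and the ``$0$-risk $h^{*}$ exists'' formulation is the one that captures the intended (and strictly broader) class of problems, including settings with bounded label noise or no ground truth at all.
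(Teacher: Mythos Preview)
The paper presents this as an \emph{observation} without any accompanying proof; it is offered as commentary on the definition, with the equivalence claim left as self-evident and the examples given as illustrations. Your proposal is correct and in fact supplies more rigor than the paper does: you verify both directions of the equivalence cleanly (nonnegativity of $\LossFunction$ plus the separation hypothesis is exactly what is needed), you flag the almost-sure versus pointwise distinction appropriately, and you correctly identify that the three example losses fail the separation property, which is precisely the point of the examples. There is nothing to compare against on the paper's side beyond the bare assertion.
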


\subsection{Fair Probably Approximately Correct Learning} 
\label{sec:pac:fpac}

\todo{Why we need d seq: must assume X is finite (discretized), eventually dist starts to repeat, then it becomes weighted learning problem?}

We now generalize 
PAC-learnability to \emph{fair-PAC (FPAC) learnability}. 
In particular, we replace the \emph{univariate risk-minimization} task with a \emph{multivariate malfare-minimization} task.
Following the theory of \cref{sec:pop-mean:prop}, we do not commit to any particular objective, but instead require that a FPAC-learner is able to minimize \emph{any} fair malfare function satisfying the standard axioms.
As we move from a univariate task to a multivariate (over $\NGroups$ groups) task, problem instances grow not just in problem complexity $\DSeq$, but also in the number of groups $\NGroups$, as it stands to reason that both sample complexity and computational complexity may increase with additional groups.

\begin{definition}[FPAC-Learnability]
\label{def:fair-pac}
Suppose \emph{hypothesis class sequence} $\HC_{1} \subseteq \HC_{2} \subseteq \dots \subseteq \X \to \Y$, and \emph{loss function} $\ell: \Y \times \Y \to \R_{0+}$.
We say $\HC$ is \emph{fair PAC-learnable} \wrt\ $\ell$ if 
there exists a (randomized) algorithm $\PACAlgo$, such that 
for all
\todo{poly computable}
\begin{enumerate}[wide, labelwidth=0pt, labelindent=0pt]\setlength{\itemsep}{3pt}\setlength{\parskip}{0pt}
\item sequence indices $\DSeq$;
\item group counts $\NGroups$;
\item per-group instance distributions $\ProbDist_{1:\NGroups}$ over $(\X \times \Y)^{\NGroups}$;
\item group weights measures $\wv$ over group indices $\{1, \dots, \NGroups\}$;
\item malfare concepts $\Malfare(\cdot; \cdot)$ satisfying axioms~\ref{def:cardinal-axioms:mono}-\ref{def:cardinal-axioms:unit} and \ref{def:cardinal-axioms:apd};
\if 0
\draftnote{
Equivalently, $\Malfare(\dots)$ is 
some convex power mean $\Malfare_{p}(\dots)$, thus $p \geq 1$.}
\fi
\item additive approximation errors $\varepsilon > 0$; and
\item failure probabilities $\delta \in (0, 1)$;\draftnote{Call it ``confidence parameter''.}
\end{enumerate}
it holds that $\PACAlgo$ can identify a hypothesis $\hat{h} \in \HC$, i.e., $\hat{h} \gets \PACAlgo(\ProbDist_{1:\NGroups}, \wv, \Malfare, \varepsilon, \delta, \DSeq
)$, such that
\begin{enumerate}[wide, labelwidth=0pt, labelindent=0pt]\setlength{\itemsep}{3pt}\setlength{\parskip}{0pt}
\item there exists some 
\emph{sample complexity} function $\SampleComplexity(\varepsilon, \delta, \DSeq, \NGroups): \bigl( \R_{+} \times (0, 1) \times \N \times \N \bigr) \to \N$
s.t.\ $\PACAlgo(\ProbDist_{1:\NGroups}, \wv, \Malfare, \varepsilon, \delta, \DSeq)$ consumes no more than $\SampleComplexity(\varepsilon, \delta, \DSeq, \NGroups)$ samples\todo{from what? any $\ProbDist_{i}$ (total)? }
(
finite sample complexity); and
\if 0
\item $\PACAlgo$ has $\Poly(\frac{1}{\varepsilon}, \frac{1}{\delta}, \DSeq)$ expected sample complexity; and\todo{Do I want $\NGroups$ in the poly?  Or not needed?  Prove it either way?} 
\fi
\item with probability at least $1 - \delta$ (over randomness of $\PACAlgo$), $\hat{h}$ obeys
\[
\Malfare\left(i \mapsto \Risk(\smash{\hat{h}}; \LossFunction, \ProbDist_{i}); \wv \right) \leq \inf_{h^{*} \in \HC} \Malfare \left(i \mapsto \Risk(h^{*}; \LossFunction, \ProbDist_{i}); \wv \right) + \varepsilon \enspace.
\]
\todo{$\hat{h}$ is \emph{measurable}?}
\end{enumerate}

The class of such fair-learning problems is denoted $\FPAC$, thus we write $(\HC, \LossFunction) \in \FPAC$ to denote fair-PAC-learnability.\todo{comment on time.}

\if 0
\medskip

Additionally, $\HC$ is \emph{polynomial-time-FPAC-learnable} if 
\begin{enumerate}[wide, labelwidth=0pt, labelindent=0pt]\setlength{\itemsep}{3pt}\setlength{\parskip}{0pt}
\setcounter{enumi}{3}
\item $\PACAlgo$ runs in $\Poly(\frac{1}{\varepsilon}, \frac{1}{\delta}, \DSeq, \NGroups)$ expected time complexity;
\item the output of $\PACAlgo$ is sufficient to evaluate $\hat{h}(\cdot)$ in $\Poly(\frac{1}{\varepsilon}, \frac{1}{\delta}, \DSeq)$ expected time complexity.\todo{Do I need / want $\NGroups$ as part of the evaluation TC?}
\end{enumerate}

The subclass of polynomial-time fair learning problems is denoted $\FPAC_{\Poly}$, thus we write $(\HC, \LossFunction) \in \FPAC_{\Poly}$ to denote poly-time FPAC-learnability.
\fi

\medskip

Finally, 
if for all $d$, the space of $\ProbDist$ is restricted such that
\[
\exists \, h \in \HC_{\DSeq} \text{\ \ s.t.\ } \max_{i \in 1, \dots, \NGroups} \Risk(h; \LossFunction, \ProbDist_{i}) = 0 \enspace,
\]
then $(\HC, \LossFunction)$ is \emph{realizable-FPAC-learnable}, written $(\HC, \LossFunction) \in \FPAC^{\Realizable}$.\todo{Could rewrite as $\Malfare_{\infty}$ condition?}
\end{definition}

We now observe that a few special cases are familiar learning problems, though we argue that all cases are of interest, and simply represent different ideals of fairness, which may be situationally appropriate. 
\begin{observation}[Malfare Functions and Special Cases]
By assumption, $\Malfare(\cdot; \cdot)$ must be $\Malfare_{p}(\cdot; \cdot)$ for some $p \in [1, \infty)$.\todo{use this in proofs, cite thm?}
Taking $\NGroups=1$ implies $\bm{w} = (1)$, and $\Malfare_{p}(\lv; \wv) = \lv_{1}$, thus reducing the problem to standard PAC-learning (risk minimization).\todo{Agnostic / realizable assumption too}
Similarly, taking $p=1$ converts the problem to \emph{weighted risk minimization} (weights determined by $\bm{w}$), and $p=\infty$ yields a \emph{minimax optimization problem}, where the maximum is over groups, as commonly encountered in adversarial and robust learning settings.\todo{cite Mohri, Chebyshev, ALL papers?}
%
\end{observation}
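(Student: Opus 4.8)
The plan is to read this observation off almost entirely from \cref{thm:pop-mean-prop} together with the definitions, so that the ``proof'' is really a chain of substitutions rather than anything with content. First, for the claim that the admissible $\Malfare(\cdot;\cdot)$ is necessarily $\Malfare_{p}(\cdot;\cdot)$ for some $p\in[1,\infty)$: \cref{def:fair-pac} only ever hands the learner malfare concepts obeying axioms~\ref{def:cardinal-axioms:mono}-\ref{def:cardinal-axioms:unit} and~\ref{def:cardinal-axioms:apd}. \cref{thm:pop-mean-prop}~\cref{thm:pop-mean-prop:pmean} (which invokes exactly axioms~\ref{def:cardinal-axioms:mono}-\ref{def:cardinal-axioms:unit}) then gives $\Malfare(\lv;\wv)=\Malfare_{p}(\lv;\wv)$ for some $p\in\R$, and \cref{thm:pop-mean-prop}~\cref{thm:pop-mean-prop:pmean-unfair} (axioms~\ref{def:cardinal-axioms:dgfirst}-\ref{def:cardinal-axioms:dglast} plus~\ref{def:cardinal-axioms:apd}) restricts $p\in[1,\infty)$. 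Nothing further is needed.

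Next, for the $\NGroups=1$ reduction I would first note that the standing nondegeneracy condition $\Support(\wv)=\Population$ forces $\wv$ to be the point mass on the single group, i.e.\ $\wv=(1)$. I would then evaluate $\Malfare_{p}$ from \cref{def:pmean} on a length-one sentiment vector: for $p\in\R\setminus\{0\}$, $\Malfare_{p}\bigl((\lv_{1});(1)\bigr)=\sqrt[p]{\lv_{1}^{p}}=\lv_{1}$, and the $p\in\{0,\pm\infty\}$ branches (geometric mean, $\inf$, $\sup$ over a singleton) likewise return $\lv_{1}$. Substituting $\lv_{1}=\Risk(\hat{h};\LossFunction,\ProbDist_{1})$, the FPAC objective inequality becomes $\Risk(\hat{h};\LossFunction,\ProbDist_{1})\le\inf_{h^{*}\in\HC}\Risk(h^{*};\LossFunction,\ProbDist_{1})+\varepsilon$, the realizability constraint becomes $\exists\,h\in\HC_{\DSeq}$ with $\Risk(h;\LossFunction,\ProbDist_{1})=0$, and, with $\NGroups$ held at $1$, the sample-complexity function $\SampleComplexity(\varepsilon,\delta,\DSeq,\NGroups)$ collapses to one of signature $\bigl(\R_{+}\times(0,1)\times\N\bigr)\to\N$; these are verbatim the clauses of \cref{def:pac}. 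I would close the reduction by spelling out both directions: an FPAC-learner fed $\NGroups=1$, $\wv=(1)$, and any admissible $\Malfare$ (e.g.\ $\Malfare_{1}$) is a PAC-learner, and a PAC-learner trivially realizes the $\NGroups=1$ instance of FPAC, so $(\HC,\LossFunction)\in\FPAC$ restricted to $\NGroups=1$ coincides with $(\HC,\LossFunction)\in\PAC$, and similarly for the realizable variants.

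For the two remaining special cases I would again simply evaluate $\Malfare_{p}$ at the endpoints of the admissible range. At $p=1$, \cref{def:pmean} gives $\Malfare_{1}(\lv;\wv)=\int\limits_{\wv}\lv(\PopItem)\,\mathrm{d}(\PopItem)=\Expect_{\PopItem\distributed\wv}[\lv(\PopItem)]$, so by linearity of expectation $\Malfare_{1}\bigl(i\mapsto\Risk(h;\LossFunction,\ProbDist_{i});\wv\bigr)=\sum_{i}\wv_{i}\Risk(h;\LossFunction,\ProbDist_{i})=\Risk\bigl(h;\LossFunction,\textstyle\sum_{i}\wv_{i}\ProbDist_{i}\bigr)$, i.e.\ ordinary (weighted) risk minimization over the $\wv$-mixture of the per-group distributions. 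At $p=\infty$, the boundary case lying in the closure of but not in the FPAC malfare family, full support gives $\Malfare_{\infty}(\lv;\wv)=\sup_{\PopItem\in\Population}\lv(\PopItem)=\max_{i\in 1,\dots,\NGroups}\Risk(h;\LossFunction,\ProbDist_{i})$, the minimax (robust / agnostic-federated) learning objective.

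The main obstacle here, such as it is, is purely bookkeeping: checking that the quantifier structure and the argument signatures of the sample-complexity functions in \cref{def:pac,def:fair-pac} genuinely line up after the $\NGroups=1$ substitution, and being careful to present the $p=\infty$ statement as a remark about the malfare family rather than a membership claim about $\FPAC$, since $p=\infty$ violates the strictness in \cref{thm:pop-mean-prop}~\cref{thm:pop-mean-prop:pmean-unfair}. There is no substantive mathematical difficulty---all the real work is carried by \cref{thm:pop-mean-prop} and the evaluation of \cref{def:pmean} at degenerate inputs.
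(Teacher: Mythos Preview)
Your proposal is correct and is exactly the natural unpacking of this observation; the paper itself presents it without proof, treating each claim as immediate from \cref{thm:pop-mean-prop} and \cref{def:pmean,def:pac,def:fair-pac}, which is precisely the chain of substitutions you spell out. Your care about the $p=\infty$ case lying in the closure but not in the admissible family is a nice touch that the paper glosses over.
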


\todo{Price of sharing and constructing malfare optimal learners / classes?}

\paragraph{An Aside: The Flexibility of FPAC-Learning}

Note that the generalized definition of (fair) PAC-learnability is sufficiently broad so as to include many \emph{supervised}, \emph{semi-supervised}, and \emph{unsupervised} learning problems.
While this is not immediately apparent, consider that, for instance, 
\emph{$k$-means clustering} can be expressed as a \emph{learning problem}, where the task is to identify a set of $k$ cluster centers, each of which are vectors in $\R^{\DSeq}$.
In particular, the hypothesis class is isomorphic to $\R^{k \times \DSeq}$, it operates by mapping a given vector $\vec{x}$ onto the nearest cluster center, and the loss function is the \emph{square distance} to said cluster center.\todo{Convert to computational hardness reduction?} 
This is 
a surprisingly natural fairness issue 
when cast as a \emph{resource allocation problem}.
For example, if each cluster center represents a cellphone tower, 
then we seek to place towers to serve \emph{all groups}, and to avoid serving one or more groups particularly well at the expense of the others. 
\todo{ALSO MEAN ESTIMATION; PARAMETRIC (AND NONPARAM) DISTRIBUTION ESTIMATION (see uml 13.7 exercise 2).}
\todo{Active and online settings.}

\paragraph{On Computational Efficiency}
Some authors consider not just the \emph{statistical} but also the \emph{computational} performance of learners, generally requiring that $\PACAlgo$ have \emph{polynomial time complexity} (thus implicitly polynomial sample complexity).
In other words, they require that $\PACAlgo(\ProbDist, \varepsilon, \delta, \DSeq)$ terminates in $\SampleComplexity(\varepsilon, \delta, \DSeq) \in \Poly(\frac{1}{\varepsilon}, \frac{1}{\delta}, \DSeq)$ steps.
A similar concept of \emph{polynomial-time FPAC-learnability} is equally interesting, where here we assume $\PACAlgo(\ProbDist_{1:\NGroups}, \wv, \Malfare, \varepsilon, \delta, \DSeq)$ may be computed by a Turing machine (with access to \emph{sampling} and \emph{entropy} oracles) in $\SampleComplexity(\varepsilon, \delta, \DSeq, \NGroups) \in \Poly(\frac{1}{\varepsilon}, \frac{1}{\delta}, \DSeq, \NGroups)$ steps. 
We denote these concepts $\smash{\PAC_{\Poly}^{\Agnostic}}$, $\smash{\PAC_{\Poly}^{\Realizable}}$, $\smash{\FPAC_{\Poly}^{\Agnostic}}$, and $\smash{\FPAC_{\Poly}^{\Realizable}}$.
\draftnote{Also produces an efficient computable function as output} 

\todo{Can we show equivalence via grid search without considering computation?  Exponential time cost?}


\todo{Restore figure?}
\if 0
\subsection{OLD FIGURE}

TODO: GENERALIZE SPLIT SUBGROUP:

weakly or $\Malfare$-PAC learnable: specific function (e.g., minimize -egal), relax cdtns.

strongly FAIR-PL: poly in $\NGroups$, properties of $\HC$, or properties / classes of loss?

generalize m-estimator

realizable / agnostic.

data / dist dependent.

statistical: glivenko-cantelli / donsker

efficient: time poly.

\begin{figure}
\begin{tikzpicture}[
    scale=3,
    axs/.style={rectangle,draw,thick,rounded corners=0.5ex,fill=blue!10!white},
    props/.style={rectangle,draw,thick,rounded corners=0.5ex,fill=red!10!white},
    arrs/.style={line width=0.5mm, -{Stealth[length=4mm, open]}},
  ]

\node[axs] (a1) at (-1, -1) {PAC-Learnable};
\node[axs] (a1) at (-1, 0) {Realizable};
\node[axs] (a1) at (-1, 1) {Agnostic};

\node[axs] (a1) at (0, -1) {Sample-Efficient};
\node[axs] (a1) at (1, -1) {Computation-Efficient};

\end{tikzpicture}

Each quadrant has subclasses:

$n=1$: full
strong: poly in $\NGroups$ / other params of $\HC$
uniform over $\loss$ family

\end{figure}
\fi

\todo{Reweighting classes}

\todo{Antithm: reweighting.}

\todo{Non-reduction: minimax or discrete optimization? submodularity? Want PAC $\not \implies$ Fair PAC}

\todo{Can weighting just be factored into the loss?  Don't think so.}


\paragraph{Some trivial reductions}

\if 0
\begin{observation}[Agnosticism and Realizability]
\label{obs:ag-real}
$(\HC, \LossFunction) \in \FPAC_{\Poly} \implies(\HC, \LossFunction) \in \FPAC_{\Poly}^{\Realizable}$.

\todo{Explain notation}

\end{observation}
\fi

\if 0
\begin{observation}[Group Count Hierarchy]
$\NGroups$ FPAC implies $\NGroups' \leq \NGroups$ FPAC. 
\end{observation}
\todo{describe / write.}
\fi

\if 0
\begin{observation}
\label{obs:pac2fpac}
\[
(\HC, \LossFunction) \in \FPAC^{y}_{x} \implies(\HC, \LossFunction) \in \PAC^{y}_{x} \enspace,
\]
where $x$ denotes either poly-time or unconstrained, and $y$ denotes either realizable or agnostic.
\end{observation}
\fi
We first observe (immediately from \cref{def:pac,def:fair-pac}) that PAC-learning is a special case of FPAC-learning.
In particular, taking $\NGroups=1$ implies $\Mean_{p}(\lv) = \Mean_{1}(\lv) = \lv_{1}$, thus \emph{malfare-minimization} coincides with \emph{risk minimization}.
The more interesting question, which we seek to answer in the remainder of this document, is \emph{when} and \emph{whether} the \emph{converse} holds.
Furthermore, when possible, we would like to show practical, sample-and-compute-efficient \emph{constructive reductions}.

\paragraph{Realizability}
We first show that in the \emph{realizable case}, \emph{PAC-learnability} implies \emph{FPAC-learnability}.
In particular, we employ a simple and practical \emph{constructive} polynomial-time reduction.
\todo{More efficient if can change loss function; e.g., loss is avg or worst-case over a per-group sample.  But this may require a different assumption.}
Our reduction simply takes a sufficiently 
number of samples from the \emph{uniform mixture distribution} over all $\NGroups$ groups, and PAC-learns on this distribution.
More efficient reductions are possible for particular values of $p$, $\NGroups$, and $\wv$, but our polynomial reduction suffices to show the desideratum.
As the reduction is constructive (and polynomial), 
this gives us generic algorithms for (polynomial-time) realizable FPAC-learning in terms of algorithms for (polynomial-time) realizable PAC-learning.

\todo{Call it a property?}
\begin{theorem}[Realizable Reductions]
\label{thm:realizable-pac2fpac}
Suppose loss function $\LossFunction$ and hypothesis class $\HC$.
Then
\begin{enumerate}[wide, labelwidth=0pt, labelindent=0pt]\setlength{\itemsep}{3pt}\setlength{\parskip}{0pt}
\item $(\HC, \LossFunction) \in \PAC^{\Realizable} \implies (\HC, \LossFunction) \in \FPAC^{\Realizable}$; and
\item $(\HC, \LossFunction) \in \PAC_{\Poly}^{\Realizable} \implies (\HC, \LossFunction) \in \FPAC_{\Poly}^{\Realizable}$.
\end{enumerate}
In particular, we construct a (polynomial-time) FPAC-learner for $(\HC, \LossFunction)$ by noting that there exists some $\PACAlgo'$ with sample-complexity $\SampleComplexity_{\PACAlgo'}(\varepsilon, \delta, \DSeq)$ and time complexity $\TimeComplexity_{\PACAlgo'}(\varepsilon, \delta, \DSeq)$ 
to PAC-learn $(\HC, \LossFunction)$, and taking $
\PACAlgo(\ProbDist_{1:\NGroups}, \wv, \Malfare, \varepsilon, \delta, \DSeq) \doteq \PACAlgo'(\mix(\ProbDist_{1:\NGroups}),\frac{\varepsilon}{\NGroups}, \delta, \DSeq)$.\todo{Explain mix (uniform)}
Then $\PACAlgo$ FPAC-learns $(\HC, \LossFunction)$, with sample-complexity $\SampleComplexity_{\PACAlgo}(\varepsilon, \delta, \DSeq, \NGroups) = \SampleComplexity_{\PACAlgo'}(\frac{\varepsilon}{\NGroups}, \delta, \DSeq)$, 
and time-complexity $\TimeComplexity_{\PACAlgo}(\varepsilon, \delta, \DSeq, \NGroups) = \TimeComplexity_{\PACAlgo'}(\frac{\varepsilon}{\NGroups}, \delta, \DSeq)$. 
\end{theorem}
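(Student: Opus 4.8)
The plan is to verify that the displayed construction $\PACAlgo(\ProbDist_{1:\NGroups}, \wv, \Malfare, \varepsilon, \delta, \DSeq) \doteq \PACAlgo'(\mix(\ProbDist_{1:\NGroups}),\frac{\varepsilon}{\NGroups}, \delta, \DSeq)$ is a valid (polynomial-time) FPAC-learner, and to track sample and time complexity through the reduction. The first step is the key structural observation: in the \emph{realizable} FPAC setting there is a single $h^{\star} \in \HC_{\DSeq}$ with $\max_{i} \Risk(h^{\star}; \LossFunction, \ProbDist_{i}) = 0$, so $\Risk(h^{\star}; \LossFunction, \mix(\ProbDist_{1:\NGroups})) = \frac{1}{\NGroups}\sum_{i=1}^{\NGroups}\Risk(h^{\star}; \LossFunction, \ProbDist_{i}) = 0$; thus the mixture instance is itself realizable for $(\HC, \LossFunction)$ and we may legitimately invoke the realizable PAC-learner $\PACAlgo'$ on it. (A draw from $\mix(\ProbDist_{1:\NGroups})$ is produced by sampling a uniform group index and then one sample from that group's oracle, at $O(\log \NGroups)$ overhead per sample, so simulating the mixture oracle is cheap.)

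Second, I would run $\PACAlgo'$ with accuracy parameter $\frac{\varepsilon}{\NGroups}$ and confidence $\delta$, obtaining $\hat h \in \HC$ with, with probability at least $1-\delta$, $\Risk(\hat h; \LossFunction, \mix(\ProbDist_{1:\NGroups})) \le \frac{\varepsilon}{\NGroups}$. Since per-group risks are nonnegative, for every $i$ we get $\Risk(\hat h; \LossFunction, \ProbDist_{i}) \le \sum_{j=1}^{\NGroups}\Risk(\hat h; \LossFunction, \ProbDist_{j}) = \NGroups \Risk(\hat h; \LossFunction, \mix(\ProbDist_{1:\NGroups})) \le \varepsilon$, hence $\norm{\,i\mapsto \Risk(\hat h; \LossFunction, \ProbDist_{i})\,}_{\infty} \le \varepsilon$. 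Note this bound is oblivious to both $\wv$ and the particular choice of malfare. Third, I would convert this uniform per-group bound into a malfare bound: by the admissibility axioms, $\Malfare = \Malfare_{p}$ for some $p \ge 1$, so the contraction property \cref{thm:pow-mean-prop}~\cref{thm:pow-mean-prop:contraction} (applied with $\lv' = \bm 0$, using $\Malfare_{p}(\bm 0; \wv) = 0$) gives $\Malfare\!\left(i \mapsto \Risk(\hat h; \LossFunction, \ProbDist_{i}); \wv\right) \le \norm{\,i\mapsto \Risk(\hat h; \LossFunction, \ProbDist_{i})\,}_{\infty} \le \varepsilon$. Since realizability and monotonicity give $\inf_{h^{\star} \in \HC}\Malfare\!\left(i \mapsto \Risk(h^{\star}; \LossFunction, \ProbDist_{i}); \wv\right) = 0$, the output $\hat h$ is $\varepsilon$-malfare-optimal, as required by \cref{def:fair-pac}.

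It remains to do the bookkeeping. The sample complexity of $\PACAlgo$ equals that of $\PACAlgo'$ on the rescaled instance, $\SampleComplexity_{\PACAlgo}(\varepsilon, \delta, \DSeq, \NGroups) = \SampleComplexity_{\PACAlgo'}(\tfrac{\varepsilon}{\NGroups}, \delta, \DSeq)$, which is a finite function of $(\varepsilon,\delta,\DSeq,\NGroups)$, establishing statement 1. For statement 2, if $\PACAlgo' \in \PAC_{\Poly}^{\Realizable}$ then $\TimeComplexity_{\PACAlgo'}(\tfrac{\varepsilon}{\NGroups}, \delta, \DSeq) \in \Poly(\tfrac{\NGroups}{\varepsilon}, \tfrac{1}{\delta}, \DSeq) \subseteq \Poly(\tfrac{1}{\varepsilon}, \tfrac{1}{\delta}, \DSeq, \NGroups)$ (a polynomial in $\tfrac{\NGroups}{\varepsilon}$ is jointly polynomial in $\NGroups$ and $\tfrac1\varepsilon$), and the $O(\log \NGroups)$-per-sample mixing overhead does not change this, so $\PACAlgo \in \FPAC_{\Poly}^{\Realizable}$; the output of $\PACAlgo'$ already suffices to evaluate $\hat h$ efficiently.

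\textbf{Main obstacle.} The delicate point is that the $\varepsilon \mapsto \frac{\varepsilon}{\NGroups}$ rescaling is sound \emph{only} because of realizability: minimizing mixture risk is a faithful surrogate for minimizing malfare precisely when the optimal mixture risk is (essentially) zero, so that a small mixture risk forces \emph{every} group's risk to be small. In the agnostic setting this breaks — a hypothesis that is $\frac{\varepsilon}{\NGroups}$-optimal on the mixture may concentrate all of its excess error on a single (possibly low-weight) group and be far from malfare-optimal — so this particular reduction genuinely cannot extend, and a different argument is needed there. A secondary point requiring care is the sampling-oracle and measurability model: one must confirm that $\mix(\ProbDist_{1:\NGroups})$ is simulable from the given per-group oracles without inflating sample or time complexity, and that $\hat h$ remains a bona fide measurable hypothesis in $\HC$.
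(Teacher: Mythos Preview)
Your proof is correct and follows essentially the same route as the paper's: bound the malfare by $\|\Risk(\hat h;\cdot)\|_{\infty}$, then bound the max per-group risk by $\NGroups$ times the mixture risk, which is at most $\varepsilon$ by the PAC guarantee. The paper's write-up phrases the first step as $\Malfare_{p} \le \Malfare_{\infty}$ (power-mean monotonicity) rather than via contraction with $\lv'=\bm 0$, but these are the same inequality; your version is slightly more complete in that you explicitly verify the mixture instance is realizable before invoking $\PACAlgo'$ and you spell out the polynomial-in-$\NGroups$ bookkeeping, both of which the paper leaves implicit.
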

\begin{proof}
We first show the \emph{correctness} of $\PACAlgo$.
Suppose $\hat{h} \gets \PACAlgo'(p, \wv, \ProbDist_{1:\NGroups}, \varepsilon, \delta, \DSeq)$.
Then, with probability at least $1 - \delta$ (by the guarantee of $\PACAlgo$), we have
\begin{align*}
\Malfare_{p}( i \mapsto \Risk(h; \LossFunction, \ProbDist_{i})), \wv)
 &\leq \Malfare_{\infty} \bigl( i \mapsto \Risk(h; \LossFunction, \ProbDist_{i}), i \mapsto \mathsmaller{\frac{1}{\NGroups}}\bigr) & \\
 &\leq \NGroups \Malfare_{1} \bigl( i \mapsto \Risk(h; \LossFunction, \ProbDist_{i}), i \mapsto \mathsmaller{\frac{1}{\NGroups}}\bigr) & \\
 &= \NGroups \Risk \bigl(h; \LossFunction, \mix(\ProbDist_{1:\NGroups}) \bigr) 
 \leq \NGroups \smash{\mathsmaller{\frac{\varepsilon}{\NGroups}}} = \varepsilon \enspace. & \\[-0.25cm]
\end{align*}
We thus may conclude that $(\HC, \LossFunction)$ is \emph{realizable-PAC-learnable} by $\PACAlgo'$, with \emph{sample complexity} $\SampleComplexity_{\PACAlgo}(\varepsilon, \delta, \DSeq, \NGroups) = \SampleComplexity_{\PACAlgo'}(\frac{\varepsilon}{\NGroups}, \delta, \DSeq)$, which by the nature of $\PACAlgo'$, is finite.
Similarly, if $\PACAlgo$ has polynomial runtime, then so too does $\PACAlgo'$, thus we may also conclude efficiency.\todo{Justifications}
\end{proof}

While mathematically correct, if somewhat trivial, unfortunately, this argument does not extend to the agnostic case, essentially because it is not in general possible to 
 simultaneously satisfy all groups.
Some authors \citep[e.g.,][]{krasanakis2018adaptive,jiang2020identifying} have addressed related fair-learning problems by optimizing the \emph{risk} of a mixture over groups, \emph{iteratively reweighting} the mixture during training.
This strategy generalizes our algorithm for the realizable case, wherein we begin with the uniform mixture, and terminate at an $\varepsilon$-$\delta$ optimum before executing a single reweighting.
It is tempting to think it could be adapted to FPAC-learn in the agnostic setting, 
however the following example shows this is not the case.
Suppose $\Y \doteq \{a, b\}$, group $A$ always wants $a$, and group $B$ always wants $b$, with symmetric preferences, and we wish to optimize egalitarian malfare.
For any reweighting, the utility-optimal solution is always to produce all $a$ or all $b$, except when $\wv_{1} = \wv_{2} = \frac{1}{2}$, 
in which case all solutions are equally good.
In this example, for no reweighting do all reweighted-risk solutions even \emph{approximate} the egalitarian-optimal solution (which is evenly split between $a$ and $b$). 
We thus conclude that simple constructive reductions using PAC-learners as subroutines are not likely to solve the FPAC-learning problem.
\todo{Something might work with Lipschitz loss / covering of reweighting space; future work.}
\draftnote{Clean up example!}

\if 0
{\color{purple}
\Citet{jiang2020identifying} propose an algorithm for fair machine learning based on the idea of \emph{iteratively reweighting} the groups to satisfy a fairness constraint.
Essentially, they identify which groups are most harmed by the current allocation, and attempt to improve the allocation for said groups.
Specifically, they identify the potential improvement to the fairness objective possible for each group, and they increase the weighting of high-improvement-potential (i.e., marginalized) groups, optimize the reweighted objective, and repeat until convergence.

As shown by \citet{cousins2021axiomatic}, for any fixed weighting $\wv$, the 
Atkinson's $1-p$ inequality or generalized-$p$ entropy indices, utilitarian welfare $\Welfare_{1}(\cdot; \wv)$, and $p$-power mean welfare $\Welfare_{p}(\cdot; \wv)$ have two degrees of freedom; i.e., any two determine the third.
Furthermore, optimizing an inequality objective is the Lagrangian dual of optimizing utilitarian welfare under a fairness constraint, so the method of \citet{jiang2020identifying} essentially optimizes power means for appropriate parameter choices.
However, we show that in some situations this reweighting strategy fails to optimize power-mean welfare (including the egalitarian special case).
Surprisingly, it is not a matter of running insufficiently many reweighting steps, or of oscillation in the optimization, but rather that the fairness objective is simply not a maximizer of any reweighted utilitarian objective.

TODO formalize that!
}
\fi

In addition to the argument being inextensible to the agnostic case, we note that, philosophically speaking, realizable FPAC learning is rather uninteresting, essentially because in a world where all parties may be satisfied completely, the obvious solution is to do so (and this solution is in fact an equilibrium).
Thus unfairness and bias issues logically only arise in a world of \emph{conflict} (e.g., in zero-sum settings, or under limited resources constraints, which foster \emph{competition} between groups).
We henceforth focus our efforts on the more interesting agnostic-learning setting. 

\todo{Observe?}
\if 0
\begin{observation}
per-group realizability is much stronger, and (realizable) PAC doesn't imply it.
\end{observation}
\fi

\todo{Note that the reduction can't really be improved for $p=\infty$; since in realizable, SC $\propto 1/\varepsilon' = \NGroups/\varepsilon$, and obviously need factor $\NGroups$, since if all but one group is trivially satisfiable, only last group matters?}


\todo{If 0-neighborhoods of $\LossFunction'$ contain 0-neighborhoods of $\LossFunction$ (w.r.t., $\hat{y}$), then PAC on $\LossFunction$ implies PAC on $\LossFunction'$ in realizable case. Agnostic form?}

\todo{GRID REDUCTION: fair-PAC-learnable, but not by ERM; is it unstable?}

\section{Characterizing Fair Statistical Learnability with FPAC-Learners}
\label{sec:ftfsl}

We first consider only questions of \emph{statistical learning}.
In other words, we ignore computation for now, and show only that \emph{there exist} FPAC-learning algorithms. 
In particular, we show a generalization of the \emph{fundamental theorem of statistical learning} to fair learning problems.
The aforementioned result relates \emph{uniform convergence} and \emph{PAC-learnability}, and is generally stated for binary classification only.
We define a natural generalization of uniform convergence to arbitrary learning problems within our framework, and then show conditions under which a generalized fundamental theorem of (fair) statistical learning holds.
In particular, we show that, neglecting computational concerns, PAC-learnability and FPAC-learnability are equivalent for learning problems where PAC-learnability implies uniform convergence (e.g., binary classification).
For problems where this relationship does not hold, it remains an open question whether $(\HC, \loss) \in \PAC \implies (\HC, \loss) \in \FPAC$. 

\subsection{A Generalized Concept of Uniform Convergence}
\label{sec:ftfsl:uc}

We now define a \emph{generalized notion} of \emph{uniform convergence}.
In particular, our definition applies to \emph{any bounded loss function},\footnote{Boundedness should not be strictly necessary for learnability even uniform convergence, but vastly simplifies all aspects of the analysis.  In many cases, it can be relaxed to moment-conditions, such as \emph{sub-Gaussian} or \emph{sub-exponential} assumptions.} thus greatly generalizing the standard notion for binary classification \citep[see, e.g.,][]{shalev2014understanding}.\todo{which as noted by \citep{blumer1989learnability}, PAC-learnability and finite VC-dimension \citep{vapnik1968uniform} are essentially equivalent (subject to basic regularity conditions)}
\cyrus{risk definition?  962 of content, commented.  Before def 4.3!}

\begin{definition}[Uniform Convergence]
\label{def:uc}
Suppose $\LossFunction: \Y \times \Y \to [0, \frange] \subseteq \R$ and hypothesis class $\HC \subseteq \X \to \Y$.\todo{Is boundedness needed?}
We say $(\HC, \LossFunction) \in \UC$ 
if\todo{What about $\HC \in \UC$?}
\[
\lim_{m \to \infty} \sup_{\ProbDist \text{ over } \X \times \Y} \Expect_{\bm{z} \distributed \ProbDist^{m}}\left[ \sup_{h \in \HC} \abs{ \ERisk(h; \LossFunction, \bm{z}) - \Risk(h; \LossFunction, \ProbDist) } \right] = 0 \enspace.
\]


\end{definition}
\todo{Define $\SampleComplexity_{\UC}$ as well.}
We stress that this definition is both uniform over $\LossFunction$ composed with the \emph{hypothesis class} $\HC$ and uniform over \emph{all possible distributions} $\ProbDist$.
The classical definition of \emph{uniform convergence in probability} applies to a singular $\ProbDist$, however it is standard in PAC-learning and VC theory to assume uniformity over $\ProbDist$, so we adopt this latter convention.
%
Standard uniform convergence definitions also consider only the convergence of \emph{empirical frequencies} of events to their \emph{true frequencies}, whereas we generalize to consider uniform convergence of the \emph{empirical means} of functions to their \emph{expected values}.

In discussing uniform convergence, it is often necessary to consider not the {loss function} or {hypothesis class} \emph{in isolation}, but rather their \emph{composition}, defined as
\[
\forall h \in \HC: \ (\loss \circ h)(x, y) \doteq \LossFunction(y, h(x)) \ \ \ \& \ \ \ \loss \circ \HC \doteq \{ \loss \circ h \, | \, h \in \HC \} \enspace.
\]
It is also helpful to consider the \emph{sample complexity} of $\varepsilon$-$\delta$ {uniform-convergence}\draftnote{Define $\varepsilon$-$\delta$ uniform-convergence}, where we take
\[
\SampleComplexity_{\UC}(\LossFunction \circ \HC, \varepsilon, \delta) \doteq \argmin \left\{ m \ \middle| \ \sup_{\ProbDist \text{ over } \X \times \Y} \Prob\left( \smash{\sup_{h \in \HC}} \abs{ \Expect_{\ProbDist}[\loss \circ h] - \smash{\EExpect_{\bm{z} \distributed \ProbDist^{m}}}[\loss \circ h]} > \varepsilon\right) \leq \delta\right\} \enspace, 
\]
i.e., 
the minimum sufficient sample size to ensure $\varepsilon$-$\delta$ uniform-convergence over the \emph{loss family} $\LossFunction \circ \HC$.\todo{Is this clear?}

\if 0
\begin{definition}[Polynomial-Rate Uniform Convergence]
\label{def:uc-poly}
Suppose as in \cref{def:uc}.
We say $(\HC, \LossFunction)$ or $\F$ has \emph{polynomial-rate uniform convergence}, written $(\HC, \LossFunction) \in \UC_{\Poly}$ or $\F \in \UC_{\Poly}$, if\todo{Nested seqs?  Changing $\X$?}
\[
\sup_{\ProbDist \text{ over } \X \times \Y} \Expect_{\bm{z} \distributed \ProbDist^{m}}\left[ \sup_{h \in \HC} \abs{ \ERisk(h; \LossFunction, \bm{z}) - \Risk(h; \LossFunction, \ProbDist) } \right] \in \LandauO \frac{1}{\Poly(m)} 
\]

NEED M POLY IN 1/EPSILON

TODO: none of this.

\[
m^{*}(\frac{1}{\varepsilon}) = \inf \left( m :  \sup_{\ProbDist \text{ over } \X \times \Y} \Expect_{\bm{z} \distributed \ProbDist^{m}}\left[ \sup_{h \in \HC} \abs{ \ERisk(h; \LossFunction, \bm{z}) - \Risk(h; \LossFunction, \ProbDist) } \right] \leq \varepsilon \right) = \inf(m : SD(m) \leq \varepsilon) = SD^{-1}(\varepsilon)
\]

So we need
\[
SD^{-1}(\varepsilon) \in \Poly(\frac{1}{\varepsilon})
\]

\[
SD^{-1}(\Poly^{-1}(\varepsilon)) \in \frac{1}{\varepsilon}
\]
which implies
\[
SD( ... )
\]

I'm not sure.  I guess we just want polynomial sample complexity?

\end{definition}

\begin{property}[Establishing the Uniform Convergence Hierarchy]
$(\HC, \LossFunction) \in \UC_{\Poly} \implies (\HC, \LossFunction) \in \UC$, but
there exist $(\HC, \LossFunction)$ such that the converse is false.

In particular, for any infinite $\X$, $\Y = [0, 1]$, and regression loss function of the form $\LossFunction(y, \hat{y})$ such that $\LossFunction(y, \hat{y}) = 0 \Leftrightarrow y = \hat{y}$, there exist $\HC \subseteq \X \to [0, 1]$ that are only superpolynomially-rate uniformly convergent. 
\end{property}
\begin{proof}

Suppose hypothesis class sequence $\HC$ mapping $\X \to \{0, 1\}$, such that for all $k \in \N$, $\VC(\HC_{i})$ is finite.  
Now define the \emph{regression hypothesis class}
\[
\HC_{\R} \doteq \{ x \mapsto \sum_{i=0}^{\infty} 2^{-1} h_i(x) | h_i \in \HC_{i} \} \enspace,
\]
and note that $\HC_{\R} \subseteq \X \to [0, 1]$.
In other words, in binary representation, a hypothesis $h \in \HC_{\R}$ takes hypotheses $h_{1:\infty} \in \HC_{1:\infty}$, and combines them as
\[
h(x) = 0 \, . \, h_{1}(x) \, h_{2}(x) \, h_{3}(x) \, \dots \enspace.
\]

\draftnote{Somehow need $\HC_{i}$ to be orthogonal to $\HC_{j}$, maybe want $\X = \X_{1}, \X_{2}, \dots$, and take
\[
\HC_{\R} \doteq \{ x \mapsto \sum_{i=0}^{\infty} 2^{-1} h_i(x_i) | h_i \in \HC_{i} \} \enspace?
\]
}

Now, note that for every $\varepsilon > 0$, there exists some $i(\varepsilon) \in \N$, $\gamma(\varepsilon) > 0$, which mutually depend on $\LossFunction$ (existence guaranteed by the assumption $y \neq \hat{y} \implies \LossFunction(y, \hat{y}) > 0$), such that for some possible distributions, a $\gamma$-uniform estimate of $\HC_{i(\varepsilon)}$ is \emph{required} to uniformly estimate $\HC_{\R}$.
Because $i(\varepsilon)$, $\gamma$ may be taken worst-case over distributions, they depend only on $\LossFunction$.

We now may select each $\HC_{i(\varepsilon)}$ such that $\VC(\HC_{i(\varepsilon)})$ grows sufficiently rapidly in $\varepsilon$ such that an arbitrary sample complexity lower bound $m_{\downarrow}(\frac{1}{\varepsilon})$ is required to learn $\HC_{i(\varepsilon)}$, and thus $\HC_{\R}$.
In particular, taking $\HC_{i(\varepsilon)} \in \gamma(\varepsilon)\exp(\Omega \frac{1}{\varepsilon}))$, and applying the standard realizable sample-complexity lower-bound, we have 
\[
m_{\downarrow}(\frac{1}{\varepsilon}) \mapsto \LandauOmega \frac{\VC(\HC_{i(\varepsilon)})}{\gamma(\varepsilon)} = \exp \Omega \frac{1}{\varepsilon} \enspace.
\]

We may thus conclude that $\HC$ may be selected such that uniform convergence in $\Poly(\frac{1}{\varepsilon})$ samples is impossible.
Indeed, we have shown that there exist $\HC$ for which exponential sample complexity is necessary, and by this method may create an arbitrarily large necessarily sample complexity.
Note also that since each $\VC(\HC_{i})$ is finite, the class is uniformly convergent, thus we have $(\HC, \LossFunction) \in \UC$ and $(\HC, \LossFunction) \not\in \UC_{\Poly}$.

\draftnote{I think I actually need to assume $\LossFunction(y, \hat{y}) \leq C\abs{y - \hat{y}}$ to show it's uniformly convergent.}

\todo{Prove it's UC} 
%
%
%
%
\if 0
Under any $\LossFunction$ such that $\LossFunction(y, \hat{y}) = 0$ iff $y = \hat{y}$, and $\LossFunction(a, a \pm \varepsilon)$ approaches $0$ ``sufficiently slowly''\draftnote{I need to characterize this, but any $\abs{y - \hat{y}}^{p}$ should be fine.}

$\HC_{R}$ is uniformly learnable, but with superpolynomial sample complexity.

Intuitively, each $\HC_{i}$ independently learns a single bit of the prediction.

Problem: have $0.x1\bar{0} = 0.x0\bar{1}$.
Fix: work in quaternary?  I.e., assume that in true soln, all odd bits are 0?  Or $\HC$ can enforce that?
Then first mistake at bit $2i$ means loss $\geq q(2^{-2i} - (2^{-2i+1} + \dots)) = \frac{1}{3 \cdot 2^{2i}}$?

If 1 to 0 mistake:
\[
\LossFunction(y, y_{1:2i} + \frac{1}{3 \cdot 2^{2i}}) \leq \LossFunction(y, \hat{y}) \leq \LossFunction(y, y_{1:2i})
\]
If 0 to 1 mistake:
?
\fi
\end{proof}

\draftnote{Conjecture: $\displaystyle\F \in \UC_{\Poly} \Leftrightarrow \sup_{\ProbDist \text{ over } \X \times \Y} \Covering(\F, \ell_{2}(\ProbDist,
\varepsilon)) \in \Poly(\varepsilon)$}

\if 0
\begin{property}[Convergence and Covering]

\[
\sup_{\varepsilon > 0} \frac{\varepsilon}{2} \sqrt{\ln \Covering(\varepsilon, \F, \ell_{2}(\ProbDist))} \leq_{L} \mathcal{G}_{m}(\F, \ProbDist) \leq_{L} \int_{0}^{\infty} \sqrt{\ln \Covering(\varepsilon, \F, \ell_{2}(\ProbDist))} \, \mathrm{d}\varepsilon \enspace.
\]

\todo{Missing $m$ normalization!}

So if we assume PUC, then we have
\[
\mathcal{G}_{m}(\F, \ProbDist) \leq \ln(m) SD 
\]

Now, we set SD = $\varepsilon$, plug in poly UC rate, to get
Thus have [todo; ignored log m]
\[
m \in \Poly(\frac{1}{\varepsilon^{2}})
\]

Thus approx need $?$

\[
\sqrt{ \Covering(\varepsilon, \F, \ell_{2}(\ProbDist)) } \leq_{L} \frac{1}{\varepsilon} \enspace?
\]


\draftnote{We can use this to show the existence of a polynomial-size cover.
The argument uses Sudakov minorization, see \url{https://www.stat.berkeley.edu/~bartlett/courses/2013spring-stat210b/notes/14notes.pdf}.}
\todo{$\Rade_{m} \in \LandauO_{m}(\Gauss)$, $\Gauss_{m} \in \LandauO(\ln(m)\Rade_{m})$.  If entr function is concave, then largest rectangle and area are similar?}
\end{property}
\fi
\fi

\todo{Dead conjecture: conditions for it holding?}
\if 0 
\begin{conjecture}
Suppose $\F$ is uniformly convergent, and there exists some $\ProbDist$ such that $\exists f \in \F$ with $\Var_{\ProbDist}[f] > 0$.\draftnote{This might require a discrete-valued loss function?}
Then
\[
\sup_{\ProbDist \text{ over } \X \times \Y} \Expect_{\bm{z} \distributed \ProbDist^{m}}\left[ \sup_{h \in \HC} \abs{  \ERisk(h; \LossFunction, \bm{z}) - \Risk(h; \LossFunction, \ProbDist) } \right] \in \sup_{\ProbDist \text{ over } \X \times \Y} [\mathsmaller{\frac{1}{2}}, 2] \Rade_{m}\bigl( \LossFunction \circ \HC - \Expect_{\ProbDist}[\LossFunction \circ \HC], \ProbDist \bigr) \enspace,
\]
thus \todo{either 0, constant, slow rate.}
\[
\sup_{\ProbDist \text{ over } \X \times \Y} \Expect_{\bm{z} \distributed \ProbDist^{m}}\left[ \sup_{h \in \HC} \abs{  \ERisk(h; \LossFunction, \bm{z}) - \Risk(h; \LossFunction, \ProbDist) } \right] \in \LandauTheta \frac{1}{\sqrt{m}} \enspace.
\]
\todo{Prove rate can't be anything else?  Need particular rate for poly SC results.}

\todo{Why this form?  Fast learning does not exist (uniformly).  Is that true for all loss functions?}

\draftnote{False: in regression example, it's $\frac{1}{2^{\LandauOmega(m)}}$?}

\end{conjecture}

\draftnote{This should further imply an equality with \emph{uniform covering numbers}, via the majorizing measures argument (assuming boundedness)?}
\fi

\todo{Note that UC quantified by many dimension quantities for classification, regression, etc.}

It is in general true 
that \emph{uniform convergence} implies \emph{PAC-learnability}; this is well-known for binary classification, but we show the generalized result for completeness.
The converse is true for some learning problems, but not for others, which we shall use in the consequent subsection as a powerful tool to characterize when PAC-learnability implies FPAC-learnability.\todo{Not sure, now that poly is assumed?  Or does it work, because we had PAC =/=> UC, which implies PAC =/=> poly UC?}

\subsection{The Fundamental Theorem of (Fair) Statistical Learning}
\label{sec:ftfsl:ftfsl}

The following result, generally termed the \emph{fundamental theorem of statistical learning}, relates \emph{uniform convergence}, \emph{combinatorial dimensions} and \emph{PAC-learnability}.
It is often stated for \emph{binary classification} \citep[theorem~6.2]{shalev2014understanding}, wherein the relevant combinatorial dimension is the \emph{Vapnik-Chervonenkis} dimension, though we state the multi-class variant \citep[theorem~29.3]{shalev2014understanding}, in terms of the \emph{Natarajan} dimension.


\begin{theorem}[Fundamental Theorem of Statistical Learning {[Classification]}]
Suppose $\LossFunction$ is the 0-1 loss for $k$-class classification, where $k < \infty$.
Then the following are equivalent.
\begin{enumerate}[wide, labelwidth=0pt, labelindent=0pt]\setlength{\itemsep}{3pt}\setlength{\parskip}{0pt}
\item $\forall \DSeq \in \N$: $\HC_{\DSeq}$ has finite Natarajan-dimension (= VC dimension for $k=2$ classes).
\item $\forall \DSeq \in \N$: $(\LossFunction, \HC_{\DSeq})$ has the uniform convergence property.
\todo{loss circ present in original statement?}
\item Any ERM rule is a successful agnostic-PAC learner for $\HC$.
\item $\HC$ is agnostic-PAC learnable.
\item Any ERM rule is a successful realizable-PAC learner for $\HC$.
\item $\HC$ is realizable-PAC learnable.
\end{enumerate}
\end{theorem}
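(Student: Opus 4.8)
The plan is to prove the six-way equivalence by establishing the classical cycle of implications, treating the hypothesis-class-sequence structure as an essentially vacuous wrapper: since \cref{def:pac} demands a finite sample-complexity function $\SampleComplexity(\varepsilon,\delta,\DSeq)$ \emph{separately} for each index $\DSeq$, and every item of the theorem is either quantified $\forall\DSeq$ or concerns the whole sequence, it suffices to prove the equivalence for a single fixed class $\HC_{\DSeq}$ of finite Natarajan dimension $d$ and then quantify over $\DSeq$. Throughout, $\LossFunction$ is the $k$-class $0$-$1$ loss, so $\LossFunction\circ\HC_{\DSeq}$ is a family of $\{0,1\}$-valued functions whose combinatorial complexity is controlled by that of $\HC_{\DSeq}$.

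First I would show $(1)\Rightarrow(2)$: finite Natarajan dimension bounds the growth function of $\HC_{\DSeq}$ --- the number of distinct labellings it induces on $m$ points --- by a polynomial in $m$ (the Natarajan analogue of the Sauer--Shelah lemma, of the form $(mk^{2})^{d}$), and a polynomially-bounded growth function yields the uniform-convergence property of \cref{def:uc} via the standard symmetrization (ghost-sample) argument and a union bound over the finitely many realized dichotomies, just as in VC theory; this in fact produces an explicit $\SampleComplexity_{\UC}(\LossFunction\circ\HC_{\DSeq},\varepsilon,\delta)$, but only the vanishing limit is needed (equivalent to the high-probability form since $\LossFunction$ takes values in $[0,1]$). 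Then $(2)\Rightarrow(3)$ is the usual ERM-consistency argument: if, with probability $\geq1-\delta$ over the sample, every $h\in\HC_{\DSeq}$ satisfies $|\ERisk(h;\LossFunction,\bm{z})-\Risk(h;\LossFunction,\ProbDist)|\leq\varepsilon/2$, then the empirical minimizer $\hat h$ obeys $\Risk(\hat h;\LossFunction,\ProbDist)\leq\ERisk(\hat h;\LossFunction,\bm{z})+\varepsilon/2\leq\ERisk(h^{*};\LossFunction,\bm{z})+\varepsilon/2\leq\Risk(h^{*};\LossFunction,\ProbDist)+\varepsilon$ for every $h^{*}\in\HC_{\DSeq}$, so any ERM rule is an agnostic PAC-learner with sample complexity $\SampleComplexity_{\UC}(\LossFunction\circ\HC_{\DSeq},\varepsilon/2,\delta)$. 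The implications $(3)\Rightarrow(4)$, $(3)\Rightarrow(5)$, $(4)\Rightarrow(6)$, and $(5)\Rightarrow(6)$ are immediate: ``some rule is a successful learner'' trivially implies ``learnable'', and by \cref{def:pac} realizable learning is merely the restriction of agnostic learning to distributions admitting a $0$-risk hypothesis, so any agnostic guarantee --- whether for a learner or for ERM specifically --- restricts to the realizable case.

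The crux is closing the cycle with $(6)\Rightarrow(1)$, which I would prove contrapositively as a no-free-lunch lower bound. If $\HC_{\DSeq}$ has infinite Natarajan dimension, then for every $m$ there is a set of $2m$ points simultaneously Natarajan-shattered by $\HC_{\DSeq}$; endowing these points with the uniform distribution and labelling them via one of the two ``witness'' labellings chosen uniformly at random produces a realizable instance on which any randomized learner, having seen only $m$ labelled examples, disagrees with the true labelling on a constant fraction of the unseen $m$ points in expectation, hence cannot attain $(\varepsilon,\delta)$-accuracy once $\varepsilon$ and $\delta$ are small enough --- and no finite $\SampleComplexity(\varepsilon,\delta,\DSeq)$ can remedy this, contradicting even realizable-PAC-learnability. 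This is precisely the multiclass argument behind \citet[Theorem~29.3]{shalev2014understanding}, lifting the binary ``no-free-lunch'' lower bound of \citet{shalev2014understanding}; I expect the only genuine technical care to lie in the Natarajan-shattering bookkeeping (each shattered point carries an \emph{ordered pair} of labels, and the shattering must hold simultaneously across all $2m$ points) and in verifying that the random witness labelling is actually realized by some $h\in\HC_{\DSeq}$, so that the constructed instance lies in the realizable regime. Chaining $(1)\Rightarrow(2)\Rightarrow(3)$, then $(3)$ to both $(4)$ and $(5)$, then both to $(6)$, and finally $(6)\Rightarrow(1)$, shows all six statements equivalent.
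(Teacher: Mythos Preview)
The paper does not actually prove this theorem: it is stated as background and attributed directly to \citet[Theorems~6.2 and~29.3]{shalev2014understanding}. Your proposal reconstructs precisely that standard textbook argument (Sauer--Shelah/Natarajan growth bound $\Rightarrow$ UC $\Rightarrow$ ERM succeeds, plus the no-free-lunch lower bound closing the cycle via $(6)\Rightarrow(1)$), so there is nothing to compare --- your sketch is the canonical proof the paper is deferring to, and it is correct.
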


It is somewhat subtle to generalize this result to arbitrary learning problems.
In particular, there are PAC-learnable problems for which uniform convergence \emph{does not hold}.\todo{I'm not sure about this: Similarly there are cases where uniform convergence holds but not any ERM rule is capable of agnostic PAC learning? Infinite classes?}
However, \citet{alon1997scale} 
show similar results for various regression problems, with 
the (scale-sensitive) $\gamma$-\emph{fat-shattering dimension}\todo{cite Pollard} playing the role of the Vapnik-Chervonenkis or Natarajan dimensions in classification.\todo{What assumptions do alon need?}
We now show that essentially the same result holds for \emph{fair statistical learning}, i.e., malfare minimization.

\todo{Discuss NFL}
\if 0
In order to obtain a similar result, we require sufficient conditions for a \emph{no-free lunch theorem} for general learning (statistical estimation) problems.
The following suffices, adapted from the classification version given by \citep[thm.~5.1]{shalev2014understanding}.
\begin{theorem}[No Free Lunch Theorem]

Suppose ...

\todo{Check remark 7.2.}
\todo{write me.}

Corollary:

$(\HC, \LossFunction) \in \PAC \implies (\HC, \LossFunction) \in \UC$

\end{theorem}

\draftnote{Actually just need that $\exists \gamma$ s.t. $\VC( \{ x \mapsto \1_[0,\gamma] \LossFunction(h(x), y) \, | \, h \in \HC \} ) < \infty$?  Might need relationship between $\gamma$, $\varepsilon$ for polynomial? Anyway, that gives us VC infinite $\implies$ not agnostic PAC?  Thus ag PAC $\implies$ VC finite.  But does this imply UC?  No, it does not.  Need similar for all $\gamma$-cover:, $\VC( \{ x \mapsto \1_[i\gamma,(i+1)\gamma] \LossFunction(h(x), y) \, | \, h \in \HC, i \in \N \} ) < \infty$; looking a bit like PDIM or fat-shattering?  Does this do anything?  How does NFL reduction look?}
\fi

\todo{Discuss Natarajan, graph, pseudo?}

\begin{theorem}[Fundamental Theorem of Fair Statistical Learning]
\label{thm:ftfsl}
\if 0

PAC implies Fair PAC

Non-constructive proof.

May need stronger assumption: something like PAC by ERM implies fair PAC by EMM?

Adapt FTSL (classification, natarajan)?  Regression?

Suppose boundedness (needed for UC to be needed?).
\fi
Suppose $\LossFunction$ such that $\forall \HC: \, (\HC, \LossFunction) \in \PAC^{\Realizable} \implies (\HC, \LossFunction) \in \UC$.
Then, for any hypothesis class sequence $\HC$, the following are equivalent:
\begin{enumerate}[wide, labelwidth=0pt, labelindent=0pt]\setlength{\itemsep}{3pt}\setlength{\parskip}{0pt}
\item \label{ftfsl:uc} $\forall d \in \N$: 
 $(\LossFunction, \HC_{\DSeq})$ has the (generalized) uniform convergence property.
\item \label{ftfsl:ag-emm} Any EMM rule is a successful agnostic-FPAC learner for $(\LossFunction, \HC)$.
\item \label{ftfsl:ag} $(\LossFunction, \HC)$ is agnostic-FPAC learnable.
\item \label{ftfsl:re-emm} Any EMM rule is a successful realizable-FPAC learner for $(\LossFunction, \HC)$.
\item \label{ftfsl:re} $(\LossFunction, \HC)$ is realizable-FPAC learnable.
\end{enumerate}
\todo{What changes if we instead assume $\forall \HC: \, (\HC, \LossFunction) \in \PAC^{\Agnostic} \implies (\HC, \LossFunction) \in \UC$?  I think we just lose 3 and 4?  Or can we show that the conditions are identical? Update 1: I think it's the same.  Update 2: I think we lose everything, since we used NFL for agnostic learning.}
\end{theorem}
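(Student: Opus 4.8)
The plan is to prove the equivalence by closing a cycle of implications together with a few immediate side-arrows; explicitly, I would establish \cref{ftfsl:uc}$\,\Rightarrow\,$\cref{ftfsl:ag-emm}$\,\Rightarrow\,$\cref{ftfsl:ag}$\,\Rightarrow\,$\cref{ftfsl:re}$\,\Rightarrow\,$\cref{ftfsl:uc}, plus \cref{ftfsl:ag-emm}$\,\Rightarrow\,$\cref{ftfsl:re-emm}$\,\Rightarrow\,$\cref{ftfsl:re}, which together make all five statements mutually reachable. Most arrows are immediate from the definitions: exhibiting a successful EMM rule witnesses learnability, so \cref{ftfsl:ag-emm}$\,\Rightarrow\,$\cref{ftfsl:ag} and \cref{ftfsl:re-emm}$\,\Rightarrow\,$\cref{ftfsl:re}; and since the realizable distribution class is a subclass of the agnostic one, \cref{ftfsl:ag}$\,\Rightarrow\,$\cref{ftfsl:re} and \cref{ftfsl:ag-emm}$\,\Rightarrow\,$\cref{ftfsl:re-emm}. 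The only two steps with content are \cref{ftfsl:uc}$\,\Rightarrow\,$\cref{ftfsl:ag-emm} and \cref{ftfsl:re}$\,\Rightarrow\,$\cref{ftfsl:uc}. I would also note up front, via \cref{thm:pop-mean-prop}~\cref{thm:pop-mean-prop:pmean-unfair}, that every fair malfare $\Malfare$ appearing in \cref{def:fair-pac} is $\Malfare_{p}$ with $p \in [1,\infty)$ (and the $p=\infty$ egalitarian limit causes no trouble), so that the power-mean contraction bound \cref{thm:pow-mean-prop}~\cref{thm:pow-mean-prop:contraction} is available throughout.

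For \cref{ftfsl:uc}$\,\Rightarrow\,$\cref{ftfsl:ag-emm}: fix $\DSeq$, $\NGroups$, distributions $\ProbDist_{1:\NGroups}$, weights $\wv$, a fair malfare $\Malfare = \Malfare_{p}$, and tolerances $\varepsilon,\delta$. Set $m \doteq \SampleComplexity_{\UC}(\LossFunction \circ \HC_{\DSeq}, \frac{\varepsilon}{2}, \frac{\delta}{\NGroups})$, which is finite because $(\LossFunction, \HC_{\DSeq}) \in \UC$: the $\lim = 0$ of \cref{def:uc} upgrades to finite $\varepsilon$-$\delta$ uniform-convergence sample complexity via Markov's inequality. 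Draw $\bm{z}_{i} \distributed \ProbDist_{i}^{m}$ for each group; by \cref{def:uc} and a union bound over the $\NGroups$ independent samples, with probability at least $1-\delta$ we have $\sup_{h \in \HC_{\DSeq}} \abs{ \ERisk(h; \LossFunction, \bm{z}_{i}) - \Risk(h; \LossFunction, \ProbDist_{i}) } \leq \frac{\varepsilon}{2}$ for every $i$ simultaneously. On this event, \cref{thm:pow-mean-prop}~\cref{thm:pow-mean-prop:contraction} (valid for $p \geq 1$) gives, for all $h$, $\abs{ \Malfare_{p}(i \mapsto \ERisk(h;\LossFunction,\bm{z}_{i}); \wv) - \Malfare_{p}(i \mapsto \Risk(h;\LossFunction,\ProbDist_{i}); \wv) } \leq \max_{i} \abs{ \ERisk(h;\LossFunction,\bm{z}_{i}) - \Risk(h;\LossFunction,\ProbDist_{i}) } \leq \frac{\varepsilon}{2}$, and the standard two-sided argument then shows any empirical malfare minimizer $\hat{h}$ has true malfare within $\varepsilon$ of the optimum. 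Crucially, $\NGroups m$ depends only on $\varepsilon,\delta,\DSeq,\NGroups$ — not on the $\ProbDist_{i}$, nor on $p$, since the contraction bound is uniform over $p \geq 1$ — so every EMM rule is a valid agnostic-FPAC learner, and restricting to realizable instances yields \cref{ftfsl:re-emm} for free.

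For \cref{ftfsl:re}$\,\Rightarrow\,$\cref{ftfsl:uc}: taking $\NGroups = 1$, so that $\wv = (1)$ and $\Malfare_{p}(\lv;\wv) = \lv_{1}$ for every $p$, collapses any realizable-FPAC learner for $(\HC,\LossFunction)$ into a realizable-PAC learner for $(\HC,\LossFunction)$, the realizability conditions of \cref{def:pac} and \cref{def:fair-pac} coinciding in this case; hence $(\HC,\LossFunction) \in \PAC^{\Realizable}$. Applying this to the constant sequence $\HC_{\DSeq}, \HC_{\DSeq}, \dots$ and invoking the theorem's hypothesis gives $(\HC_{\DSeq}, \LossFunction) \in \UC$ for every $\DSeq$, which is precisely \cref{ftfsl:uc}. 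This is the single point where the hypothesis $(\cdot,\LossFunction) \in \PAC^{\Realizable} \Rightarrow (\cdot,\LossFunction) \in \UC$ is indispensable: it is what closes the loop back to uniform convergence and excludes loss functions for which learnability fails to imply uniform convergence, so the conclusion should not be expected without it.

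The hard part is not conceptual depth but quantifier bookkeeping: ensuring the resulting FPAC sample complexity genuinely depends only on $(\varepsilon,\delta,\DSeq,\NGroups)$, which is exactly what the \emph{uniform-over-distributions} form of \cref{def:uc} and the \emph{$p$-uniform} contraction bound deliver, and being careful that the EMM ``rule'' need not attain its infimum — I would pass to an $\frac{\varepsilon}{2}$-approximate empirical minimizer when no exact one exists, folding the slack into the budget, and flag the usual measurability caveat on $\hat{h}$. The remaining subtlety is simply confirming that the realizability constraint ``$\exists h \in \HC_{\DSeq}$ with $\max_{i} \Risk(h;\LossFunction,\ProbDist_{i}) = 0$'' of \cref{def:fair-pac} specializes correctly at $\NGroups=1$ to the \cref{def:pac} constraint, which is immediate.
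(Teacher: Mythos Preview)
Your proposal is correct and follows essentially the same route as the paper: the cycle \cref{ftfsl:uc}$\Rightarrow$\cref{ftfsl:ag-emm}$\Rightarrow$\cref{ftfsl:ag}$\Rightarrow$\cref{ftfsl:re}$\Rightarrow$\cref{ftfsl:uc} with the side-arrows through \cref{ftfsl:re-emm}, the union-bound-plus-contraction argument for \cref{ftfsl:uc}$\Rightarrow$\cref{ftfsl:ag-emm} with per-group sample size $\SampleComplexity_{\UC}(\LossFunction\circ\HC_{\DSeq},\tfrac{\varepsilon}{2},\tfrac{\delta}{\NGroups})$, and closing the loop via the $\NGroups=1$ reduction to $\PAC^{\Realizable}$ followed by the hypothesis. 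If anything you are more careful than the paper in two spots: you correctly route the final step through $\PAC^{\Realizable}$ (the paper's displayed line writes $\FPAC^{\Agnostic}\Rightarrow\PAC^{\Agnostic}$, which is a slip since item~\cref{ftfsl:re} is realizable), and you make explicit the constant-sequence trick needed to apply the per-class hypothesis to each $\HC_{\DSeq}$.
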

\begin{proof}
First note that \ref{ftfsl:uc} $\implies$ \ref{ftfsl:ag-emm} is a rather straightforward consequence of the definition of uniform convergence and the 
contraction property of fair malfare functions (\cref{thm:pow-mean-prop}~item~\ref{thm:pow-mean-prop:contraction}).
In particular, take $m \doteq \SampleComplexity_{\UC}(\LossFunction \circ \HC_{\DSeq}, {\frac{\varepsilon}{2}}, {\frac{\delta}{\NGroups}})$.
By union bound, this implies that with probability at least $1 - \delta$, taking samples $\bm{z}_{1:\NGroups,1:m} \distributed \ProbDist_{1}^{m} \times \dots \times \ProbDist_{\NGroups}^{m}$, we have
\[
\forall i \in \{1, \dots, \NGroups\}: \ \sup_{h \in \HC_{\DSeq}} \abs{\Risk(h; \LossFunction, \ProbDist_{i}) - \ERisk(h; \LossFunction, \bm{z}_{i})} \leq \frac{\varepsilon}{2} \enspace.
\]
Consequently, as $\Malfare(\cdot; \wv)$ is $1$-$\norm{\cdot}_{\infty}$-$\abs{\cdot}$-Lipschitz in risk (see~\cref{lemma:stat-est}), 
it holds with probability at least $1 - \delta$ that
\[
\forall h \in \HC_{\DSeq}: \  \abs{\Malfare \bigl( i \mapsto \ERisk(h; \LossFunction, \bm{z}_{i}); \wv \bigr) - \Malfare \bigl( i \mapsto \Risk(h; \LossFunction, \ProbDist_{i}); \wv \bigr)} \leq \frac{\varepsilon}{2} \enspace.
\]

Now, for EMM-optimal $\hat{h}$, and malfare-optimal $h^{*}$, we apply this result twice to get
\begin{align*}
\Malfare \bigl( i \mapsto \Risk(\hat{h}; \LossFunction, \ProbDist_{i}); \wv \bigr)
  &\leq \Malfare \bigl( i \mapsto \ERisk(\hat{h}; \LossFunction, \bm{z}_{i}); \wv \bigr) + \mathsmaller{\frac{\varepsilon}{2}} & \\
  &\leq \Malfare \bigl( i \mapsto \ERisk(h^{*}; \LossFunction, \bm{z}_{i}); \wv \bigr) + \mathsmaller{\frac{\varepsilon}{2}} & \\
  &\leq \Malfare \bigl( i \mapsto \Risk(h^{*}; \LossFunction, \ProbDist_{i}); \wv \bigr) + \varepsilon \enspace. & \\
\end{align*}
Therefore, under uniform convergence, the EMM algorithm agnostic FPAC learns $(\HC, \LossFunction)$ with finite sample complexity $\SampleComplexity_{\PACAlgo}(\varepsilon, \delta, \DSeq, \NGroups) = \NGroups \cdot \SampleComplexity_{\UC}(\LossFunction \circ \HC_{\DSeq}, {\frac{\varepsilon}{2}}, {\frac{\delta}{\NGroups}})$, 
completing \ref{ftfsl:uc} $\implies$ \ref{ftfsl:ag-emm}.

Now, observe that \ref{ftfsl:ag-emm} $\implies$ \ref{ftfsl:ag} and \ref{ftfsl:re-emm} $\implies$ \ref{ftfsl:re} are almost tautological: the existence of (agnostic / realizable) FPAC learning algorithms imply (agnostic / realizable) FPAC learnability.

Now, 
 \ref{ftfsl:ag-emm} $\implies$ \ref{ftfsl:re-emm} and \ref{ftfsl:ag} $\implies$ \ref{ftfsl:re} hold, as realizable learning is a special case of agnostic learning. 

As \ref{ftfsl:uc} implies 2-4, which in turn each imply \ref{ftfsl:re}, it remains only to show that \ref{ftfsl:re} $\implies$ \ref{ftfsl:uc}, 
i.e., if $\HC$ is realizable FPAC learnable, then $\HC$ has the uniform convergence property.
In general, the question is rather subtle, but here the assumption ``suppose $\LossFunction$ such that $(\HC, \LossFunction) \in \PAC^{\Realizable} \implies (\HC, \LossFunction) \in \UC$'' does most of the work.
In particular, as PAC-learning is a special case of FPAC-learning, 
we have \todo{that}
\[
(\HC, \LossFunction) \in \FPAC^{\Agnostic} \implies (\HC, \LossFunction) \in \PAC^{\Agnostic} \enspace,
\]
then applying the assumption yields $(\HC, \LossFunction) \in \UC$.
\todo{New:
If $\abs{\X}$ is finite, then ?
\ref{ftfsl:re}: $\HC$ is realizable FPAC learnable implies $\HC$ is realizable PAC learnable.
}
\end{proof}
The reductions and equivalences that compose this result are graphically depicted in \cref{fig:ftfsl}.
\cyrus{Note: should be concept of $\HC$ being uniformly convergent, respected by Lipschitz loss?}

\begin{observation}[The Gap between Uniform Convergence and (Fair) PAC-Learnability]
Note that the assumption ``suppose $\LossFunction$ such that $(\HC, \LossFunction) \in \PAC^{\Realizable} \implies (\HC, \LossFunction) \in \UC$'' does not in general hold.
In many cases of interest, it is known to hold, e.g., finite-class classification under 0-1 loss, and bounded regression under square and absolute loss \citep{alon1997scale}.
\todo{I think this is true?  ``However, there are well-known and very natural cases where it does not hold; for instance, \emph{unregularized} Gaussian-Kernel SVM with \emph{ramp loss} on bounded finite-dimensional sets are not uniformly convergent, \emph{however} may still be PAC-learned via \emph{regularized} learning algorithms.''  Maybe easier to show for $\ell_{2}$ regularized logistic regression with $\X = (\pm 1)^{\infty}$, or easier-still, $\X = \{ \1_{i} : i \in \N \}$.}
In general, verifying this condition is a rather subtle task 
that must be repeated for each learning problem (loss function).
We fully characterize the relationship between PAC and FPAC learnability when they are equivalent to uniform convergence, but in the remaining cases, while clearly FPAC implies PAC, it remains an open question whether PAC implies FPAC.
\todo{Can we better characterize this dichotomy?  What about density estimation and recommender systems?}
\todo{Furthermore, the gap for PAC-learnability implies a similar gap for FPAC-learnability. \draftnote{Reduction goes here.}}
\end{observation}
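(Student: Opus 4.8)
The content of this observation that actually requires argument is twofold: the trivial containment $\FPAC\subseteq\PAC$ (``clearly FPAC implies PAC''), and the assertion that the hypothesis ``$(\HC,\LossFunction)\in\PAC^{\Realizable}\implies(\HC,\LossFunction)\in\UC$'' of \cref{thm:ftfsl} is not universally valid. The plan is to dispatch the containment in one line and to establish the non-implication by exhibiting a single bounded loss $\LossFunction$ together with a hypothesis class sequence that is realizable-PAC-learnable yet violates uniform convergence. For the containment, recall from the discussion following \cref{def:fair-pac} that instantiating a FPAC-learner at $\NGroups=1$ forces $\wv=(1)$, whence $\Malfare_{p}(\lv;\wv)=\lv_{1}$ for every admissible $p$; the learner then outputs $\hat h\in\HC$ with $\Risk(\hat h;\LossFunction,\ProbDist_{1})\le\inf_{h^{*}\in\HC}\Risk(h^{*};\LossFunction,\ProbDist_{1})+\varepsilon$ from finitely many samples, which is exactly a PAC guarantee, and the same restriction preserves realizability and any resource bounds, so $\FPAC^{\Realizable}\subseteq\PAC^{\Realizable}$ and $\FPAC\subseteq\PAC$ follow at once from the definitions.

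The substantive step is the witness to non-$\UC$. I would draw on the known separations between learnability and uniform convergence that live outside the well-behaved loss families. The cleanest modern route is the realizable \emph{partial-concept} separation of Alon, Hanneke, Holzman, and Moran, whose set-valued predictions and indicator losses $\LossFunction(y,\hat y)=\1_{\hat y}(y)$ fit precisely the partial-prediction framing this paper already adopts for interval estimators and top-$k$ classification: they construct classes that are realizably PAC-learnable but for which no ERM rule and no uniform-convergence bound can succeed. An alternative is an infinite-dimensional convex-Lipschitz problem (bounded-norm-free linear predictors over $\X_{\DSeq}=\{\pm1\}^{\DSeq}$ under a clamped ramp surrogate valued in $[0,\frange]$), learnable through a \emph{stable} regularized algorithm with dimension-free sample complexity, yet not uniformly convergent because the unconstrained class has infinite fat-shattering dimension at every scale. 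In either case the goal is to certify $(\HC,\LossFunction)\in\PAC^{\Realizable}$ while $\sup_{\ProbDist}\Expect_{\bm z\distributed\ProbDist^{m}}\bigl[\sup_{h\in\HC}\lvert\ERisk(h;\LossFunction,\bm z)-\Risk(h;\LossFunction,\ProbDist)\rvert\bigr]$ stays bounded away from $0$, so that $(\HC,\LossFunction)\notin\UC$, demonstrating that the hypothesis of \cref{thm:ftfsl} is genuinely restrictive rather than vacuous.

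The main obstacle is reconciling the \emph{realizable} restriction with the failure of uniform convergence. For the simple loss families the paper cites as ``known to hold'' — finite-class $0$-$1$ loss and bounded regression under absolute or square loss — realizable learnability, finiteness of the relevant combinatorial dimension, and $\UC$ all coincide (this is exactly the regime of \citet{alon1997scale}), so the witness must use a richer loss, and one must verify that a genuine zero-population-risk hypothesis exists uniformly over the realizable distribution family while a finite sample still fails to pin down the class. For the linear/convex route this is delicate because realizable distributions with vanishing margin can blow up the sample complexity, so learnability must be supplied by the stability of the regularized algorithm rather than by any guarantee over all of $\HC$; for the partial-concept route realizability is built in by construction but the bounded-loss requirement of \cref{def:uc} and the impossibility of disambiguation must be checked together. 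The stability-based upper bound is standard; it is this simultaneous bookkeeping — realizable, bounded, and non-uniformly-convergent — that is the crux.

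The remaining assertions of the observation need no proof. The cases where the hypothesis holds are precisely those characterized by \citet{alon1997scale} via the fat-shattering dimension; the full characterization of PAC versus FPAC learnability in the regime where both are equivalent to uniform convergence is \cref{thm:ftfsl}; and the converse question — whether $(\HC,\loss)\in\PAC\implies(\HC,\loss)\in\FPAC$ once the uniform-convergence bridge is removed — is explicitly left open, with the reweighting and minimax obstructions exhibited earlier in \cref{sec:pac} suggesting that no simple black-box reduction through a PAC-learner will suffice.
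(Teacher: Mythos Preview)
The paper does not give a proof of this observation; it is stated as a discussion paragraph with citations, not as a result with an argument. The only formally justified piece is the containment $\FPAC\subseteq\PAC$, which the paper handles exactly as you do (the $\NGroups=1$ specialization, noted in the ``Some trivial reductions'' paragraph of \cref{sec:pac}). For the claim that the hypothesis of \cref{thm:ftfsl} can fail, the paper simply asserts this and points to the literature (and even hedges in a \texttt{todo} note about which concrete example to use); it does not construct a witness.

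Your proposal therefore goes well beyond what the paper does. The two routes you sketch are both reasonable, but they differ in how cleanly they fit the paper's framework. The partial-concept route (Alon--Hanneke--Holzman--Moran) is the sharper separation, and your observation that the paper's generalized realizability definition already accommodates set-valued predictions via $\LossFunction(y,\hat y)=\1_{\hat y}(y)$ is apt. The convex-Lipschitz route is closer in spirit to the paper's own \texttt{todo} speculation about regularized learners on infinite-dimensional domains, but as you correctly flag, the realizable restriction is the delicate part there: stability-based learnability results (e.g., Shalev-Shwartz, Shamir, Srebro, Sridharan) are usually stated agnostically, and one has to check that the realizable subfamily is rich enough to witness non-$\UC$ while still admitting a dimension-free learner. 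Your identification of this bookkeeping as ``the crux'' is accurate; the paper sidesteps it entirely by leaving the example implicit.

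In short: your treatment is correct and strictly more detailed than the paper's, which offers no argument beyond citation for the non-implication.
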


\todo{\cyrus{Show gap with ``dominated choices:'' some $h \in \HC$ are not uniformly convergent, but are dominated by uniformly convergent $h$.  $\X = \X_{1} \times [0, 1]$, $\HC = \HC_{1} \times ([0, 1] \to [0, 1])$.  $\LossFunction(h(x_{1}, x_{2}), (y_{1}, y_{2}) = \LossFunction_{1}(h_{1}(x_{1}), y_{1}) + h_{2}(x_{2})y_{2}$. }
\cyrus{Even simpler gap: $\LossFunction(y, \hat{y}) = \hat{y}$ or $y\hat{y}$.  $\HC = \X \to [0, 1]$.  Easy: pick $0$ function, but no UC here!}
\cyrus{More realistic gap: Gaussian-Kernel SVM (with regularization) for logistic / ramp loss.  Can overfit, but regularization based on sample size prevents this?}}

\if 0 

The following conjecture better characterizes this gap in one direction.
\begin{conjecture}
\label{conj:ftsl-gap-nfl}
If the learning problem defined by $\LossFunction$ has a \emph{no free lunch} argument, analogous to \citep[thm.~5.1]{shalev2014understanding}, then the assumption ``suppose $\LossFunction$ such that $(\HC, \LossFunction) \in \PAC^{\Realizable} \implies (\HC, \LossFunction) \in \UC$'' holds.
An additional condition may be required to ensure that $(\HC, \LossFunction) \not\in \UC \implies (\HC, \LossFunction) \not\in \PAC^{\Agnostic}$, as the standard FTSL argument is specific to \emph{shattering} infinite sets (and thus the VC dimension).

\draftnote{Essentially, we need that for non uniformly-convergent classes, there exists a distribution $\ProbDist$ over $2m$ points, such that conditioning on $m$ samples (ERM training) doesn't tell us much about the remaining $m$ samples.
I think a covering number argument gets us here, using the $\gamma$-fat-shattering dimension.}
\end{conjecture}
\todo{
An alternate condition, which should also be sufficient for bounded $\LossFunction$, would be to define the \emph{discretization loss class}
\[
\F_{\varepsilon} \doteq \{ x \mapsto \1_[0,\varepsilon] \LossFunction(h(x), y) \, | \, h \in \HC \} \enspace,
\]
and then require that $\forall \varepsilon > 0$, $\VC( \F_{\varepsilon} ) < \infty$ (approaching the limit sufficiently slowly).
This class is closely related to the \emph{pseudo-dimension} or the $\gamma$-\emph{fat-shattering dimension}.
The 0-1 loss used by $\F_{\varepsilon}$ can be used to upper and lower bound the $\LossFunction$ loss of $\F$, rather coarsely, but finely enough to apply the standard binary-classification NFL result to complete the fundamental theorem of statistical learning.
Intuitively, this seems more stringent (and far less elegant) than \cref{conj:ftsl-gap-nfl}, but it remains unclear whether either condition is \emph{necessary}.
}
\todo{Proof with this?
Overview: show that if $\PACAlgo$ realizable PAC-learns $\F$, then exists $\PACAlgo'$ to realizable PAC-learn $\F_{\gamma}$, thus $\F_{\gamma}$ is UC.
Also, if $\F$ is UC, then $\F_{\gamma}$ is UC.
Start by assuming $\F \not \in \UC$.
Apply double-contrapositive: $\F_{\gamma} \not \in \UC \implies \F \not \in \UC$, and also $\F_{\gamma} \not \in \UC \implies \F_{\gamma} \not \in \PAC^{\Realizable} \implies \F \not \in \PAC^{\Realizable}$.
Doesn't work: need $\F_{\gamma} \in \UC \implies \F \in \UC$.
Alt strategy:
Assume $\F \in \PAC^{\Realizable}$.  Then $\F_{\gamma} \in \UC$ (again need $\F_{\gamma} \in \UC \implies \F \in \UC$).
The fix:
In both cases, consider multiple values of $\gamma$.  Looking more like the PDIM then.
}
\todo{So it's not quite right; I think we need pdim?}

\todo{Silly example: can we construct $\HC$ such that ``part of'' $\HC$ is not uniformly convergent, but always has higher loss than all else?  Yes, but not uniformly over any distribution for sensical symmetric loss?  $\X = \Y = [0, 1]$, $\LossFunction(y, \hat{y}) = \hat{y}$.  Now take $\HC \doteq \{ x \mapsto \gamma x | \gamma \in (0, 1) \} \union ([0, 1] \to [\frac{1}{2}, 1])$. 
Ridiculous, but seems to be a problem?}

\todo{Instances where it doesn't hold: factored loss $\LossFunction(y, h(x)) \doteq \LossFunction_{1}(y_{1}, h_{1}(x_{1})) + \LossFunction_{2}(y_{2}, h_{2}(x_{2}))$.
Take $\HC = \HC_{1} \times \{x \mapsto \1_{I}(x) \, | \, I \subseteq \X \}$, $\Loss_{2}(y, h_{2}(x) \doteoq h_{2}(x)$.
For $\ProbDist$ uniform over $\X \doteq [0, 1]$, ``most'' ERM are ``bad'' ERM; good ERM always takes $h_{2}(x) \detq \1_{\emptyset}(x)$.
Not UC, not learnable by any ERM, but learnable by an ERM.
}

\todo{GOAL, BUT VERY HARD: Can we show that $\PAC = \FPAC$, even when the assumption does not hold?  In other words, if PAC (but not by ERM, i.e., no UC), then FPAC (pretty tricky)?  Furthermore, if not PAC, then not FPAC (trivially true)?}

\todo{Can graph dim relax infinite class count issue?}

\todo{Collapse hierarchy of $\F$? Only in realizable case?}

\todo{Is k-NN PAC learnable?}

\fi

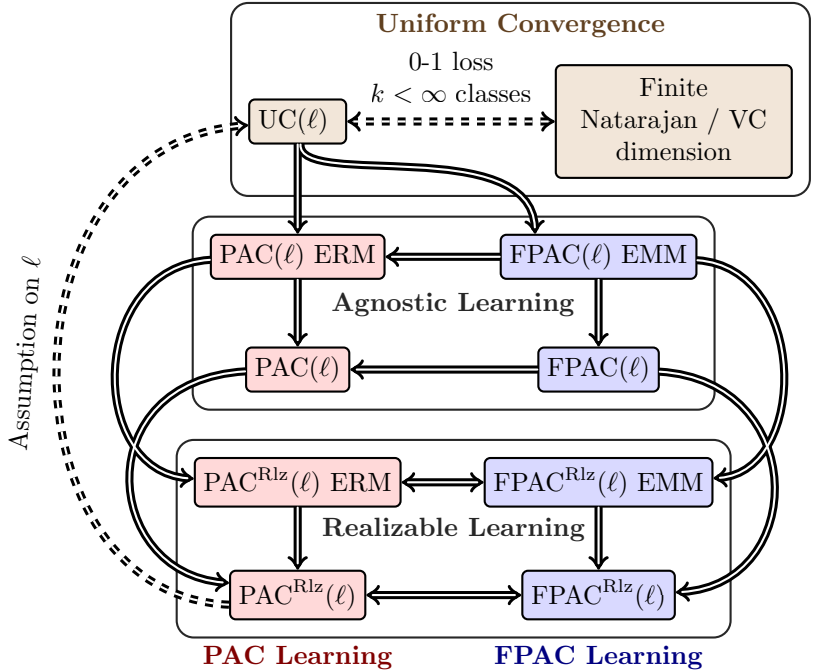
\begin{figure}

\colorlet{pcol}{red}
\colorlet{fpcol}{blue}
\colorlet{uccol}{brown}

\begin{centering}
\begin{tikzpicture}[
    xscale=2.0,yscale=1.5,
    class/.style={rectangle,draw,thick,rounded corners=0.5ex},
    grouping/.style={rectangle,draw,opacity=0.85,thick,rounded corners=1.5ex,inner sep=1.5ex},
    imp/.style={line width=0.4mm,double equal sign distance,line cap=round,-{Classical TikZ Rightarrow[length=1.8mm,width=2.7mm,line width=0.4mm]}},
    equiv/.style={imp,{Classical TikZ Rightarrow[length=1.8mm,width=2.7mm,line width=0.4mm]}-{Classical TikZ Rightarrow[length=1.8mm,width=2.7mm,line width=0.4mm]}},
    ucclass/.style={class,fill=uccol!20!white},
    pclass/.style={class,fill=pcol!15!white},
    fpclass/.style={class,fill=fpcol!15!white},
  ]

\begin{scope}[yshift={0.2cm}]
\node[ucclass] (uc) at
  (0, 2)
  { $\UC(\LossFunction)$ \vspace{1cm} \null};
\node[ucclass] (dim) at
  (2.5, 2)
  { \begin{tabular}{c} Finite \\ Natarajan / VC \\ dimension \\ \end{tabular} };

\node[grouping,fit={(uc) (dim) (3,2.9)},draw=black] (ucgroup) {};
\node[above = -0.6cm of ucgroup] (uctitle) {\bf \color{uccol!50!black} Uniform Convergence};

\end{scope}

\node[pclass] (pac) at (0, 0) { $\PAC(\LossFunction)$ };
\node[fpclass] (fpac) at (2, 0) { $\FPAC(\LossFunction)$ };

\node[pclass] (pac-erm) at (0, 1) { $\PAC(\LossFunction)$ ERM };
\node[fpclass] (fpac-emm) at (2, 1) { $\FPAC(\LossFunction)$ EMM };

\node[grouping,fit=(pac) (fpac) (pac-erm) (fpac-emm),draw=black] {\bf Agnostic Learning};

\node[pclass] (rpac) at (0, -2) { $\PAC^{\Realizable}(\LossFunction)$ };
\node[fpclass] (rfpac) at (2, -2) { $\FPAC^{\Realizable}(\LossFunction)$ };

\node[pclass] (rpac-erm) at (0, -1) { $\PAC^{\Realizable}(\LossFunction)$ ERM };
\node[fpclass] (rfpac-emm) at (2, -1) { $\FPAC^{\Realizable}(\LossFunction)$ EMM };

\node[grouping,fit=(rpac) (rfpac) (rpac-erm) (rfpac-emm),draw=black] {\bf Realizable Learning};


\node at (0, -2.55) {\bf \color{pcol!50!black} PAC Learning};
\node at (2, -2.55) {\bf \color{fpcol!50!black} FPAC Learning};



\begin{scope}[blend mode=screen]


\draw[imp] (uc.277) .. controls (0, 1.6) and (1.55, 2.1) .. (fpac-emm.160);
\draw[imp] (uc) -- (pac-erm);

\draw[equiv,dashed,line cap=butt] (dim) -- node[above] { \begin{tabular}{c} 0-1 loss \\ $k < \infty$ classes \\ \end{tabular} } (uc);



\draw[imp] (fpac) -- (pac);
\draw[imp] (fpac-emm) -- (pac-erm);

\draw[imp] (fpac-emm) -- (fpac);
\draw[imp] (pac-erm) -- (pac);

\draw[imp] (fpac-emm.358) .. controls (3.4, 0.9) and (3.4, -0.9) .. (rfpac-emm.2);
\draw[imp] (pac-erm.182) .. controls (-1.4, 0.9) and (-1.4, -0.9) .. (rpac-erm.178);

\draw[imp] (fpac.358) .. controls (3.4, -0.1) and (3.4, -1.9) .. (rfpac.2);
\draw[imp] (pac.182) .. controls (-1.4, -0.1) and (-1.3, -1.8) .. (rpac.170);


\draw[equiv] (rfpac) -- (rpac);
\draw[equiv] (rfpac-emm) -- (rpac-erm);


\draw[imp] (rfpac-emm) -- (rfpac);
\draw[imp] (rpac-erm) -- (rpac);

\draw[imp,dashed,line cap=butt] (rpac.188) .. controls  (-2, -2) and (-2, 2) .. node[left of=-0.5cm,xshift=0.6cm,yshift=0.2cm,rotate=86] { Assumption on $\LossFunction$ }
  (uc);

\end{scope}

\end{tikzpicture} \\
\end{centering}

\caption{Implications between membership in 
PAC and FPAC classes.
In particular, for arbitrary 
fixed $\LossFunction$, implication denotes \emph{implication of membership} of some $\HC$ (i.e., containment); see \cref{thm:realizable-pac2fpac} and \ref{thm:ftfsl}. 
Dashed implication arrows hold conditionally on $\LossFunction$.
Note that when the assumption on $\LossFunction$ (see \cref{thm:ftfsl}) holds, the hierarchy collapses, and in general, under realizability, some classes are known to coincide.
}
\todo{bounded for UC?}
\todo{Is finite $k$ Natarajan required for UC or something else in FTFSL?}
\label{fig:ftfsl}
\todo{Grouping to classic document.}
\todo{$(\HC, \LossFunction) \in$ notation}
\end{figure}

\section{Characterizing Computational Fair-Learnability}
\label{sec:ccl}

\todo{Example: linear classifier / regressor; both coverable efficiently for fixed $d$, but expo in $d$.}

In this section, we consider the more granular question of whether FPAC learning is computationally harder than PAC learning.
In other words, where previously we showed conditions under which $\PAC = \FPAC$, here we focus on the subset of models with polynomial time training efficiency guarantees, i.e., we ask the question, when does $\PAC_{\Poly} = \FPAC_{\Poly}$ hold?
\Cref{thm:realizable-pac2fpac} has already characterized the computational complexity of \emph{realizable} FPAC-learning, so we now focus on the \emph{agnostic} case.
Here we show neither a generic reduction or non-constructive proof that $\PAC_{\Poly} = \FPAC_{\Poly}$, nor do we show a counterexample; rather we leave this question for future work.
We do, however, show that under conditions commonly leveraged as sufficient for polynomial-time PAC-learning, so too is polynomial-time FPAC-learning possible.
In particular, \cref{sec:ccl:co} provides an efficient constructive reduction (i.e., an algorithm) for efficient FPAC-learning under standard \emph{convex optimization} settings, and \cref{sec:ccl:uc} shows the same when $\HC$ may be approximated by a small cover, and said cover may be efficiently enumerated.
The computation-theoretic results of this section are summarized graphically in \cref{fig:efpac}.

In both the convex optimization and efficient enumeration settings, the proofs take the same general form: we show that $\varepsilon$-\emph{approximate} EMM on $m$ total samples is computationally efficient (in $\Poly(m, \varepsilon, \DSeq)$ time), and then argue that so long as \emph{sample complexity} $\SampleComplexity_{\UC}(\LossFunction \circ \HC_{\DSeq}, \varepsilon, \delta) \in \Poly({\frac{1}{\varepsilon}}, {\frac{1}{\delta}}, \DSeq)$ of \emph{uniform convergence} is polynomial, i.e., in $\Poly(\frac{1}{\varepsilon}, \frac{1}{\delta}, \DSeq)$,
then we may construct an FPAC-learner using $\varepsilon$-approximate EMM with polynomial \emph{time complexity}.
In particular, the proofs simply account for \emph{optimization} and \emph{sampling} error, and in both cases construct polynomial-time FPAC-learners.
Furthermore, as our training meta-algorithms can be applied to various hypothesis classes, we discuss specific instantiations for well-known machine learning models 
throughout, and these and others are summarized in \cref{table:emm-models}.
\cyrus{Cleaner presentation: Lemma here: $\varepsilon$-EMM in poly time implies POLY FPAC-learnability.
Need to define composition!
Suppose $\LossFunction \circ \HC \in \UC_{\Poly}$, and exists $\varepsilon$-EMM $\PACAlgo'$ in ? time.  Then ...??? 
}

\draftnote{Add this in?}
\if 0
\paragraph{EMM, Bound Efficiency, and Active Learning}
TODO below

Similarly, sharper sample-complexity bounds for particular values of $p$ are possible\todo{Lipschitz norm changes}; again our aim here is merely to characterize FPAC-learnability, which is insensitive to constant-factor and polynomial terms.

In particular the bound leverages ..., instead of \cref{lemma:stat-est} directly, which suggests that a we can use the sharper convergence guarantee why which is only useful for $p < \infty$ and be it would be better to optimize a sort of regularized objective Q although this in itself can be greatly improved

TODO 2g or g? and elsewhere? Future work:
In particular, for population, perhaps should minimize
\[
\hat{h} \doteq \argmin_{h \in \HC} \Malfare\left(i \mapsto 0 \vee \ERisk(h; \LossFunction, \bm{z}_{i}) - 2\ERade_{m}(\loss \circ \HC, \ProbDist_{i}) - 3\frange\mathsmaller{\sqrt{\frac{\ln \frac{\NGroups}{\delta}}{2m}}}; \wv \right) \enspace.
\]
TODO: Variance (Maurer / Bousquet), SRM. loc / centr ERADE.
\fi

\todo{
Here we find that for many learning problems, the conditions for FPAC-learnability are identical to those for PAC-learnability.
We will find that under realizability, the same equivalence holds, however we conjecture that in general, agnostic FPAC-learning and agnostic PAC-learning are not poly-time equivalent.}

\iftrue
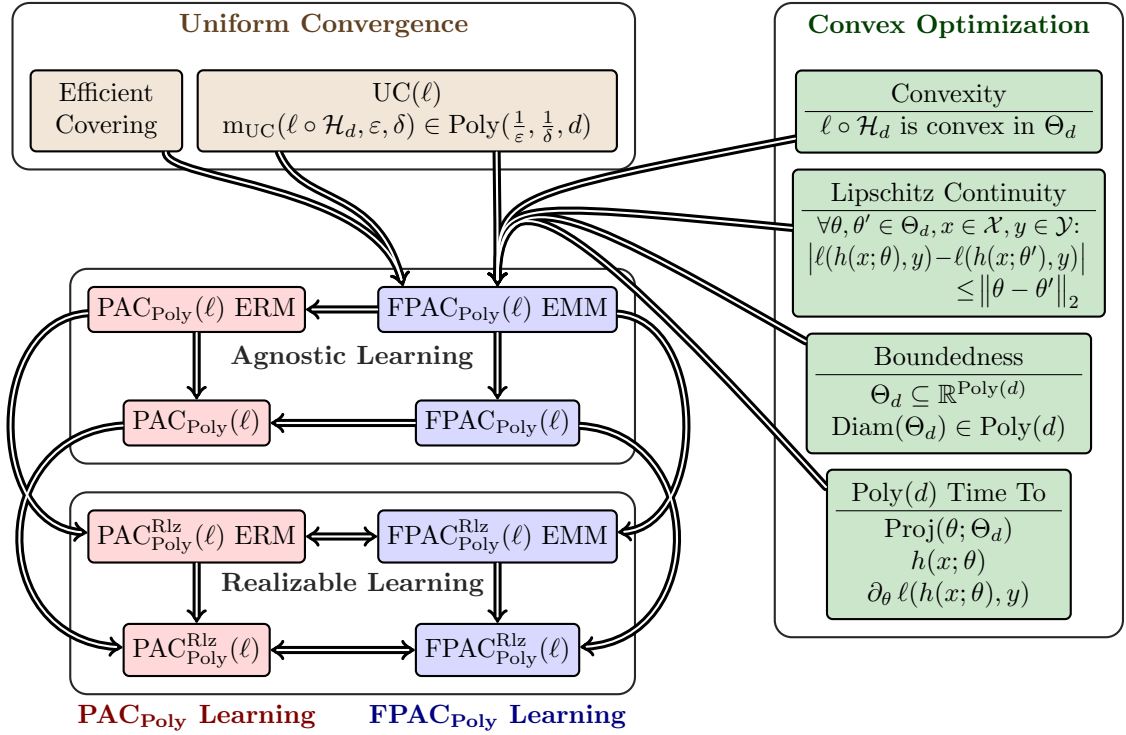
\begin{figure}

\colorlet{uccol}{brown}
\colorlet{cocol}{green!50!black}

\colorlet{pcol}{red}
\colorlet{fpcol}{blue}

\begin{centering}
\begin{tikzpicture}[
    xscale=2.0,yscale=1.51,
    class/.style={rectangle,draw,thick,rounded corners=0.5ex},
    grouping/.style={rectangle,draw,opacity=0.85,thick,rounded corners=1.5ex,inner sep=1.5ex},
    imp/.style={line width=0.4mm,double equal sign distance,line cap=round,-{Classical TikZ Rightarrow[length=1.8mm,width=2.7mm,line width=0.4mm]}},
    equiv/.style={imp,{Classical TikZ Rightarrow[length=1.8mm,width=2.7mm,line width=0.4mm]}-{Classical TikZ Rightarrow[length=1.8mm,width=2.7mm,line width=0.4mm]}},
    ucclass/.style={class,fill=uccol!20!white},
    coclass/.style={class,fill=cocol!20!white},
    pclass/.style={class,fill=pcol!15!white},
    fpclass/.style={class,fill=fpcol!15!white},
  ]



\begin{scope}[xshift={-0.6cm},yshift={-0.25cm}]
\node[ucclass] (uc) at
  (2, 3)
  { \begin{tabular}{c}
    $\UC(\LossFunction)$ \\
    $\SampleComplexity_{\UC}(\LossFunction \circ \HC_{\DSeq}, \varepsilon, \delta) \in \Poly({\frac{1}{\varepsilon}}, {\frac{1}{\delta}}, \DSeq)$ \\
  \end{tabular} };
\if 0
\node[ucclass] (dim) at
  (0, 4)
  { \begin{tabular}{c} Finite \\ Natarajan / VC \\ dimension \\ \end{tabular} };
\fi
\node[ucclass] (efficient-covering) at
  (0, 3)
  { \begin{tabular}{c} Efficient \\ Covering \\ \end{tabular} };

\node[grouping,fit={(uc) 
  (efficient-covering) (2.5,3.8)},draw=black] (ucgroup) {};
\node[above = -0.6cm of ucgroup] (uctitle) {\bf \color{uccol!50!black} Uniform Convergence};

\end{scope}


\begin{scope}[xshift={1cm},yshift={-0.25cm}]
\if 0
\node[coclass] (co) at
  (4, 3)
  { Convex Optimization }; 
\fi

\node[coclass] (co-psg-cvx) at
  (4, 2.994)
  { \begin{tabular}{c}
    Convexity \\
    \hline
    $\LossFunction \circ \HC_{\DSeq}$ is convex in $\Theta_{\DSeq}$ \\
    \end{tabular} };
\node[coclass] (co-psg-lpz) at
  (4, 1.835)
  { \begin{tabular}{c}
    Lipschitz Continuity \\
    \hline
    \small
    \if 0
    $\forall x \in \X, y \in \Y$: \\
    $\lambda_{\loss}$-$\norm{\cdot}_{2}$-$\norm{\cdot}_{\Y}$ Lpz.\ $h(x; \cdot)$ \\
    $\lambda_{\HC}$-$\norm{\cdot}_{\Y}$-$\abs{\cdot}$ Lpz.\ $\LossFunction(y, \cdot)$ \\
    $\Downarrow$ \\
    \fi
    \scalebox{0.97}[1]{$\forall \theta,\theta' \in \Theta_{\DSeq}, x \in \X, y \in \Y$}: \\
    \clap{\scalebox{0.89}[1]{$\abs{\loss(h(x; \theta), y) \! - \! \loss(h(x; \theta'), y)}$}} \\
    \hfill\scalebox{0.95}[1]{$\null \leq \norm{\theta - \theta'}_{2}$} \\
    \end{tabular} };
\node[coclass] (co-psg-bd) at
  (4, 0.51)
  { \begin{tabular}{c}
    Boundedness \\
    \hline
    \\[-0.36cm]
    $\Theta_{\DSeq} \subseteq \R^{\Poly(\DSeq)}$ \\
    $\Diam(\Theta_{\DSeq}) \in \Poly(\DSeq)$ \\
    \end{tabular} };

\if 0
\node[coclass] (co-psg-clb) at
  (4, 2.65)
  { \begin{tabular}{c}
    Convex $\LossFunction \circ \HC_{\DSeq}$ \\
    Lipschitz $\LossFunction$, $\HC_{\DSeq}$ \\
    $\Theta_{\DSeq} \subseteq \R^{\Poly(\DSeq)}$ \\
    $\Diam(\Theta_{\DSeq}) \in \Poly(\DSeq)$ \\
    \end{tabular} };
\fi
\node[coclass] (co-psg-time) at
  (4, -0.82)
  { \begin{tabular}{c}
    $\Poly(\DSeq)$ Time To \\
    \hline
    \\[-0.37cm]
    $\Proj(\theta; \Theta_{\DSeq})$ \\
    $h(x; \theta)$ \\
    $\partial_{\theta} \, \loss(h(x; \theta), y)$ \\
    \end{tabular} };

\node[grouping,fit={
  (co-psg-cvx) (co-psg-lpz) (co-psg-bd)
  (co-psg-time) (4.5,3.8)},draw=black] (cogroup) {};
\node[above = -0.6cm of cogroup] (cotitle) {\bf \color{cocol!50!black} Convex Optimization};

\end{scope}

\node[pclass] (pac) at (0, 0) { $\PAC_{\Poly}(\LossFunction)$ };
\node[fpclass] (fpac) at (2, 0) { $\FPAC_{\Poly}(\LossFunction)$ };

\node[pclass] (pac-erm) at (0, 1) { $\PAC_{\Poly}(\LossFunction)$ ERM };
\node[fpclass] (fpac-emm) at (2, 1) { $\FPAC_{\Poly}(\LossFunction)$ EMM };

\node[grouping,fit=(pac) (fpac) (pac-erm) (fpac-emm),draw=black] {\bf Agnostic Learning};

\node[pclass] (rpac) at (0, -2) { $\PAC_{\Poly}^{\Realizable}(\LossFunction)$ };
\node[fpclass] (rfpac) at (2, -2) { $\FPAC_{\Poly}^{\Realizable}(\LossFunction)$ };

\node[pclass] (rpac-erm) at (0, -1) { $\PAC_{\Poly}^{\Realizable}(\LossFunction)$ ERM };
\node[fpclass] (rfpac-emm) at (2, -1) { $\FPAC_{\Poly}^{\Realizable}(\LossFunction)$ EMM };

\node[grouping,fit=(rpac) (rfpac) (rpac-erm) (rfpac-emm),draw=black] {\bf Realizable Learning};


\node at (0, -2.6) {\bf \color{pcol!50!black} $\bm{\mathrm{PAC}}_{\bm{\mathrm{Poly}}}$ Learning};
\node at (2, -2.6) {\bf \color{fpcol!50!black} $\bm{\mathrm{FPAC}}_{\bm{\mathrm{Poly}}}$ Learning};


\begin{scope}[blend mode=screen]



\draw[imp] (uc.198) .. controls (0.7, 1.9) and (1.2, 2.1) .. (fpac-emm.166);

\draw[imp] (efficient-covering.325) .. controls +(270:0.25) and (1.2, 2.1) .. (fpac-emm.166);

\draw[imp] (uc.335) .. controls +(270:0.1) and (2, 2) .. (fpac-emm);

\draw[imp] (co-psg-cvx.190) .. controls (2, 2) .. (fpac-emm);
\draw[imp] (co-psg-lpz) .. controls (2, 1.97) .. (fpac-emm);
\draw[imp] (co-psg-bd.160) .. controls (2.7, 1.73) and (2, 2.3) .. (fpac-emm);
\draw[imp] (co-psg-time.155) .. controls (2.7, 1.6) and (2, 2.4) .. (fpac-emm);




\draw[imp] (fpac) -- (pac);
\draw[imp] (fpac-emm) -- (pac-erm);

\draw[imp] (fpac-emm) -- (fpac);
\draw[imp] (pac-erm) -- (pac);

\if 0
\draw[imp] (fpac-emm) .. controls (3, 0) .. (rfpac-emm);
\draw[imp] (pac-erm) .. controls (-1, 0) .. (rpac-erm);

\draw[imp] (fpac) .. controls (3, -1) .. (rfpac);
\draw[imp] (pac) .. controls (-1, -1) .. (rpac);
\fi

\draw[imp] (fpac-emm.358) .. controls (3.4, 0.9) and (3.4, -0.9) .. (rfpac-emm.2);
\draw[imp] (pac-erm.182) .. controls (-1.4, 0.9) and (-1.4, -0.9) .. (rpac-erm.178);

\draw[imp] (fpac.358) .. controls (3.4, -0.1) and (3.4, -1.9) .. (rfpac.2);
\draw[imp] (pac.182) .. controls (-1.4, -0.1) and (-1.4, -1.9) .. (rpac.178);


\draw[equiv] (rfpac) -- (rpac);
\draw[equiv] (rfpac-emm) -- (rpac-erm);


\draw[imp] (rfpac-emm) -- (rfpac);
\draw[imp] (rpac-erm) -- (rpac);

\end{scope}

\if 0 
\draw[imp,dashed] (rpac) .. controls  (-2, -2) and (-2, 2) .. node[left of=-0.5cm,xshift=0.6cm,yshift=0.2cm,rotate=86] { Assumption on $\LossFunction$ }
  (uc);
\fi

\end{tikzpicture} \\
\end{centering}

\caption{
Implications between membership in various poly-time PAC and FPAC classes.  In particular, for arbitrary but fixed $\LossFunction$, implication denotes \emph{implication of membership} of some $\HC$ (i.e., containment).
See \cref{thm:aemm-covering,thm:aemm-psg}. \todo{see ...} 
\todo{UC: l circ HC?}
}
\label{fig:efpac}
\todo{$(\HC, \LossFunction) \in$ notation?}
\todo{$\UC(\HC)$ vs $\UC(\LossFunction \circ \HC)$?}
\end{figure}
\fi

\iftrue

\begin{table}
\caption{
A Menagerie of Malfare-Minimizing Model-Classes
}
\label{table:emm-models}

\smallskip

{
\centering
\begin{tabular}{lcccc|rrc}
\multicolumn{5}{c|}{Model Class} & \multicolumn{3}{c}{Training Details} \\
Model Name & 
  $\X_{\DSeq}$ & 
  $\Theta_{\DSeq}$ & 
  $\Y$ & $\loss$
    & Sample Complexity & Learner\ \  
    & \hspace{-0.3cm}${\PAC_{\Poly}}$ \hspace{-0.3cm} \\
\hline
\hline
\if 0
$\lambda$-$\norm{\cdot}_{2}$ Linear Regressor & $\Ball_{2}^{\DSeq}$ & $\Ball_{2}^{\DSeq}$ & 
$\mathclap{\Ball_{1}^{1}}$
 & $\loss^{2}_{2}$
  & ? & 
  \hspace{-0.25cm}\small\Cref{alg:aemm-psg} & \cmark \\
Unit Polynomial Regr. & $\Ball_{1}^{1}$ & $\Ball_{\in
fty}^{1+\DSeq}$ & $\Ball_{1}^{1}$ 
  & $\loss^{2}_{2}$\S & 
  \tiny$\LandauO(\frac{\NGroups(2 + \DSeq)^{4}}{\varepsilon^{2}})$ TODO 6th pow? & & \\
\fi
$\lambda$-$\norm{\cdot}_{1}$ Linear SVM\textsuperscript{\ensuremath\diamondsuit} & $\Ball_{\infty}^{\DSeq}$ & $\lambda\Ball_{1}^{\DSeq}$ & $\R$ & $\loss_{\mathrm{hinge}}$ & $\LandauO(\frac{\NGroups \lambda^{2} \log \frac{\DSeq\NGroups}{\delta}}{\varepsilon^{2}})$ & \hspace{-0.25cm}\small\Cref{alg:aemm-psg} & \cmark \\ 
$\lambda$-$\norm{\cdot}_{2}$ Linear SVM\textsuperscript{\ensuremath\diamondsuit} & $\Ball_{2}^{\DSeq}$ & $\lambda\Ball_{2}^{\DSeq}$ & $\R$ & $\loss_{\mathrm{hinge}}$ & $\LandauO(\frac{\NGroups \lambda^{2} \log \frac{\NGroups}{\delta}}{\varepsilon^{2}})$ & & \\
$\lambda$-$\norm{\cdot}_{2}$ Logistic Regr.\textsuperscript{\ensuremath\diamondsuit} & $\Ball_{2}^{\DSeq}$ & $\lambda\Ball_{2}^{\DSeq}$ & $\R$ & $\loss_{\mathrm{H}}$ & $\LandauO(\frac{\NGroups \lambda^{2} \log \frac{\NGroups}{\delta}}{\varepsilon^{2}})$ & & \\
\iftrue
$\lambda$-$\norm{\cdot}_{2}$-$\Phi$ SVM / LR\textsuperscript{\ensuremath\diamondsuit} & $\Ball_{2}^{\DSeq}$ & $\lambda\Phi(\Ball_{2}^{\DSeq})$ & $\R$ & $\loss_{\mathrm{hinge}}$/$\loss_{\mathrm{H}}$ &  $\LandauO($\scalebox{0.8}[1]{$\frac{\NGroups\lambda^{2} \Diam^{2}(\Theta_{\DSeq}) \log \! \frac{\NGroups}{\delta}}{\varepsilon^{2}}$}$)$ & \!\!\! \scalebox{0.8}[1]{\scriptsize with kernel trick\textsuperscript{\ensuremath\clubsuit}} \!\!\!\! & \\
\fi
\hline
Decision Stump\textsuperscript{\ensuremath\heartsuit} & $\R^{\DSeq}$ & $T_{\DSeq}$ & $\Y$ & $\loss$ & $\LandauO(\frac{\NGroups\norm{\loss}_{\infty}^{2}\log\frac{\DSeq\NGroups}{\delta}}{\varepsilon^{2}})$ & 
\hspace{-0.25cm}\small\Cref{alg:aemm-covering} & \cmark \\
Depth-$k$ Decision Tree\textsuperscript{\ensuremath\heartsuit}\hspace{-0.2cm} & $\R^{\DSeq}$ & \hspace{-0.4cm} \scriptsize $T_{\DSeq}^{2^k-1} \times \Y^{2^k}$ \hspace{-0.4cm} & $\Y$ & $\loss$ & $\LandauO(\frac{\NGroups k 2^{k} \norm{\loss}_{\infty}^{2} \log\frac{\DSeq\NGroups}{\delta}}{\varepsilon^{2}})$ & & \cmark \\
Hyperplane Classifier & $\R^{\DSeq}$ & $\R^{\DSeq+1}$ & $\mathclap{\pm 1}$ & $\loss_{0\mathrm{-}1}$ & $\LandauO( \frac{\NGroups \DSeq \log \frac{\NGroups}{\delta}}{\varepsilon^{2}} )$ & & \xmark \\
\iftrue
McCulloch-Pitts 
  NN\textsuperscript{\ensuremath\spadesuit} & $\R^{\DSeq}$ & \hspace{-0.2cm} $\R^{\DSeq \! \times \! h} \!\! \times \! \R^{h \! \times \! k}$ \hspace{-0.2cm} 
  & \hspace{-0.25cm} \scalebox{0.8}[1]{$1, \dots, k$} \hspace{-0.25cm} & $\loss_{0\mathrm{-}1}$ & $\LandauOTilde(\frac{\NGroups h(\DSeq+k)}{\varepsilon^{2}})$ 
  & & \xmark \\
\fi
\hline
\\
\end{tabular}
 \\[-0.13cm]
}\todo{$k$-dim linear models? boolean formulae? reco systems? Q: if we limit user prefs, is everything poly, since most objects don't matter? experts}

{
\small
Here $\smash{\Ball_{q}^{\DSeq} \doteq \{ x \in \R^{\DSeq} \, | \,  \norm{x}_{q} \leq 1 \}}$ denotes the $\smash{\ell_{q}}$-unit ball in $\R^{\DSeq}$, and $T_{\DSeq} \doteq (\{1, \dots, \DSeq\} \times \pm 1 \times \R)$ denotes a univariate threshold function, which consists of a \emph{feature index}, a \emph{direction}, and a \emph{threshold value}.
Furthermore, 
$\loss_{\mathrm{hinge}}(\cdot, \cdot)$ denotes the \emph{hinge loss}, $\loss_{\mathrm{H}}(\cdot, \cdot)$ the \emph{cross entropy loss}, and $\loss_{0\mathrm{-}1}(\cdot, \cdot)$ the \emph{0-1 loss}.

\smallskip

{\textsuperscript{\ensuremath\diamondsuit} \footnotesize Sample complexity bounds via standard Rademacher average bounds for linear families
 \citep[see, e.g.,][Chapter~26]{shalev2014understanding}, leveraging the \emph{boundedness} and \emph{Lipschitz continuity} of this construction.}

\iftrue
{\textsuperscript{\ensuremath\clubsuit} \footnotesize 
Training efficiency via the kernel trick requires additional assumptions on the projection $\Phi(\cdot)$ and a compatible kernel $K(\cdot,\cdot)$.
}
\fi

{\textsuperscript{\ensuremath\heartsuit} \footnotesize VC-theoretic sample complexity bounds for decision trees and stumps are as derived by \citet{leboeuf2020decision}.} 

\iftrue
{\textsuperscript{\ensuremath\spadesuit} \footnotesize The 
McCulloch-Pitts (\citeyear{mcculloch1943logical}) neural network uses 
the \emph{threshold activation function}.
We analyze a 3-layer model, with hidden layer width $h$, for which the Natarajan dimension is 
 $\smash{\LandauOTilde\bigl(H(\DSeq+k)\bigr)}$.
}
\fi
\todo{Restore missing items: Linreg. square loss.  CHECK EVERYTHING!}
%
\draftnote{Add linear / polynomial regressor}
\todo{Better landau O notation}
\draftnote{TODO table 1!}
}

\todo{loss function?}
\end{table}
\fi

\draftnote{Convex optimization, efficiently enumerable uniform convergence, and stable optimization with grid search?}

\todo{
A Negative Result:
Show that $k$-means clustering is statistically easy, but computationally hard?  On the $\ell_{2}$ ball $\mathcal{B}_{\DSeq}$ with $k=d$.  Covering + minima argument for rade avg (SC).  STRUCTURE: SBWOC efficient covering routine exists.  Then exists polytime alg to optimize.  Contradicts P=NP!
But wait..., cover sizes are exponential in $d$ if log covers are poly?
}

\subsection{Efficient FPAC Learning with Convex Optimization}
\label{sec:ccl:co}
\if 0
Unlike in the realizable case, it is rather difficult to generally reduce fair-agnostic-PAC learning to agnostic-PAC learning.\todo{Counterexample?}
In this section, we address a slightly weaker question: do standard conditions under sufficient for agnostic-PAC learnability imply agnostic-FPAC learnability?
In particular, 
\fi
Here we present \cref{alg:aemm-psg}, which constructs a polynomial-time FPAC-learner under standard convex-optimization assumptions via the \emph{subgradient method}\footnote{
The \emph{subgradient} $\partial_{\theta} \, f(\theta)$ generalizes the \emph{gradient} $\grad_{\theta} f(\theta)$ of a function $f$ evaluated at $\theta$, and the two are coincident for \emph{differentiable convex functions}, i.e., $\partial_{\theta} \, f(\theta) = \grad_{\!\theta} f(\theta)$.
We adopt this setting since the subgradient method yields optimization convergence guarantees even for \emph{nondifferentiable} convex functions, and we assume throughout that a subgradient $\partial_{\theta} \, \loss(h(x; \theta), y)$ may be evaluated in $\Poly(\DSeq)$ time.
}
\citep{shor2012minimization}, with constants fully derived.
Sharper analyses are of course possible, and potential improvements are discussed subsequently, but our result is immediately practical, and can be applied verbatim to problems like \emph{generalized linear models} \citep{nelder1972generalized} and many \emph{kernel methods}\todo{cite KM? other cvx models?} with little analytical effort (under appropriate regularity conditions).
Further details on several such models are presented in \cref{table:emm-models}.

\if 0

\subsection{Convex Optimization}

Convex, Lipschitz, bounded, finite dimension, constant time evaluable gradient approx (implied by ctime value approx)

Is that enough?

pmean preserves all of these?  $\infty$ breaks smoothness though?

Self-concordancy: \url{https://en.wikipedia.org/wiki/Self-concordant_function}.  See \url{https://www.wolframalpha.com/input/?i=d%5E3%2Fdx%5E3+%28x%5Ep+%2B+1%29%5E%281%2Fp%29}  Use this: \url{https://www.wolframalpha.com/input/?i=d%2Fdx+%28x%5Ep+%2B+1%29%5E%281%2Fp%29}?

Composition of convex $\Malfare$ w/ convex $\theta$?  Tricky because multivariate.

See also

ELEG/CISC 867: Advanced Machine Learning
Spring 2019
Lecture 9: Convex Learning Problems
Lecturer: Xiugang Wu

CLB is good.
$\beta$-smooth (gradient is $\beta$-Lipschitz)

Strongly convex: stronger than strict \url{https://en.wikipedia.org/wiki/Convex_function#Strongly_convex_functions}

\fi

\todo{Does diminishing step size $\frac{1}{i}$ help: 
}


\todo{KNN: model size depends on training size; no PSG required.}

\begin{algorithm}
\caption{Approximate Empirical Malfare Minimization via the Subgradient Method}
\label{alg:aemm-psg}
\algrenewcommand\algorithmicindent{1.1em}
\begin{algorithmic}[1]

\Procedure{$\PACAlgo_{\mathrm{PSG}}$}{$\LossFunction, \HC, \theta_{0}, \SampleComplexity_{\UC}(\cdot, \cdot), \ProbDist_{1:\NGroups}, \wv, \Malfare(\cdot; \cdot), \varepsilon, \delta$}

\State {
 \Input $\lambda_{\LossFunction}$-Lipschitz loss function
 $\LossFunction$, $\lambda_{\HC}$-Lipschitz hypothesis class $\HC$ with parameter space $\Theta$ s.t.\ $\loss \circ \HC$ is convex,
initial guess $\theta_{0} \in \Theta$,
uniform-convergence sample-complexity bound $\SampleComplexity_{\UC}(\cdot, \cdot)$, 
group distributions $\ProbDist_{1:\NGroups}$, 
group weights $\wv$,
malfare function 
$\Malfare(\cdot; \cdot)$,
and optimality guarantee $\varepsilon$-$\delta$\todo{Could fold weights into $\Malfare$}}


\State {
 \Output $\varepsilon$-$\delta$-$\Malfare(\cdot; \cdot)$-optimal $\hat{h} \in \HC$ 
 }

\State $\SampleComplexity_{\PACAlgo} \gets \SampleComplexity_{\UC}({\frac{\varepsilon}{3}}, {\frac{\delta}{\NGroups}})$ \Comment{Determine sufficient sample size}

\State $\bm{z}_{1:g,1:\SampleComplexity_{\PACAlgo}} \distributed \ProbDist_{1}^{\SampleComplexity_{\PACAlgo}} \times \dots \times \ProbDist_{\NGroups}^{\SampleComplexity_{\PACAlgo}}$ \Comment{Draw training sample for each group}

\State $n \gets \ceil*{\left(\frac{3\Diam(\Theta)\lambda_{\LossFunction}\lambda_{\HC}}{\varepsilon}\right)^{2}}$ \Comment{Iteration count}

\State $\alpha \gets \frac{\Diam(\Theta)}{\lambda_{\LossFunction}\lambda_{\HC}\sqrt{n}}$ 
  \Comment{Learning rate ($\approx \frac{\varepsilon}{3\lambda_{\LossFunction}^{2}\lambda_{\HC}^{2}}$)}

\State $f(\theta): \Theta \mapsto \R_{0+} \doteq \Malfare \bigl( i \mapsto \ERisk(h(\cdot; \theta); \LossFunction, \bm{z}_{i}); \wv \bigr)$ \Comment{Define empirical malfare objective}

\State  $\hat{\theta} \gets \textsc{ProjectedSubgradient}(f, \Theta, \theta_{0}, n, \alpha)$ \Comment{Run PSG algorithm on empirical malfare} 


\State \Return $h(\cdot; \hat{\theta})$ \Comment{Return $\varepsilon$-$\delta$ optimal model} 

\EndProcedure
\end{algorithmic}
\end{algorithm}

\todo{uc to fpac emm sum prf}

\begin{restatable}[Efficient FPAC Learning via Convex Optimization]{theorem}{thmaemmpsg}
\label{thm:aemm-psg}
Suppose each hypothesis space $\HC_{\DSeq} \in \HC$ is indexed by 
$\Theta_{\DSeq} \subseteq \R^{\Poly(\DSeq)}$, i.e., $\HC_{\DSeq} = \{ h(\cdot; \theta) \mid \theta \in \Theta_{\DSeq} \}$, 
s.t.\ (Euclidean) $\Diam(\Theta_{\DSeq}) \in \Poly(\DSeq)$, and $\forall x \in \X, \theta \in \Theta_{\DSeq}$, $h(x; \theta)$ can be evaluated in $\Poly(\DSeq)$ time, and $\tilde{\theta} \in \R^{\Poly(\DSeq)}$ can be Euclidean-projected onto $\Theta_{\DSeq}$ in $\Poly(\DSeq)$ time.
Suppose also $\LossFunction$ such that $\forall x \in \X, y \in \Y: \ \theta \mapsto \LossFunction(y, h(x; \theta))$ is a \emph{convex function},
and suppose Lipschitz constants $\lambda_{\LossFunction}, \lambda_{\HC} \in \Poly(\DSeq)$ and some norm $\norm{\cdot}_{\Y}$ over $\Y$ s.t.\ $\LossFunction$ is $\lambda_{\LossFunction}$-$\norm{\cdot}_{\Y}$-$\abs{\cdot}$-Lipschitz in $\hat{y}$, i.e.,\todo{Perhaps it should be $\lambda_{\Theta_{\DSeq}}$, not $\lambda_{\HC}$?}

\[
\forall y, \hat{y}, \hat{y}' \in \Y: \ \abs{\LossFunction(y, \hat{y}) - \LossFunction(y, \hat{y}')} \leq \lambda_{\LossFunction}\norm{\hat{y} - \hat{y}'}_{\Y} \enspace,
\]
and also that each $\HC_{\DSeq}$ is $\lambda_{\HC}$-$\norm{\cdot}_{2}$-$\norm{\cdot}_{\Y}$-Lipschitz in $\theta$, i.e.,
\[
\forall x \in \X, \theta, \theta' \in \Theta_{\DSeq}: \ \norm{h(x; \theta) - h(x; \theta')}_{\Y} \leq \lambda_{\HC}\norm{\theta - \theta'}_{2} \enspace.
\]
Finally, assume $\LossFunction \circ \HC_{\DSeq}$ exhibits $\varepsilon$-$\delta$ \emph{uniform convergence} with sample complexity $\SampleComplexity_{\UC}(\varepsilon, \delta, \DSeq) \in \Poly({\frac{1}{\varepsilon}}, {\frac{1}{\delta}}, \DSeq)$.
\cyrus{Could assume instead $\HC$ is uniformly convergent, with $\SampleComplexity_{\UC}({\frac{\varepsilon}{2\lambda_{\LossFunction}}}, \frac{\delta}{\NGroups}, \DSeq)$ samples}

\if 0
Now, taking $\SampleComplexity_{\PACAlgo} = \SampleComplexity({\frac{\varepsilon}{2}}, {\frac{\delta}{\NGroups}}, \DSeq)$, and training sample $\bm{z} \distributed \ProbDist_{1}^{\SampleComplexity_{\PACAlgo}} \times \dots \times \ProbDist_{\NGroups}^{\SampleComplexity_{\PACAlgo}}$, define the empirical malfare objective
\[
f(\theta): \Theta_{\DSeq} \mapsto \R_{0+} \doteq \Malfare_{p} \bigl( i \mapsto \ERisk(h(\cdot; \theta); \LossFunction, \bm{z}_{i}); \wv \bigr) \enspace.
\]
Take $\PACAlgo(\ProbDist_{1:\NGroups}, \wv, \Malfare, \varepsilon, \delta, \DSeq)$ to run the subgradient algorithm from arbitrary initial $\theta_{0} \in \Theta_{\DSeq}$ on objective $f(\cdot)$ for $n \doteq \ceil*{ \frac{\Diam(\Theta_{\DSeq})^{4} + \lambda_{\LossFunction}^{4}\lambda_{\HC}^{4}}{\varepsilon^{2}} }$ (uniform) steps of size $\alpha \doteq n^{-\frac{1}{2}}$.
\fi

It then holds that, for arbitrary initial guess $\theta_{0} \in \Theta_{\DSeq}$, given any group distributions $\ProbDist_{1:\NGroups}$, group weights $\wv$, fair malfare function $\Malfare(\cdot; \cdot)$, $\varepsilon$, $\delta$, and $\DSeq$,
the algorithm (see~\cref{alg:aemm-psg})
\[
\PACAlgo(\ProbDist_{1:\NGroups}, \wv, \Malfare(\cdot; \cdot), \varepsilon, \delta, \DSeq) \doteq \PACAlgo_{\mathrm{PSG}}\bigl(\LossFunction, \HC_{\DSeq}, \theta_{0}, \SampleComplexity_{\UC}(\cdot, \cdot, \DSeq), \ProbDist_{1:\NGroups}, \wv, \Malfare(\cdot; \cdot), \varepsilon, \delta\bigr)
\]
FPAC-learns $(\HC, \LossFunction)$ with sample complexity $\SampleComplexity(\varepsilon, \delta, \DSeq, \NGroups) = \NGroups \cdot \SampleComplexity_{\UC}({\frac{\varepsilon}{3}}, {\frac{\delta}{\NGroups}}, \DSeq)$, and (training) time-complexity $\in \Poly({\frac{1}{\varepsilon}}, {\frac{1}{\delta}}, \DSeq, \NGroups)$, thus $(\HC, \LossFunction) \in \smash{\FPAC_{\Poly}^{\Agnostic}}$.
\end{restatable}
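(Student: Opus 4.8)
The plan is to prove FPAC-learnability by the usual two-sided decomposition: bound the \emph{optimization error} of the subgradient run by $\frac{\varepsilon}{3}$, bound the \emph{statistical error} (empirical versus true malfare) by $\frac{\varepsilon}{3}$ uniformly over $\HC_{\DSeq}$ with probability $\geq 1-\delta$, and then combine by the triangle inequality, applying the statistical bound once to the learned hypothesis $\hat{h}$ and once to a (near-)optimal $h^{*}$, which accounts for the remaining $\frac{\varepsilon}{3}$. The bulk of the work is verifying that the empirical malfare objective is the kind of object the projected subgradient method can handle, with Lipschitz/diameter constants matching the iteration count and step size chosen in \cref{alg:aemm-psg}.

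First I would establish the two structural facts about $f(\theta) \doteq \Malfare\bigl(i \mapsto \ERisk(h(\cdot;\theta);\LossFunction,\bm{z}_{i});\wv\bigr)$ needed for convergence. \textbf{Convexity}: each $\theta \mapsto \ERisk(h(\cdot;\theta);\LossFunction,\bm{z}_{i})$ is an average of the functions $\theta \mapsto \LossFunction(y,h(x;\theta))$, convex by hypothesis, hence convex; since $\Malfare = \Malfare_{p}$ with $p \geq 1$ is convex (\cref{thm:pow-mean-prop}~\cref{thm:pow-mean-prop:curvature}) and coordinatewise nondecreasing (\cref{def:cardinal-axioms}~item~\ref{def:cardinal-axioms:mono}), the composition $f$ is convex. \textbf{Lipschitz continuity}: by the contraction property (\cref{thm:pow-mean-prop}~item~\ref{thm:pow-mean-prop:contraction}), $|f(\theta)-f(\theta')| \leq \bigl\|\,i \mapsto \ERisk(h(\cdot;\theta);\LossFunction,\bm{z}_{i}) - \ERisk(h(\cdot;\theta');\LossFunction,\bm{z}_{i})\,\bigr\|_{\infty}$, and chaining the $\lambda_{\LossFunction}$-Lipschitzness of $\LossFunction$ in $\hat{y}$ with the $\lambda_{\HC}$-Lipschitzness of $h(x;\cdot)$ bounds every coordinate by $\lambda_{\LossFunction}\lambda_{\HC}\norm{\theta-\theta'}_{2}$, so $f$ is $\lambda_{\LossFunction}\lambda_{\HC}$-Lipschitz in Euclidean norm. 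I would also confirm an efficient subgradient oracle for $f$: a subgradient decomposes by the chain rule as a nonnegative combination (the gradient of $\Malfare_{p}$, or, in the egalitarian $p=\infty$ case, an indicator of an active group) of the per-group risk subgradients $\EExpect_{\bm{z}_{i}}[\partial_{\theta}\,\LossFunction(y,h(x;\theta))]$, each an average of $\Poly(\DSeq)$-time-computable terms.

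Next I would invoke the standard projected-subgradient bound: for a convex, $\Lambda$-Lipschitz objective over a set of Euclidean diameter $D$, running $n$ steps of size $\alpha = \frac{D}{\Lambda\sqrt{n}}$ from any $\theta_{0}$ produces a point of suboptimality at most $\frac{D\Lambda}{\sqrt{n}}$. With $\Lambda = \lambda_{\LossFunction}\lambda_{\HC}$, $D = \Diam(\Theta_{\DSeq})$, and the choice $n = \lceil (3D\Lambda/\varepsilon)^{2}\rceil$ of \cref{alg:aemm-psg}, this gives $f(\hat{\theta}) \leq \inf_{\theta \in \Theta_{\DSeq}} f(\theta) + \frac{\varepsilon}{3}$. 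Separately, drawing $m = \SampleComplexity_{\UC}(\frac{\varepsilon}{3},\frac{\delta}{\NGroups},\DSeq)$ samples per group and applying the uniform-convergence hypothesis with a union bound over the $\NGroups$ groups and the contraction property (exactly as in the proof of \cref{thm:ftfsl}) yields, with probability at least $1-\delta$, that $\bigl|\Malfare(i \mapsto \ERisk(h;\LossFunction,\bm{z}_{i});\wv) - \Malfare(i \mapsto \Risk(h;\LossFunction,\ProbDist_{i});\wv)\bigr| \leq \frac{\varepsilon}{3}$ for all $h \in \HC_{\DSeq}$ simultaneously. Combining — statistical bound at $\hat{h} = h(\cdot;\hat{\theta})$, then the optimization bound, then the statistical bound at a $\gamma$-near-optimal $h^{*}$, and letting $\gamma \to 0$ — gives $\Malfare(i \mapsto \Risk(\hat{h};\LossFunction,\ProbDist_{i});\wv) \leq \inf_{h^{*}\in\HC_{\DSeq}}\Malfare(i \mapsto \Risk(h^{*};\LossFunction,\ProbDist_{i});\wv) + \varepsilon$, the FPAC guarantee. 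The sample complexity is $\NGroups \cdot \SampleComplexity_{\UC}(\frac{\varepsilon}{3},\frac{\delta}{\NGroups},\DSeq) \in \Poly(\frac{1}{\varepsilon},\frac{1}{\delta},\DSeq,\NGroups)$; the running time is $n \in \Poly(\frac{1}{\varepsilon},\DSeq)$ iterations (as $\Diam(\Theta_{\DSeq}),\lambda_{\LossFunction},\lambda_{\HC} \in \Poly(\DSeq)$), each costing $\Poly(\DSeq,\NGroups,m)$ for the subgradient evaluation over all groups and samples plus one projection, hence $\Poly(\frac{1}{\varepsilon},\frac{1}{\delta},\DSeq,\NGroups)$ overall, establishing $(\HC,\LossFunction)\in\FPAC_{\Poly}^{\Agnostic}$.

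The main obstacle I expect is the structural analysis of the composite objective $\Malfare_{p} \circ (\text{risk vector}) \circ \HC_{\DSeq}$: checking that the convex-nondecreasing composition argument survives the limiting egalitarian case $p=\infty$, pinning down the composite Lipschitz constant precisely enough that \cref{alg:aemm-psg}'s iteration count and step size deliver exactly $\frac{\varepsilon}{3}$ optimization error, and confirming that a valid subgradient of $f$ can be assembled and evaluated in polynomial time (again the $p=\infty$ case, where $\Malfare_{\infty}$ is nonsmooth, requires the subgradient-of-a-max argument). Everything downstream — the union bound, the triangle-inequality combination, and the polynomial-complexity bookkeeping — is routine given those pieces together with \cref{thm:pow-mean-prop} and \cref{thm:ftfsl}.
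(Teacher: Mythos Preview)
Your proposal is correct and follows essentially the same route as the paper: establish convexity of the empirical malfare via the convex-nondecreasing composition rule (using \cref{thm:pow-mean-prop}~\cref{thm:pow-mean-prop:curvature} and monotonicity), establish the $\lambda_{\LossFunction}\lambda_{\HC}$-Lipschitz constant via contraction (\cref{thm:pow-mean-prop}~\cref{thm:pow-mean-prop:contraction}) and the chained Lipschitz bound, invoke the standard $D\Lambda/\sqrt{n}$ projected-subgradient guarantee with the iteration count and step size of \cref{alg:aemm-psg} to secure $\varepsilon/3$ optimization error, and combine with the $\varepsilon/3$ uniform-convergence bound (union over $\NGroups$ groups) exactly as in \cref{thm:ftfsl}. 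One small simplification: your anticipated obstacle at $p=\infty$ does not arise, since the FPAC axioms (via \cref{thm:pop-mean-prop}~\cref{thm:pop-mean-prop:pmean-unfair}) force $p\in[1,\infty)$, so the egalitarian limit need not be handled.
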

\cyrus{TODO: Pf sketch}

It is of course possible to show similar guarantees under relaxed conditions, and with sharper sample complexity and time complexity bounds; 
\cref{thm:aemm-psg} merely characterizes a simple and standard convex optimization setting under which standard convex-optimization guarantees for \emph{risk minimization} readily translate to \emph{malfare minimization}.
In particular,
we note that the Lipschitz assumptions can also be weakened without sacrificing (polynomial) time guarantees, and that more sophisticated optimization methods may yield (polynomially) more efficient optimization routines.
Furthermore, in risk minimization, stronger conditions like \emph{strong convexity} and \emph{self-concordancy} yield substantial improvements to optimization time complexity; future work shall determine whether and when such properties are preserved in composition with power-mean malfare functions, and thus whether the relevant highly-efficient specialized optimization methods are applicable.

Indeed we remark now that for $p \approx 1$, and when per-group samples have similar empirical risk values for all models $\bm{h}$ encountered in the traversal through parameter space, then
\[
\forall h \in \bm{h}: \ \Malfare_{p}(i \mapsto \ERisk(h; \loss, \bm{z}_{i}); \wv) \approx \Malfare_{1}(i \mapsto \ERisk(h; \loss, \bm{z}_{i}); \wv) \enspace,
\]
thus the optimization aspects of the problem mimic a standard (weighted) loss minimization problem.
In contrast, as $p \to \infty$, the task becomes a minimax optimization problem (see, e.g., the adversarial learning setting of \citep{mazzetto2021adversarial}), so more specific methods for such tasks, such as the \emph{mirror-prox} algorithm of \citet{juditsky2011solving}, as employed to great effect in a similar minimax setting by \citet{cortes2020agnostic}, may exhibit better (smoother, less oscillatory) behavior when multiple groups are near-tied for maximal empirical risk.

\todo{Contrast sample complexity with \cref{ftfsl} \cref{ftfsl:uc}}

\draftnote{Contrast with submodularity?}

\subsection{Uniform Convergence and Efficient Covering}
\label{sec:ccl:uc}

\begin{algorithm}
\caption{Approximate Empirical Malfare Minimization via Empirical Cover Enumeration}
\label{alg:aemm-covering}
\algrenewcommand\algorithmicindent{1.1em}
\begin{algorithmic}[1]

\Procedure{$\PACAlgo_{\ECover}$}{$\LossFunction, \HC, \ECover(\cdot, \cdot), \Covering(\cdot, \cdot), \ProbDist_{1:\NGroups}, \wv, \Malfare(\cdot; \cdot), \varepsilon, \delta$} 

\State {
 \Input Loss function
 $\LossFunction$, hypothesis class $\HC$, empirical covering routine $\ECover(\cdot, \cdot)$, uniform covering number bound $\Covering(\cdot, \cdot)$,
group distributions $\ProbDist_{1:\NGroups}$, 
group weights $\wv$,
malfare function 
$\Malfare(\cdot; \cdot)$,
solution optimality guarantee $\varepsilon$-$\delta$.\todo{Could fold weights into $\Malfare$}}

\State {
 \Output $\varepsilon$-$\delta$-$\Malfare(\cdot; \cdot)$-optimal $\hat{h} \in \HC$}
 
\State $
  \SampleComplexity_{\UC}(\varepsilon, \delta) \doteq \ceil*{\frac{8\,\norm{\loss}_{\infty}^{2}\ln \left(\smash{\sqrt[4]{\frac{2\NGroups}{\delta}}} \Covering(\LossFunction \circ \HC, \frac{\varepsilon}{4} ) \right)}{\varepsilon^{2}}}$ \Comment{Bound sample complexity (see \cref{lemma:group-cover}~\cref{lemma:group-cover:sc})}

\State $\SampleComplexity_{\PACAlgo} \gets { \SampleComplexity_{\UC}({\frac{\varepsilon}{3}}, \delta) }$ \Comment{Determine sufficient training sample size}

\State $\bm{z}_{1:g,1:\SampleComplexity_{\PACAlgo}} \distributed \ProbDist_{1}^{\SampleComplexity_{\PACAlgo}} \times \dots \times \ProbDist_{\NGroups}^{\SampleComplexity_{\PACAlgo}}$ \Comment{Draw training sample for each group}

\State $\gamma \doteq \frac{\varepsilon}{3\sqrt{\NGroups}}$ \Comment{Select cover resolution}

\State $\HC_{\gamma} \gets \ECover(\HC, \bigcirc_{i=1}^{\NGroups}\bm{z}_{i}, \gamma)$ \Comment{Enumerate empirical cover of concatenated samples}

\State $\displaystyle \hat{h} \gets \argmin_{h_{\gamma} \in \HC_{\gamma}} \Malfare \bigl( i \mapsto \ERisk(h_{\gamma}; \LossFunction, \bm{z}_{i}); \wv \bigr)$ \Comment{Perform EMM over $\HC_{\gamma}$}

\State \Return $\hat{h}$ \Comment{Return $\varepsilon$-$\delta$-$\Malfare(\cdot; \cdot)$ optimal model} 

\EndProcedure

\end{algorithmic}
\end{algorithm}

As we have seen in \cref{sec:ftfsl}, uniform convergence implies, and is often equivalent to, (fair) PAC-learnability.
However, these results all consider only \emph{statistical learning}, and to analyze \emph{computational learning} questions, we must introduce a strengthening of uniform convergence that considers \emph{computation}.
We now show sufficient conditions for polynomial-time FPAC-learnability via \emph{covering numbers}, which we use both to show uniform convergence and to construct an efficient training algorithm.
In particular, 
we show that if a \emph{polynomially-large cover} of each $\LossFunction \circ \HC_{\DSeq}$ 
exists, and can be efficiently enumerated, then $(\HC, \LossFunction) \in \FPAC_{\Poly}$.

In what follows, an $\ell_{2}$-$\gamma$-empirical-cover of loss family $(\LossFunction \circ \HC_{\DSeq}) \subseteq \X \mapsto \R_{0+}$ on a sample $\bm{z} \in (\X \times \Y)^{m}$ is any $\HC_{\DSeq,\gamma}$ such that
\[
\forall h \in \HC_{\DSeq}: \min_{h_{\gamma} \in \HC_{\DSeq,\gamma}}  \sqrt{\frac{1}{m} \sum_{i=1}^{m} \bigl((\loss \circ h)(\bm{z}_{i}) - (\loss \circ h_{\gamma})(\bm{z}_{i})\bigr)^{2}} \leq \gamma \enspace.
\]
We take $\Cover(\LossFunction \circ \HC_{\DSeq}, \bm{z}, \gamma)$ to denote such a cover, and $\Cover^{*}(\LossFunction \circ \HC_{\DSeq}, \bm{z}, \gamma)$ to denote such a cover \emph{of minimum cardinality}.
Finally, we define the \emph{uniform covering numbers}\todo{$\HC \ vs \ \HC_{\DSeq}$}\todo{Naming: there's a better term for it, but ``uniform'' by analogy with \emph{uniform entropy number}, see ``Bennett-type Generalization Bounds: Large-deviation Case and
Faster Rate of Convergence.''}
\[
\Covering(\LossFunction \circ \HC_{\DSeq}, m, \gamma) \doteq \sup_{\bm{z} \in (\X \times \Y)^{m}} \abs{ \Cover^{*}(\LossFunction \circ \HC_{\DSeq}, \bm{z}, \gamma) } \ \ \ \& \ \ \ \Covering(\LossFunction \circ \HC_{\DSeq}, \gamma) \doteq \sup_{m \in \N} \Covering(\LossFunction \circ \HC_{\DSeq}, m, \gamma)\enspace.
\]
This concept is crucial 
to both our
 \emph{uniform convergence} and \emph{optimization efficiency} guarantees.
In particular, our construction ensures that $\Covering(\LossFunction \circ \HC_{\DSeq}, \gamma)$ is sufficiently small so as to ensure \emph{polynomial training time} on a \emph{polynomially-large} training sample is sufficient to FPAC-learn $(\LossFunction, \HC)$.
\cyrus{TODO: cite example of UCN?  Haussler VC bound?}

\draftnote{Is $\HC_{\DSeq}$ confusing?  Better to say $\HC$?}

With this exposition complete, we present \cref{alg:aemm-covering}, which performs EMM on an empirical cover $\ECover(\LossFunction \circ \HC_{\DSeq}, \bm{z}, \gamma)$.
We now show that, under appropriate conditions, such a cover exists, is not superpolynomially larger than $\Covering(\LossFunction \circ \HC_{\DSeq}, \gamma)$, and may be efficiently enumerated.
Furthermore, we show that \cref{alg:aemm-covering} requires only a polynomially-large training sample, and thus is an FPAC-learner.

\todo{OLD:
\todo{necessary conds? PAC is obvious?}
\todo{More of a meta-thm? why not thm then coros for vc / loss functions / covering concepts}
As noted by \citep{blumer1989learnability}, PAC-learnability and finite VC-dimension \citep{vapnik1968uniform} are essentially equivalent (subject to basic regularity conditions).\todo{Cite VC}
We now extend this analogy to FPAC-learning.
\todo{consult blumer for computation details?}
\todo{cite VC?  Describe earlier in thesis?}
}
\if 0 
\begin{restatable}[Efficient FPAC-Learning 
by Covering]{theorem}{thmaeemcovering}
\label{thm:aemm-covering}
Suppose $\lambda_{\LossFunction}$-Lipschitz loss function $\LossFunction$ of bounded codomain (i.e., $\LossFunction: (\Y \times \Y) \to [0, \norm{\LossFunction}_{\infty}]$), and hypothesis class sequence $\HC$, s.t.\ 
$\forall \DSeq \in \N$, $\bm{z} \in (\X \times \Y)^{m}$, there exist
\begin{enumerate}[wide, labelwidth=0pt, labelindent=0pt]\setlength{\itemsep}{3pt}\setlength{\parskip}{0pt}
\item a $\gamma$-$\ell_{2}$ cover $\Cover^{*}(\HC_{\DSeq}, \bm{z}, \gamma)$, where $\abs{\Cover^{*}(\HC_{\DSeq}, \bm{z}, \gamma)} \leq \Covering(\HC_{\DSeq}, m, \gamma) \in \Poly(m, \frac{1}{\gamma}, \DSeq)$; and 
\item 
an algorithm to enumerate a $\gamma$-$\ell_{2}$ cover $\ECover(\HC_{\DSeq}, \bm{z}, \gamma)$ of size $\Poly \Covering(\HC_{\DSeq}, m, \gamma)$ 
TODO: cut $m$, also in proof.
in $\Poly(m, \frac{1}{\gamma},\DSeq)$ time.\todo{should be poly bigger in d?}
\draftnote{Enumerator can be probabilistic?  Sampling from $\Theta$ works?  Bayesian optimization? What about random hyperplanes?  Mostly about memory cplxty?}
\end{enumerate}
\fi
\begin{restatable}[Efficient FPAC-Learning 
by Covering]{theorem}{thmaeemcovering}
\label{thm:aemm-covering}
Suppose 
loss function $\LossFunction$ of bounded codomain (i.e., $\norm{\LossFunction}_{\infty}$ is bounded), 
and hypothesis class sequence $\HC$, s.t.\ 
$\forall m, \DSeq \in \N$, $\bm{z} \in (\X \times \Y)^{m}$, there exist
\begin{enumerate}[wide, labelwidth=0pt, labelindent=0pt]\setlength{\itemsep}{3pt}\setlength{\parskip}{0pt}
\item a $\gamma$-$\ell_{2}$ cover $\Cover^{*}(\LossFunction \circ \HC_{\DSeq}, \bm{z}, \gamma)$, where $\abs{\Cover^{*}(\LossFunction \circ \HC_{\DSeq}, \bm{z}, \gamma)} \leq \Covering(\LossFunction \circ \HC_{\DSeq}, 
 \gamma) \in \Poly(
 \frac{1}{\gamma}, \DSeq)$; and 
\item 
an algorithm to enumerate a $\gamma$-$\ell_{2}$ cover $\ECover(\LossFunction \circ \HC_{\DSeq}, \bm{z}, \gamma)$ of size $\Poly \Covering(\LossFunction \circ \HC_{\DSeq}, m, \gamma)$ 
in $\Poly(m, \frac{1}{\gamma},\DSeq)$ time.
\end{enumerate}
\if 0
has finite VC-dimension $d$, and the \emph{projection} of $\HC$ onto $\bm{z} \in (\X \times \Y)^{m}$ can be enumerated in $\Poly(m)$ time (wherein $d$ is a constant).
Then $\HC$ is FPAC-learnable.

\todo{integrate:}

Furthermore, suppose generic $\HC_{\DSeq} \subseteq \X \to \Y$, $\loss \in \Y \times \Y \to [0, 1]$, and assume that\todo{Scaling?}
\begin{enumerate}[wide, labelwidth=0pt, labelindent=0pt]\setlength{\itemsep}{3pt}\setlength{\parskip}{0pt}
\item $\Y$ is a set of diameter $D_{\Y}$, and $\loss$ is $\lambda_{\loss}$-Lipschitz continuous;
\item the $\ell_{2}$ covering number%
\footnote{The reader is invited to consult \citet{anthony2009neural} for an encyclopedic overview of various covering numbers and their applications to statistical learning theory.}
$\Covering(\HC_{\DSeq}, \bm{z}, \gamma) \in \Poly(m, \frac{1}{\gamma}, \DSeq)$;
\item a \emph{$\gamma$-precision cover} $\Cover(\HC_{\DSeq}, \bm{z}, \gamma)$ of size $\LandauO(\Covering(\HC_{\DSeq}, \bm{z}, \gamma))$ may be enumerated in $\Poly(m, \frac{1}{\gamma})$ time.
\end{enumerate}
\fi

It then holds that, given any group distributions $\ProbDist_{1:\NGroups}$, group weights $\wv$, fair malfare function $\Malfare(\cdot; \cdot)$, $\varepsilon$, $\delta$, and $\DSeq$,
the algorithm (see~\cref{alg:aemm-covering})
\[
\PACAlgo(\ProbDist_{1:\NGroups}, \wv, \Malfare(\cdot; \cdot), \varepsilon, \delta, \DSeq) \doteq \PACAlgo_{\ECover}(\LossFunction, \HC_{\DSeq}, \ECover(\cdot, \cdot), \Covering(\loss \circ \HC_{\DSeq}, \cdot), \ProbDist_{1:\NGroups}, \wv, \Malfare(\cdot; \cdot), \varepsilon, \delta) 
\]
FPAC-learns $(\LossFunction, \HC)$ in polynomial time. 
In particular, (1) is sufficient to show that $(\LossFunction, \HC)$ is FPAC learnable with polynomial \emph{sample complexity}, and (2) is required only to show polynomial \emph{training time complexity}.
\todo{Could start with cover of loss family?}
\end{restatable}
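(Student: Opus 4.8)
The plan is to mirror the structure used for \cref{thm:aemm-psg}: account separately for \emph{sampling error} (via uniform convergence) and \emph{optimization error} (via the resolution $\gamma$ of the enumerated cover), and show that both can be driven below $\varepsilon/3$ with polynomially many samples and polynomial computation, so that a triangle-inequality argument delivers an $\varepsilon$-$\delta$ malfare-optimal hypothesis. First I would establish the needed uniform-convergence ingredient: since $\LossFunction$ is bounded and the uniform covering number $\Covering(\LossFunction \circ \HC_{\DSeq}, \gamma)$ is polynomial in $\frac{1}{\gamma}$ and $\DSeq$, a standard chaining/union-bound argument (Haussler-style, e.g.\ as invoked in \cref{alg:aemm-covering} line~4 via the auxiliary lemma on $\SampleComplexity_{\UC}$) shows that $(\LossFunction, \HC_{\DSeq})$ is $\varepsilon$-$\delta$ uniformly convergent with $\SampleComplexity_{\UC}(\varepsilon, \delta) \in \Poly(\frac{1}{\varepsilon}, \frac{1}{\delta}, \DSeq)$. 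Drawing $\SampleComplexity_{\PACAlgo} = \SampleComplexity_{\UC}(\frac{\varepsilon}{3}, \frac{\delta}{\NGroups})$ samples per group and applying a union bound over the $\NGroups$ groups, with probability at least $1-\delta$ we have $\sup_{h \in \HC_{\DSeq}} |\Risk(h; \LossFunction, \ProbDist_{i}) - \ERisk(h; \LossFunction, \bm{z}_{i})| \leq \frac{\varepsilon}{3}$ for all $i$; then, exactly as in the proof of \cref{thm:ftfsl}, the contraction property of fair malfare (\cref{thm:pow-mean-prop}~item~\ref{thm:pow-mean-prop:contraction}, i.e.\ $1$-$\norm{\cdot}_{\infty}$-$\abs{\cdot}$-Lipschitzness of $\Malfare(\cdot; \wv)$) gives $|\Malfare(i \mapsto \ERisk(h; \LossFunction, \bm{z}_{i}); \wv) - \Malfare(i \mapsto \Risk(h; \LossFunction, \ProbDist_{i}); \wv)| \leq \frac{\varepsilon}{3}$ uniformly over $\HC_{\DSeq}$.

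Next I would handle the optimization/covering error. The empirical cover $\HC_{\gamma} = \ECover(\LossFunction \circ \HC_{\DSeq}, \bigcirc_{i=1}^{\NGroups} \bm{z}_{i}, \gamma)$ with $\gamma = \frac{\varepsilon}{3\sqrt{\NGroups}}$ is, by definition of the $\ell_2$-$\gamma$-empirical-cover on the \emph{concatenated} sample $\bigcirc_{i=1}^{\NGroups}\bm{z}_{i}$ of size $\NGroups\SampleComplexity_{\PACAlgo}$, such that for every $h \in \HC_{\DSeq}$ there is an $h_{\gamma} \in \HC_{\gamma}$ with $\frac{1}{\NGroups m}\sum$ of squared per-point loss-deviations at most $\gamma^2$. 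I would observe that this controls each per-group empirical risk deviation: $|\ERisk(h; \LossFunction, \bm{z}_{i}) - \ERisk(h_{\gamma}; \LossFunction, \bm{z}_{i})| \leq \sqrt{\frac{1}{m}\sum_{j}((\loss\circ h)(\bm{z}_{i,j}) - (\loss \circ h_{\gamma})(\bm{z}_{i,j}))^2} \leq \sqrt{\NGroups}\,\gamma = \frac{\varepsilon}{3}$ (the factor $\sqrt{\NGroups}$ comes from passing from the averaged-over-all-$\NGroups m$-points $\ell_2$ deviation to a single group's $m$-point $\ell_2$ deviation), hence again by malfare contraction $|\Malfare(i \mapsto \ERisk(h; \LossFunction, \bm{z}_{i}); \wv) - \Malfare(i \mapsto \ERisk(h_{\gamma}; \LossFunction, \bm{z}_{i}); \wv)| \leq \frac{\varepsilon}{3}$. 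Combining: let $\hat{h}$ be the EMM-over-$\HC_{\gamma}$ output (line~9), $h^{*}$ a true malfare minimizer in $\HC_{\DSeq}$, and $h^{*}_{\gamma} \in \HC_{\gamma}$ its cover-representative; then
\begin{align*}
\Malfare(i \mapsto \Risk(\hat h; \LossFunction, \ProbDist_{i}); \wv)
  &\leq \Malfare(i \mapsto \ERisk(\hat h; \LossFunction, \bm{z}_{i}); \wv) + \tfrac{\varepsilon}{3} \\
  &\leq \Malfare(i \mapsto \ERisk(h^{*}_{\gamma}; \LossFunction, \bm{z}_{i}); \wv) + \tfrac{\varepsilon}{3} \\
  &\leq \Malfare(i \mapsto \ERisk(h^{*}; \LossFunction, \bm{z}_{i}); \wv) + \tfrac{2\varepsilon}{3} \\
  &\leq \Malfare(i \mapsto \Risk(h^{*}; \LossFunction, \ProbDist_{i}); \wv) + \varepsilon \enspace,
\end{align*}
where the second line is optimality of $\hat h$ on $\HC_{\gamma}$, the third is the covering bound, and the first and fourth are the uniform-convergence bound. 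This establishes correctness with the claimed polynomial sample complexity $\NGroups \cdot \SampleComplexity_{\UC}(\frac{\varepsilon}{3}, \frac{\delta}{\NGroups})$, proving part~(1).

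For part~(2), the runtime analysis, I would argue: assumption~(2) gives an enumeration of $\ECover(\LossFunction \circ \HC_{\DSeq}, \bigcirc_i \bm{z}_i, \gamma)$ of size $\Poly(\Covering(\LossFunction \circ \HC_{\DSeq}, \NGroups m, \gamma)) = \Poly(\frac{1}{\gamma}, \DSeq)$ (using assumption~(1) to bound the uniform covering number itself by a polynomial, and noting $\gamma$, $m$ are polynomial in $\frac{1}{\varepsilon}, \frac{1}{\delta}, \DSeq, \NGroups$), produced in time $\Poly(\NGroups m, \frac{1}{\gamma}, \DSeq)$; then the final $\argmin$ over $\HC_{\gamma}$ evaluates, for each of polynomially many cover elements, the empirical malfare, which requires $\NGroups m$ loss evaluations plus one power-mean computation — all polynomial. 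Hence total training time is $\Poly(\frac{1}{\varepsilon}, \frac{1}{\delta}, \DSeq, \NGroups)$, so $(\LossFunction, \HC) \in \FPAC_{\Poly}$. The main obstacle I anticipate is bookkeeping the resolution/sample-size dependencies so that $\gamma = \frac{\varepsilon}{3\sqrt{\NGroups}}$ and the auxiliary $\SampleComplexity_{\UC}$ formula in line~4 of \cref{alg:aemm-covering} are mutually consistent — in particular verifying that plugging the polynomial uniform-covering-number bound into a Bennett- or Hoeffding-type uniform-convergence argument genuinely yields $\SampleComplexity_{\UC} \in \Poly(\frac{1}{\varepsilon},\frac{1}{\delta},\DSeq)$ and that the covering is taken on the concatenated sample at the right scale (so the $\sqrt{\NGroups}$ factors cancel correctly); the probabilistic and Lipschitz/contraction pieces are otherwise routine given \cref{thm:pow-mean-prop} and the uniform-convergence machinery already developed.
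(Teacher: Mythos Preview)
Your proposal is correct and follows essentially the same approach as the paper's proof: bound estimation error by $\varepsilon/3$ via uniform convergence (the paper packages this as \cref{lemma:group-cover}~\cref{lemma:group-cover:sc}, deriving the explicit sample-complexity formula from Rademacher discretization and Massart's lemma), bound optimization error by $\varepsilon/3$ via the $\frac{\varepsilon}{3\sqrt{\NGroups}}$-cover on the concatenated sample (your direct $\sqrt{\NGroups}$-inflation argument is exactly what the paper proves, by contradiction, as \cref{lemma:group-cover}~\cref{lemma:group-cover:simultaneous}), and then telescope using malfare contraction. The only cosmetic difference is that the paper telescopes through the full-class empirical minimizer $\tilde{h}$ rather than through a cover-representative $h^{*}_{\gamma}$ of $h^{*}$, but the two chains are equivalent.
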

\cyrus{PROOF SKETCH}
\if 0
\begin{proof}[Proof Sketch]
%
First, note that the Vapnik-Chervonenkis conditions on $\HC$ immediately imply that, given samples $\bm{z}_{1:n}$, each of size $m$, each covering number $\Covering(\HC_{\DSeq}, \bm{z}_{i}, \gamma) \in \Poly(m, \frac{1}{\gamma}) \leq \sum_{i=1}^{d} \binom{m}{i} \leq m^{d}$, by the Sauer-Shelah lemma,\todo{cite} where the VC-dimension $d$ is \emph{constant} (\wrt~fixed $\HC$), and also, by assumption, this cover may be enumerated in $\Poly(m) = \Poly(m, \frac{1}{\gamma})$ time.
The first case thus reduces to a special case of the second.

I now sketch a proof of the general case.
We first take \emph{cover precision} $\gamma \doteq \frac{\varepsilon}{3D_{\Y}\lambda_{\loss}}$\todo{/range}.
We then show that, taking $\hat{h}$ to be the EMM solution \wrt\ \emph{a cover} of this precision, and $\tilde{h}$ to be the exact EMM solution \wrt\ \emph{all of $\HC$}, it holds that
\[
\ERisk(\hat{h}; \LossFunction, \bm{z})
  \leq \ERisk(\tilde{h}; \LossFunction, \bm{z}) + \lambda_{\loss}\gamma
  \leq \ERisk(\tilde{h}; \LossFunction, \bm{z}) + \frac{\varepsilon}{3} \ \enspace,
  \todo{z or x/y?}
\]
i.e., the empirical malfare of $\hat{h}$ does not exceed that of $\tilde{h}$ by more than $\frac{\varepsilon}{3}$. \todo{What about $3D_{\Y}$?}
We then apply standard covering-number bounds on Rademacher averages to show that a (polynomial) sufficient $m$ exists to guarantee $\tilde{h}$ generalizes with $\frac{2\varepsilon}{3}$-$\delta$ malfare error guarantees.
In particular, by the Dudley discretization method, a $\frac{\varepsilon}{3}$ \emph{discretization error} term appears, and we set $m$ such that a $\frac{\varepsilon}{3}$ covering number term appears, for a total of $\frac{2\varepsilon}{3}$.
Combining these bounds on the \emph{optimization} and \emph{approximation} errors yields the desired additive $\varepsilon$-$\delta$ guarantees.

Note that (3) implies that the cover may be efficiently enumerated, thus guaranteeing polynomial time complexity.
\todo{give general case; handle range.}
\todo{what is $m$?  Sketch p1 is good.}
First, note that the Vapnik-Chervonenkis conditions on $\HC$ immediately imply $\dots \leq \sum \dots \leq m^{d}$, where $d$ is \emph{constant} (\wrt~fixed $\HC$), and also that ... may be enumerated in $\Poly(m) = \Poly(m, \frac{1}{\gamma})$ time.
We thus note that the VC case follows immediately from the general case, and now seek to prove the latter.
\end{proof}
\fi

This immediately implies that fixed $\HC$ that are \emph{finite}, or of bounded \emph{VC-dimension}, \emph{Natarajan dimension}, \emph{pseudodimension}, or $\gamma$-\emph{fat-shattering dimension}\footnote{The reader is invited to consult \citep{anthony2009neural} for an encyclopedic overview of various combinatorial dimensions, associated covering-number and shattering-coefficient concepts, and their applications to statistical learning theory.}
 are FPAC-learnable. 
For instance, this includes \emph{classifiers} such as \emph{all possible languages} of \emph{Boolean formulae} over (constant) $d$ variables, or \emph{halfspaces} (i.e., linear hard classifiers $\HC \doteq \{ \vec{x} \mapsto \sgn(\vec{x} \cdot \vec{w}) \mid \vec{w} \in \R^{d} \}$), as well as GLM, subject to regularity constraints to appropriately control the loss function.
However, it is perhaps not as powerful as it appears; it applies to \emph{fixed hypothesis classes}, thus each of the above linear models over $\R^{d}$ is polynomial-time FPAC learnable, but it says nothing about their performance as $d \to \infty$.

This is essentially because the \emph{statistical analysis} to show polynomial \emph{sample complexity} requires only that $\smash{\ln} \Covering(\LossFunction \circ \smash{\HC_{\DSeq}}, \gamma) \in \Poly(\smash{\frac{1}{\gamma}}, \DSeq)$, whereas our \emph{training algorithm} must actually \emph{enumerate} an empirical cover, which yields the (exponentially) stronger requirement that $\Covering(\LossFunction \circ \HC_{\DSeq}, \gamma) \in \Poly(\frac{1}{\gamma}, \DSeq)$ for polynomial \emph{time complexity}.
Indeed, we see that while the covering numbers we assume imply uniform convergence with \emph{sample complexity} polynomial in $\DSeq$, when covering numbers grow exponentially in $\DSeq$, then our algorithm yields only \emph{exponential} time complexity in $\DSeq$.
Consequently, the result only implies polynomial-time algorithms w.r.t.\ sequences that grow slowly in complexity; e.g., sequences of linear classifiers that grow only \emph{logarithmically} in dimension, i.e., $\HC_{\DSeq} \doteq  \{ \vec{x} \mapsto \sgn(\vec{x} \cdot \vec{w}) \mid \vec{w} \in \R^{\floor{\ln d}} \}$. 
Further details on when optimizing such models via covering is computationally efficient are presented in \cref{table:emm-models}.

\draftnote{could discuss thresholds, dtrees, MP NNs?}

Note also that \cref{thm:aemm-covering} leverages \emph{covering arguments} in both their \emph{statistical} and \emph{computational} capacities.
Statistical bounds based on covering are generally well-regarded, particularly when strong analytical bounds on covering numbers are available, although sharper results are possible (e.g., through the \emph{entropy integral} or \emph{majorizing measures}\todo{cite}).
Furthermore, while we do construct a \emph{polynomial time} training algorithm, in many cases, specific optimization methods (e.g., stochastic gradient descent or Newton's method) exist to perform EMM \emph{more efficiently} and with \emph{higher accuracy}.\todo{CONNECT EARLIER}
Worse yet, \emph{efficient enumerability} of a cover may be non-trivial in some cases;
while most covering arguments in the wild\todo{examples} are either constructive, or compositional to the point where each component can easily be constructed, it may hold for some problems that computing or enumerating a cover is computationally prohibitive.\todo{coding theory cxn.}
\todo{It remains an open problem whether problems with polynomial size cover but no efficient enumerability exist.}

\todo{Repeated: Despite these limitations, this theorem characterizes a large class in which fair learning is tractable (both statistically and computationally).
In particular, it contains VC, bounded pseudodimension, and bounded fat-shattering dimension classes in which $\varepsilon$-ERM is efficient.\todo{citations / neural covering papers}}
\todo{Reference Hu paper}

\draftnote{Say: The details are largely mechanical and cumbersome, however it is worth 
mentioning that the main novelty in this result is that we must consider covers not of \emph{a single samples}, but rather covers that are sufficiently rich so as approximate the behavior of $\LossFunction \circ \HC$ \emph{across multiple groups}.
}

\draftnote{In practice this algorithm is not particularly efficient. However it is straightforward to modify the search for many particular problems to improve its efficiency.n in particular it is generally not necessary to enumerate the entire cover but rather how to use a branch and bound type search to enumerate locally with a resolution. It is also reasonable to use convex relaxations initially up my contacts relaxation of the target objective in order to restrict the search space
Finally we note that we don't require a deterministic algorithm to guarantee in gamma enumeration research space of the six space a randomized ie sampling based approach EG Bayesian optimization is sufficient so long as it can guarantee that with high probability it will draw at least one gamma optimal point adaptive.
For example it is easy 2 sample uniformly from the space of equivalence classes of projections of Piper planes onto the training set
Which can reduce the space complexity increase the paralyzed ability of and in some cases where do pecans come from reduce the time complexity
}

\todo{include this?}
\todo{Note that a vaguely similar strategy appears in \citep{hu2020fair}, where the projection of all linear classifiers onto $\bm{z}$ is enumerated...}
\todo{more efficient with localization, cover enumeration jumping?}

\paragraph{On Compositionality and Coverability Conditions}
The covers and covering numbers discussed above are of course properties of each $\LossFunction \circ \HC_{\DSeq}$, rather than $\LossFunction$ and each $\HC_{\DSeq}$ individually.
This creates proof obligation for each loss function of interest, in contrast to \cref{thm:aemm-psg}, wherein only Lipschitz continuity of $\LossFunction$ is assumed, and the remaining analysis is on $\HC$.
Fortunately, in many cases it is still possible to analyze covers of each $\HC_{\DSeq}$ in isolation, and then draw conclusions across a broad family of $\LossFunction$ composed with each $\HC_{\DSeq}$.
In particular, via standard properties of covering numbers, if $\LossFunction$ is Lipschitz continuous \wrt\ 
some pseudonorm $\norm{\cdot}_{\Y}$ over $\Y$, and $\gamma$-$\ell_{2}$ covering numbers of each $\HC_{\DSeq}$ \wrt\ $\norm{\cdot}_{\Y}$ are well-behaved, it can be shown that the conditions of \cref{thm:aemm-covering} are met.
This is useful as, for example, regression losses like \emph{square error}, \emph{absolute error}, and \emph{Huber loss} are all Lipschitz continuous on bounded domains, and thus analysis on each $\HC_{\DSeq}$ alone is sufficient to apply \cref{thm:aemm-covering} with each such loss function.\cyrus{Is this even interesting?  Is there enough detail?}

\if 0

\cyrus{END CLASSIC UC}

BEGIN ANOTHER TAKE ON UC:

\subsection{Uniform Convergence and Efficient Covering}
\label{sec:ccl:uc}
As we have seen in \cref{sec:ftfsl}, uniform convergence implies, and is often equivalent to, (fair) PAC-learnability.
However, these results all consider only \emph{statistical learning}, and to analyze \emph{computational learning} questions, we introduce a strengthening of uniform convergence that considers \emph{computation}.

In particular, in the case where PAC-learnability and UC are equivalent, we show that if a \emph{cover} of $\F$ may be efficiently enumerated, then $(\HC, \LossFunction) \in \FPAC_{\Poly}$.

\draftnote{Result 2.  Uniform convergence implies FPAC or nobility. Plus efficient enumerability I implies efficient fair. Q:  Does this mean PAC implies FPAC?
  ALWAYS TRUE THAT UC => PAC (citation), but EX WHERE PAC =/=> UC.  
  However, when $PAC = UC$, then PAC =/=> FPAC.
}


\cyrus{
If there are there problems (datasets) for which a polynomial-size epsilon-cover (w.r.t. l2 distance) of a function family exists, but no polynomial-time algorithm can identify an epsilon-cover that is even polynomially larger than it (subject to your favored hardness conjecture)?
I need this because I want to show that approximate welfare (ish) minimization is statistically tractable by covering arguments, but that this doesn't translate to a computational tractability proof.  An example from learning theory would be awesome, but I think I could convert pretty much any covering problem into a learning problem to show hardness in a generic sense.  Something like "Neural network X has VC dimension Y, but no algorithm can enumerate an empirical cover of X on a sample of size m in Poly(Y, m) time."
}

\cyrus{In this section: we don't really use $\DSeq$: only talk about fixed models?  Coarser complexity measure, but methods won't work in general.}

\begin{lemma}[Uniform Convergence and Efficient Enumerability]
\label{cor:uc-eff}

Suppose any $\bm{z} \in (\X \times \Y)^{m}$, and denote the projection of $\F$ onto 
$\bm{z}$ as $\F(\bm{z}) \in \R^{m}$.

Let $D \doteq \limsup_{\DSeq \to \infty} \VC(\HC_{\DSeq})$

Let $D \doteq \limsup_{\DSeq \to \infty} \PDim(\HC_{\DSeq})$

Let $D_{\gamma} \doteq \limsup_{\DSeq \to \infty} \FSDim_{\gamma}(\HC_{\DSeq})$ \todo{FSdim inherently $\ell_{\infty}$ error?  So do it w/ covering?}

\todo{Don't we have this thm worked out in detail?}

\draftnote{OLD:}
\color{green!50!black}

Suppose $\HC$ has finite VC-dimension $d$, and the \emph{projection} of $\HC$ onto $\bm{z} \in (\X \times \Y)^{m}$ can be enumerated in $\Poly(m)$ time (wherein $d$ is a constant).
Then $\HC$ is FPAC-learnable.

Furthermore, suppose generic $\HC \subseteq \X \to \Y$, $\loss \in \Y \times \Y \to [0, 1]$, and assume that\todo{Scaling?}
\begin{enumerate}[wide, labelwidth=0pt, labelindent=0pt]\setlength{\itemsep}{3pt}\setlength{\parskip}{0pt}
\item $\Y$ is a set of diameter $\Diam(\Y)$ (w.r.t.~some norm $\Y$), and $\loss$ is $\lambda_{\loss}$-Lipschitz continuous;
\item for any $\bm{z} \in (\X \times \Y)^{m}$, the $\ell_{2}$ empirical covering number%
\footnote{The reader is invited to consult \citep{anthony2009neural} for an encyclopedic overview of various covering numbers and their applications to statistical learning theory.}
obeys 
$\Covering(\HC, \bm{z}, \gamma) \in \Poly(m, \frac{1}{\gamma})$;
\item a \emph{$\gamma$-precision cover} $\Cover(\HC, \bm{z}, \gamma)$ of size $\LandauO(\Covering(\HC, \bm{z}, \gamma))$ may be enumerated in $\Poly(m, \frac{1}{\gamma})$ time. \todo{Split me}
\end{enumerate}
Then $\HC$ is efficiently FPAC-learnable.
Items (1) and (2) are sufficient to show that $\HC$ is FPAC learnable; (3) is required only to show \emph{efficiency}.

Suppose there exists a routine $\UCAlgo$ that \emph{enumerates} a $\gamma$-\emph{cover} of [...] of size $\LandauTheta( \Covering(\HC, \bm{z}, \gamma) )$, where each \emph{enumeration step} requires $\Poly(\dots)$ time.

If $(\HC, \LossFunction) \in \UC$, then enumerating \emph{all of} $\UCAlgo$ requires $\Poly(\dots)$ time, and furthermore, we may employ $\UCAlgo$ as a subroutine in constructing a $\PACAlgo$ to PAC-learn $(\HC, \LossFunction)$ in polynomial time, i.e., $(\HC, \LossFunction) \in \FPAC_{\Poly}$

\end{lemma}
\begin{proof}

We first show that $\UCAlgo$ may be run to completion in polynomial time.
\todo{majorizing measures: Gaussian avg <= (...) * Rade avg, GA bounds cover size as poly, poly * poly is poly.}

We now construct a $\PAC_{\Poly}$ algorithm given $\UCAlgo$.
\todo{Select $\gamma$, enumerate cover, ERM over cover.}

\end{proof}

\begin{corollary}[Uniform Convergence, PAC-Learning, and Poly-Time Learnability]
\label{cor:uc-ftsl}

If $\UC(\LossFunction) = \PAC(\LossFunction)$, then $(\HC, \LossFunction) \in \PAC \implies (\HC, \LossFunction) \in \FPAC$. [Thus $\PAC(\LossFunction) = \FPAC(\LossFunction)$]

Furthermore, $(\HC, \LossFunction) \in \UC_{\Poly} \implies (\HC, \LossFunction) \in \FPAC_{\Poly}$.

\todo{and polytime}
\todo{agnostic}

TODO: NOTATION MAKES LITTLE SENSE; SETS NOT FUNCTIONS?
Can parameterize by only $\LossFunction$ to make more sense?  Sth like

\end{corollary}

While \cref{cor:uc-ftsl} may seem a bit tautological, for many notable machine learning tasks (e.g., finite-class classification under $0$-$1$ loss \citep{shalev2014understanding}, regression \citep{?}, etc), it holds that $\UC(\LossFunction) = \PAC(\LossFunction)$.

\begin{conjecture}[Efficient Learning Covering Duality]
\label{conj:poly-uc-poly-pac}
$(\HC, \LossFunction) \in \PAC_{\Poly} \implies (\HC, \LossFunction) \in \UC_{\Poly}$
\end{conjecture}
\begin{consequence}
\label{cons:poly-uc-poly-pac}
Suppose \cref{conj:poly-uc-poly-pac} and $\LossFunction$ such that $\UC(\LossFunction) = \PAC(\LossFunction)$.
Then $(\HC, \LossFunction) \in \PAC_{\Poly} \implies (\HC, \LossFunction) \in \FPAC_{\Poly}$. [Then $\PAC_{\Poly}(\LossFunction) = \FPAC_{\Poly}(\LossFunction)$.]
\begin{consequence}
\begin{proof}
This result follows directly from \cref{conj:poly-uc-poly-pac} and \cref{cor:uc-ftsl}.
\begin{proof}

\begin{observation}

\draftnote{Move me?}

Although it holds $\forall (\HC, \LossFunction)$ that $\UC(\HC, \LossFunction) \implies \PAC(\HC, \LossFunction)$, in general the converse \emph{is not true}.

TODO: NOTE: TRUE FOR 0-1 binary CLASSIFICATION; REGRESSION (sq or abs loss); XXX?

See fundamental theorem of learning theory (thm.~6.2) of \citep{shalev2014understanding}.
TODO: CALL IT ``of statistical learning.''

See also multi-class version, thm. 29.3.  Note: Natarajan only, contrast with graph dim.  Seem to need finite classes, $k < \infty$? Conclusion: PAC = UC.

Nonlearnable: remark~6.3
PROBLEM: their construction is slightly different; I think it's OK if loss is asymmetric. Make note of asym loss?

TODO: Boundedness not sufficient; stability is? Is this what we show?  Or a proof technique we use?

TODO: thm: non-UC but still realizable PAC for regression-type problems: Take $\HC$, union with a non-convergent class that gets everything at least slightly wrong: e.g., if we're doing regression with codomain $[0, 1]$, the non-convergent part of the HC has codomain $\R \setminus [2, 3]$.

\end{observation}

TODO: 6.3 MAKES NO SENSE; DON'T HAVE UC FOR UNBOUNDED LOSS.

NONUNIFORM CONVERGENCE?

uml 13.7 exercise 2:

this mean est has no UC, but is learnable.
FPAC => PAC =/> UC; use to show FPAC =/> UC

\begin{observation}
Exist $\HC$ such that enumerating projection is NP-hard.

Reduction: construct $\HC$ s.t. $\hat{h}$ corresponds to soln of an NP-hard decision problem?

Don't see a way to do it without an $\NGroups$ parameter.
\end{observation}

\begin{observation}[Same subject: now parameterizing family.]

MUST EXIST: UC = PAC, and poly enum UC implies (PAC = poly PAC), thus exist UC HC that don't admit poly enum.

EXAMPLES: Note: these assume $\DSeq$ is a parameter!  I call this a ``meta-family'' $\F_{\dots}$.
  Is poly-PAC =/= PAC for fixed HC?  Seems to suggest / imply yes in 8.2.2:
  
  ``On the other hand, it is worthwhile noticing that, if we fix one specific hypoth-
esis class, say, axis aligned rectangles in some fixed dimension, $n$, then there exist
efficient learning algorithms for this class. In other words, there are successful
agnostic PAC learners that run in time polynomial in $\frac{1}{\varepsilon}$ and $\frac{1}{\delta}$ (but their dependence on the dimension $n$ is not polynomial).''

  Boolean conjunctions: \S 8.2.3 of UML.
  Halfspaces (well known).
  Hyperrectangles.
  
  \S 8.4: mentions small Boolean functions.
  
  Neural networks \url{https://papers.nips.cc/paper/125-training-a-3-node-neural-network-is-np-complete.pdf}
  
  \draftnote{Learning real-valued decision trees, depth bounded by sample size, continuous input, may give meaningful example?}
  
\end{observation}

\fi

\todo{Stability is cut.  Restore it for general algo that scales badly with $\NGroups$?  Based on covering reweightings?}

\if 0
\paragraph{NEW STABILITY}

\todo{Cut me?}

Stability implies \cref{cor:uc-ftsl} condition $\UC(\LossFunction) = \PAC(\LossFunction)$???  does it make sense?

Theoretical consequence, but uses inefficient enumeration ctrn; must enumerate $\HC$ (unknown how).

Alternative ctrn in terms of $\PACAlgo$ (also inefficient in $\NGroups$ though).

\paragraph{The Non-Equivalence Conjecture}

\begin{conjecture}
There exist $(\HC, \LossFunction)$ such that evaluating
\[
\argmin_{\hat{h} \in \HC_{\DSeq}} \ERisk(\hat{h}; \LossFunction, \bm{z})
\]
is NP-hard. \draftnote{Hardness is in $\frac{1}{\varepsilon}, \DSeq$.

\todo{Need something like \whp{} over $\bm{z}$?}

\end{conjecture}
\begin{consequence}
There exist learning problems characterized by $\LossFunction$ such that PAC-learnability does not imply FPAC learnability.
Thus we have
\[
\FPAC_{\Poly}(\LossFunction) \subset \PAC_{\Poly}(\LossFunction) \enspace,
\]
i.e., the \emph{FPAC-learnable classes} are a \emph{strict subset} of the \emph{PAC-learnable classes}.
\end{consequence}
\begin{proof}
Use fundamental theorem of statistical learning + the conjecture?
\end{proof}
\begin{observation}
This contrasts ???, which shows equivalence under \emph{realizability}.
\end{observation}

\fi

\todo{Good stuff below.}

\if 0

\section{Online cost model}

TODO: GENERIC METHOD

\section{Welfare and Lipschitz Constants: Take XVII}


Take $R \doteq \Welfare_{p}^{p}(\lv_{2:\NGroups}, \wv_{2:\NGroups})$ (improper).
\[
\frac{\partial}{\partial \lv_{1}} \Welfare_{p}(\lv; \wv) = \wv_{1} \lv_{1}^{p - 1}(\wv_{1} \lv_{1}^{p} + R(1 - \wv_{1}))^{-1 + 1/p}
\]

\[
= \wv_{1} \lv_{1}^{-(1 - p)}(\wv_{1} \lv_{1}^{p} + (1 - \wv_{1})R)^{-(1 - 1/p)}
\]

\[
= \left(\wv_{1}^{-1} \lv_{1}^{1 - p}(\wv_{1} \lv_{1}^{p} + (1 - \wv_{1})R)^{1 - 1/p}\right)^{-1}
\]

Consider extreme values of $R = 0, \frange^{p} < \infty$ By Anti? monotonicity, for $p < 0$, we have

\[
\in \left[ \left(\wv_{1}^{-1} \lv_{1}^{1 - p}(\wv_{1} \lv_{1}^{p} + (1 - \wv_{1})\infty)^{1 - 1/p}\right)^{-1}, \left(\wv_{1}^{-1} \lv_{1}^{1 - p}(\wv_{1} \lv_{1}^{p} + (1 - \wv_{1})0)^{1 - 1/p}\right)^{-1} \right]
\]
\[
= \left[ 0, \left(\wv_{1}^{-1} \lv_{1}^{1 - p}(\wv_{1} \lv_{1}^{p})^{1 - 1/p}\right)^{-1} \right]
\]

\[
= \left[ 0, \wv_{1}^{\frac{1}{p}} \right] \enspace.
\]

Consequently, for $p < 0$, we have that $\Welfare_{p}$ is $\Mean_{-\infty}^{\frac{1}{p}}(\wv)$-Lipschitz w.r.t. $\norm{\cdot}_\infty$.

Hessian: 
More Hessian: 

\subsection{The Bad Case}

For $p \in (0, 1)$, we have
\[
\in \left[ \left(\wv_{1}^{-1} \lv_{1}^{1 - p}(\wv_{1} \lv_{1}^{p} + (1 - \wv_{1})0 )^{1 - 1/p}\right)^{-1}, \left(\wv_{1}^{-1} \lv_{1}^{1 - p}(\wv_{1} \lv_{1}^{p} + (1 - \wv_{1})\frange^{p} )^{1 - 1/p}\right)^{-1} \right]
\]
\[
= \left[ ?, \left(\wv_{1} \lv_{1}^{p - 1}(\wv_{1} \lv_{1}^{p} + (1 - \wv_{1})\frange^{p} )^{\frac{1}{p} - 1} \right]
\]

[worst case: pretend $\lv_{1} = \frange$ on right, leave it on left?  How loose was this?]
\[
\subseteq \left[ ?, \wv_{1}^{\frac{1}{p}} \lv_{1}^{p - 1} \frange^{1 - p} \right]
\]

This gives non-Lipschitz behavior as $\lv_{1} \to 0$, which is as expected.

TODO: FIX: minimum.

TODO: ESTIMATION THM?

\fi

\todo{Miscellanea}

\if 0

\section{Data-Dependent Bounds}

\todo{Write this.}

\section{Miscellanea}

Perverse regression example:

Take arbitrary (infinite) $\X$, $\Y = [0, 1]$.

\[
\HC_{0,1} \subseteq \X \to [0, 1]
\]

\[
\HC_{3,4} \subseteq \X to [3, 4]
\]

Loss function strictly monotonically increasing in $\abs{y - \hat{y}}$.

If $\HC_{0,1}$ is uniformly convergent, and $\HC_{3,4}$ is not, it holds that $\HC \doteq \HC_{0,1} \cup \HC_{3,4}$ is PAC-learnable (by ERM), but not uniformly convergent.

We ``cheated,'' because the codomain of $\HC$ is $[0, 1] \cup [3, 4]$, but arguably this needs to be allowed for many learning problems.

We could also define
\[
\HC' \doteq \{ \mathsmaller{\frac{1}{4}} h(x) \, | \, h \in \HC \ \ \ \& \ \ \ \LossFunction'(y, \hat{y}) \doteq \LossFunction(y, 4\hat{y}) \enspace,
\]
and note that $\LossFunction' \circ \HC' = \LossFunction \circ \HC$, but the codomains now align.

This is of course a ``silly'' loss function, and the asymmetry seems perverse, but we note that this asymmetric loss seems to be a requirement for tasks like \emph{weighted learning}, \emph{conditional density estimation}, and \emph{classification} given \emph{true label distributions}.

\paragraph{Polynomial versus finite sample complexity}

Exponential sample complexity in (multivariate) regression.

Suppose hypothesis class sequence $\HC$ mapping $\X \to \{0, 1\}$, such that for all $k \in \N$, $i \mapsto \VC(\HC_{i}) \in \LandauOmega(i^k)$ (i.e., the VC dimension grows superpolynomially).
Now define the \emph{regression hypothesis class}
\[
\HC_{R} \doteq \{ x \mapsto \sum_{i=0}^{\infty} 2^{-1} h_i(x) | h_i \in \HC_{i} \} \enspace,
\]
and note that $\HC_{R} \subseteq \X \to [0, 1]$.

Under any $\LossFunction$ such that $\LossFunction(y, \hat{y}) = 0$ iff $y = \hat{y}$, and $\LossFunction(a, a \pm \varepsilon)$ approaches $0$ ``sufficiently slowly''\draftnote{I need to characterize this, but any $\abs{y - \hat{y}}^{p}$ should be fine.}

$\HC_{R}$ is uniformly learnable, but with superpolynomial sample complexity.

Intuitively, each $\HC_{i}$ independently learns a single bit of the prediction.

Problem: have $0.x1\bar{0} = 0.x0\bar{1}$.
Fix: work in quaternary?  I.e., assume that in true soln, all odd bits are 0?  Or $\HC$ can enforce that?
Then first mistake at bit $2i$ means loss $\geq q(2^{-2i} - (2^{-2i+1} + \dots)) = \frac{1}{3 \cdot 2^{2i}}$?

If 1 to 0 mistake:
\[
\Loss(y, y_{1:2i} + \frac{1}{3 \cdot 2^{2i}}) \leq \Loss(y, \hat{y}) \leq \Loss(y, y_{1:2i})
\]
If 0 to 1 mistake:
?

In other words:
Therefore: if this regressor is UC, then so is each bit classifier (with sample complexity ?); works both ways.
But we constructed so this was not so: SC of learning bit $\ln(\frac{1}{\varepsilon^{2?}})$ is exponential?

This means
\[
\lim_{m \to \infty} SD(m) = 0
\]
but also, supposing $\VC(\HC_{i}) \geq C(4^{i})$, where 
$C$ is increasing, we have:

an $\varepsilon$ estimate requires a good estimate for $\HC_{i}$ where $i = \log_{4} \frac{1}{\varepsilon}$, which means
\[
m \in \LandauOmega( \frac{1}{\varepsilon} C(4^{\log_{4}(\frac{1}{\varepsilon})})) = \LandauOmega( \frac{1}{\varepsilon} C(\frac{1}{\varepsilon}) ) \enspace.
\]
Controlling the rate that $C$ increases yields arbitrary convergence rates.

This implies
\[
\varepsilon \geq \frac{1}{C^{-1}(m)} ?
\]
So, take $C = \exp$, then we get an $\varepsilon \geq \frac{1}{\ln(m)}$ convergence rate?

Need more assumptions on $\Loss$ though!

Need ``polynomial uniform convergence'' for PAC-learnability? It's a necessary condition? Sometimes sufficient?

Or don't assume poly SC, but instead finite?  Then poly SC is necessary for efficient PAC?

\if 0
...
Take $C$ identity. Then to re
\[
$m \in \LandauOmega( \frac{1}{\varepsilon} 4^{\log_{4}(\frac{1}{\varepsilon})}) = \LandauOmega \frac{1}{\varepsilon} 
\]

\[
SD(m)  
\approx \min(i : m \geq 4^{C(i)} )
\]
Thus need
\[
\varepsilon = 4^{-i} \approx \log_{4}(m) 
\]
\[
\log_{4}(\frac{1}{\varepsilon^2}) \in 4^(whatever)
\]
thus we can have
\[
\varepsilon = SD \in \LandauOmega(1/whatever)
\]

Need bidirectional bound? Also, loss $\leq x$ means ``not many'' mistakes in (coordinates).

Assume bits are independent in true dist?

Bound rade of this sum of losses?

SIMPLE PROOF: suppose convex loss.

Given true binary representation, $y$, e.g., $y = 0.010011011\dots$, take $y_{i}$ to be the $i$th bit of $y$ (to the right of the binary point).  Now, note that
\[
\LossFunction(y - \hat{y}) \leq \sum_{i=1}^{\infty} \LossFunction\bigl(2^{-i} (y_{i} - \hat{y}_{i})\bigr)
\]
\fi

\paragraph{Classification example?}

Take
\[
\HC \doteq \left\{ x \mapsto \begin{cases} h_{1}(x) & 1 \\ \not h_{1} \wedge h_{2} & 2 \\ \dots \bigwedge_{i=1^}{k-1} \not h_{i}(x) \wedge h_{k}(x) \\ \end{cases} \, \middle| \, \dots \right} 
\]

NOTE: ASYMM CLASSIFIER.

Can we show expo convergence?

\fi

\draftnote{Comparison writeup below. GGf and others}
\if 0

\newcommand{\pv}{\bm{p}}

\newcommand{\GGF}{\Mean^{\operatorname{GGF}}}

\newcommand{\permarrows}{\scalebox{0.6}[0.5]{\ensuremath{\begin{matrix} \curvearrowleft \\[-0.35cm] \scalebox{1}[-1]{\ensuremath{\curvearrowright}} \\[0.1cm] \end{matrix}}}}

\section{Discussion}
\label{sec:disc}

TODO: at end:
At the risk of generalizing, we argue that the distinctions between the statistical estimation properties of power-means, canonical additively-separable population means, and the Gini social welfare??? 
appear when one is interested in \emph{uniformly bounding} sample complexity across a \emph{family of means} or learning over a \emph{family of models???}, but not when one is interested in asymptotic consistency of individual estimators (as is common in the Bayesian regime).
The difference between consistency and uniform convergence is essentially a transposition of an existential existential quantifier: we require that sample complexity is uniformly bounded for all welfare functions, rather than that for all malfare or welfare functions, the estimator is consistent, which implies a (perhaps unknown) sample complexity bound.

TODO: MERGE

\subsection{Contrasting Malfare and Welfare}
\label{sec:disc:contrast}

TODO CLEANUP PIC

\begin{tikzpicture}[xscale=2.1]

\useasboundingbox (-3.2, -2.25) rectangle (3.25, 2.25);

\draw[fill=gray!50!white] (0, -2) rectangle (1, 2);

\draw[thick,<->] (-3, 0) -- (-2.9, 0) -- (-2.8, 0.2) -- (-2.8, -0.2) -- (-2.7, 0) -- (2.7, 0) -- (2.8, -0.2) -- (2.8, 0.2) -- (2.8, 0.2) -- (2.9, 0) -- (3, 0);

\draw[very thick,gray!50!black] (0, 0) -- (1, 0); 

\node (pmi) at (-3.25, 0) {$-\infty$};


\node[] (gm) at (0, 0) {};
\node[thick,circle,draw=black,fill=gray,inner sep=1.25pt] (gmc) at (0, 0) {};
\node[below of={gm},yshift=0.7cm] {$0$};

\node[] (p1) at (1, 0) {};
\node[thick,circle,draw=black,fill=gray,inner sep=1.25pt] (p1c) at (1, 0) {};
\node[below of={p1},yshift=0.7cm] {$1$};


\node (pi) at (3.25, 0) {$\infty$};


\draw [decorate,decoration={brace,amplitude=10pt,raise=4pt},yshift=0pt] (pmi) -- (p1) node [above,black,midway,yshift=0.5cm] {Fair Welfare};

\draw [decorate,decoration={brace,amplitude=10pt,mirror,raise=4pt},yshift=0pt] (p1) -- (pi) node [below,black,midway,yshift=-0.5cm] {Fair Malfare};

\node (ldl) at (0.5, -1.25) {\begin{tabular}{c}
Lipschitz \\[-0.1cm]
\small Discontinuous \\[-0.1cm]
about $\lv_{i} = 0$ \\
\end{tabular}};

\node (pmil) at (-2.75, 1.75) {Egalitarian $\Welfare(\cdot)$};
\draw (pmil) -- (pmi);

\node (p1l) at (2, 1.75) {Utilitarian $\Mean(\cdot)$};
\draw (p1l) -- (p1);

\node (pil) at (2.75, -1.75) {Egalitarian $\Malfare(\cdot)$};
\draw (pil) -- (pi);

\node (pm) at (-2, -1.0) {$\displaystyle \Mean_{p}(\lv) \doteq \sqrt[p]{\frac{1}{\NGroups} \sum_{i=1}^{\NGroups} \lv_{i}^{p}}$ };

\end{tikzpicture}

\subsection{Beyond the Power Mean}
Although we axiomatically characterized the power mean family, there are other reasonable of population means corresponding to welfare or malfare functions that may arise from alternative axiomatizations.
We now discuss the challenges and salient differences that arise when one repeats the analysis of malfare and FPAC learning from alternative axiomatization (or, equivalently, one restricts their attention to alternative welfare or malfare concepts).

OTHER CASES: convex Lipschitz welfare?

TODO: $\forall$ consistent vs SC $\forall$ econ theory.

\paragraph{The Generalized Gini Mean}

We now define the generalized Gini population mean (GGM), usually termed the ``generalized Gini social welfare function'' in the literature.

TODO GGF to GGM

[as used by \citep{siddique2020learning}]

\begin{definition}[The Generalized Gini Mean \citep{weymark1981generalized}]
Suppose a \emph{sentiment vector} $\lv \in \R^{\NGroups}$ and \emph{decreasing stochastic weights vector} $\pv \in [0, 1]^{\NGroups}$ (i.e., $1 \geq \pv_{1} \geq \pv_{2} \geq \dots \geq \pv_{\NGroups} \geq 0$ and $\norm{\pv}_{1} = 1$), 
the \emph{generalized Gini social welfare function} (GGF) is
\[
\GGF_{\pv}(\lv) \doteq \sum_{i=1}^{\NGroups} \pv_{i}
 \lv_{i}^{\uparrow}
 \enspace,
\]
where $\lv^{\uparrow}$ denotes the terms of $\lv$ in \emph{ascending sorted order}.
\end{definition}

The $\pv$ terms in the GGF are generally termed \emph{weights}, as they are associated multiplicatively with sentiment values in the populations, but they strongly conflict with our definition of weights, as they are associated with the (extrinsic) \emph{rank}, rather than the (intrinsic) \emph{size}, of each group in the population.
As they control the trade-off between the importances of well-off and impoverished groups, we rather term them $\pv$,
which is appropriate, as we recover \emph{utilitarian} welfare with $\pv = (\frac{1}{\NGroups}, \frac{1}{\NGroups}, \dots, \frac{1}{\NGroups})$, and \emph{egalitarian} welfare with $\pv = (0, \dots, 0, 1)$.

The requirement of increasing $\pv$ is intimately tied to fairness, as it ensures more weight is placed on is both \emph{necessary} and \emph{sufficient} for the GGF to satisfy the Pigou-Dalton principle ((TODO AX)).
We may immediately extend the concept to malfare by reversing this constraint, requiring instead that $\pv$ be \emph{increasing}, i.e., $0 \leq \pv_{1} \leq \pv_{2} \leq \dots \leq \pv_{\NGroups} \leq 1$.
Here the importance placed on groups \emph{increases} with their rank, which is both necessary and sufficient for the \emph{anti Pigou-Dalton principle} (\cref{def:cardinal-axioms} axiom \ref{def:cardinal-axioms:apd}), where the egalitarian case is again the most extreme, characterized by $\pv = (1, 0, \dots, 0)$.

Note that it is equally valid to define the GGF malfare in terms of descending $\lv^{\downarrow}$, i.e.,
\[
\GGF_{\pv}(\lv) \doteq \sum_{i=1}^{\NGroups} \pv_{i}
 \lv_{i}^{\downarrow}
 \enspace,
\]
in which case the decreasingness constraint on $\pv$ is the same for welfare and malfare.
We prefer the former definition, as the algebraic form of the malfare and welfare GGF is identical (as with the power-mean), and then the choice of welfare or malfare (PD or APD axiom) generates \emph{constraints} on $\pv$.
Furthering this aesthetic pleasure is the fact that for both the power-mean and GGF, the utilitarian $p$ (respectively, $\pv$) coincide for welfare and malfare, and the egalitarian $p$ (respectively, $\pv$) attain an elegant symmetry of negation (respectively, reversal). TODO REDUNDANT

TODO call GGF GGMean?

TODO langles?

Contrast gradient with the linearization of the $p$-mean

TODO: S assume \emph{strictly} increasing $\pv$, but violate it with egal?

A weakness of the GGF over the power mean is the lack of \emph{group weightings}, as we have argued that $\pv$ is actually an encoding of a \emph{fairness concept} or \emph{societal ideal} for the \emph{social planner} (analogous to $p$), rather than a weighting of \emph{groups} in the population (as with $\wv$).
However, as with the power-mean, \emph{rational} weightings can be achieved by duplicating groups commensurately with their size, and furthermore irrational weightings are possible via \emph{algebraic continuation}.
Similar extensions are possible for arbitrary (e.g., continuous) populations with appropriate weights measures, but the topic is beyond the scope of this document.
We merely seek to clarify that, when appropriately generalized, the GGF, like the power-mean, has the flexibility to encode various ideals fair welfare or malfare functions (through $\pv)$, over arbitrary weighted population structures.

\draftnote{Continuous definition below}
\if 0

In particular, given sentiment function $\lv$, weights-vector $\wv$ (???), and ? measure function $\pv$, let

Suppose $\lv^{\downarrow}$ is \emph{strictly descending} (i.e., no ties).
Let $\wv^{\lv\downarrow}_{j}$ done the $\wv_{k}$ for $k$ the index of the $j$th largest element of $\lv$.
Let $W_{i}^{\downarrow}(\lv) \doteq \sum_{j=1}^{i} \wv^{\lv\downarrow}_{j}$, with $\W_{0}^{\downarro}(\lv) \doteq 0$.

Let
\[
P_{i}??? \doteq 
  \int_{W_{i-1}^{\downarrow}(\lv)}^{W_{i}^{\downarrow}(\lv)} \mathrm{d} \wv
\]

\[
\GGF_{\pv}(\lv; \wv) \doteq \sum_{i=1}^{\NGroups} P_{i}(\dots) \lv_{i}^{\downarrow} \enspace.
\]

WHICH AXIOMS?

Assume $\pv$ an increasing function (dirac delta??? for egal)

tie resolve as limits.  Equivialently, consolidate $(\wv_{i}, \wv_{j}) \mapsto \wv_{i} + \wv_{j}$ and $(\lv_{i}, \lv_{j}) \mapsto \lv_{i} = \lv_{j}$ for each $i,j$ s.t.\ $\lv_{i} = \lv_{j}$.

TODO CONTINUOUS POPULATION EXTENSION.
\fi

Furthermore because sentiment values are only used \emph{multiplicatively} and in \emph{comparisons} (i.e., not taken to fractional powers), there is no need to restrict them to $\R_{0+}$, thus unlike for the power-mean, there is a strong symmetry between welfare and malfare i.e.,
\[
\GGF_{\pv}(\lv; \wv) = \GGF_{\permarrows\pv}(-\lv; \wv) \enspace,
\]
where $\permarrows$ denotes \emph{reversing} the components of a vector (i.e., multplication by the \emph{exchange matrix}, the anti-diagonal matrix with 1s along the anti-diagonal).
The corresponding property
\[
\Mean_{p}(\lv; \wv) = \Mean_{-p}(-\lv; \wv) \enspace
\]
is nonsensical for $p \not \in \{-\infty, 1, \infty\}$, due to the necessary restriction that $\lv \succeq 0$ for $p \in \R \setminus \{1\}$.

We note that, unlike the power mean, the GGF is always Lipschitz continuous in $\lv$, which resolves the statistical issues of estimation that arise for welfare with $p \in [0, 1)$.
It is also known to be Schur-concave (TODO: discuss) (for welfare; respectively Shur-convex for malfare) which yields optimization guarantees similar to those of \cref{thm:???}

We note that our framework can easily be altered or extended to optimize this objective and both. The convex optimization and covering settings remain unchanged.
Intuitively this makes since when one considers the gradient of the power mean which is X which inherently put some more weight on low utility or high risk groups which zgc does in a nonlinear way.

However there are quite a few reasons to prefer the power mean welfare over the GGF. 
The power mean welfare is a single parameter family, where $p$ encodes (...), with an additional $\NGroups$-parameter for weights if so desired, hich may be directly interpreted as population sizes.
In contrast, the GGF requires $\NGroups$ parameters to specify the (...), which quickly becomes unwieldy with increasing $\NGroups$, and of course the weighted variant requires an additional $\NGroups$ parameters (as with the power-mean).

It also seems not to offer much in the way of computational convenience. Non-uniform $\pv$ create nondifferentiabilities (though not discontinuities) in the objective, which of course the subgradient method can handle, but this may destroy other properties leveraged by more sophisticated convex optimization methods.
Similarly, as the GGF contains egalitarian malfare as a special case, the difficulties that arise therein, such such as the inability to optimize with iterated reweighting, still apply to the GGF.

More fundamentally, the GGF lacks the rigorous axiomatic characterization of the power mean.
While one can certainly debate exactly which axioms to adopt, and under alternative axiomatizations, a case can be made for other families of means, it is difficult to argue specifically for the GGF, as it lacks a clean (natural and economically-relevant) axiomatic characterization.

TODO which has the oscillation reweighting problem

TODO: PAC - FPAC comparison figure pack 2 F pack

\bigskip
\paragraph{old}

\emph{generalized Gini social welfare function} (GGF) \citep{weymark1981generalized}

\[
\GGF_{\pv}(\lv) \doteq \sum_{i=1}^{\NGroups} \pv_{i}
 \lv_{i}^{\uparrow}
\]
where v \in R D , w \in R D is a fixed positive weight vector
whose components are strictly decreasing (i.e., w 1 > . . . >
w D ), and v ↑ corresponds to the vector with the components
of vector v sorted in an increasing order (i.e., v 1 ↑ ≤ . . . ≤
↑
v D
). Furthermore, we assume without loss of generality
that the GGF weight vector
P D w is normalized and sum to one
(i.e., w \in [0, 1] D and i=1 w i = 1).

TODO MONOTONIC IN SYMBOL

Malfare: now assume $\wv$ is descending:
\[
\GGF_{\pv}(\lv) \doteq \sum_{i=1}^{\NGroups} \pv_{i}
 \lv_{i}^{\downarrow}
\]

The $\pv$ terms in the GGF are generally termed \emph{weights}, as they are associated multiplicatively with sentiment values in the populations, but they strongly conflict with our definition of weights, as they are associated with the (extrinsic) \emph{rank}, rather than the (intrinsic) \emph{size} of each group in the population.
As they control the trade-off between the importances of well-off and impoverished groups we rather term them p TODO This is appropriate egalitarian versus utilitarian.

It is also possible to introduce population weight parameters (as in the power mean), though the algebraic form is far less elegant.

Weighting:
Suppose $\lv^{\downarrow}$ is \emph{strictly descending} (i.e., no ties).
Let $\wv^{\lv\downarrow}_{j}$ done the $\wv_{k}$ for $k$ the index of the $j$th largest element of $\lv$.
Let $W_{i}^{\downarrow}(\lv) \doteq \sum_{j=1}^{i} \wv^{\lv\downarrow}_{j}$, with $\W_{0}^{\downarro}(\lv) \doteq 0$.

Let
\[
P_{i}??? \doteq 
  \int_{W_{i-1}^{\downarrow}(\lv)}^{W_{i}^{\downarrow}(\lv)} \mathrm{d} \wv
\]

\[
\GGF_{\pv}(\lv; \wv) \doteq \sum_{i=1}^{\NGroups} P_{i}(\dots) \lv_{i}^{\downarrow} \enspace.
\]

WHICH AXIOMS?

Assume $\pv$ an increasing function (dirac delta??? for egal)

tie resolve as limits.  Equivialently, consolidate $(\wv_{i}, \wv_{j}) \mapsto \wv_{i} + \wv_{j}$ and $(\lv_{i}, \lv_{j}) \mapsto \lv_{i} = \lv_{j}$ for each $i,j$ s.t.\ $\lv_{i} = \lv_{j}$.

SIMILAR CONTINUOUS EXTENSIONS
\[
\int_{x=0}^{1} d \wv(\dots)
\]

\fi

\section{Conclusion}
\label{sec:conc}

This work introduces \emph{malfare minimization} as a fair learning task, and shows relationships between the \emph{statistical} and \emph{computational} issues of malfare and risk minimization.
In particular, we argue that our method is more in line with welfare-centric machine learning theory than demographic-parity theory, however in \cref{sec:comparisons:inequality} we do show deep connections between welfare or malfare optimization and 
inequality-constrained loss minimization, 
which to some extent bridge this divide. 
We also find that malfare is better aligned to address machine learning tasks cast as loss minimization problems than is welfare, both due to convenient statistical properties, and the greater simplicity of such constructions.
\ifthesis
As such, we build upon the malfare concept of \cref{part:fairness:malfare}, which defines and motivates malfare minimization, by studying the problem from the statistical and computational learning theoretic perspectives.
\else
As such, 
the first half of this manuscript is dedicated to deriving and motivating malfare minimization, while 
the latter half defines the \emph{fair-PAC learning} formalism, and studies the problem from statistical and computational learning theoretic perspectives.
\fi

\draftnote{new par below.  New discussion section?}
Before further detailing our contributions in these areas, we reiterate that malfare itself, as well as its axiomatic characterization as the $p \geq 1$ power-mean family, is indeed the main contribution of this work.
The remainder of the paper explores the \emph{consequences} of this axiomatic definition, some rather simplistic, and others more sophisticated, but we stress that the natural parallels between the statistical and computational aspects of risk minimization and malfare minimization stem from this key definitional decision.

We see this as a measure of the appropriateness of the malfare definition and its use as a fair learning objective, as indeed, other fair-learning formalizations would not behave as such.
What may seem straightforward in hindsight was not, in a sense, predestined to be so; for instance had we adopted the \emph{additive separability} axiom instead of multiplicative linearity (as discussed in \cref{sec:comparisons:as}), malfare would be characteristically $\MeanAs_{p}(\lv; \wv)$ ($= \Malfare_{p}^{p}(\lv; \wv)$ for
$p > 0$), rather than $\Malfare_{p}(\lv; \wv)$.
The FPAC-learnability definition, which requires uniform sample complexity \emph{over all fair malfare functions} (all $p \geq 1$) would then be fundamentally flawed, as risk values above $1$ would explode, 
while risk values below $1$ would vanish, as $p \to \infty$ (see~\cref{fig:pmean-vs-addsep}). 
Similarly, \cref{sec:comparisons:welfare} outlines the difficulties that arise should we instead seek to \emph{maximize} any fair \emph{welfare} function.
For a third example, the Seldonian learner \citep{thomas2019preventing} framework, which treats arbitrary constrained nonlinear objectives, also 
 seems not to be amenable to uniform sample complexity analysis, 
due to the
generally unbounded sample complexity of determining whether even very simple constraints are satisfied (as discussed in \cref{sec:comparisons:inequality}).
\draftnote{
TODO: sth about diameter of feasible set.
}

\subsection{Contrasting Malfare and Welfare}
\label{sec:conc:contrast}

\draftnote{TODO reference contributions of \cref{sec:comparisons:inequality}.}

With our framework now fully laid out and initial results presented, we now contrast our malfare-minimization framework with traditional welfare-maximization approaches in greater detail.
We do not claim that malfare is a better or more useful concept than welfare; but rather we argue only that it is \emph{significantly different} (with surprising non-equivalence results between power-mean welfare and malfare functions), 
stands on an equal axiomatic footing, and it stands to reason that the right tool (malfare) should be used for the 
taks at hand (fair risk-minimization).

With this said, we acknowledge that
some learning tasks, e.g., bandit problems and reinforcement learning tasks, are more naturally phrased as \emph{maximizing} utility or (discounted)
 reward. 
However, with a few exceptions, e.g., the \emph{spherical scoring rule} from decision theory, most supervised learning problems are naturally cast as minimizing nonnegative \emph{loss functions} (arguably via cross-entropy or KL-divergence minimization through maximum-likelihood, either as explicitly intended~\citep{nelder1972generalized}, or \emph{ex-post-facto} through subsequent analysis~\citep{cousins2019cadet}). 

\if 0
Similarly, as mentioned previously, 0-1 loss and 1-0 gain are isomorphic, and there's no clear benefit to optimizing one over another.
In the decision theory setting, considering bounded strictly proper scoring rules, the \emph{spherical rule} $s(y, \hat{y}) \doteq \smash{ \frac{ \hat{y}_{y} }{ \norm{\hat{y}}_{2} } }$ is naturally a gain function, whereas the \emph{Brier rule} $B(y, \hat{y}) \doteq \smash{ \norm{\! \1_{y} - \hat{y}}_{2}^{2} }$ is clearly a loss function.
In all cases, we may take utility as 1 - loss, but it seems that directly interpreting spherical scores as utilities and maximizing welfare is more clean, elegant, and natural, but on the other hand, so to is using the Brier score as loss, and minimizing malfare. 
\fi
\cyrus{limiting cxn to robust learning: cite ALL / Mohri?}

We are highly interested in exploring a parallel theory of fair welfare optimization, however some key malfare properties 
do not 
hold for welfare.
In particular, 
 fair welfare functions $\Welfare_{p}(\cdot; \cdot)$ for $p \in [0, 1)$ are not 
Lipschitz continuous;  
for example, the \emph{Nash social welfare} (a.k.a.\ \emph{unweighted geometric welfare}) $\Welfare_{0}(\lv; \omega \mapsto \mathsmaller{\frac{1}{\NGroups}}) = \smash{\sqrt[\NGroups]{\vphantom{\prod}\smash{\prod_{i=1}^{\NGroups}} \lv_{i}}}$ is unstable to perturbations of each $\lv_{i}$ around $0$, which causes 
difficulty in both the \emph{statistical} and \emph{computational} aspects of learning. 
In \cref{sec:comparisons:welfare}, we leverage this fact to construct seemingly trivial welfare estimation problems that actually exhibit \emph{unbounded sample complexity}.
In particular, in these problems, we must only estimate a single group's \emph{Bernoulli-distributed} utility, which is quite straightforward, but welfare estimation remains intractable.

\if 0
Leveraging this fact, it is trivial to construct welfare-maximization learning problems 
for which \emph{sample complexity} is \emph{unbounded};
for example, if utility values are $\textsc{Bernoulli}(q)$-distributed for some group, the sample complexity of $\varepsilon$-$\delta$ 
estimating $\Welfare_{0}(\lv; \wv)$ grows unboundedly as $q \to 0$.
\fi

This impossibility result makes straightforward translation of our FPAC framework into a welfare setting rather vacuous, 
except in contrived, trivial, or degenerate cases.
This difference between malfare and welfare stems from the fact that although \cref{lemma:stat-est} holds for both welfare and malfare, 
it does not imply \emph{uniform sample-complexity} bounds, whereas, such bounds are trivial for fair malfare (see~\cref{coro:stat-est}\draftnote{ref rade here.}), 
due to the \emph{contraction property} (\cref{thm:pow-mean-prop} 
\cref{thm:pow-mean-prop:contraction}). 
It thus seems that such a theory of welfare optimization would need either to either impose additional assumptions to avoid non-Lipschitz behavior (e.g., artificially limit the permitted range of $p$), or otherwise provide weaker (non-uniform) learning guarantees.

\subsection{FPAC Learning: Contributions and Open Questions}
\label{sec:conc:contrib}
After motivating the malfare-minimization machine learning task, we introduce fair-PAC-learning to study the statistical and computational difficulty of malfare minimization.
As a generalization of PAC-learning, known hardness results (e.g., lower-bounds on computational and sample complexity of loss minimization) immediately apply, thus, coarsely speaking, the interesting question is whether, for some tasks, malfare minimization is harder than risk minimization.
\Cref{thm:realizable-pac2fpac} answers this question in the negative \emph{under realizability},
as does \cref{thm:ftfsl} for \emph{sample complexity}, under appropriate conditions on the loss function.
However, as far as sample complexity goes, it remains an open question whether agnostic FPAC-learning and PAC-learning are equivalent for loss functions where \emph{uniform convergence} and \emph{PAC-learnability} are not equivalent. 
Furthermore, the question of their computational equivalence in the agnostic setting is also open, 
although \cref{sec:ccl} at least shows that many conditions sufficient for PAC-learnability are also sufficient for FPAC-learnability.

We are optimistic that our FPAC-learning definitions will motivate the community to further pursue the deep connections between various PAC and FPAC learning settings, 
as well as promote cross-pollination between computational learning theory and fair machine learning research.
We believe that deeper inquiry into these questions will lead to both a better understanding of what is and is not FPAC-learnable, as well as more practical and efficient reductions and FPAC-learning algorithms.

\todo{Add future work!}
\if 0

\subsection{Future Work}
\label{sec:conl:fw}

[TODO: EDIT / WRITE BELOW]

We are also confident that should the broader community adopt these concepts, there will be interest in finer grained guarantees and data dependent bounds, mirroring the rise of the empirical VC dimension and later the Rademacher average and localized Rademacher average coming out of learning theory.
So as not to complicate an already overly-dense work, we do not discuss such matters in the paper body but we note that the rather straightforward Union bound argument of know that theorem Q can be immediately adapted to handle uniform convergence guarantees across hypothesis function loss function families using search machine learning techniques individual eyelash class.
What's more sophisticated message should even be able to remove the union Bound, but whether the added complexity of such analysis is worth the extremely modest logarithmic Improvement on the number of classes is clearly application dependent.
As a final note we note that data dependent balance I've always played well with Progressive sampling to ensure optimal efficiency guarantees without requiring a priori knowledge and we are confident Lucy holds in this setting.
In particular a doubling sampling schedule where Paul a doubling equally sized samples from each protective group are drawn at each iteration will of course work in the usual way.
However we argued that such methodology is probably sub-optimal and it ignores some very interesting aspects of the problem: some groups may be easier to learn than others, IE they may have sparse representations and a model class that exhibits simplicity Hunter's varsity or simply tends to have lower loss values for the functions we are considering EG due to a lack of outliers or high self-consistency.
As such it is not necessarily the required or useful to have people eat size samples from groups.
Wildest two seams antithetical to fairness we note that so long as rigorous guarantees on the performance of each group are available it doesn't actually matter how those groups guarantees are of change.
We that's conjecture that a dynamic Progressive sampling strategy which samples in a cost-sensitive manner to maximally improve projected Mall fair X decreases should be able to better utilize resources.
12 fascinating from a machine learning and fair learning theory perspective we note that this also may be thought of as an algorithmic formalization of social scientific processes ie targeted study and intervention on disadvantaged groups has to how's it can be expected that more low-hanging fruit is available to improve outcomes for understudied or underserved Hulu.

\cyrus{What if we do not know $\wv$?  Or can't sample from whatever we want at any time?  Maybe just sample from weighted mixture $\Mix(\ProbDist_{1:\NGroups}, \wv)$?  If can also draw cdtl samples, then it's easy?  ``strong FPAC-learner'' + FPAC $\implies$ strong FPAC.  Only problem is, rej samp requires unbounded SC as $\wv_{i} \to 0$ while $p \to \infty$?  Basic idea: use relative freq estimator for each $\wv$, then bias the sample a bit.  As $p \to \infty$, must sample a lot from all, otherwise, $\#$ required samples is smaller when $\wv_{i}$ is small.  If range is $1$, a bad group can impact malf by $\sqrt[p]{\wv_{i}}$, so bound can be loose by factor $\varepsilon$ more?}

\fi

\pagebreak[1]

{
\small
\bibliography{paper/bibliography,paper/cyrus}
\vfill
}

\iftrue
\pagebreak[4]

\appendix

\section{A Compendium of Missing Proofs}
\label{appx:proof}

Here we present all missing proofs of results stated in the main text.

\subsection{Welfare and Malfare}
\label{sec:appx:proof:wm}

We now show \cref{thm:pow-mean-prop}.

\thmpowmeanprop*
\begin{proof}

\todo{Allow finitely-many region-breaks in the loss function (separate covers)}

We omit proof of \cref{thm:pow-mean-prop:mono}, as this is a standard property of power-means, generally termed the \emph{power mean inequality} \citep[Chapter~3]{bullen2013handbook}.


We first show \cref{thm:pow-mean-prop:subadditivity}.
By the triangle inequality (for $p \geq 1$), we have
\[
\Mean_{p}(\lv + \lv'; \wv) \leq \Mean_{p}(\lv; \wv) + \Mean_{p}(\lv'; \wv) \enspace.
\]
\if 0 
\[
\Mean_{p}(\lv + \lv'; \wv) = \sqrt[p]{\sum_{i=1}^{\NGroups} \wv_{i}(\lv_{i} + \lv'_{i})^{p}} = \norm{ \wv \otimes (\lv + \lv') }_{p} \leq 
\norm{\wv \otimes \lv}_{p} + \norm{\wv \otimes \lv'}_{p} = \Mean_{p}(\lv; \wv) + \Mean_{p}(\lv'; \wv) 
\]
\fi

We now show \cref{thm:pow-mean-prop:contraction}
First take $\epsv \doteq \lv - \lv'$, and let $\epsv_{+} \doteq \bm{0} \vee \epsv$, where $\bm{a} \vee \bm{b}$ denotes the (elementwise) maximum.
Now consider
\begin{align*}
\Mean_{p}(\lv; \w) &= \Mean_{p}(\lv' + \epsv; \w) & \textsc{Definition of $\epsv$} \\
 &\leq \Mean_{p}(\lv' + \epsv_{+}; \w) & \textsc{Monotonicity} \\
 &\leq \Mean_{p}(\lv'; \wv) + \Mean_{p}(\epsv_{+}; \wv) & \textsc{\Cref{thm:pow-mean-prop:subadditivity}} \\
 &\leq \Mean_{p}(\lv'; \wv) + \Mean_{p}(\abs{\lv - \lv'}; \wv) \enspace, & \textsc{Monotonicity} \\[-0.25cm]
\end{align*}
where here \textsc{Monotonicity} refers to monotonicity of $\Mean_{p}(\lv; \wv)$ in each $\lv(\PopItem)$.
By symmetry, we then have $\Mean_{p}(\lv', \wv) \leq \Mean_{p}(\lv, \wv) + \Mean_{p}(\abs{\lv - \lv'}; \wv)$, which implies the result.

\if 0
\todo{
For $p \leq -1$; we have
\[
\sqrt[p]{\Expect_{\PopItem \distributed \wv}[\lv^{p}(\PopItem)]} 
\]
}
\fi

We now show \cref{thm:pow-mean-prop:curvature}.
First note the special cases of $p \in \pm \infty$ follow by convexity of the maximum ($p=\infty$) and concavity of the minimum ($p=-\infty$).

Now, note that for $p \geq 1$, by concavity of $\sqrt[p]{\cdot}$, Jensen's inequality gives us 
\[
\Mean_{1}(\lv; \wv) = \Expect_{\PopItem \distributed \wv}[\lv(\PopItem)] = \underbrace{\Expect_{\PopItem \distributed \wv}\left[\sqrt[p]{\lv^{p}(\PopItem)}\right] \leq \sqrt[p]{\Expect_{\PopItem \distributed \wv}[\lv^{p}(\PopItem)]}}_{\textsc{Definition of Convexity}} = \Mean_{p}(\lv; \wv) \enspace,
\]
i.e., convexity, and similarly, for $p \leq 1$, $p \neq 0$, by convexity of $\sqrt[p]{\cdot}$, we have
\[
\Mean_{1}(\lv; \wv) = \Expect_{\PopItem \distributed \wv}[\lv(\PopItem)] = \underbrace{\Expect_{\PopItem \distributed \wv}\left[\sqrt[p]{\lv^{p}(\PopItem)}\right] \geq \sqrt[p]{\Expect_{\PopItem \distributed \wv}[\lv^{p}(\PopItem)]}}_{\textsc{Definition of Concavity}} = \Mean_{p}(\lv; \wv) \enspace.
\]
Similar reasoning, now by convexity of $\ln(\cdot)$, shows the case of $p=0$. 
\end{proof}

We now show \cref{thm:pop-mean-prop}.

\thmpopmeanprop*
\begin{proof}

\Cref{thm:pop-mean-prop:id} is an immediate consequence of axioms
\ref{def:cardinal-axioms:mult} \& \ref{def:cardinal-axioms:unit} (multiplicative linearity and unit scale).


We now note that \cref{thm:pop-mean-prop:lin-fact} is the celebrated Debreu-Gorman theorem \citep{debreu1959topological,gorman1968structure}, extended by continuity and measurability of $\lv$ to the weighted case, and \cref{thm:pop-mean-prop:lin-fact-f} is a simple corollary thereof.\todo{see also for more on Debreu-Gorman: Wakker, P.P., 1989. Additive Representations of Preferences; may need less than continuity?}\draftnote{also \citep{moulin2004fair} pp 66-69 for $f$-mean case.}

We now show \cref{thm:pop-mean-prop:pmean}.\todo{Properly show the split; w/ and w/out scale.}
This result is essentially a corollary of \cref{thm:pop-mean-prop:lin-fact}, hence the dependence on axioms 1-4.
\todo{
We now assume axiom \ref{def:cardinal-axioms:mult}, and the task is to prove $F(u) = \alpha\sqrt[p]{\sgn(p)u}$. 
}%
Suppose $\lv(\cdot) = 1$.  
By \cref{thm:pop-mean-prop:id}, 
for all $p \neq 0$, we have
\[
\alpha = \alpha\Mean(\lv; \wv)
  = \Mean(\alpha\lv; \wv)
  = F\left(\Expect_{\PopItem \distributed \wv}\bigl[f_{p}(\alpha\lv(\PopItem))\bigr]\right)
  = F\left(\Expect_{\PopItem \distributed \wv}\bigl[f_{p}(\alpha)\bigr]\right)
  = F\bigl(\sgn(p)\alpha^{p}\bigr) \enspace.
\]
From here, we have $\alpha = F\bigl(\sgn(p)\alpha^{p}\bigr)$, thus $F^{-1}(u) = \sgn(p)u^{p}$, and consequently, 
$F(v) = \sqrt[p]{\sgn(p)v}$.

\if 0
\[
1 = \Mean(\lv; \wv) = F(\Expect_{\PopItem \distributed \wv}[f_{p}(\lv(\PopItem))] = F(1)
\]

\[
\alpha = \alpha\Mean(\lv; \wv) = \Mean(\alpha\lv; \wv) = F(\Expect_{\PopItem \distributed \wv}[f_{p}(\alpha\lv(\PopItem))]) = F(\alpha^{p})
\]

\todo{Isn't it obvious from 2nd form?}

Now, we apply $F^{-1}$ to both sides of both equations and divide \todo{check / 0}:

\[
\frac{F^{-1}(\alpha)}{F^{-1}(1)} = \alpha^{p}
\]

Now, note that $F(1) = 1$, thus $F^{-1}(1) = 1$, which leaves
\[
F^{-1}(\alpha) = \alpha^{p} \implies F(\alpha) = \sqrt[p]{\alpha} \enspace.
\]

\todo{Show what happens without assuming unit scale (commented) and other cases of $p$.  It's just $F(\alpha) = \sqrt[p]{\alpha / F^{-1}(c)}$; $F^{-1}(c)$ is just some nonneg constant.}
\fi

Taking $p = 0$ gets us
\if 0
\[
1 = \Mean(\lv; \wv) = F(\Expect_{\PopItem \distributed \wv}[\ln(\lv(\PopItem))]) = F(0)
\]
\fi

\[
\alpha = \alpha\Mean(\lv; \wv) = F\left(\Expect_{\PopItem \distributed \wv}
\bigl[\ln(\alpha\lv(\PopItem))\bigr]\right) = F(\ln a) \enspace,
\]
from which it is clear that $F^{-1}(u) = \ln(u) \implies F(v) = \exp(v)$.

For all values of $p \in \R$, substituting the values of $f_{p}$ and 
$F(\cdot)$ 
into 
\cref{thm:pop-mean-prop:lin-fact} yields $\Mean(\lv; \wv) = \Mean_{p}(\lv; \wv)$ by definition.

\if 0
Finally, for the case of $p < 0$, taking $G(u) = F(-u)$, we get
\[
\alpha = \alpha\Mean(\lv; \wv) = \Mean(\alpha\lv; \wv) = F(\Expect_{\PopItem \distributed \wv}[f_{p}(\alpha\lv(\PopItem))] = G(-\alpha^{p}) = F(\alpha^{p}) \enspace,
\]
and the rest proceeds as in the case of $p > 0$.
\fi


We now show \ref{thm:pop-mean-prop:pmean-fair} and \ref{thm:pop-mean-prop:pmean-unfair}.
These properties follow directly from \ref{thm:pop-mean-prop:lin-fact}, wherein $f_{p}$ are defined, and Jensen's inequality. 
\todo{prove this in detail.}
%
\end{proof}

We now show \cref{coro:stat-est}.
\corostatest*
\begin{proof}
This result is a corollary of \cref{lemma:stat-est}, applied to $\epsv$, where we note that for $p \geq 1$, by \cref{thm:pow-mean-prop}~item~\ref{thm:pow-mean-prop:contraction} (contraction)
it holds that
\[
\Mean_{p}(\hat{\lv} + \epsv; \wv) \leq \Mean_{p}(\hat{\lv}; \wv) + \norm{\epsv}_{\infty} \ \& \ \Mean_{p}(\bm{0} \vee (\hat{\lv} - \epsv); \wv) \leq \Mean_{p}(\hat{\lv}; \wv) - \norm{\epsv}_{\infty} \enspace.
\]

Now, for the first bound, note that we take $\epsv_ {i} \doteq \frange\sqrt{\frac{\ln \frac{2\NGroups}{\delta}}{2m}}$, and by Hoeffding's inequality and the union bound, for $\Population = \{1, \dots, n\}$, we have $\forall \PopItem: \, \lv'(\PopItem) - \epsv(\PopItem) \leq \lv(\PopItem) \leq \lv'(\PopItem) + \epsv(\PopItem)$ with probability at least $1 - \delta$.
The result then follows via the \emph{power-mean contraction} (\cref{thm:pow-mean-prop} item \ref{thm:pow-mean-prop:contraction}) property.

Similarly, for the second bound, note that we take $\epsv_ {i} \doteq \frac{\frange\ln \frac{2\NGroups}{\delta}}{3m} + 
\sqrt{\frac{2\Var_{\ProbDist_{i}}[\LossFunction] \ln \frac{2\NGroups}{\delta}}{m}}$, which this time follows via Bennett's inequality and the union bound.
Now, we again apply \cref{lemma:stat-est}, noting that $\Mean(\epsv) \leq \Mean_{\infty}(\epsv) = \norm{\epsv}_{\infty}$ (by \emph{power-mean monotonicity}, \cref{thm:pow-mean-prop} \cref{thm:pow-mean-prop:mono}), 
and the rest follows as in the Hoeffding case.
\end{proof}

\subsection{Efficient FPAC-Learning}
\label{sec:appx:proof:efpac}

We now show \cref{thm:aemm-psg}.
\thmaemmpsg*
\begin{proof}
We now show that this subgradient-method construction of $\PACAlgo$ requires $\Poly(\frac{1}{\varepsilon}, \frac{1}{\delta}, \DSeq, \NGroups)$ time to identify an $\varepsilon$-$\delta$-$\Malfare_{p}(\cdot; \cdot)$-optimal $\tilde{\theta} \in \Theta_{\DSeq}$, and thus fair-PAC-learns $(\HC, \LossFunction)$.
This essentially boils down to showing that (1) the empirical malfare objective is \emph{convex} and \emph{Lipschitz continuous},\todo{bounded unneeded} 
 and (2) that \cref{alg:aemm-psg} runs sufficiently many subgradient-update steps, with appropriate step size, on a sufficiently large training set, to yield the appropriate guarantees, and that each step of the subgradient method, of which there are polynomially many, itself requires polynomial time.

First, note that by \cref{thm:pop-mean-prop} \cref{thm:pop-mean-prop:pmean,thm:pop-mean-prop:pmean-unfair}, we may assume that $\Malfare(\cdot; \cdot)$ can be expressed as a $p$-power mean with $p \geq 1$; thus henceforth we refer to it as $\Malfare_{p}(\cdot; \cdot)$.
Now, recall that the empirical malfare objective (given $\theta \in \Theta_{\DSeq}$ and training sets $\bm{z}_{1:\NGroups}$) is defined as
\[
\Malfare_{p} \bigl( i \mapsto \ERisk(h(\cdot; \theta); \LossFunction, \bm{z}_{i}); \wv \bigr) \enspace.
\]

We first show that empirical malfare is convex in $\Theta_{\DSeq}$.
By assumption and positive linear closure, $\ERisk(h(\cdot; \theta'); \LossFunction, \bm{z}_{i})$ is convex in $\theta \in \Theta_{\DSeq}$.
The objective of interest is the composition of $\Malfare_{p}(\cdot; \wv)$ with this quantity evaluated on each of $g$ training sets.
By \cref{thm:pow-mean-prop} 
\cref{thm:pow-mean-prop:curvature}, $\Malfare_{p}(\cdot; \wv)$ is convex $\forall p \in [1, \infty]$ in $\R_{0+}^{\NGroups}$, and by the monotonicity axiom, it is monotonically increasing.
Composition of a monotonically increasing convex function on $\R_{0+}^{\NGroups}$ with convex functions on $\Theta_{d}$ yields a convex function, thus we conclude the empirical malfare objective is convex in $\Theta_{d}$.\todo{Better proof!}

We now show that empirical malfare is Lipschitz continuous.
Now, note that for any $p \geq 1$, $\wv$,
\[
\forall \lv, \lv': \ \abs{\Malfare_{p}(\lv; \wv) - \Malfare_{p}(\lv'; \wv)}  \leq 1 \norm{\lv - \lv'}_{\infty} \enspace,
\]
i.e., $\Malfare_{p}(\cdot; \wv)$ is $1$-$\norm{\cdot}_{\infty}$-$\abs{\cdot}$-Lipschitz in \emph{empirical risks} (see \cref{thm:pow-mean-prop} \cref{thm:pow-mean-prop:contraction}), and thus by Lipschitz composition, we have Lipschitz property 
\[
\forall \theta, \theta' \in \Theta_{\DSeq}: \ \abs{ \Malfare_{p} \bigl( i \mapsto \ERisk(h(\cdot; \theta); \LossFunction, \bm{z}_{i}); \wv \bigr) - \Malfare_{p} \bigl( i \mapsto \ERisk(h(\cdot; \theta'); \LossFunction, \bm{z}_{i}); \wv \bigr) } \leq \lambda_{\LossFunction}\lambda_{\HC}\norm{\theta - \theta'}_{2} \enspace.
\]

We now show that \cref{alg:aemm-psg} FPAC-learns $(\HC, \LossFunction)$.
As above, take $m \doteq \SampleComplexity_{\UC}({\frac{\varepsilon}{3}}, \frac{\delta}{\NGroups}, \DSeq)$.
Our algorithm shall operate on a training sample $\bm{z}_{1:\NGroups,1:m} \distributed \ProbDist_{1}^{m} \times \dots \times \ProbDist_{\NGroups}^{m}$.

First note that evaluating a subgradient (via forward finite-difference estimation\cyrus{This may be a (fixable) source of error not accounted for?} or automated subdifferentiation) requires $(\dim(\Theta_{\DSeq}) + 1) m$ evaluations of $h(\cdot; \cdot)$, which by assumption is possible in $\Poly(\DSeq, m) = \Poly(\frac{1}{\varepsilon}, \frac{1}{\delta}, \DSeq, \NGroups)$ time.

The subgradient method produces $\tilde{\theta}$ approximating the empirically-optimal $\hat{\theta}$ such that \citep[see][]{shor2012minimization}
\[
f(\tilde{\theta}) \leq f(\hat{\theta}) + \frac{\norm{\theta_{0} - \smash{\hat{\theta}}}^{2}_{2} + \Lambda^{2}\alpha^{2}n}{2 \alpha n} \leq \frac{\Diam^{2}(\Theta_{\DSeq}) + \Lambda^{2}\alpha^{2}n}{2 \alpha n} \enspace,
\]
for $\Lambda$-$\norm{\cdot}_{2}$-$\abs{\cdot}$-Lipschitz objective $f$, thus taking 
$\alpha \doteq \frac{\Diam(\Theta_{\DSeq})}{\Lambda\sqrt{n}}$ 
yields
\[
f(\tilde{\theta}) - f(\hat{\theta}) \leq \frac{\Diam(\Theta_{\DSeq})\Lambda}{\sqrt{n}} \enspace.
\]

\if 0
thus taking $\alpha = n^{-1/2}$ yields
\[
\leq f(\theta^{*}) + \frac{\norm{\theta_{0}, \theta^{*}}^{2}_{2} + \Lambda^{2}}{2\sqrt{n}} \enspace.
\]
\fi

As shown above, $\Lambda = \lambda_{\LossFunction} \lambda_{\HC}$, thus we may guarantee \emph{optimization error}
\[
\varepsilon_{\mathrm{opt}} \doteq f(\hat{\theta}) - f(\theta^{*}) \leq \frac{\varepsilon}{3}
\]
if we take iteration count
\if 0 
\[
n \geq \frac{\Diam(\Theta_{\DSeq})^{4} + \lambda_{\LossFunction}^{4}\lambda_{\HC}^{4}}{\varepsilon^{2}} \in \Poly(\mathsmaller{\frac{1}{\varepsilon}}, \DSeq) \enspace.
\]
\fi
\[
n \geq \frac{9\Diam^{2}(\Theta_{\DSeq})\lambda_{\LossFunction}^{2}\lambda_{\HC}^{2}}{\varepsilon^{2}} = \left(\frac{3\Diam(\Theta_{\DSeq})\lambda_{\LossFunction}\lambda_{\HC}}{\varepsilon}\right)^{2} \in \Poly(\mathsmaller{\frac{1}{\varepsilon}}, \DSeq) \enspace.
\]

As each iteration requires $m \cdot \Poly(\DSeq) \subseteq \Poly(\frac{1}{\varepsilon}, \frac{1}{\delta}, \DSeq, \NGroups)$ time, 
the subgradient method identifies an ${\frac{\varepsilon}{3}}$-empirical-malfare-optimal $\tilde{\theta} \in \Theta_{\DSeq}$ in\todo{restore detail}
\if 0
\[
n \in \Poly(\frac{1}{\varepsilon}) \Poly(\DSeq) \subseteq \Poly(\frac{1}{\varepsilon}, \frac{1}{\delta}, \DSeq, \NGroups)
\]
subgradient-method steps, each requiring $m \cdot \Poly(\DSeq) \subseteq \Poly(\frac{1}{\varepsilon}, \frac{1}{\delta}, \DSeq, \NGroups)$ time, thus totaling 
\fi
$\Poly(\frac{1}{\varepsilon}, \frac{1}{\delta}, \DSeq, \NGroups)$ time.

As $m$ was selected to ensure $\frac{\epsilon}{3}$-$\frac{\delta}{\NGroups}$ uniform convergence, we thus have that by uniform convergence, and union bound (over $\NGroups$ groups), with probability at least $1 - \delta$ over choice of $\bm{z}_{1:\NGroups}$, we have
\[
\forall i \in \{1, \dots, \NGroups\}, \theta \in \Theta_{\DSeq}: \ \abs{\Malfare_{p}(i \mapsto \ERisk(h(\cdot; {\theta}); \loss, \bm{z}_{i}); \wv) - \Malfare_{p}(i \mapsto \Risk(h(\cdot; {\theta}); \loss, \ProbDist_{i}); \wv)} \leq {\frac{\varepsilon}{3}} \enspace.
\]

Combining \emph{estimation} and \emph{optimization} errors, we get that with probability at least $1 - \delta$, the approximate-EMM-optimal $h(\cdot; \tilde{\theta})$ obeys
\begin{align*}
\Malfare_{p}(i \mapsto \Risk(h(\cdot; \tilde{\theta}); \loss, \ProbDist_{i}); \wv) 
 &\leq \Malfare_{p}(i \mapsto \ERisk(h(\cdot; \tilde{\theta}); \loss, \bm{z}_{i}); \wv) + \mathsmaller{\frac{\varepsilon}{3}} & \\
 &\leq \Malfare_{p}(i \mapsto \ERisk(h(\cdot; \smash{\hat{\theta}}); \loss, \bm{z}_{i}); \wv) + \mathsmaller{\frac{2\varepsilon}{3}} & \\
 &\leq \Malfare_{p}(i \mapsto \ERisk(h(\cdot; \theta^{*}); \loss, \bm{z}_{i}); \wv) + \mathsmaller{\frac{2\varepsilon}{3}} & \\
 &\leq \Malfare_{p}(i \mapsto \Risk(h(\cdot; \theta^{*}); \loss, \ProbDist_{i}); \wv) + \varepsilon \enspace. & \\
\end{align*}

We may thus conclude that $\PACAlgo$ fair-PAC learns $\HC$ with sample complexity $\NGroups m = \NGroups \cdot \SampleComplexity_{\UC}({\frac{\varepsilon}{3}}, \frac{\delta}{\NGroups}, \DSeq)$.
Furthermore, as the entire operation requires polynomial time, we have $(\HC, \ell) \in \PAC^{\Agnostic}_{\Poly}$.
\end{proof}

We now work towards proof of \cref{thm:aemm-covering}.
We begin with a technical lemma deriving relevant properties of the cover employed in the main result.
\todo{Split 2+3 to their own result?}

\begin{lemma}[Group Cover Properties]
\label{lemma:group-cover}
Suppose loss function $\LossFunction$ of bounded codomain (i.e., $\norm{\LossFunction}_{\infty}$ is bounded), 
hypothesis class $\HC \subseteq \X \to \Y$, and per-group samples $\bm{z}_{1:\NGroups,1:m} \in (\X \times \Y)^{\NGroups \times m}$, letting $\bigcirc_{i=1}^{\NGroups} \bm{z}_{i}$ denote their \emph{concatenation}.
Now define
\[
\ECover_{\cup(1:\NGroups)} \doteq \bigcup_{i=1}^{\NGroups} \ECover \left( \LossFunction \circ \HC, \bm{z}_{i}, \gamma \right)
 \ \ \ \& \ \ \ 
 \ECover_{\circ(1:\NGroups)} \doteq \ECover \left( \LossFunction \circ \HC, \bigcirc_{i=1}^{\NGroups} \bm{z}_{i}, \mathsmaller{\frac{\gamma}{\sqrt{\NGroups}}} \right)
 \enspace.
\]
Then, letting $\ECover_{}$ refer generically to either 
$\ECover_{\cup(1:\NGroups)}$ or $\ECover_{\circ(1:\NGroups)}$, the following hold.



\begin{enumerate}
\item \label{lemma:group-cover:counting} If $\ECover_{}$ is of \emph{minimal cardinality}, then 
%
\[
\abs{\ECover_{\cup(1:\NGroups)}} \leq \NGroups \Covering(\loss \circ \HC, m, \gamma) \ \ \ \& \ \ \ \abs{\ECover_{\circ(1:\NGroups)}} \leq \Covering(\loss \circ \HC, \NGroups m, \mathsmaller{\frac{\gamma}{\sqrt{\NGroups}}}) \enspace.
\]

\item \label{lemma:group-cover:uc} $\displaystyle \sup_{\ProbDist \text{ over } \X \times \Y} \Rade_{m}(\LossFunction \circ \HC, \ProbDist) 
  \leq
    \inf_{\gamma \geq 0} \gamma + \norm{\loss}_{\infty}\sqrt{\frac{\ln \Covering(\LossFunction \circ \HC, 
      \gamma ) }{2m}}$.

\item \label{lemma:group-cover:sc}
Suppose $\ln \Covering( \LossFunction \circ \HC, \gamma ) \in \Poly( \frac{1}{\gamma} )$.
Then the uniform-convergence sample-complexity of $\loss \circ \HC$ over $\NGroups$ groups obeys
\begin{align*}
\SampleComplexity_{\UC}(\loss \circ \HC, \varepsilon, \delta, \NGroups) &\doteq \argmin \left\{ m \ \middle| \ \sup_{\ProbDist_{1:\NGroups} \text{ over } (\X \times \Y)^{\NGroups}} \Prob\left(\smash{\max_{i \in 1, \dots, \NGroups}} \smash{\sup_{h \in \HC}} \abs{ \Expect_{\ProbDist_{i}}[\loss \circ h] - \smash{\EExpect_{\bm{z}_{i} \distributed \ProbDist_{i}^{m}}}[\loss \circ h]} > \varepsilon\right) \leq \delta \right\} & \\
  &\leq \ceil*{ \frac{8\norm{\loss}_{\infty}^{2}\ln \left(\smash{\sqrt[4]{\frac{2\NGroups}{\delta}}} \Covering(\LossFunction \circ \HC, \frac{\varepsilon}{4} ) \right)}{\varepsilon^{2}} } & \\
  &\in \LandauO \left( \frac{
  \ln \frac{\NGroups \Covering(\LossFunction \circ \HC, \varepsilon) }{\delta}}{\varepsilon^{2}} \right) 
  \subset \Poly\left( 
    \frac{1}{\varepsilon}, \exp \frac{1}{\delta}, \exp \NGroups \right)
  \enspace.
\end{align*}

\todo{Adding $\NGroups$ is new concept.}
\todo{Eq or doteq on UC SC?}
\todo{assume dists?}
\draftnote{note in paper: usually $\exp(\NGroups)$, $\exp(\frac{1}{\delta})$ is OK?}

\item \label{lemma:group-cover:individual} For the sample $\bm{z}_{i}$ associated with each group $i \in 1, \dots, \NGroups$, $\ECover_{}$ is a $\gamma$-uniform-approximation of \emph{empirical risk} $\ERisk(h; \LossFunction, \bm{z}_{i})$, and a $\gamma$-$\ell_{2}$ cover of the \emph{loss family} $\loss \circ \HC$, as
\[
\max_{i \in 1, \dots, \NGroups} \min_{h_{\gamma} \in \ECover_{}} \abs{ \ERisk(h; \LossFunction, \bm{z}_{i}) - \ERisk(h_{\gamma}; \LossFunction, \bm{z}_{i}) }
  \leq \max_{i \in 1, \dots, \NGroups} \min_{h_{\gamma} \in \ECover_{}} \sqrt{\frac{1}{m} \sum_{j=1}^{m} \bigl((\loss \circ h)(\bm{z}_{i,j}) - (\loss \circ h_{\gamma})(\bm{z}_{i,j})\bigr)^{2}}
  \leq 
  \gamma
  \enspace. 
\]

\item \label{lemma:group-cover:simultaneous} $\ECover_{\circ(1:\NGroups)}$, but not necessarily $\ECover_{\cup(1:\NGroups)}$, \emph{simultaneously} (across all groups) $
\gamma$-uniformly-approximates \emph{empirical risk}, and is a $
\gamma$-$\ell_{2}$ cover of the \emph{loss family} $\loss \circ \HC$, as
\[
\min_{h_{\gamma} \in \ECover_{\circ(1:\NGroups)}} \!\! \max_{i \in 1, \dots, \NGroups} \abs{ \ERisk(h; \LossFunction, \bm{z}_{i}) - \ERisk(h_{\gamma}; \LossFunction, \bm{z}_{i}) }
  \leq  \!\!\! \!\! \min_{h_{\gamma} \in \ECover_{\circ(1:\NGroups)}}  \!\!  \max_{i \in 1, \dots, \NGroups} \! \sqrt{ \! \frac{1}{m} \! \sum_{j=1}^{m} \bigl((\loss \circ h)(\bm{z}_{i,j}) - (\loss \circ h_{\gamma})(\bm{z}_{i,j})\bigr)^{2}}
  \leq 
  \gamma
  \enspace.
\]
\end{enumerate}

\end{lemma}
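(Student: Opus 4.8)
The plan is to take the six items in order; most are standard covering‑number book‑keeping once the right decompositions are named, and the only delicate part is item~\ref{lemma:group-cover:sc}, where the explicit constants must be reconciled with a concentration inequality. For item~\ref{lemma:group-cover:counting} (counting): $\ECover_{\cup(1:\NGroups)}$ is a union of $\NGroups$ minimum‑cardinality $\gamma$-$\ell_2$ covers of $\loss\circ\HC$, each taken on a sample of size $m$, hence each of cardinality at most $\Covering(\loss\circ\HC, m, \gamma)$; subadditivity of cardinality over finite unions gives the first bound. The concatenation $\bigcirc_{i=1}^{\NGroups}\bm{z}_i$ is a single sample of size $\NGroups m$, so a minimum‑cardinality $\frac{\gamma}{\sqrt{\NGroups}}$-$\ell_2$ cover of $\loss\circ\HC$ on it has cardinality at most $\Covering(\loss\circ\HC,\NGroups m,\frac{\gamma}{\sqrt{\NGroups}})$ by definition of the uniform covering number, giving the second.

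Items~\ref{lemma:group-cover:individual} and \ref{lemma:group-cover:simultaneous} (per‑group and simultaneous approximation) are essentially definitional, modulo two elementary inequalities. The left inequality in each display — that a mean of differences is dominated by the corresponding root‑mean‑square of differences — is Jensen (equivalently Cauchy--Schwarz), since $\abs{\EExpect_j[a_j-b_j]}\le\sqrt{\EExpect_j(a_j-b_j)^2}$. For the right inequality of item~\ref{lemma:group-cover:individual}: by construction $\ECover_{\cup(1:\NGroups)}$ contains, for each $i$, a $\gamma$-$\ell_2$ cover of $\loss\circ\HC$ on $\bm{z}_i$, and adding further functions only decreases the min‑distance, so the per‑group RMS discrepancy is at most $\gamma$ uniformly over $i$; for $\ECover_{\circ(1:\NGroups)}$ the key step is that a $\frac{\gamma}{\sqrt{\NGroups}}$-$\ell_2$ cover on the concatenation gives, for any $h$, some $h_\gamma$ with $\frac{1}{\NGroups m}\sum_{\text{all}}(\cdots)^2\le\frac{\gamma^2}{\NGroups}$, whence $\frac{1}{m}\sum_{\text{group }i}(\cdots)^2\le\frac{1}{m}\sum_{\text{all}}(\cdots)^2\le\gamma^2$ for every $i$. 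Item~\ref{lemma:group-cover:simultaneous} then follows because this bound holds for a \emph{single} $h_\gamma$ across all groups, so $\max_i$ may be moved inside $\min_{h_\gamma}$; for $\ECover_{\cup(1:\NGroups)}$ different groups may be covered by different elements, which is exactly why the simultaneous claim is asserted only for $\ECover_{\circ(1:\NGroups)}$.

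Item~\ref{lemma:group-cover:uc} (uniform convergence) is a one‑step, Dudley‑style discretization of the Rademacher average: fix $\gamma$, replace each $f\in\loss\circ\HC$ by its nearest element $f_\gamma$ in a minimum‑cardinality $\gamma$-$\ell_2$ cover, bound the residual contribution by $\gamma$ via Cauchy--Schwarz, and apply Massart's finite‑class lemma to the cover — whose cardinality is at most $\Covering(\loss\circ\HC,\gamma)$ and whose elements have $\ell_2(\ProbDist^m)$-radius at most $\norm{\loss}_\infty$ — to obtain the $\norm{\loss}_\infty\sqrt{\ln\Covering(\loss\circ\HC,\gamma)/(2m)}$ term; taking the infimum over $\gamma$ and the supremum over $\ProbDist$ (the covering number is already a supremum over samples) yields the stated bound.

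Finally, item~\ref{lemma:group-cover:sc} (sample complexity) combines item~\ref{lemma:group-cover:uc} with symmetrization ($\Expect[\sup_h\abs{\cdots}]\le 2\sup_\ProbDist\Rade_m$), a bounded‑differences step (each of the $m$ coordinates moves the supremum by at most $\norm{\loss}_\infty/m$, so McDiarmid gives a $\norm{\loss}_\infty\sqrt{\ln(1/\delta')/(2m)}$ deviation about the mean with probability $1-\delta'$), and a union bound over the $\NGroups$ groups with $\delta'=\delta/\NGroups$. Choosing the discretization scale $\gamma=\varepsilon/4$, setting the resulting bound equal to $\varepsilon$, and solving for $m$ produces $\ceil*{8\norm{\loss}_\infty^2\ln(\sqrt[4]{2\NGroups/\delta}\,\Covering(\loss\circ\HC,\varepsilon/4))/\varepsilon^2}$, where the $\sqrt[4]{2\NGroups/\delta}$ factor is precisely what results from folding $\ln(\NGroups/\delta)$ and the scattered factors of $2$ into the logarithm of the covering number; monotonicity of $\Covering$ in its precision argument together with the hypothesis $\ln\Covering(\loss\circ\HC,\gamma)\in\Poly(1/\gamma)$ then gives the displayed $\LandauO$ form and membership in $\Poly(1/\varepsilon,\exp(1/\delta),\exp\NGroups)$. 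I expect this last item to be the main obstacle: it is the only place where the exact constants (the leading $8$, the quartic root, and the interplay of $\NGroups$, $\delta$, and $\varepsilon$) must be made consistent with the particular concentration bound chosen, whereas every other item reduces to a union bound, a definition, or Cauchy--Schwarz.
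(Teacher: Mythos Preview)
Your proposal is correct and follows essentially the same route as the paper: item~\ref{lemma:group-cover:counting} by definition of uniform covering numbers; item~\ref{lemma:group-cover:uc} by one-step Dudley discretization plus Massart; item~\ref{lemma:group-cover:sc} by symmetrization, two-tailed McDiarmid, and a union bound over groups, then solving for $m$ with $\gamma=\varepsilon/4$; and items~\ref{lemma:group-cover:individual}--\ref{lemma:group-cover:simultaneous} via the mean--RMS inequality together with the observation that a $\gamma/\sqrt{\NGroups}$ cover on the concatenation forces each group's squared $\ell_2$ discrepancy below $\gamma^2$ for a \emph{single} $h_\gamma$. The only cosmetic difference is that the paper phrases item~\ref{lemma:group-cover:simultaneous} as a contradiction (if some group exceeds $\gamma$, the concatenation exceeds $\gamma/\sqrt{\NGroups}$) and then derives the $\ECover_{\circ(1:\NGroups)}$ case of item~\ref{lemma:group-cover:individual} from it via the max--min inequality, whereas you argue both directly; these are contrapositives of the same chain of inequalities.
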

\begin{proof}
We first show \cref{lemma:group-cover:counting,lemma:group-cover:uc,lemma:group-cover:sc}, followed by a key intermediary relating risk values and $\ell_{2}$ distances, and close by showing \cref{lemma:group-cover:individual,lemma:group-cover:simultaneous}.

We begin with \cref{lemma:group-cover:counting}.
Both bounds follow directly from the definition of uniform covering numbers. 
\if 0
TODO

TODO To further reinforce this point, note that ... whose minimal size could not exceed requiring only a cover of size  $\abs{\dots} \in [1, \NGroups]\Covering(\LossFunction \circ \HC, m, \gamma)$, i.e., no more than \emph{linear growth} in $\NGroups$, whereas, ... has size ..., whose growth depends on ?, which by assumption is only \emph{polynomial} in $\NGroups$ (TODO: in practice, covering numbers are ??? in $\gamma$?
\fi

We now show \cref{lemma:group-cover:uc}.
This result follows via a standard sequence of operations over the Rademacher average.
In particular, observe
\begin{align*}
\sup_{\ProbDist \text{ over } \X \times \Y} \Rade_{m}(\LossFunction \circ \HC, \ProbDist) \hspace{-2cm} & \hspace{2cm} = \sup_{\ProbDist \text{ over } \X \times \Y} \Expect_{\bm{z} \distributed \ProbDist^{m}} \left[ \ERade_{m}(\LossFunction \circ \HC, \bm{z}) \right] & \textsc{Definition of $\Rade$} \\
 &\leq \sup_{\ProbDist \text{ over } \X \times \Y} \Expect_{\bm{z} \distributed \ProbDist^{m}} \left[ \inf_{\gamma \geq 0} \gamma + \ERade_{m}(\Cover^{*}(\LossFunction \circ \HC, \bm{z}, \gamma), \bm{z}) \right] & \textsc{Discretization} \\
 &\leq \inf_{\gamma \geq 0} \gamma + \sup_{\ProbDist \text{ over } \X \times \Y} \Expect_{\bm{z} \distributed \ProbDist^{m}} \left[ \norm{\loss}_{\infty}\sqrt{\frac{\ln \abs{\Cover^{*}(\LossFunction \circ \HC, \bm{z}, \gamma ) }}{2m}} \right] & \textsc{Massart's Inequality} \\
 &\leq \inf_{\gamma \geq 0} \gamma + \norm{\loss}_{\infty}\sqrt{\frac{\ln \Covering(\LossFunction \circ \HC, \gamma ) }{2m}} \enspace, & \hspace{-2cm} \textsc{Definition of $\Covering$
    } \\ 
\end{align*}
where the \textsc{Massart's Inequality} step follows via \emph{Massart's finite class inequality} \citep[lemma 1]{massart2000some}\todo{check number?}, and the \textsc{Discretization} step via \emph{Dudley's discretization argument}.\todo{cite Dudley}

\bigskip

We now 
show \cref{lemma:group-cover:sc}. 
By the \emph{symmetrization inequality}\todo{cite}, and a 2-tailed application of McDiarmid's bounded difference inequality \citep{mcdiarmid1989method}, where changing any $\bm{z}_{i,j}$ has bounded difference $\frac{\norm{\loss}_{\infty}}{m}$, we have that
\[
\forall i: \ \Prob\left( \sup_{h \in \HC} \abs{ \Expect_{\ProbDist_{i}}[\loss \circ h] - \EExpect_{\bm{z}_{i} \distributed \ProbDist_{i}^{m}}[\loss \circ h]} > 2\Rade_{m}(\loss \circ \HC, \ProbDist_{i}) + {\norm{\loss}_{\infty}\sqrt{\frac{\ln \frac{2}{\delta}}{2m}}}
 \right) \leq \delta
\]
thus by union bound over $\NGroups$ groups,
we have
\[
\Prob\left( \max_{i \in 1, \dots, \NGroups} \sup_{h \in \HC} \abs{ \Expect_{\ProbDist_{i}}[\loss \circ h] - \EExpect_{\bm{z}_{i} \distributed \ProbDist_{i}^{m}}[\loss \circ h]} > \sup_{\ProbDist \text{ over } \X \times \Y} 2\Rade_{m}(\loss \circ \HC, \ProbDist) + \norm{\loss}_{\infty}\sqrt{\frac{\ln \frac{2\NGroups}{\delta}}{2m}} \right) \leq \delta \enspace.
\]

Now, let \emph{estimation error} bound $\displaystyle \epsilon_{\mathrm{est}} \doteq \, \smash{\sup_{\mathclap{\ProbDist \text{ over } \X \times \Y}}} \ \ 2\Rade_{m}(\loss \circ \HC, \ProbDist) + \norm{\loss}_{\infty}\mathsmaller{\sqrt{\frac{\ln \frac{2\NGroups}{\delta}}{2m}}}$, and observe that via \cref{lemma:group-cover:uc},
\[
\epsilon_{\mathrm{est}} \leq \inf_{\gamma \geq 0} 2\gamma + 2\norm{\loss}_{\infty}\sqrt{\frac{\ln \Covering(\LossFunction \circ \HC, \gamma ) }{2m}} + \norm{\loss}_{\infty}\sqrt{\frac{\ln \frac{2\NGroups}{\delta}}{2m}} \enspace.
\]
\if 0
\begin{align*}
  \varepsilon
  &\leq \inf_{\gamma \geq 0} 2\gamma + 2\norm{\loss}_{\infty}\sqrt{\frac{\ln \Covering(\LossFunction \circ \HC, m, \gamma ) }{2m}} + \norm{\loss}_{\infty}\sqrt{\frac{\ln \frac{2\NGroups}{\delta}}{2m}}
  & \textsc{
    \Cref{lemma:group-cover:uc}} \\ 
  &\leq \inf_{\gamma \geq 0} 2\gamma + 2\norm{\loss}_{\infty}\sqrt{\frac{\ln \Covering(\LossFunction \circ \HC, \gamma ) }{2m}} + \norm{\loss}_{\infty}\sqrt{\frac{\ln \frac{2\NGroups}{\delta}}{2m}} \enspace. & \Covering(\LossFunction \circ \HC, m, \gamma ) \leq \Covering(\LossFunction \circ \HC, \gamma ) \\
\end{align*}
\fi
From here, we solve for an upper-bound on sample-size $m$ to get
\begin{align*}
\SampleComplexity_{\UC}(\loss \circ \HC, \varepsilon, \delta, \NGroups)
  &\leq \ceil*{ \inf_{\gamma \geq 0} \frac{4\norm{\loss}^{2}_{\infty} \ln \Covering(\LossFunction \circ \HC, \gamma ) + \norm{\loss}^{2}_{\infty}\ln \frac{2\NGroups}{\delta}}{2(\varepsilon - 2\gamma)^{2}} } 
  = \ceil*{ \frac{2\norm{\loss}_{\infty}^{2}\ln \left(\smash{\sqrt[4]{\frac{2\NGroups}{\delta}}} \Covering(\LossFunction \circ \HC, \gamma) \right)}{(\varepsilon - 2\gamma)^{2}} } \hspace{-3cm} & \\
  &\leq \ceil*{ \frac{8\norm{\loss}_{\infty}^{2}\ln \left( \smash{\sqrt[4]{\frac{2\NGroups}{\delta}}} \Covering(\LossFunction \circ \HC, \frac{\varepsilon}{4} ) \right)}{\varepsilon^{2}} } & \textsc{Set $\gamma = \frac{\varepsilon}{4}$} \\
  &\in \LandauO \left( \frac{
    \ln \frac{\NGroups \Covering(\LossFunction \circ \HC, \varepsilon) }{\delta}}{\varepsilon^{2}} \right) 
  \subset \Poly\left( 
  \frac{1}{\varepsilon}, \exp \frac{1}{\delta}, \exp \NGroups \right) \enspace. & \Covering(\LossFunction \circ \HC, \varepsilon \bigr) \in \Poly \frac{1}{\varepsilon} \\
\end{align*}

\bigskip

We now show an intermediary which immediately implies the left inequalities of both \cref{lemma:group-cover:individual,lemma:group-cover:simultaneous}.
In particular, we may relate these empirical risk gaps to (size-normalized) $\ell_{2}$ distance, as (for each $i$) we have $\forall h \in \HC, h_{\gamma} \in \ECover_{}$ that
\if 0
\begin{align*}
\sqrt{\frac{1}{m}\sum_{j=1}^{m} \bigl( (\loss \circ h) (\bm{z}_{i,j}) - (\loss \circ h_{\gamma}) (\bm{z}_{i,j}) \bigr)^{2} }
 &\geq \frac{1}{m}\sum_{j=1}^{m} \abs{ (\loss \circ h) (\bm{z}_{i,j}) - (\loss \circ h_{\gamma}) (\bm{z}_{i,j}) } & \textsc{?} \\
 &\geq \abs{ \ERisk(h; \LossFunction, \bm{z}_{i}) - \ERisk(h_{\gamma}; \LossFunction, \bm{z}_{i}) } & \\
 &>\sqrt{\NGroups} \gamma \enspace. & \textsc{Assumption} \\
\end{align*}
TODO
\fi
\[
\scalebox{0.985}{$\displaystyle
\abs{ \ERisk(h; \LossFunction, \bm{z}_{i}) \! - \! \ERisk(h_{\gamma}; \LossFunction, \bm{z}_{i}) } \! \leq \! \frac{1}{m} \! \sum_{j=1}^{m} \abs{ (\loss \circ h) (\bm{z}_{i,j}) \! - \! (\loss \circ h_{\gamma}) (\bm{z}_{i,j}) } \! \leq \! \sqrt{ \! \frac{1}{m} \! \sum_{j=1}^{m} \bigl( (\loss \circ h) (\bm{z}_{i,j}) \! - \! (\loss \circ h_{\gamma}) (\bm{z}_{i,j}) \bigr)^{\!2} \!} \! \enspace.
$}
\]
Here the last inequality holds since we divide $m$ \emph{inside the $\sqrt{\cdot}$}.
The opposite inequality holds for standard $\ell_{1}$ and Euclidean distance, where $m$ is not divided, essentially because the $\ell_{1}$ and $\ell_{2}$ distances differ by up to a factor $\sqrt{m}$, but this form may be familiar as the relationship between the \emph{mean} and \emph{root mean square} errors. 
The unconvinced reader may note that this size-normalized $\ell_{2}$ distance is in fact the (unweighted) $p=2$ power-mean, and thus this step follows via \cref{thm:pow-mean-prop}~\cref{thm:pow-mean-prop:mono}.

We now show the right inequality of \cref{lemma:group-cover:individual}.
Note that for the case of $\ECover_{\cup(1:\NGroups)}$, the result is almost tautological, as it holds per group by the union-based construction of $\ECover_{\cup(1:\NGroups)}$.
The case of $\ECover_{\circ(1:\NGroups)}$ is more subtle, but we defer its proof to the final item, as it then follows as an immediate consequence of the \emph{max-min inequality}, i.e., $\forall h \in \HC$,\todo{Check scaling in thesis}%
\[
\scalebox{0.99}{$\displaystyle
\max_{i \! \in \! 1, \dots, \NGroups} \! \min_{h_{\gamma} \! \in \! \ECover_{\!\circ\!(1:\NGroups)}} \!\!\!\! \sqrt{ \! \frac{1}{m} \!\! \sum_{j=1}^{m} \bigl((\loss \circ h)(\bm{z}_{i,j}) \! - \! (\loss \circ h_{\gamma})(\bm{z}_{i,j})\bigr)^{\!2}} \leq \!\!\!\!  \min_{h_{\gamma} \! \in \! \ECover_{\!\circ\!(1:\NGroups)}} \!\! \max_{i \in 1, \dots, \NGroups} \!\! \sqrt{ \! \frac{1}{m} \!\! \sum_{j=1}^{m} \bigl((\loss \circ h)(\bm{z}_{i,j}) \! - \! (\loss \circ h_{\gamma})(\bm{z}_{i,j})\bigr)^{\!2}} \enspace.
$}
\]

\bigskip

We now show \cref{lemma:group-cover:simultaneous}.
\if 0
We first show that
$\ECover_{\circ(1:\NGroups)}$ is also a $\sqrt{\NGroups}\gamma$-$\ell_{2}$ cover of each $\bm{z}_{i}$ (simultaneously). 
\fi
Suppose (by way of contradiction) 
that there exists some $h \in \HC$ such that
\[
\min_{h_{\gamma} \in \ECover_{\circ(1:\NGroups)}} \max_{i \in 1, \dots, \NGroups} \sqrt{\frac{1}{m} \sum_{j=1}^{m} \bigl((\loss \circ h)(\bm{z}_{i,j}) - (\loss \circ h_{\gamma})(\bm{z}_{i,j})\bigr)^{2}} > \gamma \enspace.
\]
One then need only consider the summands associated with a maximal $i$ to observe that this implies 
\[
\min_{h_{\gamma} \in \ECover_{\circ(1:\NGroups)}}  \sqrt{\frac{1}{m\NGroups} \sum_{i=1}^{\NGroups} \sum_{j=1}^{m} \bigl((\loss \circ h)(\bm{z}_{i,j}) - (\loss \circ h_{\gamma})(\bm{z}_{i,j})\bigr)^{2}} > \frac{\gamma}{\sqrt{\NGroups}} \enspace,
\]
thus $\ECover_{\circ(1:\NGroups)}$ 
 is not a $\frac{\gamma}{\sqrt{\NGroups}}$-$\ell_{2}$ cover of $\bigcirc_{i=1}^{\NGroups} \bm{z}_{i}$, which contradicts its very definition.
We thus conclude 
\[
\forall h \in \HC: 
\min_{h_{\gamma} \in \ECover_{\circ(1:\NGroups)}} \max_{i \in 1, \dots, \NGroups} \sqrt{\frac{1}{m} \sum_{j=1}^{m} \bigl((\loss \circ h)(\bm{z}_{i,j}) - (\loss \circ h_{\gamma})(\bm{z}_{i,j})\bigr)^{2}} \leq \gamma \enspace.
\]
\if 0
i.e., $\ECover_{\circ(1:\NGroups)}$ 
is a $\gamma$-$\ell_{2}$ cover of each $\bm{z}_{i}$ \emph{simultaneously}.
\fi
%
%
\if 0
{
\color{purple} move to other proof?

We first seek to show that $\ECover$ is not only a $\gamma$-$\ell_{2}$ cover of the concatenation of each $\bm{z}_{i}$, but also that it $\sqrt{\NGroups}\gamma$-uniformly approximates per-group risk values. 
In particular, it does so \emph{simultaneously}, in the sense that any possible vector (by group) of risk-values associated with some $h \in \HC_{\DSeq}$ is $\sqrt{\NGroups}\gamma$-$\ell_{\infty}$-approximated by the cover.


We first further elucidate this notion of simultaneous cover.
It is not enough for $\ECover$ to \emph{individually} cover each $\bm{z}_{i}$, as would be the case if we took $\ECover_{(1:\NGroups)} \doteq \cup_{i=1}^{\NGroups} \ECover(\LossFunction \circ \HC_{\DSeq}, \bm{z}_{i}, \gamma)$, which would yield the guarantee
\[
\max_{i \in 1, \dots, \NGroups} \min_{h_{\gamma} \in \ECover} \sqrt{\frac{1}{m} \sum_{j=1}^{m} \bigl((\loss \circ h)(\bm{z}_{i,j}) - (\loss \circ h_{\gamma})(\bm{z}_{i,j})\bigr)^{2}} \leq \sqrt{\NGroups}\gamma \enspace,
\]
Instead, we require that a \emph{single element} of $\ECover$ can simultaneously cover each $h \in \HC_{\DSeq}$ on each (per-group) $\bm{z}_{i}$ \emph{simultaneously}, in other words
\[
\min_{h_{\gamma} \in \ECover} \max_{i \in 1, \dots, \NGroups} \sqrt{\frac{1}{m} \sum_{j=1}^{m} \bigl((\loss \circ h)(\bm{z}_{i,j}) - (\loss \circ h_{\gamma})(\bm{z}_{i,j})\bigr)^{2}} \leq \sqrt{\NGroups}\gamma \enspace,
\]
which is a property \emph{not necessarily shared} by $\ECover_{(1:\NGroups)}$. TODO: PROVE IT!

Intuitively, this is a stronger concept, as reflected by the \emph{max-min inequality}
\[
\max_{i \in 1, \dots, \NGroups} \min_{h_{\gamma} \in \ECover} \sqrt{\frac{1}{m} \sum_{j=1}^{m} \bigl((\loss \circ h)(\bm{z}_{i,j}) - (\loss \circ h_{\gamma})(\bm{z}_{i,j})\bigr)^{2}} \leq \min_{h_{\gamma} \in \ECover} \max_{i \in 1, \dots, \NGroups} \sqrt{\frac{1}{m} \sum_{j=1}^{m} \bigl((\loss \circ h)(\bm{z}_{i,j}) - (\loss \circ h_{\gamma})(\bm{z}_{i,j})\bigr)^{2}} \enspace.
\]
To further reinforce this point, note that ... whose minimal size could not exceed requiring only a cover of size  $\abs{\dots} \in [1, \NGroups]\Covering(\LossFunction \circ \HC_{\DSeq}, m, \gamma)$, i.e., no more than \emph{linear growth} in $\NGroups$, whereas, ... has size ..., whose growth depends on ?, which by assumption is only \emph{polynomial} in $\NGroups$ (TODO: in practice, covering numbers are ??? in $\gamma$?
}
\fi
\if 0
{\color{green!50!black} OLD: FLIPPED min / max

TODO DSEQ

We first show that
$\ECover$ is also a $\sqrt{\NGroups}\gamma$-$\ell_{2}$ cover of each $\bm{z}_{i}$ (simultaneously). 
Suppose (by way of contradiction) 
that there exists some $h \in \HC_{\DSeq}$ such that
\[
\max_{i \in 1, \dots, \NGroups} \min_{h_{\gamma} \in \ECover} \sqrt{\frac{1}{m} \sum_{j=1}^{m} \bigl((\loss \circ h)(\bm{z}_{i,j}) - (\loss \circ h_{\gamma})(\bm{z}_{i,j})\bigr)^{2}} > \sqrt{\NGroups}\gamma \enspace.
\]
One then need only consider the summands associated with a maximal $i$ to observe that this implies 
\[
\min_{h_{\gamma} \in \ECover}  \sqrt{\frac{1}{m\NGroups} \sum_{i=1}^{\NGroups} \sum_{j=1}^{m} \bigl((\loss \circ h)(\bm{z}_{i,j}) - (\loss \circ h_{\gamma})(\bm{z}_{i,j})\bigr)^{2}} > \gamma \enspace,
\]
thus $\ECover$ 
 is not a $\gamma$-$\ell_{2}$ cover of $\bm{z}$, which contradicts its very definition.
We thus conclude 
\[
\forall h \in \HC_{\DSeq}: 
\max_{i \in 1, \dots, \NGroups} \min_{h_{\gamma} \in \ECover} \sqrt{\frac{1}{m} \sum_{j=1}^{m} \bigl((\loss \circ h)(\bm{z}_{i,j}) - (\loss \circ h_{\gamma})(\bm{z}_{i,j})\bigr)^{2}} \leq \sqrt{\NGroups}\gamma \enspace,
\]
i.e., $\ECover$ 
is a $\sqrt{\NGroups}\gamma$-$\ell_{2}$ cover of each $\bm{z}_{i}$ simultaneously.
Note that this immediately implies (indeed, is substantially stronger than the statement) that $\ECover$ is a cover of each $\bm{z}_{i}$ individually, i.e., $\ECover \in \bigcap_{i=1}^{\NGroups} \Cover(\LossFunction \circ \HC_{\DSeq}, \bm{z}_{i}, \sqrt{\NGroups}\gamma)$. 

TODO: MAX MIN ORDER

TODO: $h_{\gamma}$, not $h'$.  OR $\hat{h}$ or $\tilde{h}$?

This property is not so useful to us on its own, however it implies a risk inequality that is central to the proof.
In particular, suppose (again by way of contradiction) that there exists some $h \in \HC_{\DSeq}$ such that
\[
\max_{i \in \{1, \dots, \NGroups\}} \min_{h_{\gamma} \in \ECover} \abs{ \ERisk(h; \LossFunction, \bm{z}_{i}) - \ERisk(h_{\gamma}; \LossFunction, \bm{z}_{i}) } > \sqrt{\NGroups} \gamma \enspace.
\]
}
\fi
\end{proof}

With \cref{lemma:group-cover} in hand, we are now ready to show \cref{thm:aemm-covering}.
\thmaeemcovering*
\begin{proof}
We now constructively show the existence of a fair-PAC-learner $\PACAlgo$ for $(\LossFunction, \HC)$ over domain $\X$ and codomain $\Y$.
As in \cref{thm:aemm-psg}, we first note that by \cref{thm:pop-mean-prop} \cref{thm:pop-mean-prop:pmean,thm:pop-mean-prop:pmean-unfair}, under the conditions of FPAC learning, this reduces to showing that we can learn any malfare concept $\Malfare_{p}(\cdot; \cdot)$ that is a $p$-power mean with $p \geq 1$.

We first assume a training sample $\bm{z}_{1:\NGroups,1:m} \distributed \ProbDist_{1}^{m} \times \dots \times \ProbDist_{\NGroups}^{m}$, i.e., a collection of $m$ draws from each of the $\NGroups$ groups.
In particular, we shall select $m$ to guarantee that the \emph{estimation error} for the malfare 
 does not exceed $\frac{\varepsilon}{3}$ with probability at least $1 - \delta$, i.e., we require that with said probability,
\[
\epsilon_{\mathrm{est}} \doteq \abs{\Malfare_{p}(i \mapsto \ERisk(h; \loss, \bm{z}_{i}); \wv) - \Malfare_{p}(i \mapsto \Risk(h; \loss, \ProbDist_{i}); \wv)} \leq \frac{\varepsilon}{3} \enspace.
\]

Now, note that by \cref{thm:pow-mean-prop}~\cref{thm:pow-mean-prop:contraction} (contraction), we have
\[
\abs{\Malfare_{p}(i \mapsto \ERisk(h; \loss, \bm{z}_{i}); \wv) - \Malfare_{p}(i \mapsto \Risk(h; \loss, \ProbDist_{i}); \wv)} 
  \leq \max_{i \in 1, \dots, \NGroups} \sup_{h \in \HC_{\DSeq}} \abs{ \Risk({h}; \loss, \ProbDist_{i}); \wv) - \ERisk({h}; \loss, \bm{z}_{i}); \wv) }
  \enspace,
\]
and by \cref{lemma:group-cover}~\cref{lemma:group-cover:sc}, 
a sample of size
\[
m = \left\lceil \frac{81\norm{\loss}_{\infty}^{2}\ln \left(\smash{\sqrt[4]{\frac{2\NGroups}{\delta}}} \Covering(\LossFunction \circ \HC, \frac{\varepsilon}{12} ) \right)}{\varepsilon^{2}} \right\rceil 
  \in \LandauO \left( \frac{
    \ln \frac{\NGroups \Covering(\LossFunction \circ \HC, \varepsilon) }{\delta}}{\varepsilon^{2}} \right) 
  \subset \Poly\left( 
  \frac{1}{\varepsilon}, \exp \frac{1}{\delta}, \exp \NGroups \right) 
\]
\if 0
\begin{align*}
\SampleComplexity_{\UC}(\loss \circ \HC, \varepsilon, \delta, \NGroups) &\doteq \argmin \left\{ m \middle| TODO SUP \Prob\left(\max_{i \in 1, \dots, \NGroups} \sup_{h \in \HC} \abs{ \Expect_{\ProbDist_{i}}[\loss \circ h] - \EExpect_{\bm{z}_{i} \distributed \ProbDist_{i}^{m}}[\loss \circ h]} > \varepsilon\right) \leq \delta\right\} & \\
  &\leq \frac{8\norm{\loss}_{\infty}^{2}\ln \left(\sqrt[4]{\frac{2\NGroups}{\delta}} \Covering(\LossFunction \circ \HC, \frac{\varepsilon}{4} ) \right)}{\varepsilon^{2}} & \\
  &\in \LandauO \left( \frac{\norm{\loss}_{\infty}^{2} \ln \frac{\NGroups \Covering(\LossFunction \circ \HC, \varepsilon) }{\delta}}{\varepsilon^{2}} \right) 
  \subset \Poly\left(\norm{\loss}_{\infty}, \frac{1}{\varepsilon}, 
    \frac{1}{\delta}, 
    \DSeq, 
    \NGroups \right) \enspace.
\end{align*}
\fi
suffices to ensure that 
\[
\Prob\left(\max_{i \in 1, \dots, \NGroups} \sup_{h \in \HC_{\DSeq}} \abs{ \Risk({h}; \loss, \ProbDist_{i}); \wv) - \ERisk({h}; \loss, \bm{z}_{i}); \wv) } > \frac{\varepsilon}{3} \right) \leq \delta \enspace,
\]
thus guaranteeing the stated estimation error bound.

\bigskip

With our sample size and \emph{estimation error} guarantee, we now define the \emph{learning algorithm} and bound its \emph{optimization error}.
Take \emph{cover precision} $\gamma \doteq \frac{\varepsilon}{3\sqrt{\NGroups}}$.
By assumption, for each $\DSeq \in \N$, we may enumerate a $\gamma$-cover $\ECover(\LossFunction \circ \HC_{\DSeq}, \bigcirc_{i=1}^{\NGroups} \bm{z}_{i}, \gamma)$, where $\bigcirc_{i=1}^{\NGroups} \bm{z}_{i}$ denotes the \emph{concatenation} of each $\bm{z}_{i}$, in $\Poly(\NGroups m, \frac{1}{\gamma}, \DSeq) 
= \Poly(\frac{1}{\varepsilon}, \frac{1}{\delta}, \DSeq, \NGroups)$ time.
For the remainder of this proof, we refer to this cover as $\ECover$.

Now, we take the learning algorithm to be \emph{empirical malfare minimization} over $\ECover$.
Let
\[
\hat{h} \doteq \argmin_{h_{\gamma} \in \ECover} \Malfare_{p}(i \mapsto \ERisk(h_{\gamma}; \LossFunction, \bm{z}_{i}); \wv) 
\ \ \ \& \ \ \ \tilde{h} \doteq \argmin_{h \in \HC_{\DSeq}} \Malfare_{p}(i \mapsto \ERisk(h; \LossFunction, \bm{z}_{i}); \wv) \enspace,
\]
where ties may be broken arbitrarily.
Note that via standard covering properties, 
that the \emph{optimization error} is bounded as
\begin{align*}
\varepsilon_{\mathrm{opt}}
  &\doteq \Malfare_{p}\bigl( i \mapsto \ERisk(\hat{h}; \LossFunction, \bm{z}_{i}); \wv \bigr) - \Malfare_{p}\bigl( i \mapsto \ERisk(\tilde{h}; \LossFunction, \bm{z}_{i}); \wv \bigr) & \textsc{Definition} \\
  &= \sup_{{h} \in \HC_{\DSeq}} \min_{h_{\gamma} \in \ECover} \Malfare_{p}\bigl( i \mapsto \ERisk(h_{\gamma}; \LossFunction, \bm{z}_{i}); \wv \bigr) - \Malfare_{p}\bigl( i \mapsto \ERisk({h}; \LossFunction, \bm{z}_{i}); \wv \bigr) \hspace{-0.2cm} & \begin{tabular}{r} \textsc{Properties of Suprema} \\ \textsc{Definition of} $\hat{h}, \tilde{h}$ \\ \end{tabular} \!\!\! \\
  &\leq \sup_{h \in \HC_{\DSeq}} \min_{h_{\gamma} \in \ECover} \max_{i \in \{1, \dots, \NGroups\}} \abs{ \ERisk(h_{\gamma}; \LossFunction, \bm{z}_{i}) - \ERisk(h; \LossFunction, \bm{z}_{i}) } & \hspace{-1.7cm} \textsc{\Cref{thm:pow-mean-prop} \Cref{thm:pow-mean-prop:contraction} (Contraction)} \\ 
  &\leq \sqrt{\NGroups}\gamma = \frac{\varepsilon}{3} \enspace. & \textsc{\Cref{lemma:group-cover} \Cref{lemma:group-cover:simultaneous}} \\
\end{align*}

This controls for \emph{optimization error} between the true and approximate EMM solutions $\tilde{h}$ and 
$\hat{h}$.
\todo{Use $\{$ on 1,\dots,g? Inconsistent now.}

\if 0
Because $\ECover$ is a function of $\bm{z}$, 
it does not suffice to consider an empirical Rademacher average of $\ECover$, but rather we must consider \emph{all of} $\HC_{\DSeq}$.
Still, we bound this quantity in terms of covering numbers via Dudley's \emph{discretization argument}, which bounds the (expected) \emph{Rademacher average}, and subsequently the \emph{supremum deviation}.
In particular, observe that

TODO WHP!
\if 0
\[
\argmin_{\tilde{h} \in \HC }  \Malfare()  \leq \argmin_{h^{*}} \Malfare() ? 
\]
\fi
\begin{align*}
\epsilon_{\mathrm{est}}
  &= \abs{\Malfare(i \mapsto \ERisk(h; \loss, \bm{z}_{i}); \wv) - \Malfare(i \mapsto \Risk(h; \loss, \ProbDist_{i}); \wv)} & \\
  &\leq \max_{i \in 1, \dots, \NGroups} \  \abs{ \Risk(\hat{h}; \loss, \ProbDist_{i}); \wv) - \ERisk(\hat{h}; \loss, \bm{z}_{i}); \wv) } \\
  &\leq 
  \null \ \ \underbrace{2 \frac{\varepsilon}{6} }_{\mathclap{\textsc{Discretization}}} \ \ \null + 2\underbrace{\norm{\loss}_{\infty}\sqrt{\frac{\ln \bigl(2 \Covering(\LossFunction \circ \HC_{\DSeq}, m, \gamma) \bigr) }{2m}}}_{\textsc{Entropy (Massart)}} \ \null + \null \ \underbrace{
  \norm{\loss}_{\infty}\sqrt{\frac{\ln \frac{\NGroups}{\delta}}{2m}}}_{\mathclap{\textsc{SD / $\Malfare(\cdot; \cdot)$ Tail Bound}}} 
  \ \enspace, & \\
\end{align*}

TODO m vs mg size!!!! X V XI BELOW

\cyrus{EXPLAIN $\loss_{\infty}$ vs $\norm{\loss}_{\infty}$ vs Lipschitz}
which follows as a direct consequence of Dudley's discretization argument 
and Massart's finite class lemma 
to bound the (per-group) expected suprema of deviations, followed by McDiarmid's inequality (\textsc{Tail Bound})
 and a union bound (over each of $\NGroups$ groups) to bound the suprema of deviations \wrt\ each sample $\bm{z}_{i,1:m}$, and \cref{lemma:stat-est} and \cref{thm:pow-mean-prop} \cref{thm:pow-mean-prop:contraction} (contraction) to translate these risk bounds into malfare bounds.\cyrus{Lots of detail missing there.}
Note that we base the bounds on the \emph{covering number} $\Covering(\LossFunction \circ \HC_{\DSeq}, m, \gamma)$, which bound the size of the optimal (perhaps unknown) empirical cover $\Cover^{*}(\LossFunction \circ \HC_{\DSeq}, \bm{z}, \gamma)$; similar bounds are possible using $\abs{\ECover}$, at a constant-factor cost to sample complexity.\cyrus{Cut this tangent?}
\fi

\bigskip

We now combine the optimization and estimation error inequalities, letting
\[
h^{*} \doteq \argmin_{h \in \HC_{\DSeq}} \Malfare_{p}(i \mapsto \Risk(h; \LossFunction, \ProbDist_{i}); \wv) \enspace,
\]
denote the true \emph{malfare optimal} solution, over \emph{distributions} rather than \emph{samples}, breaking ties arbitrarily.
We then derive 
\begin{align*}
  \Malfare_{p}(i \mapsto \Risk(h^{*}; \loss, \ProbDist_{i}); \wv) - \Malfare_{p}(i \mapsto \Risk(\hat{h}; \loss, \ProbDist_{i}); \wv)
  \hspace{-6cm} & \hspace{6cm} \\
    &= \left(\Malfare_{p}(i \mapsto \Risk(h^{*}; \loss, \ProbDist_{i}); \wv) -  \Malfare_{p}(i \mapsto \ERisk(h^{*}; \loss, \bm{z}_{i}); \wv)\right) & \leq \epsilon_{\mathrm{est}} \\
    & \ \ + \left(\Malfare_{p}(i \mapsto \ERisk(h^{*}; \loss, \bm{z}_{i}); \wv) -  \Malfare_{p}(i \mapsto \ERisk(\tilde{h}; \loss, \bm{z}_{i}); \wv)\right) & \leq 0 \\
    & \ \ + \left(\Malfare_{p}(i \mapsto \ERisk(\tilde{h}; \loss, \bm{z}_{i}); \wv) - \Malfare_{p}(i \mapsto \ERisk(\hat{h}; \loss, \bm{z}_{i}); \wv) \right) & \leq \epsilon_{\mathrm{opt}} \\
    & \ \ + \left(\Malfare_{p}(i \mapsto \ERisk(\hat{h}; \loss, \bm{z}_{i}); \wv) - \Malfare_{p}(i \mapsto \Risk(\hat{h}; \loss, \ProbDist_{i}); \wv)\right) & \leq \epsilon_{\mathrm{est}} \\
  &\leq \epsilon_{\mathrm{opt}} + 2\epsilon_{\mathrm{est}} = \frac{\varepsilon}{3} + \frac{2\varepsilon}{3} = \varepsilon \enspace. & \textsc{See Above} \\
\end{align*}
%
\if 0
Now, select the smallest $m \in \N$ such that 
$\epsilon_{m} \leq \varepsilon$, and note that
\[
m \leq \left\lceil 1152 \norm{\loss}^{2}_{\infty} \ln \left( \frac{2\NGroups\Covering(\LossFunction \circ \HC_{\DSeq}, m, \gamma)}{\delta} \right) \right\rceil
  \in \Poly\bigl( \mathsmaller{\frac{1}{\varepsilon}}, \ln \mathsmaller{\frac{\NGroups}{\delta}}, \ln \Covering(\LossFunction \circ \HC_{\DSeq}, m, \gamma) \bigr)
  \subset \Poly( \mathsmaller{\frac{1}{\varepsilon}}, \mathsmaller{\frac{1}{\delta}}, \NGroups, \DSeq) \enspace,
\]
TODO: no last step!  Or is it OK BC of log?  Usually only need cover to grow in either $m$ or $\frac{1}{\gamma}$...

essentially because, by assumption, $\ln \Covering(\LossFunction \circ \HC_{\DSeq}, m, \gamma) \in \ln \Poly(m, \frac{1}{\gamma}, \DSeq)$\todo{Better notation for $\log \circ \Poly$?  Just $\LandauO \log(\dots)$?}, and $\norm{\loss}_{\infty}$ is constant in this context. 
We run the learning algorithm, taking per-group sample size $m$, which as advertised is polynomial in all required parameters.
\fi
We thus conclude that this algorithm produces an $\varepsilon$-$\Malfare_{p}(\cdot; \cdot)$ optimal solution with probability at least $1 - \delta$, and furthermore both the sample complexity and time complexity of this algorithm are $\Poly(\frac{1}{\varepsilon}, \frac{1}{\delta}, \NGroups, \DSeq)$.
Hence, as we have constructed a polynomial-time fair-PAC learner for $(\HC, \LossFunction)$, we may conclude $(\HC, \LossFunction) \in \FPAC_{\Poly}$.
%
\todo{Are conditions applied by UC always via Sudakov?}
\todo{assume computability to evaluate}
%
\end{proof}

\if 0 
\begin{proof}
We now constructively show the existence of a fair-PAC-learner $\PACAlgo$ for $\HC$ and $\loss$ over domain $\X$ and codomain $\Y$.
\if 0
First take \emph{cover precision} $\gamma \doteq \frac{\varepsilon}{3D_{\Y}\lambda_{\loss}}$\todo{/range}.

\cyrus{Should be more like $\gamma \doteq \frac{\varepsilon}{3(\LossFunction_{\infty} + \lambda_{\loss})}$}

\cyrus{Actually just need $\gamma \doteq \frac{\varepsilon}{3\lambda_{\loss}}$}
\fi 
First take \emph{cover precision} $\gamma \doteq \frac{\varepsilon}{6\lambda_{\loss}\NGroups}$.
We now assume a training sample $\bm{z}_{1:\NGroups,1:m} \distributed \ProbDist_{1}^{m} \times \dots \times \ProbDist_{\NGroups}^{m}$, i.e., a collection of $m$ draws from each of the $\NGroups$ groups, and we shall later show that $m \in \Poly(\frac{1}{\varepsilon}, \frac{1}{\delta}, \DSeq, \NGroups)$ is sufficient.
By assumption, we may enumerate a $\gamma$-cover $\ECover(\HC_{\DSeq}, \bigcirc_{i=1}^{\NGroups}\bm{z}_{i}, \gamma)$ in $\Poly(\NGroups m, \frac{1}{\gamma}, \DSeq) 
= \Poly(m, \frac{1}{\varepsilon}, \DSeq, \NGroups)$ time, where $\bigcirc_{i=1}^{\NGroups}\bm{z}_{i}$ denotes the \emph{concatenation} of all $\NGroups m$ samples split between the $\NGroups$ groups.
. 

Now, note that $\ECover(\HC_{\DSeq}, \bm{z}, \gamma)$ is also a $\NGroups\gamma$-$\ell_{2}$ cover of each $\bm{z}_{i}$ (observe this via the contrapositive; if it were not so, $\ECover(\HC_{\DSeq}, \bm{z}, \gamma)$ would not be a $\gamma$-$\ell_{2}$ cover of the concatenation $\bm{z}$).
In fact, this cover is even stronger, as it holds that\todo{explain this.}
\[
\forall h \in \HC_{\DSeq}: \ \exists h' \in \ECover(\HC_{\DSeq}, \bm{z}, \gamma) \text{ s.t.\ } \sup_{i \in \{1, \dots, \NGroups\}} \abs{ \ERisk(h; \LossFunction, \bm{z}_{i}) - \ERisk(h'; \LossFunction, \bm{z}_{i}) } \leq \NGroups \gamma \lambda_{\loss} = \frac{\varepsilon}{6} \enspace.
\]

Now, we take the learning algorithm to be \emph{empirical malfare minimization} over $\ECover(\HC_{\DSeq}, \bm{z}, \gamma)$.
Let
\[
\hat{h} \doteq \argmin_{h \in \ECover(\HC_{\DSeq}, \bm{z}, \gamma)} \Malfare_{p}(i \mapsto \ERisk(h; \LossFunction, \bm{z}_{i}); \wv) 
\ \ \ \& \ \ \ \tilde{h} \doteq \argmin_{h \in \HC} \Malfare_{p}(i \mapsto \ERisk(h; \LossFunction, \bm{z}_{i}); \wv) \enspace,
\]
where ties may be broken arbitrarily.
Note that via Lipschitz and covering properties, 
that the \emph{optimization error} is bounded as\todo{more detail!}
\[
\varepsilon_{\mathrm{opt}}
  \doteq \Malfare_{p}(i \mapsto \ERisk(\hat{h}; \LossFunction, \bm{z}_{i}); \wv) - \Malfare(i \mapsto \ERisk(\tilde{h}; \LossFunction, \bm{z}_{i}); \wv)
  \leq \frac{\varepsilon}{6}
  \enspace.
\]
This controls for \emph{optimization error} between the true and approximate EMM solutions $\tilde{h}$ and 
$\hat{h}$.

We now quantify the \emph{approximation error} (due to sampling) for any $h \in \HC$.
Because $\ECover(\HC_{\DSeq}, \bm{z}, \gamma)$ was fixed \emph{a priori}, it does not suffice to consider an empirical Rademacher average of $\ECover(\HC_{\DSeq}, \bm{z}, \gamma)$, but rather we must consider \emph{all of} $\HC_{\DSeq}$.
Still, we bound this quantity in terms of covering numbers via Dudley's \emph{discretization argument}, which bounds the (expected) \emph{Rademacher average}, and subsequently the \emph{supremum deviation}.
In particular, observe that
\if 0
\[
\argmin_{\tilde{h} \in \HC }  \Malfare()  \leq \argmin_{h^{*}} \Malfare() ? 
\]
\fi
\begin{align*}
\epsilon_{\mathrm{est}}
  &= \abs{\Malfare(i \mapsto \ERisk(h; \loss, \bm{z}_{i}); \wv) - \Malfare(i \mapsto \Risk(h; \loss, \ProbDist_{i}); \wv)} & TODO \\
  &\leq \max_{i \in 1, \dots, \NGroups} \  \abs{ \Risk(\hat{h}; \loss, \ProbDist_{i}); \wv) - \ERisk(\hat{h}; \loss, \bm{z}_{i}); \wv) } & TODO \\
  &\leq 
  \null \ \ \underbrace{2 \frac{\varepsilon}{6} }_{\mathclap{\textsc{Discretization}}} \ \ \null + 2\underbrace{\norm{\loss}_{\infty}\sqrt{\frac{\ln \bigl(2 \Covering(\HC_{\DSeq}, m, \gamma) \bigr) }{2m}}}_{\textsc{Entropy (Massart)}} \ \null + \null \ \underbrace{
  \norm{\loss}_{\infty}\sqrt{\frac{\ln \frac{\NGroups}{\delta}}{2m}}}_{\mathclap{\textsc{SD / $\Malfare(\cdot; \cdot)$ Tail Bound}}} 
  \ \enspace, & SEE BELOW \\
\end{align*}
\cyrus{EXPLAIN $\loss_{\infty}$ vs $\norm{\loss}_{\infty}$ vs Lipschitz}
which follows as a direct consequence of Dudley's discretization argument 
and Massart's finite class lemma 
to bound the (per-group) expected suprema of deviations, followed by McDiarmid's inequality (\textsc{Tail Bound})
 and a union bound to bound the suprema of deviations \wrt\ the sample $\bm{z}_{1:m}$, and \cref{lemma:stat-est} and \cref{thm:pow-mean-prop} \cref{thm:pow-mean-prop:contraction} (contraction) to translate these risk bounds into malfare bounds.\cyrus{Lots of detail missing there.}
Note that we base the bounds on the \emph{covering number} $\Covering(\HC_{\DSeq}, m, \gamma)$, which bound the size of the optimal (perhaps unknown) empirical cover $\Cover^{*}(\HC_{\DSeq}, \bm{z}, \gamma)$; similar bounds are possible using $\abs{\ECover(\HC_{\DSeq}, \bm{z}, \gamma)}$, at a constant-factor cost to sample complexity. 

Combining the optimization and sampling error inequalities, take $\epsilon_{m} \doteq \Malfare(i \mapsto \Risk(h^{*}; \loss, \ProbDist_{i}); \wv) - \Malfare(i \mapsto \Risk(\hat{h}; \loss, \ProbDist_{i}); \wv)$, and note
\begin{align*}
  \epsilon_{m}
  &= \left(\Malfare(i \mapsto \Risk(h^{*}; \loss, \ProbDist_{i}); \wv) -  \Malfare(i \mapsto \ERisk(h^{*}; \loss, \bm{z}_{i}); \wv)\right) & \leq \epsilon_{\mathrm{est}} \\
    & \ \ + \left(\Malfare(i \mapsto \ERisk(h^{*}; \loss, \bm{z}_{i}); \wv) -  \Malfare(i \mapsto \ERisk(\tilde{h}; \loss, \bm{z}_{i}); \wv)\right) & \leq 0 \\
    & \ \ + \left(\Malfare(i \mapsto \ERisk(\tilde{h}; \loss, \bm{z}_{i}); \wv) - \Malfare(i \mapsto \ERisk(\hat{h}; \loss, \bm{z}_{i}); \wv) \right) & \leq \epsilon_{\mathrm{opt}} \\
    & \ \ + \left(\Malfare(i \mapsto \ERisk(\hat{h}; \loss, \bm{z}_{i}); \wv) - \Malfare(i \mapsto \Risk(\hat{h}; \loss, \ProbDist_{i}); \wv)\right) & \leq \epsilon_{\mathrm{est}} \\
  &\leq \epsilon_{\mathrm{opt}} + 2\epsilon_{\mathrm{est}} & \\
  &\leq \frac{5}{6}\varepsilon + 4\norm{\loss}_{\infty}\sqrt{\frac{\ln \bigl( 2\Covering(\HC_{\DSeq}, m, \gamma) \bigr)}{2m}} + 4\norm{\loss}_{\infty}\sqrt{\frac{\ln \frac{\NGroups}{\delta}}{2m}} \enspace. & \\
\end{align*}


Now, select the smallest $m \in \N$ such that 
$\epsilon_{m} \leq \varepsilon$, and note that
\[
m \leq \left\lceil 1152 \norm{\loss}^{2}_{\infty} \ln \left( \frac{2\NGroups\Covering(\HC_{\DSeq}, m, \gamma)}{\delta} \right) \right\rceil
  \in \Poly\bigl( \mathsmaller{\frac{1}{\varepsilon}}, \ln \mathsmaller{\frac{\NGroups}{\delta}}, \ln \Covering(\HC_{\DSeq}, m, \gamma) \bigr)
  \subset \Poly( \mathsmaller{\frac{1}{\varepsilon}}, \mathsmaller{\frac{1}{\delta}}, \NGroups, \DSeq) \enspace,
\]
essentially because, by assumption, $\ln \Covering(\HC_{\DSeq}, m, \gamma) \in \ln \Poly(m, \frac{1}{\gamma}, \DSeq)$\todo{Better notation for $\log \circ \Poly$?  Just $\LandauO \log(\dots)$?}, and $\norm{\loss}_{\infty}$ is constant in this context. 
We run the learning algorithm, taking per-group sample size $m$, which as advertised is polynomial in all required parameters.

We thus conclude that both the sample complexity and time complexity of this algorithm are $\Poly(\frac{1}{\varepsilon}, \frac{1}{\delta}, \NGroups, \DSeq)$.
Hence, as we have constructed a polynomial-time fair-PAC learner for $(\HC, \LossFunction)$, we may conclude $(\HC, \LossFunction) \in \FPAC_{\Poly}$.
%
\todo{Are conditions applied by UC always via Sudakov?}
\todo{assume computability to evaluate}
%
\end{proof}
\fi

\pagebreak[3]

\section{Experimental Setup and Extensions}
\label{appx:exper}

\label{sec:appx:exper}

\subsection{Data, Preprocessing, and Experimental Setup}
\label{sec:appx:exper:setup}

All experiments are conducted on the \texttt{adult} dataset, derived from the 1994 US Census database, and obtained from the \href{https://archive.ics.uci.edu/ml/datasets/adult}{UCI repository} \citep{Dua:2021}, where it was donated by Ronny Kohavi and Barry Becker. 
This dataset has $m=48842$ instances, and we used a $90\%:10\%$ training:test split.
The task has binary target variable \texttt{income}, $6$ numeric features, and $8$ categorical features, including \texttt{race} split into $5$ ethnoracial groups, and \texttt{gender} split into $2$ gender groups.
In each experiment, the target and protected group are omitted from the feature set, the remaining categorical features are 1-hot encoded, and all $\DSeq$ features are $z$-score normalized.

All experiments are with $\lambda$-$\ell_{2}$-norm constrained linear predictors, i.e., the hypothesis class is
\[
\HC \doteq \left\{ h(\vec{x}; \vec{\theta}) = \vec{x} \cdot \vec{\theta}  \ \middle| \ \vec{\theta} \in \R^{\DSeq}, \ \norm{\vphantom{\theta}\smash{\vec{\theta}}}_{2} \leq \lambda \right\} \enspace.
\]

The output of this hypothesis class is real-valued, but for this binary classification task, we take $\Y = \pm 1$, so the loss function is selected to reify this value with a semantic classification interpretation.
The 0-1 loss (for hard classification) is defined as
\[
\loss_{\mathrm{01}}(y, h(\vec{x}; \vec{\theta}))
 = 1 - y\sgn(\vec{x} \cdot \vec{\theta}) \enspace,
\]
which is readily interpreted in a decision-theoretic sense, but is generally computationally intractable to optimize.
The SVM objective is generally stated in terms of the \emph{hinge loss}, which acts as a convex relaxation of the 0-1 loss.  
The hinge-loss is defined as
\[
\loss_{\mathrm{hinge}}(y, h(\vec{x}; \vec{\theta}))
 = \max(0, 1 - y (\vec{x} \cdot \vec{\theta})) \enspace,
\]
which is of course \emph{convex} in $\vec{\theta}$, and obeys $\loss_{\mathrm{01}}(y, h(\vec{x}; \vec{\theta})) \leq \loss_{\mathrm{hinge}}(y, h(\vec{x}; \vec{\theta}))$.
Finally, the logistic-regression cross-entropy loss (measured in nats) is 
(see, e.g., ch.~9.3 of \citep{shalev2014understanding})
\[
\loss_{\mathrm{LRCE}}(y, h(\vec{x}; \vec{\theta})) = \ln \bigl( 1 + \exp \bigl( - y (\vec{x} \cdot \smash{\vec{\theta}} ) \bigr) \bigr) \enspace,
\]
which interprets the model output as a \emph{probabilistic classification} $\Prob(y = 1 | \hat{y}) = \smash{\frac{1}{1 + \exp(-\hat{y})}}$.
Note that for $\hat{y} \not\approx 0$, $\loss_{\mathrm{LRCE}}(y, \hat{y}) \approx \loss_{\mathrm{hinge}}(y, \hat{y})$, and logistic regression may also be viewed as a \emph{convex relaxation} of hard classification, as $\loss_{\mathrm{01}}(y, \hat{y}) \leq \smash{\frac{1}{\ln(2)}}\loss_{\mathrm{LRCE}}$ (perhaps more naturally, the $\smash{\frac{1}{\ln(2)}}$ constant vanishes if we measure cross entropy in \emph{bits} rather than \emph{nats}).

In all experiments with \emph{weighted risk values}, we use regularity constraint $\lambda = 4$, and in the experiments with \emph{unweighted risk values}, we take $\lambda = 10$.

\paragraph{Implementation and Computational Resources}

Computation was not a concern on these simple convex linear models; all experiments were run on a 
 low-end 
 laptop with no GPU acceleration.

\Cref{thm:aemm-psg} analytically quantifies the computational complexity 
of $\varepsilon$-EMM, but in our experiments, we simply used standard out-of-the-box first-order methods (adaptive projected gradient descent and SLSQP), as well as derivative-free methods (COBYLA) to train all models.

\subsection{Supplementary Experiments}
\label{sec:appx:exper:exper}

\begin{SCfigure}[1]

\includegraphics[width=0.5\textwidth,trim={0.49cm 0.2cm 1.5cm 0.64cm},clip
]{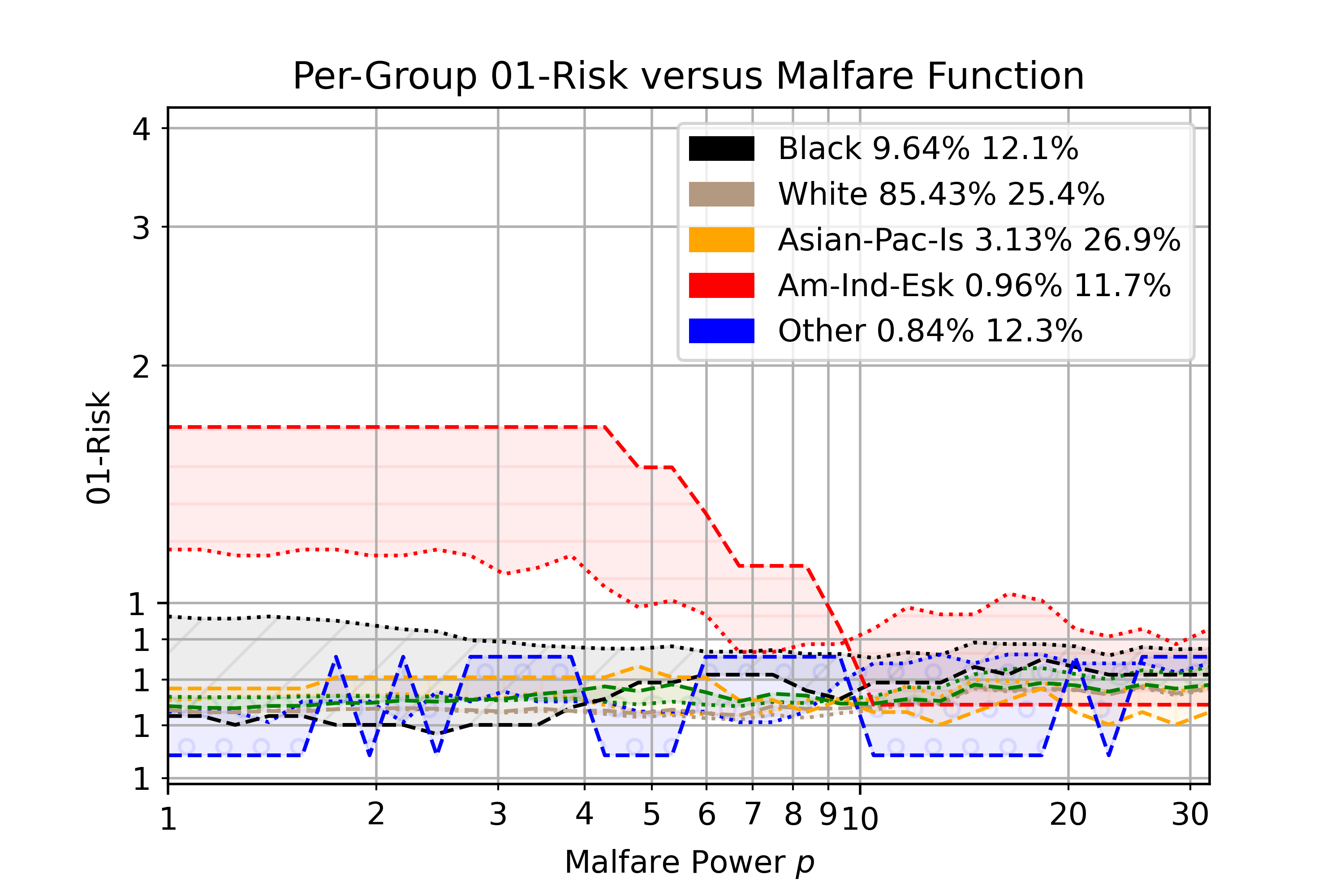}

\vspace{-0.5cm}

\caption{Training and test 0-1 risk, per-group and malfare (green), of \texttt{adult} experiment, as a function of malfare power $p$.
The model is optimized for weighted malfare of weighted hinge-risk, and is thus identical to that of \cref{fig:exp-adult}.
Here hinge-risk is optimized as a \emph{proxy} for the 0-1 risk, so only the reported risk function changes in this figure.
}
\label{fig:exp-adult-w01r}
\end{SCfigure}

\paragraph{0-1 Risk of Weighted SVM}
\Cref{fig:exp-adult-w01r} complements \cref{fig:exp-adult}, reporting the same per-group and malfare statistics, except now on the (similarly weighted) 0-1 risk, rather than the weighted hinge risk.
Here, the interpretation is that the hinge risk is a \emph{convex proxy} for the 0-1 risk, as it would be computationally intractable to optimize the 0-1 risk directly.
Because we optimize hinge risk, but report 0-1 risk, we don't expect to see monotonicity in malfare, and the discontinuity of the 0-1 risk is  
manifest as noise in risk values. 
Nevertheless, if 
hinge risk is 
a good proxy for 
 0-1 risk, we should still see a \emph{general trend} of the classifier becoming fairer (improving high-risk group performance) 
\wrt\ 0-1 risk as it becomes fairer \wrt\ hinge risk, and we do in fact observe this with increasing $p$.

\begin{figure}


\if 0
\begin{tabular}{ccc}
\hspace{-0.25cm}\includegraphics[width=0.334\textwidth,trim={1.7cm 1.0cm 2.0cm 0.8cm}, clip]{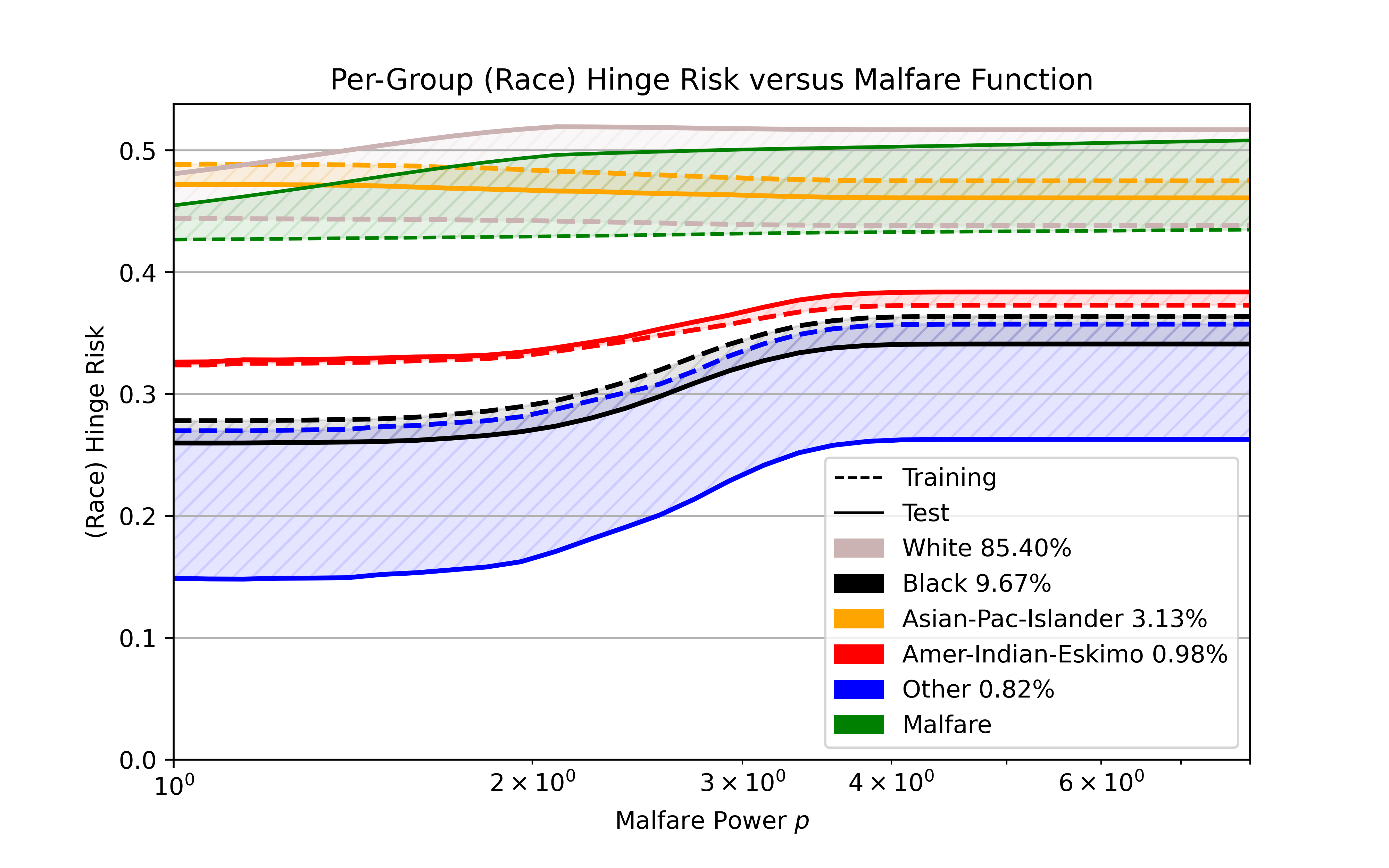} &
\hspace{-0.32cm}\includegraphics[width=0.334\textwidth,trim={1.7cm 1.0cm 2.0cm 0.8cm}, clip]{paper/figs/race-hinge} &
\hspace{-0.32cm}\includegraphics[width=0.334\textwidth,trim={1.7cm 1.0cm 2.0cm 0.8cm}, clip]{paper/figs/race-hinge} \\
\end{tabular}
\fi

\begin{centering}
\!\!\!\!\!\scalebox{1}[0.88]{
\begin{tabular}{ccc}
\includegraphics[width=0.48\textwidth,trim={1.7cm 1.0cm 2.0cm 0.8cm}, clip]{paper/figs/race-hinge} &
\includegraphics[width=0.48\textwidth,trim={1.7cm 1.0cm 2.0cm 0.8cm}, clip]{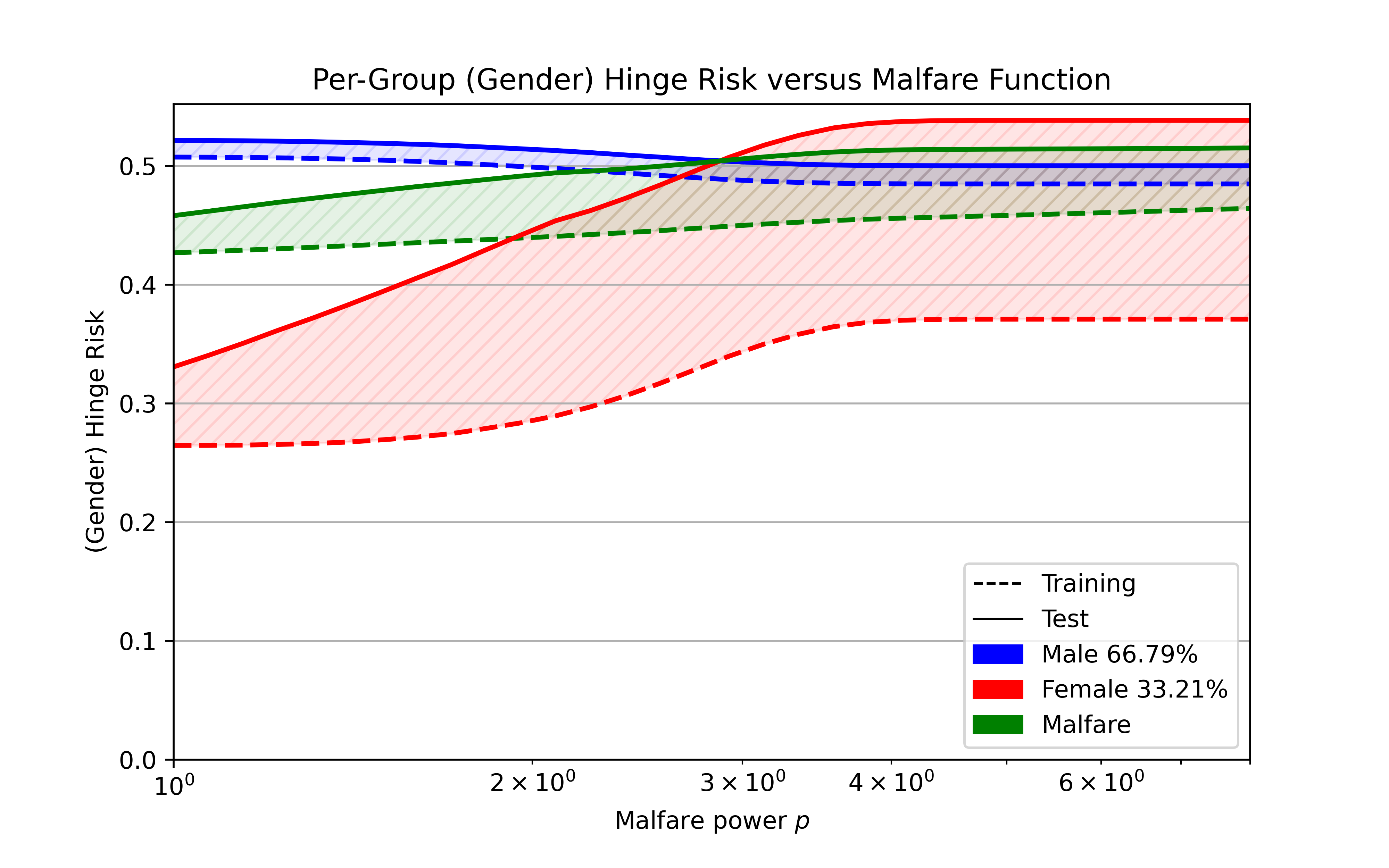} \\
\includegraphics[width=0.48\textwidth,trim={1.25cm 1.0cm 2.0cm 0.8cm}, clip]{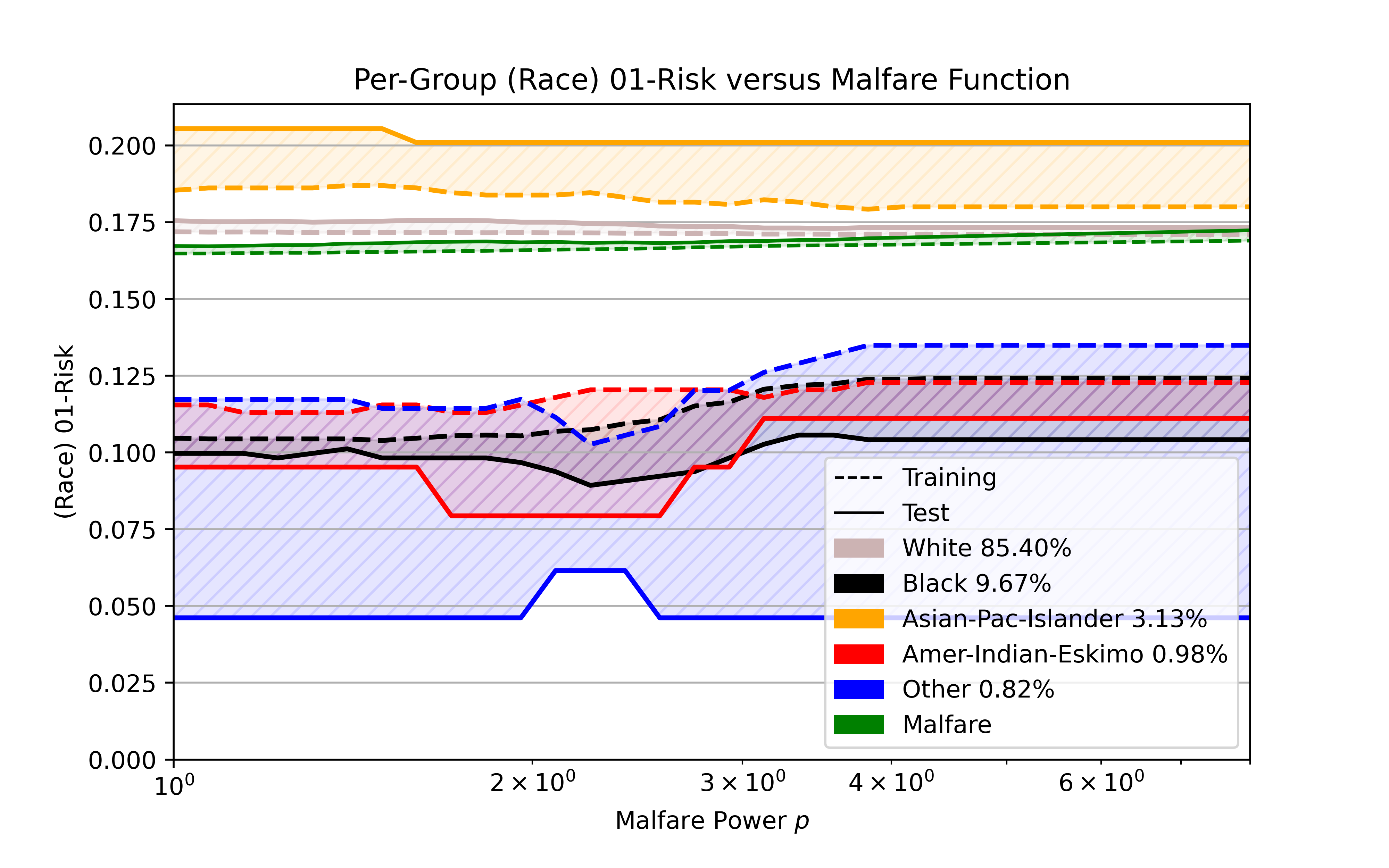} &
\includegraphics[width=0.48\textwidth,trim={1.25cm 1.0cm 2.0cm 0.8cm}, clip]{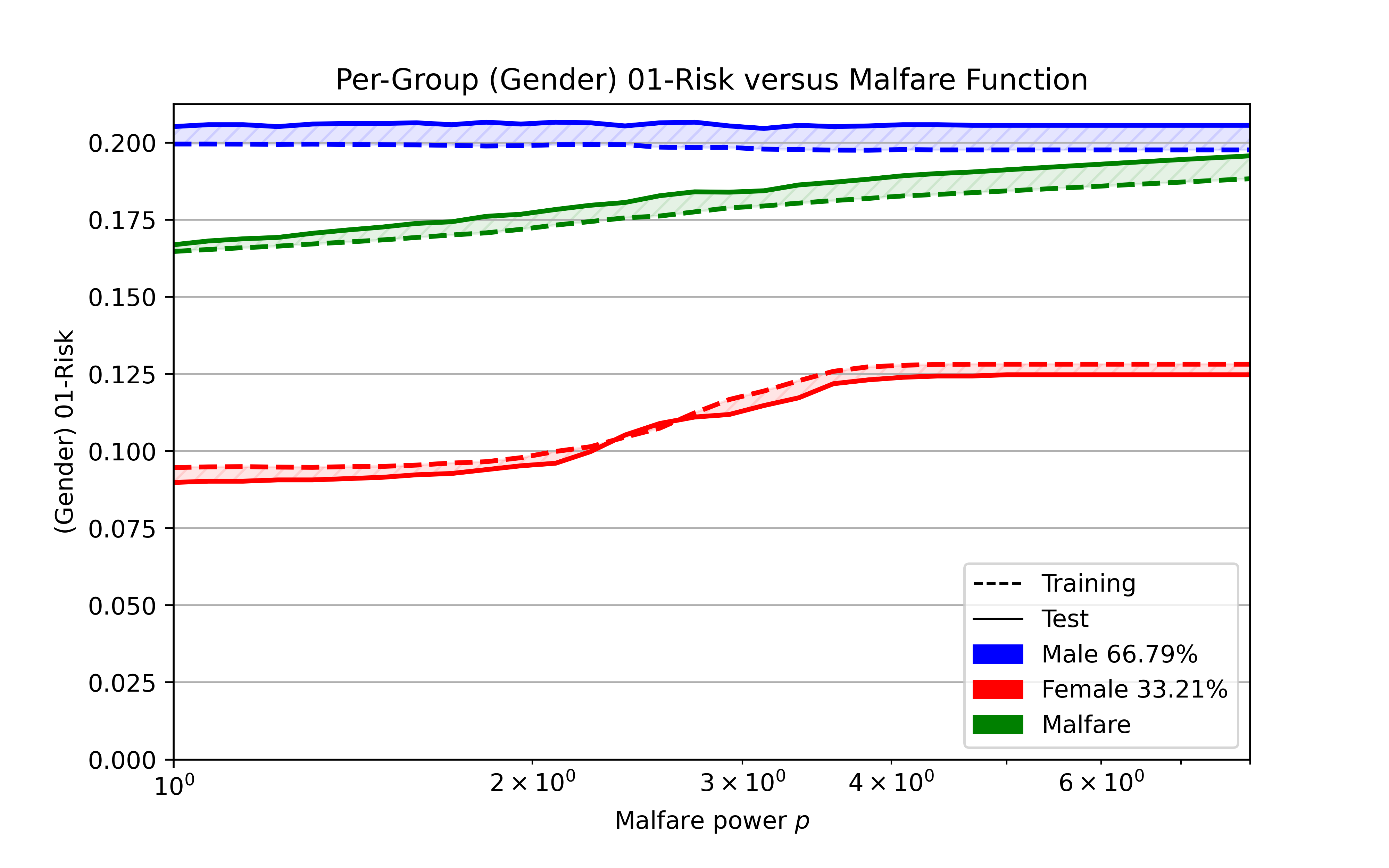} &
\end{tabular}} \\
\end{centering}
\caption{\small{}  
Unweighted linear SVM experiments on \texttt{adult} dataset, with groups split by race (left) and gender (right), malfare and risk plotted against $p$.
\if 0
use all continuous features, and 1-hot encodings of discrete features, 
to predict the target.
\fi
The upper row depicts hinge-risks and malfare on hinge-risks, and the lower row depicts the 0-1 risks and malfare on 0-1 risks (of the models trained on hinge-risk).
All plots show training and test per-group risk and malfare, 
as a function of $p$, with shaded regions depicting train-test gaps.
}
\label{fig:exp-adult-uw}
\if 0
We then plot training and test accuracy within each group As a function of the objective which is the power mean $p$, with weights matching empirical frequencies in the dataset.
Note: data in $\R^{104}$, reg constr $\lambda = 10$,  z-score normalization, thus $\ERade_{m}(\F, \bm{x}) \leq \sqrt{\frac{\norm{x}_{2}^{2} \lambda}{m}} \approx \sqrt{\frac{}{m}} \approx 0.22$
\fi
\end{figure}

\paragraph{Unweighted SVM}
These experiments are quite similar to those of \cref{fig:exp-adult} and \cref{fig:exp-adult-w01r}, except here we optimize the malfare of, and report the values of, the \emph{unweighted} hinge risk.
In these experiments, we also take regularity constraint $\norm{\vphantom{\theta}\smash{\vec{\theta}}}_{2} \leq \lambda = 10$, and report the hinge and 0-1 risks and malfares, using \emph{race} and \emph{gender} groups.
As such, the objective is to minimize the $\Malfare_{p}(\cdot; \wv)$ malfare of per-group hinge-loss, using per-group-frequencies as malfare weights, i.e., 
\[
\hat{h} \doteq \smash{\argmin_{h \in \HC}} \Malfare_{p}\left( i \mapsto \ERisk(h; \LossFunction_{\mathrm{hinge}}, \bm{z}_{i}); \wv \right) \enspace.
\]

With both gender and race, we see significantly variations in model performance between groups.
We stress that group size and affluence are not directly correlated with model accuracy; for instance, here we see that model performance on the (generally affluent)
\emph{Male}, \emph{white}, and \emph{Asian} populations is relatively poor, due to greater income homogeneity within these groups (in direct contrast to the \emph{weighted} experiments).

In all cases, we see that increasing $p$ improves the \emph{training set performance} of the model on the high-risk (inaccurate) groups (male, white, and Asian), 
at the cost of significant performance degradation for the more accurate groups.  
However, the trend does not always hold in \emph{test set} performance, since raising $p$ increases the relative importance of \emph{high-risk subpopulations} in training, which leads to increased overfitting. 
This highlights the phenomenon of \emph{overfitting to fairness}, as we see that improved training set malfare does not necessarily translate to the test set. 

\begin{figure}
\centering
\begin{subfigure}[b]{0.482\textwidth}{}
\includegraphics[width=1\textwidth,trim={1.08cm 0.2cm 1.5cm 0.64cm},clip
]{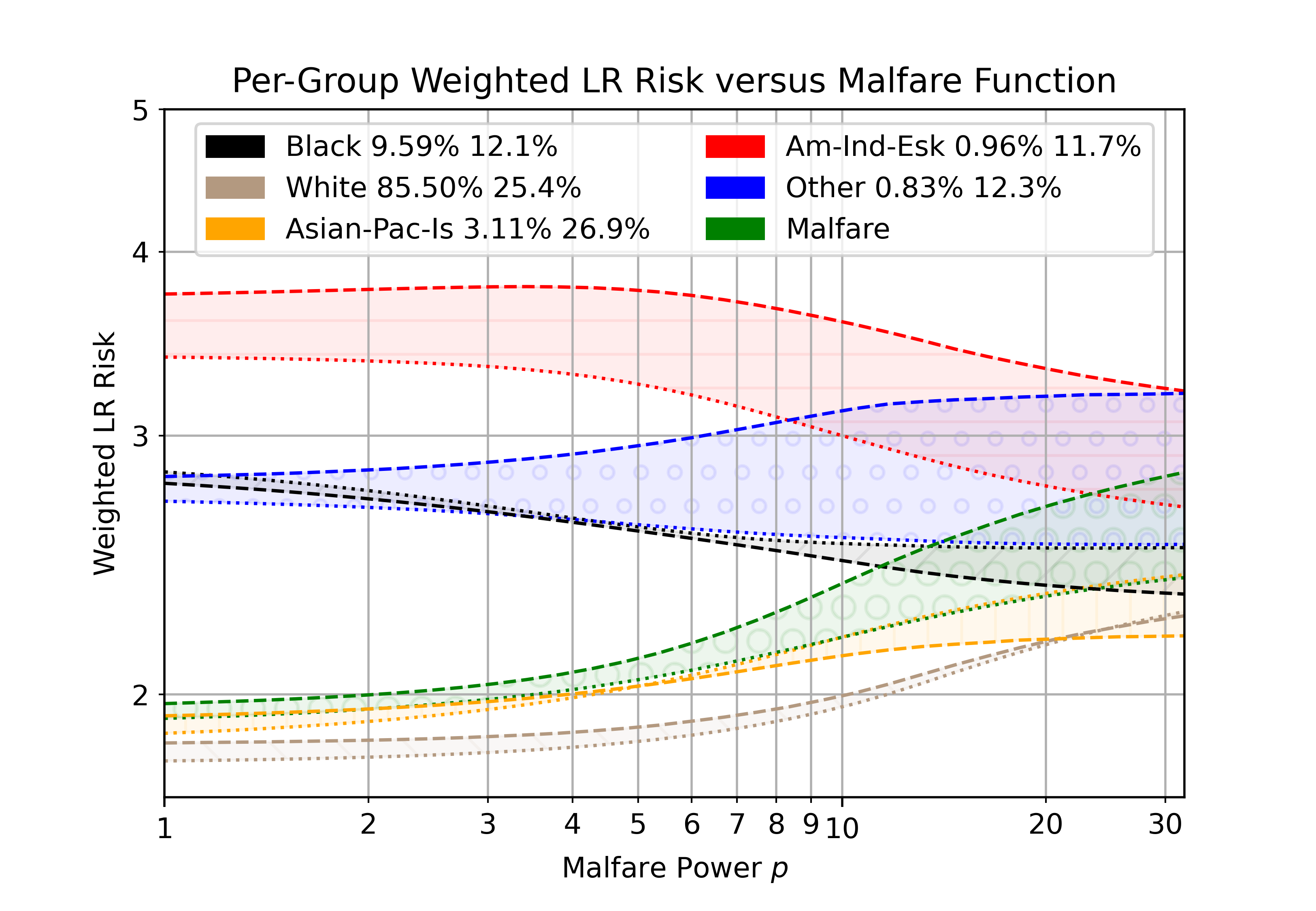}
\end{subfigure}
\begin{subfigure}[b]{0.49\textwidth}
\includegraphics[width=1\textwidth,trim={0.72cm 0.2cm 1.5cm 0.64cm},clip]{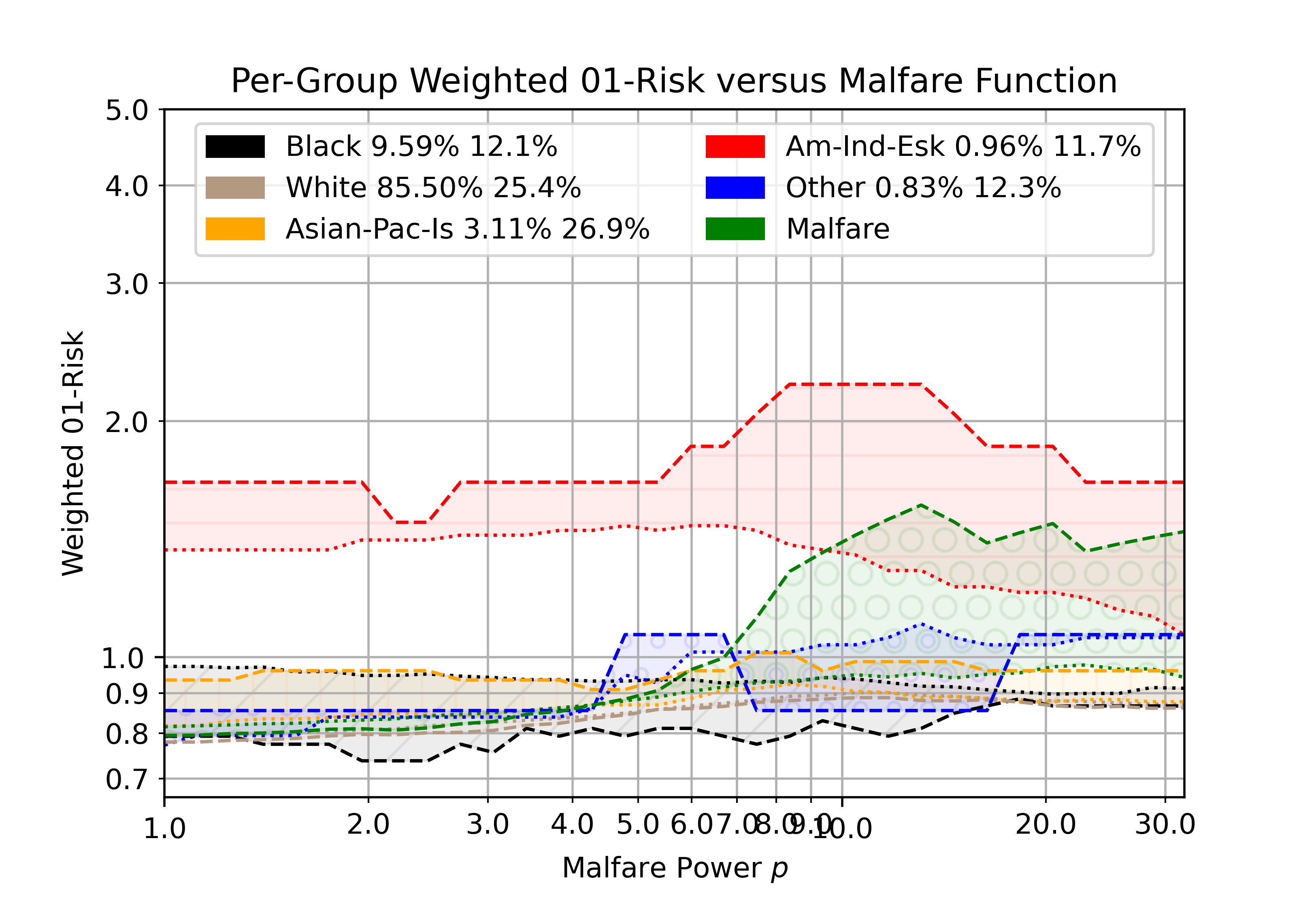}
\end{subfigure}

\caption{Experiments on \texttt{adult} dataset on race groups, with \emph{weighted logistic regression} malfare objective.
}
\label{fig:exp-adult-wlr}
\end{figure}

\paragraph{Logistic Regression Experiments}
\Cref{fig:exp-adult-wlr} complements the previous experiments, where now we optimize malfare of (weighted) \emph{cross entropy risk} of logit predictors, where weights are chosen as in \cref{fig:exp-adult}, i.e., we optimize
\[
\hat{h} \doteq \smash{\argmin_{h \in \HC}} \Malfare_{p}\left( i \mapsto \mathsmaller{\frac{1}{\bm{b}_{i}}} \ERisk(h; \LossFunction_{\mathrm{LRCE}}, \bm{z}_{i}); \wv \right) \enspace.
\]
We draw essentially the same conclusions as with the hinge risk: malfare minimization yields to better training performance of the model for high-risk (Black, native American, and other) groups, and better test-performance, except in the \emph{other} group, which is tiny and badly overfit.

\fi

\end{document}